\renewcommand{\nowfnt@bookauthortitlepage}{
  \newpage
  \thispagestyle{empty}
  \begingroup
  \raggedleft
  {\renewcommand{\baselinestretch}{1.05}
   \nowfnt@font@title
   \textcolor{\nowfnt@journalcolor}{\nowfnt@font@coverjournalname\nowfnt@journaltitle}\par
   \nowfnt@articletitle
   \ifdefempty{\now@subtitle}{}{\par\bigskip\nowfnt@font@subtitle\now@subtitle}\par}
  \vspace{2.5ex}
  \noindent\rule{\textwidth}{2pt}
  \par\vspace{1ex}
  \if@nowfntjournal
    \par
  \fi
  \ifbool{authortwocolumn}
    {\nowfnt@printauthorsbooktitletwocolumn}
    {\nowfnt@printauthorsbooktitle}
  \par\vfill
  \par\endgroup
}
\renewcommand{\H}{\mathcal{H}}
\renewcommand{\Re}{\mathbb{R}}
\newcommand{\N}{\mathcal{N}}
\newcommand{\Trans}{^{\intercal}}
\newcommand{\argmin}{\operatorname*{arg\:min}}
\newcommand{\q}{\quad}
\newcommand{\qq}{\qquad}
\newcommand{\GP}{\mathcal{GP}}
\newcommand{\Id}{I}
\newcommand{\superimpose}[2]{
 {\ooalign{$#1\@firstoftwo#2$\cr\hfil$#1\@secondoftwo#2$\hfil\cr}}}
\newcommand{\ssf}{F}
\newcommand{\ssg}{G}
\newcommand{\ff}{f^*}
\newcommand{\ssy}{Y}
\newcommand{\cH}{\mathcal{H}}
\newcommand{\cF}{\mathcal{F}}
\newcommand{\cG}{\mathcal{G}}
\newcommand{\cX}{\mathcal{X}}
\newcommand{\cY}{\mathcal{Y}}
\newcommand{\cS}{\mathcal{S}}
\newcommand{\cU}{\mathcal{U}}
\newcommand{\bE}{\mathbb{E}}
\newcommand{\iid}{\overset{i.i.d.}{\sim}}
\newcommand{\hrarrow}{\hookrightarrow}
\newcommand{\iu}{\mathrm{i}\mkern1mu}
\tikzset{>=stealth'}
\tikzstyle{graphnode} =
\tikzstyle{var}   =[graphnode,fill=white]
\tikzstyle{obs}   =[graphnode,fill=black,text=white]
\tikzstyle{act}   =[rectangle,draw=black,text=white,minimum
\tikzstyle{fac}   =[rectangle,draw=black,fill=black!25,minimum size=5pt]
\tikzstyle{facprior} =[rectangle,draw=black,fill=black,text=white,minimum size=5pt]
\tikzstyle{edge}  =[draw=white,double=black,thick,-]
\tikzstyle{prior} =[rectangle, draw=black, fill=black, minimum size=
\tikzstyle{dirprior} = [circle, draw=black, fill=black, minimum
\title{Gaussian Processes and Reproducing Kernels:\\ Connections and Equivalences}
\author[1]{Kanagawa,Motonobu}
\author[2]{Hennig,Philipp}
\author[3]{Sejdinovic,Dino}
\author[4]{Sriperumbudur,Bharath K.}
\affil[1]{EURECOM; motonobu.kanagawa@eurecom.fr}
\affil[2]{University of Tübingen; philipp.hennig@uni-tuebingen.de}
\affil[3]{Adelaide University; dino.sejdinovic@adelaide.edu.au}
\affil[4]{Pennsylvania State University; bks18@psu.edu}
\begin{document}

\makeatletter
\sbox{\nowfnt@article@databox}{\nowfnt@font@articledata\mbox{}}
\makeatother

\makeabstracttitle

\begin{abstract}

  This monograph studies the relations between two approaches using positive definite kernels: probabilistic methods using Gaussian processes, and non-probabilistic methods using reproducing kernel Hilbert spaces (RKHS).
 They are widely studied and used in machine learning, statistics, and numerical analysis.
Connections and equivalences between them are reviewed for fundamental topics such as regression, interpolation, numerical integration, distributional discrepancies, and statistical dependence, as well as for sample path properties of Gaussian processes.
  A unifying perspective for these equivalences is established, based on the equivalence between the Gaussian Hilbert space and the RKHS.
The monograph serves as a basis to bridge many other methods based on Gaussian processes and reproducing kernels, which are developed in parallel by the two research communities.

\end{abstract}

\chapter{Introduction}
\label{sec:introduction}

Positive definite kernels, or simply {\em kernels}, appear in two widely used approaches in machine learning, statistics, and numerical analysis.
One is the probabilistic approach using a {\em Gaussian process} (GP), whose {\em covariance function} is a positive definite kernel.
An unknown function under investigation is inferred from data by computing the conditional distribution of the function given the data, assuming that the function is a sample of a specified GP prior~\citep[e.g.,][]{RasmussenWilliams}.
Such methods will be referred to as GP-based methods.
The other is the non-probabilistic approach based on a {\em reproducing kernel Hilbert space} (RKHS), whose {\em reproducing kernel} is a positive definite kernel.
An unknown function is estimated by searching for a smooth function within a specified RKHS having a low discrepancy against data ~\citep[e.g.,][]{scholkopf2002learning}.
Such methods are called {\em kernel methods} in machine learning, and will be referred to here as RKHS-based methods.
These two approaches offer a principled way of incorporating a prior assumption or knowledge about an unknown function through the ``hypothesis space'' defined by a kernel, i.e., the GP or RKHS.

Since both approaches use a positive definite kernel, it is natural to ask whether there is any connection between them.
Equivalences between the two approaches are known for particular problems.
In regression, in which an unknown function is estimated from its noisy observations, a well-known equivalence holds between the two approaches~\citep{kimeldorf1970correspondence}: the posterior mean function in {\em Gaussian process regression} (aka Kriging or Wiener-Kolmogorov prediction) is identical to the estimator of {\em kernel ridge regression} (aka regularized least-squares or spline smoothing).
Another example is numerical integration, where the task is to compute an integral of a function by deciding where to evaluate the function and using the evaluated function values as data.
An equivalence holds for the GP-based and RKHS-based methods for this problem~\citep{Rit00,huszar2012optimally}.

This monograph presents the connections and equivalences between GP-based and RKHS-based methods in a unified manner.
The motivations and main contents can be summarized as follows.
\begin{itemize}
    \item To our knowledge, the literature lacks a unifying account of why equivalences hold for the GP-based and RKHS-based methods. We show that an equivalence holds because both approaches estimate a quantity of interest (e.g., the function value at a test point) as the {\em projection} of that quantity onto the data subspace in a Hilbert space, which is the {\em Gaussian Hilbert space} (GHS) for a GP-based method and the RKHS for an RKHS-based method.
    Since there is an isometry between GHS and RKHS, as described in Chapter~\ref{sec:definition}, the equivalence between the two methods holds.
    This is shown in Chapters  \ref{sec:interpolation}, \ref{sec:regression}, and \ref{sec:integral_transforms} on various tasks, such as interpolation, regression, and integration.

    \item
    A more fundamental reason for the equivalences is that a linear functional of a GP, such as its integrals and function values, is {\em defined} as the GHS element corresponding to the linear functional's Riesz representation in the RKHS, as described in Chapter~\ref{sec:fund-equiv-gp-RKHS}.

    \item A GP-based method performs the {\em uncertainty quantification} of the quantity of interest by computing its {\em posterior standard deviation} given data. This is possible because the unknown function, on which the quantity of interest is defined, is modelled as a stochastic process.
    Whether similar uncertainty quantification is possible with RKHS-based methods, where the unknown function is modeled deterministically, is not well documented in the literature.
    We describe that, from the above Hilbert space viewpoint, the posterior standard deviation is identical to {\em the distance between the quantity of interest and its projection onto the data subspace of the RKHS}.
    Moreover, this identity leads to the interpretation of the posterior standard deviation as the {\em worst-case error} in estimating the quantity of interest when the unknown function has the unit RKHS norm.
    These interpretations equip RKHS-based methods with the added functionality of uncertainty quantification, as shown in Chapters~\ref{sec:interpolation}, \ref {sec:regression}, and \ref{sec:integral_transforms}.

    \item The properties of GP samples and RKHS functions are closely related but have subtle differences.
    The most well-known fact would be that a GP sample does {\em not} belong to the RKHS of the GP covariance kernel with {\em probability one}, if the RKHS is infinite-dimensional \citep[e.g.,][Corollary 7.1]{LukBed01}.
    This fact may provide the impression that the modeling assumptions of the GP-based and RKHS-based methods are fundamentally different and their equivalences are ``coincidental.''
    To resolve this potential misunderstanding, we review that a GP sample path belongs to an RKHS ``slightly larger'' than the RKHS of the covariance kernel with probability one \citep{steinwart2019convergence}.
    Thus, assuming that an unknown function is a GP sample corresponds to assuming that the unknown function can be well approximated by functions in the RKHS of the covariance kernel in an appropriate sense.
    We demonstrate this quantitatively by showing that the optimal convergence rate for the posterior mean function in GP regression~\citep{VarZan11} can be recovered by considering the corresponding setting for kernel ridge regression~\citep{fischer2020sobolev}.
    Therefore, there is no contradiction in the modeling assumptions of the two approaches.
    These points are discussed in Chapters~\ref{sec:theory} and \ref{sec:regression}.

\end{itemize}

The monograph has two main purposes.
One is to help researchers working on either GP-based or RKHS-based methods understand methods in the other field and seamlessly transfer the results between the two fields.
In fact, there are concepts or methods studied almost exclusively in one community but not in the other, such as uncertainty quantification in the GP community and distance metrics on probability distributions in the RKHS community.
Bridging the two communities would help explore new research directions.
The second purpose is to offer a pedagogical introduction to researchers and students interested in either or both of the two fields.

\section{Organization of the Monograph}

The rest of this monograph is organized as follows.

\paragraph{Chapter~\ref{sec:definition}: Gaussian Processes and RKHSs: Preliminaries}
Definitions and basic properties of GPs, RKHSs, and Gaussian Hilbert spaces are provided with examples.

\paragraph{Chapter~\ref{sec:theory}: Sample Paths of Gaussian Processes and RKHSs}
We review the properties of a GP sample characterized by the RKHS of the GP covariance kernel.
We describe the necessary and sufficient condition for a GP sample to belong to a given RKHS (which can be different from the RKHS of the GP kernel) \citep{driscoll1973reproducing,LukBed01}.
This condition essentially states that, to contain a GP sample, a given RKHS should be ``slightly larger'' than the RKHS of the GP kernel, so that the former contains slightly less smooth functions than the latter \citep{steinwart2019convergence}.
Consequences for GPs of the Mat\'ern and Gaussian (squared-exponential) kernels are provided.

\paragraph{Chapter~\ref{sec:fund-equiv-gp-RKHS}: Linear Functionals of Gaussian Processes and RKHS Representations}

Linear functionals of a GP, such as its function values, integrals, and derivatives, provide information about the GP and are thus fundamental in defining GP-based algorithms.
We describe that a linear functional of a GP is {\em defined} as the element in the Gaussian Hilbert space corresponding to the Riesz representation of that functional in the RKHS.
Therefore, the necessary and sufficient condition for a linear function to be well-defined for a GP is that the linear functional is continuous on the RKHS of the GP kernel.
Examples are described for function values, integrals, derivatives, and (Paley-Wiener and It{\=o}) stochastic integrals.

A fundamental equivalence between the GP-based and RKHS-based methods is here established: {\em the root mean square of a GP's linear functional is identical to the maximum of the linear functional over unit-norm RKHS functions}. This equivalence leads to the equivalences between GP-based and RKHS-based in many individual cases.

\paragraph{Chapter~\ref{sec:interpolation}: Interpolation}
This chapter discusses {\em interpolation}, the problem of estimating an unknown function using its observed values at finite locations.
This problem is important in its own right and also is a foundation for the {\em regression} problem in Chapter~\ref{sec:regression} where observations are noisy.

The GP-based method for interpolation, here referred to as {\em GP interpolation}, models the unknown function as a GP sample and computes the conditional distribution of the function given its observations.
This conditional, or {\em posterior}, distribution is another GP whose mean and covariance functions are respectively used for the estimation and uncertainty quantification of the unknown function.
On the other hand, the RKHS-based method for interpolation, referred to as {\em RKHS interpolation}, estimates the unknown function by the smoothest RKHS function interpolating the observations.
This chapter discusses how these two methods are related.

As is well known, the posterior mean function in GP interpolation is identical to the RKHS interpolant.
This identity holds because either method estimates the unknown function value at a test location by {\em projecting} that unknown function value onto the {\em data subspace in a Hilbert space}: this projection is the {\em best approximation} of the unknown function value based on a linear combination of the function values at training locations.
This Hilbert space is the Gaussian Hilbert space for GP interpolation and the RKHS for RKHS interpolation, and the identity of the two estimators follows from the isometry between the two Hilbert spaces.

The uncertainty of the unknown function value at a test location is quantified using the {\em posterior standard deviation} in GP interpolation.
It is shown to be identical to the {\em distance} in the Gaussian Hilbert space between the unknown function value and its projection onto the data subspace.
Therefore, the posterior standard deviation is identical to the corresponding RKHS distance.
This identity enables interpreting the posterior standard deviation as the {\em worst case error} of RKHS interpolation when the unknown function has the unit RKHS norm.
Thus, RKHS interpolation also has the functionality of uncertainty quantification, thanks to the equivalence with GP interpolation.

The posterior standard deviation contracts to zero as the test location is surrounded by more data points. A result of the rate of this contraction is reviewed in terms of the kernel's smoothness.

\paragraph{Chapter~\ref{sec:regression}: Regression}
Regression is the problem of estimating an unknown function from \emph{noisy} observations of its function values, and is fundamental in machine learning and statistics.

The GP-based method for regression is {\em Gaussian Process Regression} (GPR), also known as {\em Kriging}. As for GP interpolation, it models the unknown function as a GP sample and computes its posterior distribution given the observations.
The corresponding RKHS method is {\em Kernel Ridge Regression} (KRR), known also as {\em spline smoothing} for a particular case and as {\em regularized least squares} for a generic case.
KRR estimates the unknown function by searching for a smooth RKHS function with a small mean square error for the observations, where the smoothness is determined by the regularization constant.
This chapter discusses the relations between these two methods.

While regression is usually interpreted as a generalization of interpolation, the opposite interpretation also holds for GPR and KRR: they are particular cases of GP and RKHS interpolations using the {\em regularized} kernel, defined as the original kernel plus the covariance kernel of the noise process.
This interpretation enables applying the equivalence results for interpolation in Chapter~\ref{sec:interpolation} to GPR and KRR.
In particular, it explains the origin of the well-known equivalence between the KRR estimator and the posterior mean function of GPR.

Moreover, the GP posterior variance of the unknown function at a test location, plus the noise variance, is shown to be identical to the squared worst-case error of KRR when the unknown function has the unit norm in the regularized kernel's RKHS.
Thus, the GP posterior variance has an RKHS interpretation, equipping KRR with uncertainty quantification functionality.

Lastly, the convergence rates of GPR and KRR to the unknown function are reviewed and compared.
The unknown function is assumed to have the same finite smoothness as a GP sample in GPR.
The GP posterior mean function then converges to the unknown function at the minimax optimal rate for increasing sample sizes~\citep{VarZan11}.
The same assumption implies that the RKHS consists of functions at least $d/2$-smoother than the unknown function (where $d$ is the input dimensionality), which may sound like a contradiction. However, under this assumption, the KRR estimator is shown to converge to the unknown function at the {\em same} optimal rate, if {\em the regularization constant is set as the noise variance divided by the sample size} \citep{fischer2020sobolev}.
This last condition is identical to that required for the equivalences between GPR and KRR.
In this sense, there is no contradiction between the convergence results for GPR and KRR.

\paragraph{Chapter~\ref{sec:integral_transforms}: Comparison of Probability Distributions}
This chapter provides GP interpretations for RKHS-based methods involving the comparison of probability distributions.
These are mainly studied in the RKHS community, and their GP interpretations are not frequently discussed in the literature.

 {\em Maximum Mean Discrepancy} (MMD)~\citep{GreBorRasSchetal12}, a popular distance metric on probability distributions, quantifies the discrepancy between two probability distributions as the {\em maximum difference} between the means of a function under two distributions when {\em the function has the unit RKHS norm}.
MMD is shown to be identical to the {\em root mean squared difference} between the means of a function under the two distributions when {\em the function is a GP sample}~\citep[Corollary 7 in p.40]{Rit00}.

Similarly, the {\em Hilbert-Schmidt Independence Criterion} (HSIC)~\citep{GreBouSmoSch05}, a widely-used measure of the dependence between two random variables, is identical to the {\em expected squared covariance} between the real-valued transformations of the two random variables when the {\em transformation is a GP sample} \citep{SzeRiz09,SejSriGreFuk13}.
Thus, MMD and HSIC have natural GP formulations.

 {\em Sampling} from a probability distribution is to generate a finite sample approximation to that distribution. Sampling methods whose approximation quality is measured by MMD include {\em Quasi Monte Carlo} \citep{Hic98,DicKuoSlo13} and {\em kernel herding} \citep{CheWelSmo10}, which are also methods for {\em numerical integration} where the function to be integrated is assumed to be an RKHS function.
When MMD is replaced by its GP formulation, these methods can be naturally interpreted as probabilistic methods known as  {\em Bayesian quadrature} that model the integrand as a GP sample \citep{Oha91,briol2019probabilistic}.

\section{Related Literature}

We collect a few key related books or articles on GP-based and RKHS-based methods that may be helpful for further reading.

This monograph is a revised and extended version of a shorter technical report by \citet{kanagawa2018gaussian}.
Many results are newly presented in the current monograph, such as the systematic presentation of the relations between GP-based and RKHS-based methods from the Hilbert space viewpoint.

\citet{Berlinet2004} collect classic results on using RKHSs in statistics and probability, such as the earliest contributions by \citet{kolmogorov1941interpolation,Par61,matheron1962traite,kimeldorf1970correspondence,larkin1972gaussian}.
Other related books and articles can be found in:
\begin{itemize}
    \item {\bf Machine learning:} Learning with GPs~\citep{RasmussenWilliams} and RKHSs~\citep{scholkopf2002learning}; RKHS representations of probability distributions~\citep{MuaFukSriSch17}.
    \item {\bf Statistics:} Splines~\citep{wahba1990spline}; random fields~\citep{Ad90,AdlJon07}; spatial statistics~\citep{Ste99}.
    \item {\bf Statistical learning theory:} RKHS-based learning~\citep{Steinwart2008}; Bayesian nonparametric inference with GPs~\citep{VarZan08,ghosal2017fundamentals}.
    \item {\bf Numerical methods:} Theory for numerical algorithms with GPs~\citep{Rit00} and RKHSs~\citep{NovWoz08,NovWoz10}; RKHS-based interpolation~\citep{Wen05,SchWen06,SchSchSch13}; probabilistic numerics~\citep{HenOsbGirRSPA2015,hennig2022probabilistic}.
   \item {\bf Applied mathematics:} Gaussian Hilbert spaces~\citep{Janson1997}; Gaussian measures on Banach spaces~\citep{Bog98};  stochastic differential equations~\citep{da2014stochastic}; Bayesian inverse problems~\citep{Stu10}.
\end{itemize}

\section{Notation and
Definitions}
\label{sec:notation}
We collect the notation and definitions used throughout the monograph.

\paragraph{Notation system}
Unless otherwise specified,
\begin{itemize}
    \item {\bf Random} variables are denoted by uppercase letters (e.g., input vector $X$; output $Y$; function $F$);
    \item {\bf Deterministic} quantities by lowercase letters (e.g., input vector $x$; output $y$; function $f$);
    \item {\bf Vectors and matrices} by bold fonts (e.g., vector of outputs ${\bm y}$; vector of weights ${\bm w}$; kernel matrix ${\bm K}$)
    \item {\bf Sets and spaces} by calligraphic letters (e.g., input space $\cX$; output space $\cY$; Hilbert space $\mathcal{H}$).
\end{itemize}
Where defined explicitly, the notation rule has exceptions, such as a probability distribution denoted by a capital letter $P$, etc.

\paragraph{Basics}
Let $\mathbb{N}$ be the set of natural numbers, $\mathbb{N}_0 := \mathbb{N} \cup \left\{ 0 \right\}$ and $\mathbb{N}_0^d$ be the $d$-dimensional Cartesian product of $\mathbb{N}_0$ with $d \in \mathbb{N}$.
For a multi-index $\alpha := (\alpha_1,\dots,\alpha)^\top \in \mathbb{N}_0^d$, define $| \alpha | := \sum_{i=1}^d \alpha_i$.

The real line is denoted by $\Re$, the $d$-dimensional Euclidean space for $d \in \mathbb{N}$ by $\Re^d$, and  the Euclidean norm by $\| \cdot \|$.

The complex plane is denoted by $\mathbb{C}$.
For $z \in \mathbb{C}$, let $\bar{z}$ be its complex conjugate.

For $\alpha \in \mathbb{N}_0^d$ and a function $f$ defined on $\Re^d$, let $\partial^\alpha f$ and $D^\alpha f$ be the $\alpha$-th partial derivative and the $\alpha$-th partial weak derivative, respectively.

For $i,j \in \mathbb{N}$, we define $\delta_{ij} \in \{0, 1\}$ as $\delta_{ij} = 1$ if $i = j$, and $\delta_{ij} = 0$ otherwise.

For a matrix (or a vector) ${\bm M}$, its transpose is denoted by ${\bm M}^\top$.

\paragraph{Probability}
A random variable $X$ with probability distribution $P$ is denoted by $X \sim P$.
The expectation with respect to $X$ may be denoted by $\bE_{X}[\cdot]$, $\bE_{X \sim P}[\cdot]$, or $\bE[\cdot]$, depending on the context.

\paragraph{Function spaces}
The set of continuous functions on a topological space $\cX$ is denoted by $C(\cX)$.
For a measure $\nu$ on a measurable space $\cX$ and a constant $p > 0$,  $L_p(\nu)$ denotes the Banach space of ($\nu$-equivalence classes of) $p$-integrable functions with respect to $\nu$:
\begin{equation} \label{eq:lp-space}
L_p(\nu) := \left\{f:\cX \to \Re: \| f \|_{L_p(\nu)}^p := \int |f(x)|^p d\nu(x) < \infty \right\}.
\end{equation}
Write $L_p(\mathcal{X}) := L_p(\nu)$ if $\nu$ is the Lebesgue measure on $\cX \subset \Re^d$.

\chapter{Gaussian Processes and RKHSs: Preliminaries}\label{sec:definition}

As preliminaries, this chapter describes the definitions and basic properties of Gaussian processes (GP), reproducing kernel Hilbert spaces (RKHS), and Gaussian Hilbert spaces (GHS).
We introduce positive definite kernels in Section \ref{sec:pd-kernel}, GPs in Section~\ref{sec:GP-intro}, RKHSs in Section~\ref{sec:RKHS-def}, and GHSs in Section~\ref{sec:GHS}.
The equivalence between a GHS and the corresponding RKHS is explained in Section~\ref{sec:canonical-isomet-GHS-RKHS}.
The description of the RKHS of a shift-invariant kernel, such as Gaussian and Mat\'ern kernels, in terms of the kernel's spectral density is provided in Section~\ref{sec:fourier_RKHS}.

The characterizations of GPs and RKHSs based on orthogonal basis functions are provided in Chapter~\ref{sec:theory}.

\section{Positive Definite Kernels} \label{sec:pd-kernel}

A {\em positive definite kernel} is intuitively a function of two input points that outputs their similarity. It is a basis for both GP-based and RKHS-based methods because it appears as the covariance function of a GP and the reproducing kernel of an RKHS.

Precisely, a symmetric function $k:\cX\times\cX \to \Re$ defined on a nonempty set $\cX$ is called a positive definite kernel if it satisfies the inequality
\begin{equation} \label{eq:PD-kernel-def}
    \sum_{i=1}^n \sum_{j=1}^n  c_i c_j k(x_i,x_j) \geq 0
\end{equation}
for any number $n \in \mathbb{N}$, coefficients $c_1,\ldots,c_n \in \Re$ and input points $x_1,\dots,x_n \in \cX$.
This condition is equivalent to the symmetric matrix $$
{\bm K}_n = (k(x_i,x_j)) \in\Re^{n\times n},
$$
whose ($i$,$j$)-th element is the value $k(x_i, x_j)$ for the $i$-th and $j$-th input points $x_i, x_j$, being {\em positive semidefinte} for any number $n \in \mathbb{N}$ and any input points $x_1, \dots, x_n \in \cX$.
This matrix ${\bm K}_n$ is called the {\em kernel matrix} or the {\em Gram matrix}.

In this monograph,  \emph{kernel} always means a positive definite kernel unless otherwise stated.
The reader should not confuse it with the notion of {\em smoothing kernels} used in nonparametric statistics, which do not necessarily satisfy the inequality above.

Let us see examples of kernels on the Euclidean space~$\mathbb{R}^d$ of $d \in \mathbb{N}$ dimensions.

\begin{example}[Gaussian/squared-exponential Kernels] \label{ex:square-exp-kernel}
A Gaussian or squared-exponential kernel on $\cX \subset \Re^d$ is defined as
\begin{equation} \label{eq:square-exp-kernel}
k_\gamma(x,x') = \exp\left( - \frac{ \| x - x' \|^2 } {\gamma^2 } \right) \quad (x,x' \in \cX),
\end{equation}
where $\gamma > 0$ is a parameter called the {\em bandwidth}.
\end{example}

The kernel \eqref{eq:square-exp-kernel} is often called the \emph{squared-exponential} kernel\footnote{It is also called ``square exponential'' or ``exponentiated quadratic.''} in the GP literature to avoid confusion with ``Gaussian'' processes, while it is often called {\em Gaussian} kernel or \emph{Radial Basis Function} (RBF) kernel in the RKHS literature.
In this monograph, we refer to it as a squared-exponential kernel.

The squared-exponential kernel outputs the highest value $1$ for identical inputs $x = x'$ and a lower value for more different $x$ and $x'$.
Thus, it is a good example to see that a kernel measures the similarity between the two inputs.
The bandwidth parameter $\gamma$ determines the scale at which the similarity is quantified.
This kernel is an {\em infinitely differentiable} function of the two inputs.
This property will be important for understanding the properties of the associated GP and RKHS, as described later.

Another important example is the Mat\'ern kernels \citep{Mat60} widely used in the spatial statistics literature \citep[Sections 2.7, 2.10]{Ste99}; see also \citet{porcu2024matern} for a recent review.

\begin{example}[Mat\'ern kernels] \label{ex:matern-kernel}
A Mat\'ern kernel on $\cX \subset \Re^d$ with smoothness parameter $\alpha > 0$ is defined as
\begin{align}
 \label{eq:matern-kernel}
& k_{\alpha}(x,x')\\
&= \frac{1}{2^{\alpha-1} \Gamma(\alpha)} \left( \frac{\sqrt{2\alpha} \| x - x' \|}{h} \right)^\alpha K_\alpha \left( \frac{ \sqrt{2\alpha} \| x - x' \| }{h} \right) \quad (x,x' \in \cX), \nonumber
\end{align}
where $h > 0$ is the bandwidth parameter, $\Gamma$ is the Gamma function, and $K_\alpha$ is the modified Bessel function of the second kind of order $\alpha$.

If $\alpha$ is a half-integer, i.e., $\alpha = m + 1/2$ for a non-negative integer $m \geq 0$, then the kernel~\eqref{eq:matern-kernel} is expressed in terms of an exponential function and polynomials up to degree $m$~\citep[e.g.,][Section 4.2.1 and Eq.~4.16]{RasmussenWilliams}:
\begin{align*}
k_{m+1/2} (x,x') & = \exp\left( - \frac{ \sqrt{2m + 1} \| x - x' \| }{h} \right) \frac{\Gamma(m + 1)}{\Gamma(2m + 1)} \\
& \times \sum_{i=0}^m \frac{ (m+i)! }{ i! (m-i)! } \left( \frac{ \sqrt{8m+4} \| x - x' \| }{ h } \right)^{m-i} .
\end{align*}
For example, for $\alpha = 1/2$ ($m=0$), $\alpha = 3/2$ ($m=1$) and $\alpha = 5/2$ ($m=2$), we have
\begin{align*}
k_{1/2}(x,x') &= \exp\left( - \frac{ \| x - x' \| }{h} \right),   \\
k_{3/2}(x,x') &= \left( 1 + \frac{\sqrt{3} \| x - x' \| }{ h } \right) \exp\left( - \frac{ \sqrt{3} \| x - x' \| }{h} \right),   \\
k_{5/2}(x,x') &= \left( 1 + \frac{\sqrt{5} \| x - x' \| }{ h } + \frac{ 5 \| x - x' \|^2 }{ 3h^2 } \right) \exp\left( - \frac{ \sqrt{5} \| x - x' \| }{h} \right).
\end{align*}
These are the widely used Mat\'ern kernels.
In particular, the case $\alpha = 1/2$ is called the {\em Laplace kernel} or the {\em exponential kernel}.
\end{example}

The parameter $\alpha$ quantifies the smoothness of the kernel: the kernel is smoother for larger $\alpha$.
In fact, the decay rate of the Mat\'ern kernel's spectral density, which quantifies its smoothness, is $2\alpha + d$; see Example~\ref{ex:RKHS-Matern} for details.
The squared-exponential kernel in Example~\ref{ex:square-exp-kernel} is the limit of Mat\'ern kernel for infinite smoothness $\alpha$ \citep[p.~50]{Ste99}:
$$
\lim_{\alpha \to \infty}k_{\alpha}(x,x') =  \exp \left(- \frac{\| x - x'\|^2}{2 h^2}  \right).
$$
Therefore, the Mat\'ern interpolates between the Laplace kernel ($\alpha = 1/2$) and the squared-exponential kernel ($\alpha = \infty$).
The Laplace kernel is not differentiable at identical inputs $x = x'$ and thus is not a continuously differentiable function.
The bandwidth parameter $h$ determines the scale of inputs.

The Mat\'ern kernels, including the Laplace and squared-exponential kernels as limiting cases, are introduced here to emphasize the notion of smoothness.
As described below, the kernel determines the smoothness of the associated GP and RKHS functions. Thus, choosing a kernel implies choosing the smoothness of the GP and RKHS functions.

There are many other kernels defined on Euclidean and non-Euclidean spaces, which we do not attempt to cover here; see, e.g.,~\citet{genton2001classes,scholkopf2002learning,SchTsuVer04,RasmussenWilliams,HofSchSmo08}.
Kernels can be defined on any set $\cX$ as long as the condition~\eqref{eq:PD-kernel-def} is satisfied, so new kernels are still being developed, such as neural tangent kernels~\citep{jacot2018neural}.

\section{Gaussian Processes (GP)} \label{sec:GP-intro}

Intuitively, a GP is a random function whose function values at any finite input points follow a multivariate Gaussian distribution.
The {\em covariance} between the function values at any two input points, referred to as the {\em covariance function} as a function of the input points, is a positive definite kernel.
Formally, a GP may be defined as follows \citep[e.g.][p.~443]{Dud02}.

\begin{definition}[Gaussian processes] \label{def:GP}
Suppose a real-valued random variable $F(x)$ is defined for each point $x \in \cX$ on a non-empty set~$\cX$.
The set $\{ F(x) \}_{x \in \cX}$ of these random variables is called a {\em Gaussian process} (GP), if there is a function $m: \cX \to \mathbb{R}$ and a positive definite kernel $k: \cX \times \cX \to \mathbb{R}$ such that for any $n \in \mathbb{N}$ and any $x_1, \dots, x_n \in \cX$,
\begin{align*} (\ssf(x_1),\dots,\ssf(x_n))^\top& \sim \N(
{\bm m}_n,{\bm K}_n),  \\
\text{where}\quad &   {\bm m}_n = (m(x_1),\dots,m(x_n))^\top \in \Re^n,\\
&   {\bm K}_n = (k(x_i,x_j))_{i,j = 1}^n \in \Re^{n \times n},
\end{align*}
where $\N(
{\bm m}_n,{\bm K}_n)$ is the $n$-dimensional Gaussian distribution with mean vector ${\bm m}_n$ and covariance matrix ${\bm K}_n$.
Such an $F$ is denoted by $F \sim \GP(m,k)$, and $m$ and $k$ are called the mean and covariance functions of $F$.

\end{definition}

A GP is interpreted as a random function because any given input point~$x$ is associated with an output $F(x)$ that is a real-valued random variable; a more rigorous discussion is provided below.
The mean function $m(x)$ provides the expected value of $F(x)$ at any $x$, i.e., $m(x) = \mathbb{E}[F(x)]$.

The covariance function $k(x,x')$ at different input points $x$, $x'$ determines how strongly $F(x)$ and $F(x')$ are dependent.
Indeed,  Definition~\ref{def:GP} implies that the covariance between $F(x)$ and $F(x')$ is
\begin{align*}
{\rm Cov}[F(x), F(x')]
& := \mathbb{E} [ (F(x) - m(x)) (F(x') - m(x')) ] \\
& = k(x,x').
\end{align*}
Qualitatively, if $x$ and $x'$ are closer, $k(x,x')$ is larger, and $F(x)$ and $F(x')$ correlate stronger; if $x$ and $x'$ are far away, $k(x,x')$ is close to zero, and $F(x)$ and $F(x')$ correlate very weakly.

For any function $m$ and positive definite kernel $k$, a GP with mean function $m$ and covariance function $k$ exists \citep[Theorem 12.1.3]{Dud02}.
Thus, choosing specific $m$ and $k$ in an algorithm implies choosing the associated GP and its properties.

For example, the smoothness of a sample function
$$
F: \cX \to \mathbb{R},
$$
which is defined as a realization of the random variables $F(x)$ at all input points $x \in \cX$, is determined primarily by the kernel $k$.
If $k$ is a squared-exponential kernel, a sample $F$ is almost surely infinitely differentiable (very smooth).
If $k$ is a Mat\'ern kernel, its smoothness is finite and determined by the smoothness parameter $\alpha$.
See Chapter~\ref{sec:theory} for details.

\section{Reproducing Kernel Hilbert Spaces (RKHS)} \label{sec:RKHS-def}

A Hilbert space is a vector space where an inner product is defined, the norm is induced from the inner product, and any convergent sequence in this norm converges to a member of the space.
An RKHS is a Hilbert space consisting of functions for which the {\em reproducing kernel} exists, which is a positive definite kernel, such that the value of any member function at any input point is reproduced as the inner product between that function and the reproducing kernel indexed by that input point; this is called {\em reproducing property}.
To make this precise, an RKHS is defined as follows.

\begin{definition}[RKHS] \label{def:RKHS}
Let $\cX$ be a nonempty set and $k$ be a positive definite kernel on $\cX$.
A Hilbert space $\cH_k$ consisting of functions on $\cX$ with an inner-product $\left< \cdot,\cdot\right>_{\cH_k}$ is called a {\em reproducing kernel Hilbert space} (RKHS) with reproducing kernel $k$, if the following are satisfied:
\begin{enumerate}
 \item For all $x\in\cX$, we have $k(\cdot,x) \in \cH_k$;
 \item For all $x \in \cX$ and for all $f\in\cH_k$,
 \begin{equation} \label{eq:reproducing-prop}
      f(x) = \langle f,k(\cdot,x)\rangle_{\cH_k} \quad {\rm (Reproducing\ property)}.
 \end{equation}
\end{enumerate}
The norm of the RKHS is defined and denoted by
$$
\left\| f \right\|_{\cH_k} := \sqrt{\left< f, f \right>_{\cH_k}} \quad (f \in \cH_k).
$$
\end{definition}

\paragraph{Canonical Feature}
The notation $k(\cdot,x)$ for any input point $x$ denotes the function of the first argument with the second one fixed to $x$:
$$
k(\cdot,x):  x' \mapsto k(x',x).
$$
The first property of Definition~\ref{def:RKHS} states that it belongs to the RKHS for any $x$.
It is interpreted as a representation of $x$ in the RKHS, often called a {\em feature vector} or {\em canonical feature map} of $x$.

\paragraph{Reproducing Property}
The second property of Definition~\ref{def:RKHS}, the reproducing property, states that the function value $f(x)$ of any RKHS function $f$ at any input point $x$ is equal to the inner product of $f$ and the feature vector $k(\cdot, x)$ of~$x$.
This is the defining characteristic of the RKHS and its inner product.

\paragraph{Continuity of the Evaluation Functional}
The reproducing property implies the continuity (or the boundedness) of the {\em evaluation functional} at any input point~$x$.
The evaluation functional at fixed $x$ is the function that takes a function~$f$ as an input and outputs the function value $f(x)$.
Its continuity means that the function value $f(x)$ changes continuously when $f$ moves continuously in the RKHS while $x$ is fixed.
This follows from the Cauchy-Schwartz inequality applied to \eqref{eq:reproducing-prop}:
\begin{align*} \label{eq:evaluation-func-bounded}
    | f(x) | \leq \| f \|_{\cH_k} \| k(\cdot, x) \|_{\cH_k} \quad \text{for all } f \in \cH_k.
\end{align*}
This implies the boundedness of the evaluation functional in the RKHS, which is equivalent to its continuity in the RKHS.

\paragraph{Riesz Representation}
Every evaluation functional being continuous in a Hilbert space is equivalent to the Hilbert space being an RKHS.
Indeed, by the Riesz representation theorem \citep[e.g.,][Theorem~5.5.1.]{Dud02}, the continuity of the evaluation functional at~$x$ in $\cH_k$ implies that there is a unique element $h_x \in \cH_k$ whose inner product with any $f \in \cH_k$ recovers the evaluation functional:
$$
f(x) = \left<f, h_x \right>_{\cH_k} \quad  \text{for all } f \in \cH_k.
$$
This element $h_x$ is called the {\em Riesz representation} of the evaluation functional in $\cH_k$.
Its uniqueness and the reproducing property~\eqref{eq:reproducing-prop} imply that the reproducing kernel $k(\cdot,x)$ indexed at $x$ is the Riesz representation of the evaluational functional:
$$
k(\cdot, x) = h_x \quad \text{for all } x \in \cX.
$$
The kernel value $k(x, x')$ for any two input points $x, x'$ can then be written as the inner product between the Riesz representations of the evaluation functionals at $x$ and $x'$:
\begin{equation} \label{eq:kernel-inner-reproduce}
k(x, x') = \left< k(\cdot,x), k(\cdot, x') \right>_{\cH_k}
= \left< h_x, h_{x'} \right>_{\cH_k}.
\end{equation}
Thus, the reproducing kernel is induced from the Riesz representations of the evaluation functionals.

\paragraph{One-to-One Relation between Kernels and RKHSs}
 The {\em Moore-Aronszajn theorem} \citep{Aronszajn1950} states that every positive definite kernel $k$ has a uniquely associated RKHS $\cH_k$ whose reproducing kernel is $k$. Thus, kernels and RKHSs are one-to-one.
 This implies that selecting a particular kernel in an algorithm is to select the RKHS. Therefore, understanding the properties of the RKHS induced by a selected kernel is practically important.

\subsection{Explicit Construction of an RKHS}
\label{sec:RKHS-explict-const}
To describe how RKHS functions inherit the kernel's properties,  we describe an ``explicit construction'' of an RKHS from its kernel.
In particular, this shows that any RKHS function can be written as a weighted sum of canonical features or as a limit of such weighted sums.

\paragraph{Pre-Hilbert Space.}
Define a pre-Hilbert space $\cH_0$ as the set of all functions written as a weighted sum of canonical features of {\em finitely many} input points:
\begin{align} \label{eq:RKHS-pre-Hilbert}
    & \cH_0 := \left\{ \sum_{i=1}^n c_i k(\cdot,x_i): \   n \in \mathbb{N},\ c_1,\dots,c_n \in \Re,\ x_1,\dots,x_n \in \cX  \right\},
\end{align}
where the weights $c_1, \dots, c_n$ can be negative.
Define the inner product between any two elements
$$f := \sum_{i=1}^n a_i k(\cdot,x_i) \in \cH_0, \quad g := \sum_{j=1}^m b_j k(\cdot,y_j) \in \cH_0$$ as the weighted sum of the kernel evaluations using the weights $a_1,\dots,a_n$ and $b_1,\dots,b_m$ and the evaluation points $x_1\dots,x_n$ and $y_1,\dots,y_m$  based on which $f$ and $g$ are defined, i.e.,
\begin{equation} \label{eq:inner-prod-pre-hilbert}
\left< f, g \right>_{\cH_0} := \sum_{i=1}^n \sum_{j=1}^m a_i b_j k(x_i, y_j).
\end{equation}
The norm of any $f \in \cH_0$ is then defined as
$$
\| f \|_{\cH_0} := \sqrt{\left< f, f \right>_{\cH_0}}.
$$

The reproducing property holds for the pre-Hilbert space: the $\cH_0$-inner product between any function $f := \sum_{i=1}^n a_i k(\cdot,x_i)$ in the pre-Hilbert space and the reproducing kernel at any point $x$ reproduces the evaluation of $f$ at $x$:
\begin{equation} \label{eq:reprod-pre-hilbert}
\left< f, k(\cdot, x) \right>_{\cH_0} = \sum_{i=1}^n a_i k(x_i,x) = f(x),
\end{equation}
which follows from \eqref{eq:inner-prod-pre-hilbert} with $g := k(\cdot,x)$.
In particular, this implies that, for any two functions $f$ and $g$ in the pre-Hilbert space, the absolute difference between their function values at any point $x$ is bounded by the $\cH_0$-distance between $f$ and $g$ times the square root of the kernel value at $x$:
\begin{align} \label{eq:point-wise-bound-pre-hilbert}
   \left| f(x) - g(x) \right| \leq  \left\| f - g \right\|_{\cH_0} \sqrt{k(x,x)} \quad (f, g \in \cH_0, \ x\in \cX).
\end{align}
which follows from the Cauchy-Schwarz inequality.

\paragraph{RKHS via the Completion of the Pre-Hilbert Space.}
The RKHS $\cH_k$ is then given as the completion (or the closure) of the pre-Hilbert space $\cH_0$ \citep[Theorem 4.21]{Steinwart2008}.
In other words, the RKHS is given by adding to the pre-Hilbert space the limiting functions of any convergent (i.e., Cauchy) sequences in the pre-Hilbert space.

More explicitly, take any Cauchy sequence in the pre-Hilbert space $f_1, f_2, \dots \in \cH_0$. That is, for any constant $\varepsilon > 0$ there exists a number $n_\varepsilon$ such that the $\cH_0$-distance between any two functions $f_n$ and $f_m$ whose indices $n$ and $m$ are larger than $n_\varepsilon$ is upper bounded by $\varepsilon$:
\begin{equation} \label{eq:cauchy-def}
    \left\| f_n - f_m \right\|_{\cH_0} < \varepsilon ~~\text{for all}~~ n, m > n_\varepsilon.
\end{equation}
Then, at each point $x$, their function values $f_1(x), f_2(x) \dots \in \mathbb{R}$, are a Cauchy sequence in $\mathbb{R}$, because for any constant $\varepsilon' > 0$ we have, by \eqref{eq:point-wise-bound-pre-hilbert} and \eqref{eq:cauchy-def},
\begin{align*}
|f_n(x) - f_m(x) | & \leq \| f_n - f_m \|_{\cH_0} \sqrt{k(x,x)} \\
& <  \varepsilon' ~~\text{for all} \quad n, m > n_\varepsilon,
\end{align*}
where $n_\varepsilon$ is the number in \eqref{eq:cauchy-def} with $\varepsilon := \varepsilon' / \sqrt{k(x,x)}$.
Therefore, the limit of these function values exists in $\mathbb{R}$ for each point $x$, which we denote by $f(x)$:
\begin{equation} \label{eq:pointwise-conv-pre-RKHS}
    f(x) := \lim_{n \to \infty} f_n(x) \quad \text{for all }~ x \in \cX.
\end{equation}
The set of all such limiting functions $f$, including the functions in the pre-Hilbert space $\cH_0$, is the RKHS $\cH_k$:
\begin{align*}
    \cH_k = \{ f: \cX \to \mathbb{R}: ~~ &   f(x) = \lim_{n \to \infty} f_n(x) ~~ \text{for all } x \in \cX  \\
    & \text{for some Cauchy sequence}~ f_1, f_2, \dots  \in \cH_0    \} \supset \cH_0.
\end{align*}

\paragraph{RKHS Inner Product and Norm.}
The inner product and the norm of the RKHS $\cH_0$ are defined as the extensions of those of the pre-Hilbert space $\cH_0$.
Let us start from the norm.
For any RKHS function $f$, its norm is defined as the limit of the $\cH_0$-norms of a Cauchy sequence $f_1, f_2, \dots$ in the pre-Hilbert space whose limit defines $f$ pointwise as \eqref{eq:pointwise-conv-pre-RKHS}:
$$
\left\| f \right\|_{\cH_k} := \lim_{n \to \infty} \left\| f_n \right\|_{\cH_0} < \infty,
$$
where the finiteness follows from the sequence being Cauchy in $\cH_0$.

Similarly, the inner product between two RKHS functions $f$ and $g$ is the limit of the $\cH_0$-inner products between functions from the two Cauchy sequences in the pre-Hilbert space whose limits are $f$ and $g$:
\begin{align} \label{eq:rkhs-inner-def-limit}
\left< f,g \right>_{\cH_k} := \lim_{n, m \to \infty} \left< f_n,  g_m \right>_{\cH_0} \quad (f,g \in \cH_k),
\end{align}
where $f_1, f_2, \dots \in \cH_0$ and $g_1, g_2, \dots, \in \cH_0$ are Cauchy sequences whose limits define $f$ and $g$ pointwise:
\begin{align*}
    f(x) = \lim_{n \to \infty} f_n(x), \quad
    g(x) = \lim_{m \to \infty} g_m(x)  \quad (x \in \cX).
\end{align*}
The limit in \eqref{eq:rkhs-inner-def-limit} exists because the absolute value of the $\cH_0$-inner product between $f_n$ and $g_m$ is upper bounded by the product of the $\cH_0$-norms of $f_n$ and $g_m$, whose limits exit and define the $\cH_k$-norms of $f$ and $g$:
\begin{align*}
\lim_{n, m\to \infty} \left| \left< f_n,  g_m \right>_{\cH_0} \right|
& \leq \lim_{n, m\to \infty} \left\| f_n \right\|_{\cH_0} \left\| g_m \right\|_{\cH_0}  \\
 & =  \| f \|_{\cH_k} \| g \|_{\cH_k} < \infty.
\end{align*}

The reproducing property can be shown to hold from the definition~\eqref{eq:rkhs-inner-def-limit} of the RKHS inner product.
Take $f$ as an RKHS function defined pointwise as the limit of a Cauchy sequence $f_1, f_2, \dots$ in the pre-Hilbert space as in \eqref{eq:pointwise-conv-pre-RKHS}.
By setting $g := k(\cdot,x)$ in \eqref{eq:rkhs-inner-def-limit} and using the reproducing property~\eqref{eq:reprod-pre-hilbert} of the pre-Hilbert space, we have
\begin{align*}
    \left< f, k(\cdot, x) \right>_{\cH_k}
    & = \lim_{n \to \infty} \left< f_n, k(\cdot,x) \right>_{\cH_0}  = \lim_{n \to \infty} f_n(x) = f(x),
\end{align*}
recovering the reproducing property.

\subsection{Smoothness Quantified by the RKHS Norm}
Intuitively, the RKHS norm $\| f \|_{\cH_k}$ of a function $f$ captures not only its magnitude, like the $L_2$ norm, but also its {\em smoothness}: the RKHS norm $\| f \|_{\cH_k}$ becomes large when the magnitude of $f$ is large or when $f$ is not smooth; a smaller RKHS norm $\| f \|_{\cH_k}$ implies the magnitude of $f$ is small and $f$ is smooth.

This smoothness property is important to understand the behaviour of an algorithm using the RKHS norm as a regularization.
The following example on the RKHSs of a Mat\'ern kernel, which follows from \citet[Eq.~4.15]{RasmussenWilliams} and \citet[Corollary 10.48]{Wen05}, best illustrates this property.

\begin{example}[RKHSs of Mat\'ern Kernels] \label{ex:matern-rkhs}
Let $\cX \subset \Re^d$ be a domain with Lipschitz boundary,\footnote{For the definition of Lipschitz boundary, see e.g., \citet[p.189]{Ste70}, \citet[Definition 4.3]{Tri06} and \citet[Definition 3]{kanagawa2020convergence}.} and $k_{\alpha}$ be the Mat\'ern kernel in Example~\ref{ex:matern-kernel} with smoothness $\alpha > 0$ and bandwidth $h > 0$ such that $s := \alpha + d/2$ is an integer, and $\cH_{k_{\alpha}}$ be the RKHS of $k_{\alpha}$.
Then, for each $f \in \cH_{k_{\alpha}}$, all weak derivatives up to the order $s$ exist and are square-integrable.
The squared RKHS norm $\left\| f \right\|_{\cH_{k_{\alpha}}}^2$ is upper and lower bounded by, up to constants, the sum of the squared $L_2$ norms of all the weak derivatives\footnote{Integration by parts defines a relation between a given function and its derivative through integrals.  A weak derivative is any function that satisfies this integral relation with the given function. A given function that is non-differentiable on a measure-zero set can have a weak derivative, because it is defined through integrals. In this sense, weak derivatives are ``weaker'' than standard derivatives.
} of $f$ up to order $s$.

In other words, $\cH_{k_\alpha}$ is norm-equivalent\footnote{Normed vector spaces $\cH_1$ and $\cH_2$ are called norm-equivalent, if $\cH_1 = \cH_2$ as a set, and if there are constants $c_1, c_2 > 0$ such that $c_1 \| f \|_{\cH_2} \leq \| f \|_{\cH_1} \leq c_2 \| f \|_{\cH_2}$ holds for all $f \in \cH_1 = \cH_2$, where $\| \cdot \|_{\cH_1}$ and $\| \cdot \|_{\cH_2}$ denote the norms of $\cH_1$ and $\cH_2$, respectively.} to the {\em Sobolev space of order $s$}~\citep[e.g.,][]{AdaFou03} defined by
\begin{equation} \label{eq:sobolev_space}
W_2^s (\cX) := \left\{ f \in L_2(\cX) :\ \| f \|_{W_2^s (\cX)}^2 := \sum_{\beta \in \mathbb{N}_0^d: | \beta | \leq s} \left\| D^\beta f \right\|_{L_2(\cX)}^2 < \infty \right\},
\end{equation}
where $D^\beta$ denotes the weak derivative of order $\beta = (\beta_1, \dots, \beta_d) \in \mathbb{N}_0^d$ with $|\beta| = \sum_{i=1}^n \beta_i \leq s$.
That is,  $\cH_{k_{\alpha}} = W_2^s (\cX)$ as a set of functions, and there exist constants $c_1, c_2 > 0$ such that
\begin{equation*}
c_1 \| f \|_{W_2^s(\cX)} \leq  \| f \|_{\cH_{k_{\alpha}}} \leq c_2 \| f \|_{W_2^s(\cX)} \quad \text{for all}\ f \in \cH_{k_{\alpha}}.
\end{equation*}
\end{example}

Therefore, the RKHS norm $\| f \|_{\cH_{k_{\alpha}}}$ is smaller for a smoother function~$f$ whose weak derivatives' magnitudes are smaller.
The RKHS norm is larger if $f$ is less smooth.
The smoothness parameter $\alpha$ determines the Sobolev order $s = \alpha + d/2$ of the RKHS.

\begin{example}[Laplace kernel's RKHS]
If $\alpha = 1/2$ and $d = 1$, which results in the Laplace kernel $k(x, x') = \exp(- | x - x' | / h)$ (see Example~\ref{ex:matern-kernel}), the RKHS's order of smoothness is $s = 1$.
This implies that the Laplace kernel's RKHS consists of functions whose first weak derivatives exist and are square-integrable.
For instance, the reproducing kernel $f(x):= k(x, z) = \exp(- | x - z | / h) $ for any fixed $z \in \cX$ as a function of $x$ has a square-integrable first weak derivative.\footnote{The non-differentiability at the single point $x = z$ does not contradict the existence of the weak derivative because the latter is defined through the integration-by-parts relation, which is not influenced by the non-differentiability on a measure-zero set.}
\end{example}

\begin{example}[Squared-exponential kernel's RKHS]
The RKHS of the squared-exponential kernel $\exp(- \| x - x' \|^2 / \gamma^2)$ for any finite $d$ consists of infinitely differentiable functions, because this kernel is the limit of the Mat\'ern kernel as the smoothness parameter $\alpha \to \infty$ (see Section~\ref{sec:pd-kernel}).

\end{example}

\section{Gaussian Hilbert Spaces (GHS)}
\label{sec:GHS}

\begin{figure}
    \centering
    \includegraphics[width=0.95\linewidth]{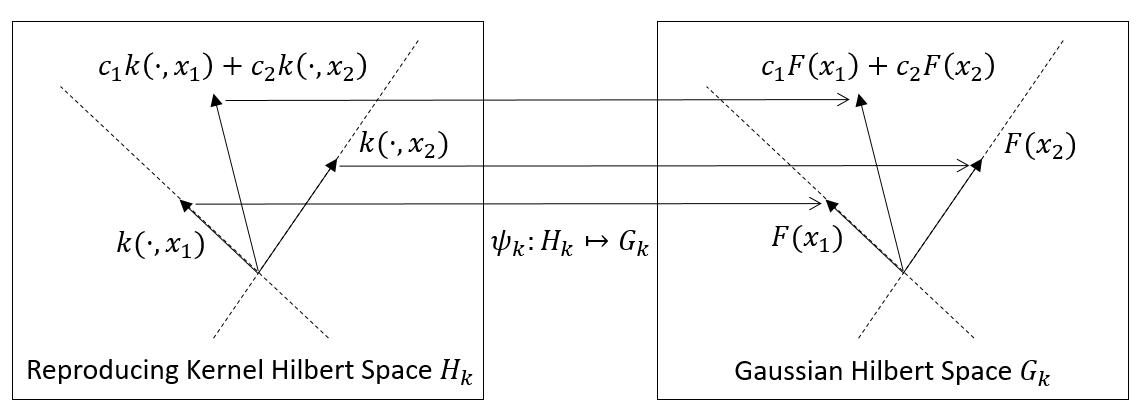}
    \caption{Schematic illustration of the relation between the reproducing kernel Hilbert space (RKHS) $\cH_k$  and the Gaussian Hilbert space (GHS) $\cG_k$, the latter generated by the Gaussian process $F \sim \GP(0,k)$. The canonical isometry $\psi_k: \cH_k \to \cG_k$ is an isometric isomorphism between $\cH_k$ and $\cG_k$.}
    \label{fig:RKHS-GHS}
\end{figure}

A {\em Gaussian Hilbert space} is a Hilbert space representation of a GP and a probabilistic formulation of the RKHS.
It is a Hilbert space where each element is a zero-mean Gaussian random variable, and the inner product of any two elements is defined by their covariance.
It will serve as a foundation for understanding the relations between a GP and the associated RKHS.
The definition is as follows~\citep[e.g.,][]{Janson1997}.

\begin{definition}

Let $k$ be a positive definite kernel on a non-empty set~$\cX$.
A Hilbert space $\cG_k$ with inner product $\left< \cdot,\cdot\right>_{\cG_k}$ is  called a {\em Gaussian Hilbert space} (GHS) with kernel $k$, if the following are satisfied:
\begin{enumerate}
    \item  Each element $Z \in \cG_k$ is a zero-mean real-valued Gaussian random variable.
    \item The inner product between any two elements $Z, W \in \cG_k$ is defined by their covariance: $\left< Z, W \right>_{\cG_k} = \mathbb{E}[ZW]$.
    The norm of any $Z \in \cG_k$ is its standard deviation: $\| Z \|_{\cG_k} := \sqrt{ \left< Z, Z \right>_{\cG_k} } = \sqrt{  \mathbb{E}[Z^2] }$.
    \item Each input point $x \in \cX$ is associated with a zero-mean Gaussian random variable $Z_x \in \cG_k$, such that the kernel value $k(x,x')$ for any two input points $x, x' \in \cX$ is the covariance between the associated elements $Z_x$ and $Z_{x'}$:
\begin{equation} \label{eq:inner-product-GHS}
    \left< Z_x, Z_{x'}\right>_{\cG_k} = \mathbb{E}[Z_x Z_{x'}] = k(x, x') \quad \text{for all } x, x' \in \mathcal{X}.
\end{equation}
    \item The linear span of these elements $\{ Z_x \}_{x \in \mathcal{X}} \subset \cG_k$ is dense in $\cG_k$, i.e., for any $Z \in \cG_k$ and $\varepsilon > 0$, there exist $n \in \mathbb{N}$, $x_1, \dots, x_n \in \mathcal{X}$ and $c_1, \dots, c_n \in \mathbb{R}$ such that $\| Z - \sum_{i=1}^n c_i Z_{x_i}\|_{\cG_k} < \varepsilon$.
\end{enumerate}

\end{definition}

The set of zero-mean Gaussian random variables $\{Z_x: x \in \cX \}$ satisfying~\eqref{eq:inner-product-GHS}, which will be called {\em elementary Gaussian random variables}, can be defined from a GP, $F \sim \GP(0,k)$, as
$$
Z_x := \ssf(x) \quad \text{for all }\ x \in \mathcal{X}.
$$
As described below, the GHS is ``generated'' from these random variables.
In this sense, the GHS is a Hilbert space representation of the GP.

\subsection{Explicit Construction of GHS}
\label{sec:explict-const-GHS}

In parallel with the RKHS, the GHS can be constructed from the set of elementary Gaussian random variables~$\{Z_x: x \in \cX \}$.
We first define a pre-Hilbert space $\cG_0$ as the set of all finite linear combinations of the elementary Gaussian random variables:
\begin{align} \label{eq:GHS-pre-Hilbert}
    \cG_0
    & := \left\{ \sum_{i=1}^n c_i Z_{x_i}: \ n \in \mathbb{N},\ c_1,\dots,c_n \in \mathbb{R},\ x_1,\dots,x_n \in \cX  \right\}.
\end{align}
Any element in $\cG_0$ is a zero-mean Gaussian random variable.

For any two elements $Z$ and $Z'$ in $\cG_0$, their inner product is defined as their covariance.
Explicitly, since they can be written as
$$
Z := \sum_{i=1}^n a_i Z_{x_i} \in \cG_0, \quad Z' := \sum_{j=1}^m b_j Z_{y_j} \in \cG_0
$$
for some finite numbers $n, m$, coefficients $a_i, b_j$, and input points $x_i, y_j$,
the inner product is given by the weighted sum of kernel evaluations:
\begin{equation} \label{eq:GHS-pre-Hil-inner-prod}
\left< Z, Z' \right>_{\cG_0} := \mathbb{E}[Z Z'] = \sum_{i=1}^n \sum_{j=1}^m a_i b_j k(x_i, y_j).
\end{equation}
The norm of any $Z \in \cG_0$ is equal to its standard deviation:
$$
\| Z \|_{\cG_0} = \sqrt{ \left< Z, Z \right>_{\cG_0}} = \sqrt{ \mathbb{E}[Z^2]}.
$$

The GHS $\cG_k$ is then given as the completion of the pre-Hilbert space~$\cG_0$ with respect to the norm $\| \cdot \|_{\cG_0}$, i.e., $\cG_k$ is given by adding to $\cG_0$ the limits of all convergent (i.e., Cauchy) sequences of elements of $\cG_0$.
More explicitly, let $Z_1, Z_2, \dots \in \cG_0$ be a Cauchy sequence such that
$$
\lim_{n,m \to \infty} \| Z_n - Z_m \|_{\cG_0}^2 = \lim_{n,m \to \infty} \mathbb{E}[ ( Z_n - Z_m )^2 ] = 0.
$$
Then the sequence converges to a random variable $Z$ in the mean-square sense:
$$
\lim_{n \to \infty} \mathbb{E}[ ( Z_n - Z )^2 ] = 0,
$$
which implies that $Z_1, Z_2, \dots$ converge to $Z$ in distribution. Indeed, this is the case, since all of $Z_1, Z_2, \dots$ are zero-mean Gaussian random variables with finite variances, so is the limit $Z$.
Thus, the GHS consists of all such zero-mean Gaussian random variables $Z$:
\begin{align*}
    \cG_
    k = \{  Z:   ~~ &   \lim_{n \to \infty} \mathbb{E}[ ( Z_n - Z )^2 ] = 0,   \\
    & \text{for some Cauchy sequence}~ Z_1, Z_2, \dots  \in \cG_0    \} \supset \cG_0.
\end{align*}

The inner product and the norm of $\cG_k$ are the covariance and the standard deviation, respectively, as they are the extensions of the inner product and the norm of $\cG_0$.
Let us first show this for the norm. The squared norm of each element $Z$ in the GHS is the limit of the squared norms = variances of the Cauchy sequence $Z_1, Z_2 \dots \in \cG_0$ that converge to $Z$ in mean square, so it is the variance of $Z$:
$$
\left\| Z \right\|_{\cG_k}^2 := \lim_{n \to \infty} \left\| Z_n \right\|_{\cG_0}^2 =  \lim_{n \to \infty}  \mathbb{E}\left[ Z_n^2 \right] = \mathbb{E}[Z^2] < \infty.
$$

Similarly, the inner product between any two elements $Z$ and $W$ in the GHS is their covariance.
Let $Z_1, Z_2, \dots \in \cG_0$ and $W_1, W_2, \dots \in \cG_0$ be Cauchy sequences that respectively converge to $Z$ and $W$ in mean square.
Then the GHS inner product between $Z$ and $W$ is the limit of the inner products in $\cG_0$ = covariances between $Z_n$ and $W_m$, which is the covariance of $Z$ and $W$:
\begin{align}
\left< Z, W \right>_{\cG_k} & := \lim_{n, m \to \infty} \left< Z_n, W_m \right>_{\cG_0}  \label{eq:GHS-inner-def-limit} \\
& = \lim_{n, m \to \infty} \mathbb{E}[ Z_n W_m ]  = \mathbb{E}[ZW],  \nonumber
\end{align}
where the last identity follows from
\begin{align*}
&  \left(\mathbb{E}[ Z_n W_m ] - \mathbb{E}[ Z W ]  \right)^2 \\
= & \left(\mathbb{E}[ Z_n W_m ] - \mathbb{E}[ Z_n W ] + \mathbb{E}[ Z_n W ] - \mathbb{E}[ Z W ]  \right)^2 \\
\leq & ~ 2 \left(\mathbb{E}[ Z_n W_m ] - \mathbb{E}[ Z_n W ] \right)^2 + 2\left(\mathbb{E}[ Z_n W ] - \mathbb{E}[ Z W ]  \right)^2 \\
\leq & ~ 2 ~\mathbb{E}[Z_n^2]~ \mathbb{E}[(W_m - W)^2] + 2 ~\mathbb{E}[(Z_n-Z)^2] ~\mathbb{E}[W^2],
\end{align*}
which converges to zero as $n$ and $m$ goes to infinity.

\section{Canonical Isometry and the GHS-RKHS Equivalence} \label{sec:canonical-isomet-GHS-RKHS}
The Gaussian Hilbert space introduced in Section~\ref{sec:GHS} is equivalent to the RKHS introduced in Section~\ref{sec:RKHS-def} in the sense that there exists an {\em isometric isomorphism} between them, i.e., a {\em bijective linear operator that preserves the inner product} (see e.g., \citealt[Theorem 35, p.65]{Berlinet2004} and \citealt[Theorem 8.15]{Janson1997}).
See Figure~\ref{fig:RKHS-GHS} for an illustration.

This isometric isomorphism is called {\em canonical isometry} and here denoted by
$$
\psi_k: \cH_k \to \cG_k.
$$
We will define it constructively.
First, it is defined as an operator between the set of all the canonical features $\{ k(\cdot,x):~ x \in \mathcal{X} \}$ and the set of elementary Gaussian random variables $\{ Z_x:~ x \in \mathcal{X} \}$, such that the canonical feature $k(\cdot,x)$ of each point $x$ is mapped to the elementary Gaussian random variable $Z_x$ at the same point $x$:
\begin{equation} \label{eq:canonical-isometry}
    \psi_k (k(\cdot,x)) = Z_x \quad \text{for all } x \in \mathcal{X}.
\end{equation}
Its inverse maps the elementary Gaussian random variable $Z_x$ at each point $x$ to the canonical feature $k(\cdot,x)$ of the same point:
$$
 \psi_k^{-1}(Z_x) = k(\cdot,x) \quad \text{for all } x \in \mathcal{X}.
$$
Thus, the canonical isometry is bijective between $\{k(\cdot,x):~ x \in \mathcal{X} \}$ and  $\{ Z_x:~ x \in \mathcal{X} \}$.
It also preserves the inner product between these two sets: the RKHS inner product~\eqref{eq:kernel-inner-reproduce} between the canonical features of any two points $x$ and $x'$ is identical to the GHS inner product~\eqref{eq:inner-product-GHS} between the corresponding elementary Gaussian random variables, as they are both equal to the kernel value $k(x,x')$ between $x$ and $x'$:
\begin{align*}
     \left< k(\cdot,x), k(\cdot,x') \right>_{\cH_k} = \left< Z_x, Z_{x'}  \right>_{\cG_k}
    = k(x, x') \quad \text{for all }~ x, x' \in \cX.
\end{align*}

\subsection{Canonical Isometry on the Pre-Hilbert Spaces}
The canonical isometry is linearly extended to the pre-Hilbert spaces $\cH_0$ in \eqref{eq:RKHS-pre-Hilbert} and $\cG_0$ in \eqref{eq:GHS-pre-Hilbert}.
Since any element in $\cH_0$ is a finite weighted sum of the canonical features, this element is mapped to the corresponding weighted sum of elementary Gaussian random variables, which is an element in $\cG_0$:
\begin{align*}
\psi_k\left( \sum_{i=1}^n c_i k(\cdot,x_i) \right) &:= \sum_{i=1}^n c_i \psi_k(k(\cdot,x_i) ) =  \sum_{i=1}^n c_i Z_{x_i}\in \cG_0 \\
& \quad \text{for all } ~  \sum_{i=1}^n c_i k(\cdot,x_i) \in \cH_0.
\end{align*}
That $\psi_k$ is bijective between $\cH_0$ and $\cG_0$ follows from $\psi_k$ being bijective between $\{k(\cdot,x):~ x \in \mathcal{X} \}$ and  $\{ Z_x:~ x \in \mathcal{X} \}$.

Moreover, $\psi_k$ preserves the inner products between $\cH_0$ and $\cG_0$, since either inner product, defined in \eqref{eq:inner-prod-pre-hilbert} or
\eqref{eq:GHS-pre-Hil-inner-prod}, is the weighted sum of kernel evaluations for which the weights and evaluation points are those defining the two given elements for the inner product.
More explicitly, consider the two elements $Z$ and $W$ in $\cG_0$ that correspond via $\psi_k$ to two elements $f$ and $g$ in $\cH_0$ given as
$$
f := \sum_{i=1}^n a_i k(\cdot,x_i), \quad g := \sum_{j=1}^m b_j k(\cdot,y_j), $$
so that
$$
Z = \psi_k(f) = \sum_{i=1}^n a_i Z_{x_i}, \quad W = \psi_k(g) = \sum_{j=1}^m b_j Z_{y_j}.
$$
Then the inner product of $Z$ and $W$ in $\cG_0$ equals that of $f$ and $g$ in $\cH_0$:
\begin{align} \label{eq:inner-prod-pres-pre-hilb}
\left< Z, W \right>_{\cG_0}
= \sum_{i=1}^n \sum_{j=1}^m a_i b_j k(x_i, y_j) = \left< f, g \right>_{\cH_0}.
\end{align}

The inner-product preservation implies the norm preservation: the distance between any two elements in $\cH_0$ equals the distance between the corresponding two elements in $\cG_0$, i.e.,
\begin{equation} \label{eq:pre-hilbert-norm-pres}
\left\| f - g \right\|_{\cH_0} = \left\| \psi_k(f) - \psi_k(g) \right\|_{\cG_0}  \quad \text{for all } f, g \in \cH_0.
\end{equation}

\subsection{Canonical Isometry between the RKHS and GHS}
\label{sec:can-iso-def-RKHS-GHS}

We now define the canonical isometry $\psi_k$ as a linear operator between the RKHS $\cH_k$ and the GHS $\cG_k$.
This is based on the ``explicit constructions'' of the RKHS (Section~\ref{sec:RKHS-explict-const}) and the GHS (Section~\ref{sec:explict-const-GHS}).

\paragraph{Definition of the Canonical Isometry.}
Let us define the canonical isometry $\psi_k(f)$ for any RKHS element $f \in \cH_k$.
Let $f_1, f_2, \dots \in \cH_0$ be a Cauchy sequence in the pre-Hilbert space whose limit defines $f$.
Let $Z_1, Z_2, \dots \in \cG_0$ be the corresponding sequence in the pre-Hilbert space for the GHS, such that each element $Z_n$ is the canonical isometry applied to $f_n$:
$$
Z_n := \psi_k(f_n) \in \cG_0 \quad (n = 1,2, \dots).
$$
This sequence is Cauchy in $\cG_0$, because $f_1, f_2, \dots$ are Cauchy in $\cH_0$ and $\psi_k$ preserves the norm between $\cH_0$ and $\cG_0$ as in \eqref{eq:pre-hilbert-norm-pres}, so the distance between any two elements in one sequence is identical to that in the other sequence:
\begin{equation} \label{eq:dist-pre-seq-can-iso}
\left\| Z_n - Z_m \right\|_{\cG_0} = \left\| f_n - f_m \right\|_{\cH_0} \quad \text{for all ~} n,~ m = 1, 2, \dots .
\end{equation}
Therefore, the limit of $Z_1, Z_2 \dots$ exists and defines an element $Z$ in the GHS $\cG_k$.
This $Z$ is the definition of $\psi_k(f)$:
$$
\psi_k(f) :=  \lim_{n \to \infty} \psi_k(f_n) \in \cG_k,
$$
where the convergence is in the mean-square sense (see Section~\ref{sec:explict-const-GHS}).

\paragraph{Definition of the Inverse.}
The inverse of the canonical isometry is defined similarly.
Take an arbitrary element $Z$ from the GHS $\cG_k$, and let $Z_1, Z_2, \dots$ be a Cauchy sequence in the pre-Hilbert space $\cG_0$ whose limit defines $Z$.
Let $f_1, f_2, \dots $ be the corresponding sequence in the pre-Hilbert space $\cH_0$ for the RKHS, where each~$f_n$ is defined as the inverse $\psi_k^{-1}$ of the canonical isometry applied to $Z_n$:
$$
f_n := \psi_k^{-1}(Z_n) \in \cH_0, \quad (n = 1, 2, \dots).
$$
This sequence is Cauchy in $\cH_0$, because the canonical isometry preserves the norm between $\cH_0$ and $\cG_0$ as in \eqref{eq:dist-pre-seq-can-iso}.
Thus, the limit of the sequence $f_1, f_2, \dots$ exists and defines an RKHS element $f \in \cH_k$.
This $f$ is the definition of the inverse of the GHS element $Z$ under the canonical isometry:
$$
\psi_k^{-1}(Z) := \lim_{n \to \infty} \psi_k^{-1}(Z_n) \in \cH_k,
$$
where the convergence is pointwise (see \eqref{eq:pointwise-conv-pre-RKHS}).

By construction, $\psi_k^{-1}$ defined in this way is the inverse of $\psi_k$ as it satisfies
$$
\psi_k^{-1}\left( \psi_k(f)   \right) = f \quad \text{for all } f \in \cH_k.
$$

\subsection{Inner-product Preservation and Bijectivity}
We now show that the canonical isometry $\psi_k$ is bijective and preserves the inner product between the RKHS $\cH_k$ and the GHS $\cG_k$.

\paragraph{Inner-product Preservation.}
Let us first show that the canonical isometry preserves the inner products of the RKHS and GHS, which are respectively defined in \eqref{eq:rkhs-inner-def-limit} and
\eqref{eq:GHS-inner-def-limit}.
Let $f \in \cH_k$ and $g \in \cH_k$ be two arbitrary RKHS elements, and let $f_1, f_2, \dots \in \cH_0$ and $g_1, g_2, \dots \in \cH_0$ be Cauchy sequences in the pre-Hilbert space whose limits define $f$ and $g$, respectively.
The canonical isometry is shown to preserve the inner products of the pre-Hilbert spaces $\cH_0$ and $\cG_0$ as in \eqref{eq:inner-prod-pres-pre-hilb}, so
$$
\left< f_n, g_m \right>_{\cH_0} = \left< \psi_k(f_n), \psi_k(g_m) \right>_{\cG_0} \quad \text{for all } n, m = 1, 2, \dots
$$
Thus, the RKHS inner product between $f$ and $g$, which is defined as the limit of these inner products in $\cH_0$ as $n, m \to \infty$ as \eqref{eq:rkhs-inner-def-limit}, is written as
\begin{align*}
  \left< f, g \right>_{\cH_k} & = \lim_{n, m \to \infty}   \left< f_n, g_m \right>_{\cH_0}  \\
  & = \lim_{n, m \to \infty}   \left< \psi_k(f_n), \psi_k(g_m) \right>_{\cG_0} = \left< \psi_k(f), \psi_k(g) \right>_{\cG_k},
\end{align*}
which recovers the definition of the GHS inner product between the GHS elements $\psi_k(f) \in \cG_k$ and $\psi_k(g) \in \cG_k$ in \eqref{eq:GHS-inner-def-limit}.

This inner-product preservation implies the norm preservation. In particular, the distance between any RKHS elements $f$ and $g$ equals that between the corresponding GHS elements $\psi_k(f)$ and $\psi_k(g)$, generalizing the distance preservation for the pre-Hilbert spaces in \eqref{eq:pre-hilbert-norm-pres}:
\begin{align} \label{eq:dist-preserve-can-isom}
    \left\| f - g \right\|_{\cH_k} =     \left\| \psi_k(f) - \psi_k(g) \right\|_{\cG_k} \quad \text{for all }~ f, g \in \cH_k.
\end{align}

\paragraph{Bijectivity.}
The bijectivity of $\psi_k$ follows as below.
Section~\ref{sec:can-iso-def-RKHS-GHS} already shows that the inverse $\psi_k^{-1}(Z)$ exists in the RKHS for each GHS element $Z \in \cG_k$, so $\psi_k$ is surjective.
Therefore, what remains to show is that $\psi_k$ is injective, which is shown as follows.
For any two different RKHS elements $f$ and $g$, their distance is positive, which is preserved under the canonical isometry as in \eqref{eq:dist-preserve-can-isom}, so the distance between the corresponding GHS elements $\psi_k(f)$ and $\psi_k(g)$ is positive, which implies that they are different:
\begin{align*}
 \left\| \psi_k(f) - \psi_k(g) \right\|_{\cG_k} = \left\| f - g \right\|_{\cH_k} > 0 \quad  \text{for all } ~ f \not=g \in \H_k.
\end{align*}
Thus, $\psi_k$ is shown to be bijective between $\cH_k$ and $\cG_k$.

We have shown that the canonical isometry, as a linear operator from the RKHS to the GHS, preserves the inner product (and thus the distance) and is bijective, so it is an isometric isomorphism.
In this sense, the RKHS and GHS are equivalent.
This will be the key to various equivalences between GP-based and RKHS-based methods.

\section{Spectral Characterization for RKHSs of Shift-Invariant Kernels}
\label{sec:fourier_RKHS}

Before proceeding further, we describe a precise characterization of the RKHS whose reproducing kernel is  {\em shift-invariant} on $\cX = \Re^d$.
A kernel~$k(x, x')$ is shift-invariant (or stationary) if its value depends only on the difference $x - x'$ between the two input vectors $x, x' \in \mathbb{R}^d$, i.e., there is a function $\Phi:\Re^d \to \Re$ such that
\begin{equation} \label{eq:shift-invariant-kernel}
    k(x,x') = \Phi(x-x') \quad \text{for all }\ x, x' \in \mathbb{R}^d.
\end{equation}
The squared-exponential, Mat\'ern, and Laplace kernels (Examples~\ref{ex:square-exp-kernel} and \ref{ex:matern-kernel}) are all shift-invariant kernels.

The RKHS of a shift-invariant kernel is characterized by Fourier transforms.
The Fourier transform $\cF[f]: \mathbb{R}^d \to \mathbb{C}$ of an integrable function $f \in L_1(\Re^d)$ is a complex-valued function defined as an integral of $f$ with the complex exponential:
$$
\cF[f](\omega) := \frac{1}{(2\pi)^{d/2}}\int f(x) e^{ - \sqrt{-1}\ x^\top \omega } dx  \quad \text{for all }\ \omega \in \Re^d.
$$
It is a spectral representation of function $f$ such that its value $\cF[f](\omega)$ represents the function's component with frequency $\omega$.

The Fourier transform $\cF[\Phi]$ of the function $\Phi$ defining the shift-invariant kernel~\eqref{eq:shift-invariant-kernel} is known to be real-valued, non-negative, and integrable, and called {\em spectral density}.
The decay rate of $\cF[\Phi](\omega) \to 0$ for increasing frequencies $\|\omega\| \to \infty$ determines the kernel's smoothness.
As described below, the decay rate for a Mat\'ern kernel is polynomial and faster for a larger smoothness parameter $\alpha$, and that for the squared-exponential kernel is exponential.

Theorem~\ref{theo:RKHS-shift-invariant} below shows that whether a function $f$ belongs to the RKHS is determined by whether its Fourier transform $\cF[f]$ decays fast enough compared with the spectral density $\cF[\Phi]$ for increasing frequencies.
This implies that the RKHS of a smoother kernel only contains smoother functions.
This result is available in, e.g., \citet[Lemma 3.1]{kimeldorf1970correspondence} and \citet[Theorem 10.12]{Wen05}.

\begin{theorem} \label{theo:RKHS-shift-invariant}
Let $k$ be a shift-invariant kernel on $\cX = \Re^d$ of the form~\eqref{eq:shift-invariant-kernel} for a function $\Phi: \mathbb{R}^d \to \mathbb{R}$
satisfying $\Phi \in C(\mathbb{R}^d) \cap L_1(\mathbb{R}^d)$.
Then the RKHS $\cH_k$ of $k$ is given by
\begin{align} \label{eq:RKHS-shift-invariant}
\cH_k &= \biggl\{ ~ f \in L_2( \Re^d ) \cap C(\Re^d):\   \nonumber \\
&   \| f \|_{\cH_k}^2 = \frac{1}{ (2\pi)^{d/2} }  \int_{\mathbb{R}^d} \frac{ |\cF[f](\omega)|^2 }{\cF[\Phi](\omega)} d\omega < \infty ~ \biggr\},
\end{align}
with the inner product given by
$$
\left<f, g \right>_{\cH_k} =  \frac{1}{ (2\pi)^{d/2} } \int_{\mathbb{R}^d} \frac{\cF [f] (\omega) \overline{\cF[g](\omega)}}{\cF [\Phi](\omega) }  d\omega \quad \text{for all }\ f,g \in \cH_k,
$$
where $\overline{\cF[g](\omega)}$ is the complex conjugate of $\cF[g](\omega)$.
\end{theorem}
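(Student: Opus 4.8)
The plan is to show that the set on the right-hand side of \eqref{eq:RKHS-shift-invariant}, call it $\mathcal{H}$, equipped with the stated inner product, is a reproducing kernel Hilbert space whose reproducing kernel is $k$; by the uniqueness in the Moore--Aronszajn theorem (the one-to-one correspondence between kernels and RKHSs from Section~\ref{sec:RKHS-def}) it must then equal $\cH_k$ as an inner-product space. I will use the facts recalled before the statement: $\cF[\Phi]$ is real-valued, non-negative (Bochner's theorem, since $k$ is positive definite) and integrable; $\cF[\Phi]$ is also bounded, with $\|\cF[\Phi]\|_\infty \le (2\pi)^{-d/2}\|\Phi\|_{L_1(\Re^d)}$; and the Plancherel theorem and the $L_1$-Fourier inversion theorem. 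On the set $\{\cF[\Phi]=0\}$ the integrand $|\cF[f]|^2/\cF[\Phi]$ is read as $0$, so a finite $\mathcal{H}$-norm forces $\cF[f]=0$ almost everywhere there, which makes the cancellations below valid a.e.

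First I would check that $\mathcal{H}$ is a Hilbert space of continuous functions. The map $f\mapsto\cF[f]$ is, by definition of the $\mathcal{H}$-norm and by Plancherel, a linear isometry of $\mathcal{H}$ into the weighted space $L_2$ of the measure $(2\pi)^{-d/2}\cF[\Phi](\omega)^{-1}d\omega$ on $\{\cF[\Phi]>0\}$. It is onto its Hermitian-symmetric part: given such a $g$, boundedness of $\cF[\Phi]$ gives $g\in L_2(\Re^d)$ (so $g$ has an $L_2$ inverse Fourier transform $f$), while Cauchy--Schwarz, $\int|g|\le(\int|g|^2/\cF[\Phi])^{1/2}(\int\cF[\Phi])^{1/2}<\infty$, gives $g\in L_1(\Re^d)$, so $f$ is continuous (and real if $g$ is Hermitian-symmetric) with $\cF[f]=g$; hence $f\in\mathcal{H}$. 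Completeness of the weighted $L_2$-space therefore transfers to $\mathcal{H}$, and the same argument shows every element of $\mathcal{H}$ is continuous, so $\mathcal{H}$ is a genuine Hilbert function space.

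Next I would verify the two defining properties of an RKHS. For the first, $k(\cdot,x)=\Phi(\cdot-x)$, so the translation rule gives $\cF[k(\cdot,x)](\omega)=e^{-\sqrt{-1}\,x^\top\omega}\cF[\Phi](\omega)$, whence
\[
\|k(\cdot,x)\|_{\mathcal{H}}^2=\frac{1}{(2\pi)^{d/2}}\int_{\Re^d}\cF[\Phi](\omega)\,d\omega=\Phi(0)=k(x,x)<\infty
\]
(the middle equality is inversion at the origin), and $k(\cdot,x)$ is continuous since $\Phi\in C(\Re^d)$ and lies in $L_2(\Re^d)$ since $|\cF[k(\cdot,x)]|=\cF[\Phi]\in L_2(\Re^d)$; thus $k(\cdot,x)\in\mathcal{H}$. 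For the reproducing property, for $f\in\mathcal{H}$ I substitute $\overline{\cF[k(\cdot,x)](\omega)}=e^{\sqrt{-1}\,x^\top\omega}\cF[\Phi](\omega)$ into the inner product and cancel $\cF[\Phi]$, obtaining $\langle f,k(\cdot,x)\rangle_{\mathcal{H}}=(2\pi)^{-d/2}\int_{\Re^d}\cF[f](\omega)\,e^{\sqrt{-1}\,x^\top\omega}\,d\omega$, which equals $f(x)$ by $L_1$-Fourier inversion once we know $\cF[f]\in L_1(\Re^d)$; this is again the Cauchy--Schwarz bound, now $\int|\cF[f]|\le(2\pi)^{d/2}\sqrt{\Phi(0)}\,\|f\|_{\mathcal{H}}<\infty$. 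Inversion recovers $f$ a priori only a.e., but both sides are continuous, so the identity holds for every $x$. Thus $\mathcal{H}$ is an RKHS with reproducing kernel $k$, and Moore--Aronszajn identifies it with $\cH_k$, together with the norm and inner product, giving the claimed formulas.

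The one genuinely delicate point is that for $f$ merely in $L_2(\Re^d)$ its Fourier transform exists a priori only in the Plancherel sense, so all the pointwise manipulations — the translation rule inside the integral, the cancellation of $\cF[\Phi]$ on $\{\cF[\Phi]>0\}$, and above all the pointwise inversion formula in the reproducing-property step — need justification; the crux is the Cauchy--Schwarz estimate $\cF[f]\in L_1(\Re^d)$ coming from finiteness of $\|f\|_{\mathcal{H}}$ and $\cF[\Phi]\in L_1(\Re^d)$, after which $L_1$-inversion applies and the hypothesis $f\in C(\Re^d)$ upgrades an a.e.\ identity to an everywhere identity. Treating the zero set of $\cF[\Phi]$ and the surjectivity in the completeness step are the other places needing a little care, but both are routine given that $\cF[\Phi]$ is bounded and integrable.
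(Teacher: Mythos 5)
Your proposal is correct. The monograph does not actually prove Theorem~\ref{theo:RKHS-shift-invariant}; it cites \citet[Lemma 3.1]{kimeldorf1970correspondence} and \citet[Theorem 10.12]{Wen05}, and your argument is essentially the standard one from the latter: exhibit the candidate space as a Hilbert space of continuous functions via the Fourier isometry onto a weighted $L_2$ space, check $k(\cdot,x)\in\mathcal{H}$ and the reproducing property (with the Cauchy--Schwarz bound $\int|\cF[f]|\leq (2\pi)^{d/2}\sqrt{\Phi(0)}\,\|f\|_{\mathcal{H}}$ doing the real work of licensing pointwise $L_1$-inversion), and conclude by Moore--Aronszajn uniqueness. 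One wording slip worth fixing: you say the integrand is ``read as $0$'' on $\{\cF[\Phi]=0\}$ and that finiteness of the norm then forces $\cF[f]=0$ a.e.\ there --- with that convention finiteness forces nothing; you need the convention $|c|^2/0=+\infty$ for $c\neq 0$ (equivalently, build the support restriction into the definition of $\mathcal{H}$), since otherwise the reproducing-property computation would genuinely fail for a kernel whose spectral density vanishes on a set of positive measure. The conclusion you draw is the right one, so this is a presentational fix rather than a gap.
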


Eq.~\eqref{eq:RKHS-shift-invariant} shows that the RKHS norm of a function $f$ is a constant times the integral,  over frequencies $\omega \in \mathbb{R}^d$, of its Fourier squared $|\cF[f](\omega)|^2$ divided by the kernel's spectral density $\cF[\Phi](\omega)$.
This implies that, for $f$ to have a finite RKHS norm, the Fourier transform $\cF[f](\omega)$ should be small for frequencies $\omega$ for which the spectral density $\cF[\Phi](\omega)$ is small.
In particular, the decay rate of $|\cF[f](\omega)|^2$ should be faster than the decay rate of $\cF[\Phi](\omega)$ as $\| \omega \| \to \infty$.

Our first example of Theorem~\ref{theo:RKHS-shift-invariant} is the RKHS of a squared-exponential kernel.
It shows that higher frequency components of any function in the RKHS decay exponentially fast, implying its infinite smoothness.
\begin{example}[RKHS of a squared-exponential kernel] \label{ex:RKHS-squared-exponential}
Let
$$
k_\gamma(x,x') = \Phi_\gamma(x-x') := \exp\left( - \frac{\| x - x' \|^2}{\gamma^2} \right)
$$
be the squared-exponential kernel on $\mathbb{R}^d$ with bandwidth $\gamma > 0$ in Example \ref{ex:square-exp-kernel}.
The Fourier transform of $\Phi_\gamma$ is given by
\begin{equation*} \label{eq:Gaussian-fourier}
\cF[\Phi_\gamma](\omega) = C_{d,\gamma} \exp\left(- \frac{ \gamma^2 \| \omega \|^2 }{4} \right) \quad \text{for all }\ \omega \in \Re^d,
\end{equation*}
where $C_{d,\gamma}$ is a constant depending only on $d$ and $\gamma$~\citep[e.g.,][Theorem 5.20]{Wen05}.
Thus, the RKHS $\cH_{k_\gamma}$ of $k_\gamma$ is
\begin{align*}
\cH_{k_\gamma} &=  \Biggl\{  f \in L_2( \Re^d ) \cap C(\Re^d):\   \\
& \left. \| f \|_{\cH_{k_\gamma}}^2 = \frac{1}{ (2\pi)^{d/2} C_{d,\gamma} }  \int_{\mathbb{R}^d}  |\cF[f](\omega)|^2  \exp\left( \frac{\gamma^2 \| \omega \|^2}{4}\right)  d\omega < \infty \right\}.
\end{align*}
Therefore, the Fourier transform $\cF[f](\omega)$ of any function $f$ in the RKHS decays to zero {\em exponentially fast} for increasing frequencies $\|\omega\| \to \infty$.
This decay rate is faster for a larger bandwidth $\gamma$.
\end{example}

The next example is the RKHS of a Mat\'ern kernel, where each function's higher frequency components decay at least polynomially fast, and the decay rate becomes faster as the smoothness parameter increases.

\begin{example}[RKHS of a Mat\'ern kernel] \label{ex:RKHS-Matern}
Let
$$
k_{\alpha}(x,x') = \Phi_{\alpha}(x - x') := \frac{2^{1-\alpha}}{\Gamma(\alpha)} \left( \frac{\sqrt{2\alpha} \| x - x' \|}{ h} \right) K_\alpha\left( \frac{ \sqrt{2\alpha} \|x - x' \| }{ h }\right)
$$
be the Mat\'ern kernel on $\Re^d$ with smoothness $\alpha > 0$ and bandwidth $h > 0$ in Example \ref{ex:matern-kernel}.
The Fourier transform of $\Phi_{\alpha}$ is given by
\begin{equation} \label{eq:Fourier-matern}
\cF[\Phi_{\alpha, h}](\omega) = C_{\alpha,h,d}\ \left( \frac{2\alpha}{h^2} + 4 \pi^2 \| \omega \|^2 \right)^{- \alpha - d/2} \quad \text{for all }\ \omega \in \Re^d,
\end{equation}
where $C_{\alpha,h,d}$ is a constant depending only on $\alpha$, $h$ and $d$  \citep[e.g.,][Eq.~4.15]{RasmussenWilliams}.
Thus, the RKHS $\cH_{k_{\alpha}}$ of $k_{\alpha}$ is
\begin{align*}
& \cH_{k_{\alpha}} =  \Biggl\{  f \in L_2( \Re^d ) \cap C(\Re^d):  \\
& \| f \|_{\cH_{k_{\alpha}}}^2 =  \frac{1}{ (2\pi)^{d/2} C_{\alpha,h,d} }  \int_{\mathbb{R}^d}  |\cF[f](\omega)|^2  \left( \frac{2\alpha}{h^2} + 4 \pi^2 \| \omega \|^2 \right)^{\alpha + d/2}  d\omega < \infty \Biggr\}.
\end{align*}
For a function $f$ to have a finite RKHS norm, its Fourier transform $\mathcal{F}[f](\omega)$ should decay to zero at least polynomially fast as $\| \omega \| \to \infty$, and the decay rate is faster for larger $\alpha$.
From \eqref{eq:Fourier-matern} and \citet[Corollary 10.48]{Wen05}, the RKHS is norm-equivalent to the Sobolev space of order $\alpha+d/2$, consistent with Example~\ref{ex:matern-rkhs}.
\end{example}

\chapter{Sample Paths of Gaussian Processes and RKHSs}
\label{sec:theory}

A GP is a collection of real-valued Gaussian random variables $F(x)$ indexed by a set of locations $x \in \cX$, and their realizations, say $f(x) \in \mathbb{R}$ for each $x \in \cX$, define a real-valued function that maps a location to a real value, $x \mapsto f(x)$; this is called a {\em sample path}, {\em sample function}, or simply {\em sample} of the GP.
Since a GP-based method assumes that an unknown function is a sample of a specified GP, the method's success depends on whether this assumption is satisfied at least approximately.
Therefore, the properties of a GP sample, such as its smoothness, should be understood for a GP-based method to be used successfully.

This chapter discusses the properties of a GP sample through the lens of RKHSs.
To this end, we first describe a series representation of a GP based on orthonormal basis functions of the RKHS, known as  {\em Karhunen-Loève expansion} (Section \ref{sec:hypothesis-mercer}).
To this end,  {\em Mercer's theorem}, which gives an orthonormal basis expansion of the kernel, and {\em Mercer representation} of the RKHS, a basis expansion of the RKHS, are reviewed.
It is shown that the Karhunen-Loève expansion is a probabilistic counterpart of the Mercer representation and holds due to equivalence between the Gaussian Hilbert space and the RKHS.
These series representations help understand the properties of a GP sample.

Section \ref{sec:driscol-sample-path} then discusses {\em Driscoll's theorem} \citep{driscoll1973reproducing,LukBed01}, which gives the necessary and sufficient condition for a GP sample to belong to a {\em given} RKHS; this RKHS can be different from the GP kernel's RKHS.
Intuitively, it states that a GP sample belongs to a given RKHS if and only if it is ``sufficiently larger'' than the GP kernel's RKHS.
This indicates that a GP sample is less smooth than RKHS functions.
The immediate consequence is that, as is well-known  \citep[e.g.][]{wahba1990spline}, a GP sample does {\em not} belong to the GP kernel's RKHS with probability one whenever the RKHS is infinite-dimensional (such as the case of squared-exponential and Mat\'ern kernels).

Section \ref{sec:power-RKHS-sample-path} describes that a GP sample belongs to a function space defined as a {\em power} of the RKHS, which is another RKHS interpolating the original RKHS and the space of square-integrable functions~\citep{steinwart2019convergence}.
This result enables quantifying the smoothness of a GP sample.
As corollaries,  sample path properties of GPs associated with squared-exponential kernels and Mat\'ern kernels are described in Section~\ref{sec:examples-sample-path}.

\begin{figure}[th!]
  \includegraphics[width=\linewidth]{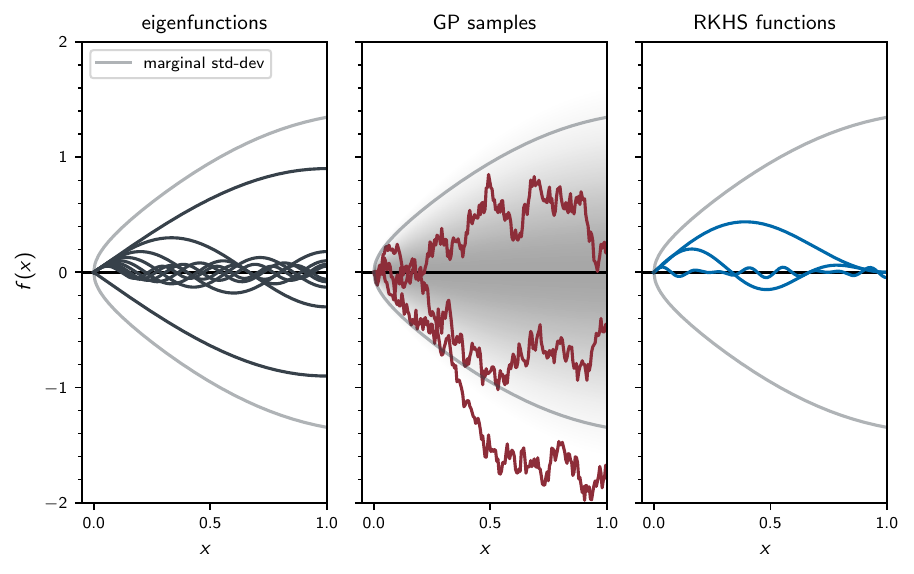}
  \caption{
  Illustration of the eigenfunctions of a kernel integral operator ({\bf left}), GP samples ({\bf center}), RKHS functions ({\bf right}), using
  the Brownian motion kernel $k(x,x') = \min(x,x')$ on the unit interval $\cX = [0,1]$ with $\nu$ being uniform on $\cX$, so that a GP sample $F \sim \GP(0,k)$ is a Brownian motion (approximately smoothness $1/2$) and the RKHS $\cH_k$ is the first-order Sobolev space (smoothness $1$).
  The \textbf{left} panel shows (in thick black) the first 10 eigenfunctions $\phi_i$ of the integral operator~\eqref{eq:integral-eigen-decomp}, each scaled by the associated eigenvalue's square root $\lambda_i^{1/2}$ to have the unit RKHS norm (See Section \ref{sec:hypothesis-mercer} for details).
  The \textbf{center} panel shows independent GP samples (= Brownian motion realizations), generated approximately by truncating the Karhunen-Loève expansion~\eqref{eq:KL-exp-informal} after 300 terms.
 The \textbf{right} panel shows manually constructed functions of \emph{unit} RKHS norm (created by adding together two eigenfunctions, in each case, and scaling to achieve unit norm).
 Each panel also shows the function $s(x)=\pm\sqrt{k(x,x)}$ which can be interpreted, by Corollary~\ref{coro:std-max-equiv-eval-func} in Chapter~\ref{sec:fund-equiv-gp-RKHS} as $s(x)=\pm\sqrt{\mathbb{E}_{F\sim \GP(0,k)} (F(x))^2}$ or as $s(x)= \pm \sup_{\|f\|_{\mathcal{H}_k \leq 1}} |f(x)|$.
 GP sample functions are (in expectation) close to this curve,  while RKHS functions with unit norm are upper-bounded by this curve.
  }
  \label{fig:GP_intro}
\end{figure}

For simplicity, this chapter assumes that $\cX$ is a compact metric space (e.g., a bounded and closed subset of $\Re^d$) and $k$ is a continuous kernel on $\cX$.
Under this assumption, the RKHS $\cH_k$ is separable~\citep[Lemma 4.33]{Steinwart2008} and has countable basis functions.

\section{Orthogonal Basis Expansions of GPs and RKHSs} \label{sec:hypothesis-mercer}

Series representations based on the eigenfunctions and eigenvalues of a kernel integral operator, introduced in Section~\ref{sec:integral-op}, are described for kernels in Section~\ref{sec:Mercer} (Mercer's theorem), for RKHSs in Section~\ref{sec:mercer-RKHS} (Mercer representation), and for GPs in Section~\ref{sec:KL-expansion} (Karhunen-Loève expansion).

Mercer's theorem \citep{Mercer1909,SteSco12} gives a series representation of a kernel based on the eigenfunctions and eigenvalues of a kernel integral operator.
These eigenfunctions are orthonormal in the space of square-integrable functions, and the eigenfunctions multiplied by the eigenvalues' square roots form an orthonormal basis for the RKHS.
Thus, we start by describing the integral operator and its eigendecomposition.

\subsection{Kernel Integral Operator}
\label{sec:integral-op}

Let $\nu$ be a finite Borel measure on $\cX$ with its support being $\cX$, such as the Lebesgue measure on a compact $\cX \subset \Re^d$.
Let $L_2(\nu)$ be the Hilbert space of square-integrable functions\footnote{Strictly, here each $f \in L_2(\nu)$ represents a class of functions that are equivalent $\nu$-almost everywhere.} with respect to $\nu$:
\begin{equation*}
L_2(\nu) := \left\{f:\cX \to \Re: \| f \|_{L_2(\nu)}^2 := \int |f(x)|^2 d\nu(x) < \infty \right\},
\end{equation*}
for which the inner product is defined as
$$
\left<f, g \right>_{L_2(\nu)} = \int f(x)g(x)d\nu(x) \quad \text{for all }\ f, g \in L_2(\nu).
$$
We define an integral operator  $T_k: L_2(\nu) \to L_2(\nu)$ that takes a square-integrable function $f$ and outputs its convolution with the kernel $k$ under the measure $\nu$:
\begin{equation} \label{eq:integral_operator}
T_k f := \int k(\cdot,x) f(x) d \nu(x) \quad \text{for all }\ f \in L_2(\nu).
\end{equation}
This operator smooths an input function by integrating it with the kernel, and the output function's smoothness is determined by the kernel's smoothness.
Thus, it can be interpreted as an operator representation of the kernel.

\paragraph{Eigendecomposition of the Integral Operator}
Under the conditions considered, the integral operator $T_k$ is compact, positive and self-adjoint, so that the spectral theorem holds~\citep[e.g.,][Theorem A.5.13]{Steinwart2008}.
This implies that the integral operator $T_k$ applied to $f$ can be expressed as a (finite or infinite) sum of the components of $f$ corresponding to different eigenfunctions  of $T_k$ multiplied by the associated eigenvalues:
\begin{equation} \label{eq:integral-eigen-decomp}
T_k f = \sum_{i \in I} \lambda_i \left< \phi_i, f \right>_{L_2(\nu)} \phi_i,
\end{equation}
where the convergence is in $L_2(\nu)$.
Here, $I \subset \mathbb{N}$
 is a set of indices, which is an infinite set if the RKHS of $k$ is infinite-dimensional (e.g., $I = \{1, 2, ... \}$) and a finite set if the RKHS is finite-dimensional.

The functions $\phi_1, \phi_2, \dots \in L_2(\nu)$ are the eigenfunctions of $T_k$ and $\lambda_1 \geq \lambda_2 \geq \cdots > 0$ are the associated eigenvalues such that
\begin{equation} \label{eq:int-op-eig-def}
  T_k  \phi_i  = \lambda_i \phi_i \quad \text{for all }\ i \in I.
\end{equation}
These eigenfunctions form an orthonormal system in $L_2(\nu)$:
\begin{equation} \label{eq:ONB-def}
    \left< \phi_i, \phi_j\right>_{L_2(\nu)} =
\begin{cases}
    1 & \text{if }  i = j \\
    0 & \text{if }  i \not= j.
\end{cases}
\end{equation}
Intuitively, eigenfunctions with larger eigenvalues are smoother than those with smaller eigenvalues.

\subsection{Mercer's Theorem}

\label{sec:Mercer}

{\em Mercer's theorem} states that the kernel can be expanded in terms of the eigenfunctions and eigenvalues of the integral operator.
Theorem~\ref{theo:mercer} below is due to \citet[Theorem 4.49]{Steinwart2008}, while Mercer's theorem holds under weaker assumptions than those considered here \citep[e.g.,][Section 3]{SteSco12}.

\begin{theorem}[Mercer's theorem] \label{theo:mercer}
Let $\cX$ be a compact metric space, $k:\cX \times \cX \to \Re$ be a continuous kernel, $\nu$ be a finite Borel measure whose support is $\cX$, and $(\phi_i,\lambda_i)_{i \in I}$ be the eigenfunctions and eigenvalues of the integral operator~\eqref{eq:integral-eigen-decomp}.
Then we have
\begin{equation}  \label{eq:mercer}
  k(x,x') =  \sum_{i \in I} \lambda_i \phi_i(x) \phi_i(x'), \quad x,x' \in \cX,
  \end{equation}
  where the convergence is uniform over $x, x' \in \cX$ and absolute.
\end{theorem}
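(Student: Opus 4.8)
The plan is to build the kernel expansion from the eigendecomposition of the integral operator $T_k$ and then upgrade the mode of convergence from $L_2(\nu)$ to uniform and absolute on $\cX \times \cX$. First I would fix $x' \in \cX$ and consider the function $k(\cdot, x')$. Since $k$ is continuous and $\cX$ is compact, $k(\cdot,x') \in C(\cX) \subset L_2(\nu)$, and moreover $k(\cdot,x') \in \cH_k$ by the first property of Definition~\ref{def:RKHS}. The first step is to establish that the scaled eigenfunctions $\{\lambda_i^{1/2}\phi_i\}_{i\in I}$ form an orthonormal basis of $\cH_k$: orthonormality follows from $\langle \lambda_i^{1/2}\phi_i, \lambda_j^{1/2}\phi_j\rangle_{\cH_k} = \delta_{ij}$, which one checks using $\phi_i = \lambda_i^{-1} T_k \phi_i$ together with the identity $\langle T_k f, g\rangle_{\cH_k} = \langle f, g\rangle_{L_2(\nu)}$ (valid for $f \in L_2(\nu)$, $g \in \cH_k$, from the reproducing property and the definition~\eqref{eq:integral_operator} of $T_k$); completeness follows because the span of $\{k(\cdot,x)\}$ is dense in $\cH_k$ and each $k(\cdot,x)$ is in the closed span of the $\phi_i$. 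Expanding $k(\cdot,x')$ in this basis and using the reproducing property to compute the coefficients, $\langle k(\cdot,x'), \lambda_i^{1/2}\phi_i\rangle_{\cH_k} = \lambda_i^{1/2}\phi_i(x')$, yields $k(\cdot,x') = \sum_{i\in I}\lambda_i \phi_i(x')\phi_i(\cdot)$ with convergence in $\cH_k$, hence pointwise by~\eqref{eq:point-wise-bound-pre-hilbert}. Evaluating at $x$ gives~\eqref{eq:mercer} pointwise.

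The main work — and the expected obstacle — is the promotion to \emph{uniform and absolute} convergence. For absolute convergence at a fixed pair, note $\sum_i \lambda_i |\phi_i(x)\phi_i(x')| \leq (\sum_i \lambda_i \phi_i(x)^2)^{1/2}(\sum_i \lambda_i \phi_i(x')^2)^{1/2}$ by Cauchy--Schwarz, and $\sum_i \lambda_i \phi_i(x)^2 = \|k(\cdot,x)\|_{\cH_k}^2 = k(x,x)$ from the basis expansion; since $x \mapsto k(x,x)$ is continuous on the compact $\cX$, it is bounded, giving a uniform bound on the absolute tails of the diagonal series. For uniform convergence one uses Dini's theorem: the partial sums $s_n(x) := \sum_{i\leq n}\lambda_i\phi_i(x)^2$ are continuous (each $\phi_i$ is continuous — this itself requires the argument that $\phi_i = \lambda_i^{-1}T_k\phi_i \in \cH_k \subset C(\cX)$), nondecreasing in $n$, and converge pointwise to the continuous function $k(x,x)$; Dini then gives uniform convergence of $s_n$ to $k(\cdot,\cdot)|_{\text{diag}}$, i.e., uniform smallness of the diagonal tail $\sum_{i>n}\lambda_i\phi_i(x)^2$. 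Finally, another Cauchy--Schwarz step bounds the off-diagonal tail by the square root of the product of diagonal tails, transferring the uniform control to $\sup_{x,x'} |\sum_{i>n}\lambda_i\phi_i(x)\phi_i(x')|$.

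The subtle points I would be most careful about are: verifying $\phi_i \in \cH_k$ and hence $\phi_i \in C(\cX)$ (so that Dini's theorem applies and the pointwise values are well-defined rather than mere $\nu$-equivalence classes); justifying the interchange of summation and evaluation, which is exactly what the RKHS pointwise bound~\eqref{eq:point-wise-bound-pre-hilbert} licenses; and confirming that $\{\lambda_i^{1/2}\phi_i\}$ is genuinely \emph{complete} in $\cH_k$, not merely orthonormal — if the support of $\nu$ were smaller than $\cX$ this could fail, which is why the hypothesis $\operatorname{supp}(\nu) = \cX$ is essential and should be invoked explicitly at that step. Everything else is routine Hilbert-space bookkeeping given the machinery already developed in Sections~\ref{sec:RKHS-def} and~\ref{sec:integral-op}.
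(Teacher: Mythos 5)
Your proposal is sound, but note that the monograph does not actually prove Theorem~\ref{theo:mercer}; it imports it from \citet[Theorem 4.49]{Steinwart2008}, and the only computation given after the statement is a consistency check (plugging \eqref{eq:mercer} into $T_k$ to recover \eqref{eq:int-op-eig-def}), not a proof. What you have written is essentially the standard RKHS-based argument (the route of Steinwart--Scovel), and all three of the subtle points you flag are exactly the right ones. One structural remark: you establish that $(\lambda_i^{1/2}\phi_i)_{i\in I}$ is an orthonormal basis of $\cH_k$ \emph{first} and then deduce \eqref{eq:mercer} from the basis expansion of $k(\cdot,x')$, whereas the paper presents that basis property (Theorem~\ref{theo:mercer-RKHS}) as a \emph{consequence} of Mercer's theorem. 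This reversal is legitimate, but it means you cannot lean on Theorem~\ref{theo:mercer-RKHS} and must prove completeness independently; your sketch does this correctly, via the chain: $g\perp\tilde\phi_i$ for all $i$ $\Rightarrow$ $g\perp\overline{\mathrm{ran}(I_k)}$ in $L_2(\nu)$ $\Rightarrow$ $g=0$ $\nu$-a.e. $\Rightarrow$ $g\equiv 0$ by continuity and $\mathrm{supp}(\nu)=\cX$. Two small items to make fully rigorous in a write-up: (i) the identity $\left<T_k f, g\right>_{\cH_k}=\left<f,g\right>_{L_2(\nu)}$ needs the Bochner integrability of $x\mapsto f(x)k(\cdot,x)$ in $\cH_k$, which follows from $\|f(x)k(\cdot,x)\|_{\cH_k}=|f(x)|\sqrt{k(x,x)}$ being $\nu$-integrable on the compact $\cX$; and (ii) you should fix once and for all the continuous representatives $\tilde\phi_i:=\lambda_i^{-1}T_k\phi_i\in\cH_k$ before applying Dini, since the $\phi_i$ are a priori only $\nu$-equivalence classes. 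The Parseval identity $\sum_i\lambda_i\tilde\phi_i(x)^2=k(x,x)$, Dini on the monotone continuous partial sums, and the final Cauchy--Schwarz transfer to the off-diagonal tail together give uniform absolute convergence exactly as you describe; nothing in the plan would fail.
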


Mercer's theorem implies that the kernel can be written as the Euclidean inner product between feature vectors composed of the eigenfunctions and eigenvalues.
A feature vector $\Psi(x)$ of an input point $x$ can be defined so that the $i$-th component is the $i$-th eigenfunction value $\phi_i(x)$ of $x$ multiplied by the eigenvalue's square root $\sqrt{\lambda_i}$:
\begin{equation} \label{eq:explicit-feature-RKHS}
\Psi(x) :=  (\sqrt{\lambda_i} \phi_i(x) )_{i \in I}.
\end{equation}
Then the kernel value $k(x,x')$ for two input points $x, x'$ is equal to the Euclidean inner product between their feature vectors:
$$
k(x,x') = \Psi(x)^\top \Psi(x').
$$
In this sense, the eigenfunctions and eigenvalues provide an ``explicit'' construction of feature vectors.

From the Mercer expression~\eqref{eq:mercer}, it is easy to see that $\phi_i$ are the eigenfunctions of the integral operator~\eqref{eq:integral_operator} and $\lambda_i$ are the associated eigenvalues, since
\begin{align*}
  T_k \phi_i  & =  \int k(\cdot,x)\phi_i(x)d\nu(x)
    = \int \sum_{j \in I} \lambda_j \phi_j \phi_j(x) \phi_i(x) d\nu(x) \\
    & = \sum_{j \in I} \lambda_j \phi_j \left< \phi_j, \phi_i \right>_{L_2(\nu)} = \lambda_i \phi_i \quad \text{for all }\ i \in I,
\end{align*}
recovering \eqref{eq:int-op-eig-def}, where the last identity follows from that $\phi_i$ are orthonormal functions in $L_2(\nu)$; see \eqref{eq:ONB-def}.

The eigenfunctions, the eigenvalues and Mercer's theorem are best illustrated by the following example on a kernel whose RKHS is a periodic Sobolev space~\citep[e.g.,][]{wahba1975smoothing,Bac17}.

\begin{example}[Periodic Sobolev Kernels] \label{ex:periodic-sobolev}
    For a positive integer $s$, the following kernel on the unit interval $\cX = [0,1]$  defines the $s$-th order Sobolev space consisting of periodic functions $f$ on $[0,1]$ whose values are the same at the bounds: $f(0) = f(1)$:
    $$
    k_s(x, x') = 1 + \frac{ (-1)^{s-1} (2\pi)^{2s} }{ (2s)! } B_{2s}(\{ x - x' \}) \quad \text{for all }\ x, x \in [0,1],
    $$
    where $\{ x - x' \}$ is the fractional part of $x - x'$ and $B_{2s}$ is the $2s$-th Bernoulli polynomial.
    It is periodic on $[0,1]$ as the kernel value for $x = 0$ and $x' = 1$ is the same as that for $x = x' = 0$.
    For instance, for $s = 1, 2$ the kernel takes the form
    \begin{align*}
        k_1 (x, x') &= 1 + 2\pi^2 \left( \{x-x'\}^2 - \{x-x'\} + \frac{1}{6} \right) \\
        k_2 (x, x') &= 1 - \frac{(2\pi)^{4}}{4!} \left(  \{x-x'\}^4 - 2\{x-x'\}^3 + \{x-x'\}^2 - \frac{1}{30} \right).
    \end{align*}

Using the Fourier series expansion of the Bernoulli polynomial~\citep[e.g.,][Section 23.1.18]{abramowitz1964handbook} and a trigonometric identity, the kernel can be expanded as
     \begin{align*}
    & k_s(x,x') =  1 + 2 \sum_{m = 1}^\infty m^{-2s} \cos (2\pi m(x - x') )   \\
    & = 1 + 2 \sum_{m = 1}^\infty m^{-2s}  \left( \cos (2\pi m x ) \cos (2\pi m x' ) + \sin(2\pi mx) \sin(2\pi mx') \right).
     \end{align*}
     The Mercer expansion~\eqref{eq:mercer} is obtained if we define
     \begin{align*}
         \phi_1(x) &  = 1, \quad \lambda_1 = 1, \\
         \phi_i(x) &=  \sqrt{2} \cos (i \pi  x), \quad \lambda_i = (i/2)^{-2s}, \quad \text{if }\ i = 2, 4, 6, \dots,  \\
         \phi_i(x) &=  \sqrt{2} \sin ((i-1) \pi  x), \quad \lambda_i = ((i-1)/2)^{-2s}, \quad \text{if }\ i = 3, 5, 7, \dots
     \end{align*}
    These $\phi_i$ are orthonormal basis functions in $L_2(\nu)$ for $\nu$ the uniform distribution on $\cX = [0,1]$.
    Thus, $\phi_i$ and $\lambda_i$ are the eigenfunctions and eigenvalues of the integral operator~\eqref{eq:integral_operator}.

    The eigenfunctions $\phi_i$ associated with larger eigenvalues $\lambda_i$ have {\em lower frequencies} (smaller $i$), and thus, they are smoother than the eigenfunctions $\phi_i$ with smaller eigenvalues $\lambda_i$ having {\em higher frequencies} (larger $i$).
    A larger Sobolev order $s$ makes the decay of eigenvalues $\lambda_i$ for increasing $i$ faster, and thus lowers smaller eigenvalues $\lambda_i$ and reduces the contributions of the eigenfunctions $\phi_i$ having higher frequencies, making the kernel smoother.
\end{example}

\begin{remark}\rm \label{rem:mercer-uniequness}
The expansion in \eqref{eq:mercer} depends on the measure $\nu$, since $(\phi_i,\lambda_i)_{i \in I}$ is an eigensystem of the integral operator \eqref{eq:integral_operator}, which is defined with $\nu$.
However, the kernel $k$ in the left side is unique, irrespective of the choice of $\nu$.
In other words, a different choice of $\nu$ results in a different eigensystem $(\phi_i,\lambda_i)_{i \in I}$, and thus results in a different basis expression of the same kernel $k$.
\end{remark}

\begin{remark}\rm \label{rem:mercer-measure}
In Theorem \ref{theo:mercer}, the assumption that $\nu$ has $\cX$ as its support is important, since otherwise the equality  \eqref{eq:mercer} may not hold for some $x \in \cX$.
For instance, assume that there is an open set $N \subset \cX$ such that $\nu(N) = 0$.
Then the integral operator \eqref{eq:integral_operator} does not take into account the values of a function $f$ on $N$, and therefore the eigenfunctions $\phi_i$ are only uniquely defined on $\cX \backslash N$, in which case, the equality in \eqref{eq:mercer} holds only on $\cX \backslash N$.
We refer to \citet[Corollaries 3.2 and 3.5]{SteSco12} for precise statements of Mercer's theorem in such a case.
\end{remark}

\subsection{Mercer Representation of RKHSs}
\label{sec:mercer-RKHS}

The {\em Mercer representation} of the RKHS~\citep[Theorem 4.51]{Steinwart2008}, presented below,  states that the eigenfunctions $(\phi_i)_{i \in I}$ times the eigenvalues' square roots $(\sqrt{\lambda_i})_{i \in I}$, i.e., the feature map in Eq.\eqref{eq:explicit-feature-RKHS} giving the kernel's expansion, form an {\em orthonormal basis} of the RKHS.
In other words, each function $f$ in the RKHS is uniquely associated with a square-summable weight sequence $(\alpha_i)_{i \in I}$, whose length may be finite or infinite, such that the function is written as the sum of these weights times the eigenfunctions times the eigenvalues' square roots:
$$
f = \sum_{i \in I} \alpha_i \lambda_i^{1/2} \phi_i.
$$
The RKHS norm of a function is identical to the Euclidean norm of the associated weights; the RKHS inner product between any two functions is the Euclidean inner product between the two associated weight sequences.

\begin{theorem}[Mercer Representation] \label{theo:mercer-RKHS}
Let $\cX$ be a compact metric space, $k:\cX \times \cX \to \Re$ be a continuous kernel with RKHS $\cH_k$, $\nu$ be a finite Borel measure whose support is $\cX$, and $(\phi_i,\lambda_i)_{i \in I}$ be the eigenfunctions and eigenvalues of the integral operator~\eqref{eq:integral-eigen-decomp}.
Then,
\begin{align}
 \cH_k = \biggl\{f: \cX \to \mathbb{R}:\ & f
 = \sum_{i \in I} \alpha_i \lambda_i^{1/2} \phi_i \ \ \text{for some }  (\alpha_i)_{i \in I} \subset \mathbb{R} \nonumber \\
 & \text{such that}\ \| f \|_{\cH_k}^2 := \sum_{i \in I} \alpha_i^2 < \infty \biggr\}, \label{eq:RKHS-by-EFs}
\end{align}
and the inner product is given by
\begin{align*}
  & \left< f,g \right>_{\cH_k} = \sum_{i \in I} \alpha_i\beta_i \\ & \text{for all}
  \q f := \sum_{i \in I} \alpha_i \lambda_i^{1/2} \phi_i \in \cH_k,\q g := \sum_{i\in I} \beta_i \lambda_i^{1/2} \phi_i \in \cH_k.
\end{align*}
In other words, $(\lambda_i^{1/2} \phi_i )_{i \in I}$ form an orthonormal basis of $\cH_k$.

\end{theorem}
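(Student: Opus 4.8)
The plan is to exhibit the right-hand side of \eqref{eq:RKHS-by-EFs}, call it $\tilde\cH$, equipped with the bilinear form $\langle f,g\rangle:=\sum_{i\in I}\alpha_i\beta_i$, as a Hilbert space of functions on $\cX$ which contains every canonical feature $k(\cdot,x)$ and on which the reproducing property holds, and then to invoke the uniqueness part of the Moore--Aronszajn theorem to conclude $\tilde\cH=\cH_k$ with matching inner products. The orthonormal-basis statement is then read off the inner-product formula. Throughout, for a square-summable family $(\alpha_i)_{i\in I}$ I write $f_{(\alpha_i)}:=\sum_{i\in I}\alpha_i\lambda_i^{1/2}\phi_i$.

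First I would check that $\tilde\cH$ is a genuine space of functions. By Cauchy--Schwarz and the on-diagonal case of Mercer's theorem (Theorem~\ref{theo:mercer}),
\[
\sum_{i\in I}\bigl|\alpha_i\lambda_i^{1/2}\phi_i(x)\bigr|\le\Bigl(\sum_{i\in I}\alpha_i^2\Bigr)^{1/2}\Bigl(\sum_{i\in I}\lambda_i\phi_i(x)^2\Bigr)^{1/2}=\Bigl(\sum_{i\in I}\alpha_i^2\Bigr)^{1/2}\sqrt{k(x,x)},
\]
so $f_{(\alpha_i)}(x)$ converges absolutely at each $x$; since $\sup_x k(x,x)<\infty$ by continuity of $k$ on the compact $\cX$ and the tails $\sum_{i>N}\alpha_i^2$ vanish uniformly in $x$, the convergence is uniform, so $f_{(\alpha_i)}\in C(\cX)$. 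The same series converges in $L_2(\nu)$ as well, because $\|f_{(\alpha_i)}\|_{L_2(\nu)}^2=\sum_i\alpha_i^2\lambda_i\le\lambda_1\sum_i\alpha_i^2<\infty$; orthonormality of $(\phi_i)$ in $L_2(\nu)$ then yields $\alpha_i=\lambda_i^{-1/2}\langle f_{(\alpha_i)},\phi_i\rangle_{L_2(\nu)}$ (using $\lambda_i>0$), and since $\nu$ has full support a continuous function determines its $L_2(\nu)$-class, so the family $(\alpha_i)$ is recovered from $f_{(\alpha_i)}$. Hence $(\alpha_i)_{i\in I}\mapsto f_{(\alpha_i)}$ is a bijection $\ell^2(I)\to\tilde\cH$, the stated form is a well-defined inner product, and $\tilde\cH$ is isometrically isomorphic to $\ell^2(I)$, hence complete.

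Next I would insert the canonical features and verify reproduction. Mercer's theorem gives $k(\cdot,x)=\sum_{i\in I}\lambda_i\phi_i(x)\phi_i=\sum_{i\in I}\bigl(\lambda_i^{1/2}\phi_i(x)\bigr)\lambda_i^{1/2}\phi_i$, so $k(\cdot,x)=f_{(\lambda_i^{1/2}\phi_i(x))}\in\tilde\cH$ because $\sum_i\bigl(\lambda_i^{1/2}\phi_i(x)\bigr)^2=\sum_i\lambda_i\phi_i(x)^2=k(x,x)<\infty$. Then for every $f=f_{(\alpha_i)}\in\tilde\cH$,
\[
\bigl\langle f_{(\alpha_i)},\,k(\cdot,x)\bigr\rangle_{\tilde\cH}=\sum_{i\in I}\alpha_i\,\lambda_i^{1/2}\phi_i(x)=f_{(\alpha_i)}(x),
\]
the last equality being the absolutely convergent pointwise series. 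Thus $\tilde\cH$ satisfies both conditions of Definition~\ref{def:RKHS} with kernel $k$, so by the uniqueness of the RKHS associated with $k$ (Moore--Aronszajn) we get $\tilde\cH=\cH_k$ as Hilbert spaces. This is exactly \eqref{eq:RKHS-by-EFs} together with $\langle f,g\rangle_{\cH_k}=\sum_i\alpha_i\beta_i$; in particular $\langle\lambda_i^{1/2}\phi_i,\lambda_j^{1/2}\phi_j\rangle_{\cH_k}=\delta_{ij}$ and every $f\in\cH_k$ is the $\cH_k$-convergent expansion $\sum_i\alpha_i\lambda_i^{1/2}\phi_i$, so $(\lambda_i^{1/2}\phi_i)_{i\in I}$ is an orthonormal basis.

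The main obstacle is the first step: one must confirm that the formal eigenexpansion converges---absolutely, hence uniformly, via Cauchy--Schwarz together with Mercer's on-diagonal identity $\sum_i\lambda_i\phi_i(x)^2=k(x,x)$---to a bona fide continuous function, and that this function determines its coefficients uniquely, which is precisely where the orthonormality of $(\phi_i)$ in $L_2(\nu)$ and the full-support hypothesis on $\nu$ enter. Everything after that is bookkeeping, and identifying $\tilde\cH$ with $\cH_k$ via Moore--Aronszajn spares us from re-deriving the space from the completion-of-$\cH_0$ construction of Section~\ref{sec:RKHS-explict-const}.
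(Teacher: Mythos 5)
Your proposal is correct and complete. The paper itself does not prove Theorem~\ref{theo:mercer-RKHS}; it delegates it to \citet[Theorem 4.51]{Steinwart2008}, where the standard route is to view $\Psi(x)=(\sqrt{\lambda_i}\phi_i(x))_{i\in I}\in\ell^2(I)$ as a feature map and invoke the general characterization of the RKHS induced by a feature map ($\cH_k$ as the image of $\ell^2(I)$ under $w\mapsto\langle w,\Psi(\cdot)\rangle_{\ell^2}$, with the quotient norm). Your argument reaches the same endpoint by a slightly different, equally standard path: you build the candidate space directly, verify the two axioms of Definition~\ref{def:RKHS}, and conclude by the uniqueness half of Moore--Aronszajn, which spares you the feature-map lemma at the cost of having to check by hand that the coefficient map $\ell^2(I)\to\tilde\cH$ is injective. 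That injectivity check is exactly where the full-support hypothesis on $\nu$ enters, and you handle it correctly via the $L_2(\nu)$-orthonormality of the $\phi_i$ and the continuity of the limit function. The only step you gloss over is that each $\phi_i$ admits a continuous representative (needed for your uniform-convergence argument to yield $f_{(\alpha_i)}\in C(\cX)$); this follows from $\phi_i=\lambda_i^{-1}T_k\phi_i$ with $\lambda_i>0$ and the fact that $T_k$ maps $L_2(\nu)$ into $C(\cX)$ for continuous $k$ on compact $\cX$, and is implicitly assumed in the paper's statement of Mercer's theorem, so it is a cosmetic rather than substantive omission.
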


\begin{remark}
The $i$-th basis function $\lambda_i^{1/2} \phi_i$ is associated with a weight sequence where the $i$-th weight is one and the other weights are zero. From this and the Mercer representation, the orthonormality of the basis functions can be confirmed:
\begin{equation} \label{eq:ortho-normal-basis-RKHS}
    \left< \lambda_i^{1/2} \phi_i,\  \lambda_j^{1/2} \phi_j \right>_{\cH_k} =
\begin{cases}
    1 & \text{if } i = j \\
    0 & \text{if } i \not= j.
\end{cases}
\end{equation}
\end{remark}

The Mercer representation enables understanding that the smoothness of the least-smooth functions in the RKHS is determined by the rate of the eigenvalues' decay $\lambda_i \to 0$ as $i \to \infty$; the functions are smoother if the decay rate is faster, since the eigenfunctions $\phi_i$ for larger indices $i$ have higher frequencies and thus are less smooth.
This is best illustrated by the RKHS of the periodic kernel in Example~\ref{ex:periodic-sobolev}.

\begin{example}[Mercer Representation of a Periodic Sobolev RKHS] \label{ex:Mercer-periodic-Sobolev}
    By the Mercer representation, each function $f$ in the RKHS of the $s$-th order periodic Sobolev kernel in Example~\ref{ex:periodic-sobolev} is written as
\begin{align*}
    f(x) = 1 & + \sum_{i = 2, 4, 6, \dots} \alpha_i~(i/2)^{-s} \sqrt{2} \cos( i \pi x ) \\
    & +  \sum_{i = 3, 5, 7, \dots} \alpha_i~((i-1)/2)^{-s} \sqrt{2} \sin( (i-1) \pi x ) \quad \text{for all }\ x \in [0,1]
\end{align*}
for some square-summable weight sequence $(\alpha_1, \alpha_2, \dots )$.
Therefore, for a larger order $s$, the Fourier basis functions with larger indices~$i$, which have higher frequencies, contribute less significantly to the composition of the function, making it smoother.
Therefore, the order $s$~quantifies the smoothness of the least-smooth functions in the RKHS.

\end{example}

\subsection{Karhunen-Loève Expansion of Gaussian Processes}
\label{sec:KL-expansion}

We describe a series representation of a Gaussian process based on the eigenfunctions and eigenvalues of the integral operator, known as the \emph{Karhunen-Loève (KL) expansion}.
It is the probabilistic counterpart of the RKHS's Mercer representation and holds due to the equivalence between the Gaussian Hilbert space (GHS, Section~\ref{sec:GHS}) and the RKHS.

The following form of the KL expansion follows from \citet[Theorem 3.1]{steinwart2019convergence}.
See also e.g., \citet[Sections 3.2 and 3.3]{Ad90} and \citet[Section 2.3]{Berlinet2004}.
We provide a proof as it is instructive.

\begin{theorem}[Karhunen-Loève Expansion] \label{theo:KL-expansion}
Let $\cX$ be a compact metric space, $k:\cX \times \cX \to \Re$ be a continuous kernel with RKHS $\cH_k$, $\nu$ be a finite Borel measure whose support is $\cX$, and $(\phi_i,\lambda_i)_{i \in I}$ be the eigenfunctions and eigenvalues of the integral operator~\eqref{eq:integral-eigen-decomp}.
Let $F \sim \GP(0,k)$ be a GP, $\cG_k$ be the associated GHS and $\psi_k: \cH_k \to  \cG_k$ be the canonical isometry~\eqref{eq:canonical-isometry}.

Define $(Z_i)_{i \in I} \subset \cG_k$ as the elements corresponding to the orthonormal basis functions $(\lambda_i^{1/2} \phi_i)_{i \in I}$ of the RKHS via the canonical isometry:
\begin{equation} \label{eq:KL-normal-varibles}
 Z_i := \psi_k ( \lambda_i^{1/2} \phi_i ) \in \cG_k \quad \text{for all }\  i \in I.
\end{equation}
Then the following claims hold:
\begin{enumerate}
\item $(Z_i)_{i \in I}$ are independent standard Gaussian random variables:
\begin{equation} \label{eq:normal-variables-KL-expansion}
Z_i \sim \mathcal{N}(0, 1)\quad \mathrm{and} \quad \bE [ Z_i  Z_j] = 0\quad  \text{if }\ i \not= j \quad \text{for all }\ i, j \in I.
\end{equation}
\item For all $x \in \cX$ and for all finite indices $J \subset I$, we have
\begin{equation} \label{eq:KL-expansion}
  \bE\left[ \left( \ssf(x) - \sum_{i \in J} Z_i \lambda_i^{1/2} \phi_i(x) \right)^2 \right] = k(x,x) - \sum_{j \in J} \lambda_j \phi_j^2(x).
\end{equation}
\item If $I = \mathbb{N}$, we have
\begin{equation} \label{eq:KL-uniform-mean-squre}
 \lim_{n \to \infty} \bE\left[ \left( \ssf(x) - \sum_{i = 1}^n Z_i \lambda_i^{1/2} \phi_i(x) \right)^2 \right] = 0 \quad \text{for all }\ x \in \cX,
\end{equation}
where the convergence is uniform in $x \in \cX$.
\end{enumerate}
\end{theorem}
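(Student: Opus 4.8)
The plan is to deduce all three claims from two facts established earlier: (i) the canonical isometry $\psi_k:\cH_k\to\cG_k$ is linear, bijective, and preserves inner products and norms (Section~\ref{sec:canonical-isomet-GHS-RKHS}, in particular~\eqref{eq:dist-preserve-can-isom}), and (ii) the Mercer machinery of Theorems~\ref{theo:mercer} and~\ref{theo:mercer-RKHS}. The pivotal observation is that, since $\cG_k$ is generated by $F\sim\GP(0,k)$, the elementary Gaussian variable at $x$ is $Z_x=F(x)$, and by the defining relation~\eqref{eq:canonical-isometry} this equals $\psi_k(k(\cdot,x))$. Thus every random variable appearing in the theorem is $\psi_k$ applied to an explicit element of $\cH_k$, and by norm preservation its mean square equals the squared $\cH_k$-norm of that element.

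For the first claim, recall that $(\lambda_i^{1/2}\phi_i)_{i\in I}$ is an orthonormal basis of $\cH_k$ (Theorem~\ref{theo:mercer-RKHS}). Since $\psi_k$ preserves inner products, the images $Z_i=\psi_k(\lambda_i^{1/2}\phi_i)$ form an orthonormal system in $\cG_k$; as the $\cG_k$-inner product is covariance, this means $\bE[Z_iZ_j]=\delta_{ij}$, so each $Z_i$ has unit variance and distinct $Z_i,Z_j$ are uncorrelated. Each $Z_i$ lies in $\cG_k$, hence is a zero-mean Gaussian, so $Z_i\sim\mathcal{N}(0,1)$; and any finite linear combination $\sum c_iZ_i$ again lies in $\cG_k$ and is therefore Gaussian, so the family $(Z_i)_{i\in I}$ is jointly Gaussian. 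Jointly Gaussian and uncorrelated implies independent, giving~\eqref{eq:normal-variables-KL-expansion}.

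For the second claim, fix $x$ and a finite $J\subset I$. By linearity of $\psi_k$ on finite sums, $\sum_{i\in J}Z_i\lambda_i^{1/2}\phi_i(x)=\psi_k\bigl(\sum_{i\in J}\lambda_i\phi_i(x)\phi_i\bigr)$, so $F(x)-\sum_{i\in J}Z_i\lambda_i^{1/2}\phi_i(x)=\psi_k\bigl(k(\cdot,x)-\sum_{i\in J}\lambda_i\phi_i(x)\phi_i\bigr)$, and its mean square is the squared $\cH_k$-norm of the argument. The reproducing property gives $\langle k(\cdot,x),\lambda_i^{1/2}\phi_i\rangle_{\cH_k}=\lambda_i^{1/2}\phi_i(x)$, so the Mercer coordinate sequence of $k(\cdot,x)$ is $(\lambda_i^{1/2}\phi_i(x))_{i\in I}$, which is square-summable with $\ell_2$-norm squared equal to $\sum_{i\in I}\lambda_i\phi_i(x)^2=k(x,x)$ by Theorem~\ref{theo:mercer}. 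Subtracting the finitely many terms indexed by $J$ simply zeros out those coordinates, so by Theorem~\ref{theo:mercer-RKHS} the squared norm is $\sum_{i\in I\setminus J}\lambda_i\phi_i(x)^2=k(x,x)-\sum_{j\in J}\lambda_j\phi_j(x)^2$, which is~\eqref{eq:KL-expansion}.

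The third claim is then immediate: taking $J=\{1,\dots,n\}$ in~\eqref{eq:KL-expansion}, the right-hand side is $k(x,x)-\sum_{i=1}^n\lambda_i\phi_i(x)^2$, and since the Mercer series $\sum_{i}\lambda_i\phi_i(x)\phi_i(x)$ converges to $k(x,x)$ uniformly in $x$ (Theorem~\ref{theo:mercer}), this tends to $0$ uniformly in $x$. I do not anticipate a serious obstacle here: the only points needing care are identifying the Mercer coordinates of $k(\cdot,x)$ via the reproducing property and justifying that $\psi_k$ may be passed through the finite sum indexed by $J$, both of which follow directly from the results of Sections~\ref{sec:RKHS-def}--\ref{sec:canonical-isomet-GHS-RKHS} and~\ref{sec:hypothesis-mercer}; the argument is essentially bookkeeping with the isometry and the uniform convergence in Mercer's theorem.
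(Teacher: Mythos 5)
Your proof is correct and follows essentially the same route as the paper's: push everything through the canonical isometry, use orthonormality of $(\lambda_i^{1/2}\phi_i)_{i\in I}$ in $\cH_k$ for claim 1, compute the $\cG_k$-norm of the residual as the $\cH_k$-norm of $k(\cdot,x)-\sum_{i\in J}\lambda_i\phi_i(x)\phi_i$ for claim 2, and invoke the uniform convergence in Mercer's theorem for claim 3. Your only additions are cosmetic: you spell out that joint Gaussianity plus uncorrelatedness gives independence, and you evaluate the RKHS norm via the Mercer coordinates of $k(\cdot,x)$ rather than by directly expanding the square with the reproducing property; both are equivalent to what the paper does.
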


\begin{proof}
Since the GHS $\cG_k$ consists of zero-mean Gaussian random variables, so is each $Z_i \in \cG_k$.

The identity~\eqref{eq:normal-variables-KL-expansion} holds because i)~the expected product of $Z_i$ and $Z_j$ is their inner product in the GHS, ii) the canonical isometry preserves the inner product, and iii) $(\lambda_i^{1/2} \phi_i)_{i \in I}$ are orthonormal in the RKHS:
\begin{align*}
\mathbb{E}[Z_i Z_j] = \left< Z_i, Z_j \right>_{\cG_k} = \left<  \lambda_i^{1/2} \phi_i, \lambda_j^{1/2} \phi_j \right>_{\cH_k} =
\begin{cases}
    1 & \text{if } i = j \\
    0 & \text{if } i \not= j.
\end{cases}.
\end{align*}

The identity \eqref{eq:KL-expansion} follows because
i)~the expected square is the squared GHS norm,
ii)~the canonical isometry maps the canonical feature $k(\cdot,x)$ to $F(x)$,
iii)~the canonical isometry preserves the norm, and iv)~$(\lambda_i^{1/2} \phi_i)_{i \in I}$ are orthonormal in the RKHS:
\begin{align*}
& \bE\left[ \left( \ssf(x) - \sum_{i \in J} Z_i \lambda_i^{1/2} \phi_i(x) \right)^2 \right] = \left\|  \ssf(x) - \sum_{i \in J} Z_i \lambda_i^{1/2} \phi_i(x)  \right\|_{\cG_k}^2 \\
&= \left\| k(\cdot,x) -  \sum_{i \in J} \lambda_i^{1/2} \phi_i(\cdot)  \lambda_i^{1/2} \phi_i(x)  \right\|_{\cH_k}^2 = k(x,x) - \sum_{i \in J}  \lambda_i \phi_i^2(x),
\end{align*}
where the last equality uses the reproducing property.

The convergence \eqref{eq:KL-uniform-mean-squre} follows from the identity~\eqref{eq:KL-expansion} and Mercer's theorem in Theorem \ref{theo:mercer}.
\end{proof}

Theorem~\ref{theo:KL-expansion} implies that a zero-mean GP with kernel $k$ can be written as the sum of independent standard Gaussian random variables~$Z_i$ times the orthonormal basis functions $\lambda_i^{1/2} \phi_i$ of the RKHS of $k$:
\begin{equation} \label{eq:KL-exp-informal}
\ssf(x) = \sum_{i \in I} Z_i \lambda_i^{1/2} \phi_i(x) \quad \text{for all }\  x \in \cX,
\end{equation}
where the convergence is in the mean-square sense.
This is usually called the KL expansion.

The KL expansion suggests, informally, that a GP sample is smoother if high-frequency eigenfunctions $\phi_i$ have smaller eigenvalues $\lambda_i$.
This is illustrated by the following example of a GP whose covariance is a periodic Sobolev kernel in Example~\ref{ex:periodic-sobolev}.

\begin{example}
The KL expansion of a zero-mean GP whose covariance function is the $s$-th order periodic kernel $k_s$ in Example~\ref{ex:periodic-sobolev} with $s \in \mathbb{N}$ is given by
\begin{align*}
    F(x) = 1 & + \sum_{i = 2, 4, 6, \dots} Z_i~(i/2)^{-s} \sqrt{2} \cos( i \pi x ) \\
    & +  \sum_{i = 3, 5, 7, \dots} Z_i~((i-1)/2)^{-s} \sqrt{2} \sin( (i-1) \pi x ) \quad \text{for all }\ x \in [0,1],
\end{align*}
where $Z_1, Z_2, \dots \stackrel{i.i.d.}{\sim} \mathcal{N}(0,1)$.
This suggests that, similar to the Mercer presentation in Example~\ref{ex:Mercer-periodic-Sobolev}, the GP is smoother if the order $s$ is larger, as the Fourier basis functions with high frequencies (large indices $i$) contribute less significantly.

\end{example}

\begin{remark}
The independent standard Gaussian random variables~$Z_i$ in \eqref{eq:KL-normal-varibles} are not independent of the given process $F$; each $Z_i$ is obtained as the $L_2(\nu)$ inner product between $F$ and the eigenfunction $\phi_i$ divided the eigenvalue's square root $\sqrt{\lambda_i}$:
\begin{align*}
 &    Z_i = \psi_k(\lambda^{1/2}_i\phi_i) = \psi_k\left(\lambda_i^{-1/2}\int k(\cdot,x) \phi_i(x) d
 \nu(x) \right) \\
&  = \lambda_i^{-1/2}\int \psi_k\left(k(\cdot,x)\right) \phi_i(x) d
 \nu(x)   = \lambda_i^{-1/2} \int \ssf(x) \phi_i(x) d\nu(x),
\end{align*}
where the second identity follows from $\phi_i$ and $\lambda_i$ being an eigenfunction and the associated eigenvalue of the integral operator \eqref{eq:integral_operator}.
\end{remark}

\section{Sample Path Properties and the Zero-One Law} \label{sec:driscol-sample-path}

{\em Driscoll's theorem} \citep{driscoll1973reproducing} offers a useful tool to understand the properties of a GP sample.
It provides the necessary and sufficient condition for a GP sample to be included in a {\em given} RKHS, which can differ from the GP kernel's RKHS.
The properties of a GP sample are characterized by the properties of the given RKHS that includes the sample.

The theorem is called  {\em zero-one law}, because the probability that a GP sample belongs to a given RKHS is either zero or one. That is, a statement like ``a GP sample belongs to a given RKHS with probability 0.3'' never holds.

For simplicity, we focus on a GP whose sample path is continuous with probability one.
See, e.g., \citet[Chapter 1]{AdlJon07} and \citet[Section 3.2]{da2014stochastic} and references therein for sufficient (and necessary) conditions for a GP to be continuous, which are rather technical to be presented here.
An intuitive, sufficient condition for continuity is that the eigenvalues of the integral operator decay ``not too slowly,'' as described later.

To state Driscal's zero-one low, consider an RKHS $\cH_r$ with reproducing kernel $r$ that  subsumes the RKHS of the GP covariance kernel~$k$:
$$
\cH_k \subset \cH_r.
$$
Let $I_{kr}: \cH_k \to \cH_r$ be the inclusion operator that maps any element in $\cH_k$ to the identical element in $\cH_r$:
\begin{equation} \label{eq:inclusion-op}
I_{kr}: \cH_k \to \cH_r:  f \mapsto f.
\end{equation}

Theorem~\ref{theo:gen-Driscol-theorem} below states that a GP sample with covariance $k$ belongs to the RKHS $\cH_r$ of kernel $r$ with probability one, if and only if the inclusion operator~\eqref{eq:inclusion-op} is {\em Hilbert-Schmidt}, i.e., the sum of the squared $\cH_r$-norms of the orthonormal basis functions of $\cH_k$ is finite.\footnote{
A bounded linear operator $A :\cH \to \cF$ between two Hilbert spaces $\cH$ and $\cF$ is called {\em Hilbert-Schmidt} if its Hilbert-Schmidt norm is finite: $\| A \|_{\rm HS}^2 := \sum_{i \in I} \left\| A\psi_i \right\|_{\cF}^2 < \infty$, where $(\psi_i)_{i \in I}$ is an arbitrary orthonormal basis of $\cH$.
}
Intuitively, this requires that $\cH_r$ be large enough compared with $\cH_k$, as discussed later.

Theorem~\ref{theo:gen-Driscol-theorem} is a version of the zero-one law of \citet[Theorem 3]{driscoll1973reproducing} derived from a result of \citet[Theorem 7.5]{LukBed01}.
The proof is given in Appendix \ref{sec:Proof-Driscol}.

\begin{theorem}[A generalized Driscol's theorem] \label{theo:gen-Driscol-theorem}
Let $k$ and $r$ be kernels on a compact metric space $\cX$ with respective RKHSs $\cH_k$ and $\cH_r$  such that  $\cH_k \subset \cH_r$, and  $I_{kr}: \cH_k \to \cH_r$ be the  inclusion operator~\eqref{eq:inclusion-op}.
Let $\ssf \sim \GP(m,k)$ be a GP with mean function $m$ and covariance kernel $k$  such that $m \in \cH_r$ and $\ssf$ is continuous on $\cX$ with probability one.
Then, the following statements are true.
\begin{enumerate}
\item If $I_{kr}$ is Hilbert-Schmidt, then  $F \in \cH_r$ holds with probability one.
\item If $I_{kr}$ is not Hilbert-Schmidt, then $F \in \cH_r$ holds with probability zero.
\end{enumerate}
\end{theorem}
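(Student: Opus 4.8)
The plan is to reduce to a zero-mean process, expand $F$ in a Karhunen--Lo\`eve series along an orthonormal basis of $\cH_k$, and then decide whether this random series converges in the norm of $\cH_r$. Since $m\in\cH_r$ and $F-m\sim\GP(0,k)$, and $\cH_r$ is a linear space, $F\in\cH_r$ holds if and only if $F-m\in\cH_r$, so I may assume $m=0$. Fix a finite Borel measure $\nu$ on $\cX$ with full support (such $\nu$ exists because $\cX$ is compact metric, hence separable: take $\nu=\sum_n 2^{-n}\delta_{x_n}$ for a dense sequence $(x_n)$), let $(\phi_i,\lambda_i)_{i\in I}$ be the Mercer eigensystem of $k$ with respect to $\nu$ (Theorem~\ref{theo:mercer}), and put $e_i:=\lambda_i^{1/2}\phi_i$. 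By the Mercer representation (Theorem~\ref{theo:mercer-RKHS}), $(e_i)_{i\in I}$ is an orthonormal basis of $\cH_k$, so by basis-independence of the Hilbert--Schmidt norm, $\|I_{kr}\|_{\mathrm{HS}}^2=\sum_{i\in I}\|e_i\|_{\cH_r}^2$. By the Karhunen--Lo\`eve expansion (Theorem~\ref{theo:KL-expansion}) there are i.i.d.\ $Z_i\sim\mathcal N(0,1)$ such that, with $S_n:=\sum_{i\le n}Z_ie_i\in\cH_k\subset\cH_r$, one has $\mathbb E[(F(x)-S_n(x))^2]\to 0$ for every $x$; moreover, for each $x$, $\sum_i\operatorname{Var}(Z_ie_i(x))=\sum_i\lambda_i\phi_i(x)^2=k(x,x)<\infty$, so Kolmogorov's one-series theorem gives $F(x)=\sum_i Z_ie_i(x)$ almost surely, and since $\cX$ is separable and $F$ is a.s.\ continuous, $F$ coincides up to a null set with a measurable function of the independent sequence $(Z_i)_{i\in I}$.

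For statement~1, assume $I_{kr}$ is Hilbert--Schmidt. Using $\mathbb E[Z_iZ_j]=\delta_{ij}$, for $n'<n$ we get $\mathbb E\|S_n-S_{n'}\|_{\cH_r}^2=\sum_{n'<i\le n}\|e_i\|_{\cH_r}^2$, which tends to $0$ since $\sum_i\|e_i\|_{\cH_r}^2=\|I_{kr}\|_{\mathrm{HS}}^2<\infty$. Hence $(S_n)$ is Cauchy in $L_2(\Omega;\cH_r)$ and converges there to some $\cH_r$-valued random element $G$. For each fixed $x$ the evaluation functional is bounded on $\cH_r$, so $S_n(x)\to G(x)$ in $L_2$; combined with $S_n(x)\to F(x)$ in $L_2$ this yields $G(x)=F(x)$ almost surely for each $x$. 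Intersecting these events over a countable dense subset of $\cX$ and using that $G\in\cH_r$, whose elements are continuous, while $F$ is a.s.\ continuous, we conclude $F=G\in\cH_r$ almost surely.

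For statement~2, assume $I_{kr}$ is not Hilbert--Schmidt. The event $\{F\in\cH_r\}$ agrees up to a null set with a tail event of $(Z_i)_{i\in I}$ — changing finitely many $Z_i$ perturbs $F$ only by an element of $\cH_k\subseteq\cH_r$ — so by Kolmogorov's zero--one law it has probability $0$ or $1$; hence it suffices to rule out probability $1$. I would argue by contraposition: if $F\in\cH_r$ almost surely, then the Karhunen--Lo\`eve series $\sum_i Z_ie_i$ converges in $\cH_r$-norm; but for a Gaussian series valued in a separable Hilbert space, convergence almost surely, in probability, and in $L_2$ all coincide (It\^o--Nisio together with moment equivalence for Gaussians), so the partial sums are $L_2$-Cauchy, i.e.\ $\sum_i\|e_i\|_{\cH_r}^2=\|I_{kr}\|_{\mathrm{HS}}^2<\infty$, a contradiction. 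Equivalently, and more directly, one may invoke \citet[Theorem~7.5]{LukBed01}: under $F\in\cH_r$ a.s., $F$ induces a centred Gaussian measure on the separable Hilbert space $\cH_r$ whose covariance operator is $I_{kr}I_{kr}^{*}$, and such a measure exists on $\cH_r$ precisely when this operator is trace class, i.e.\ $\tr(I_{kr}I_{kr}^{*})=\|I_{kr}\|_{\mathrm{HS}}^2<\infty$.

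\textbf{The main obstacle.} Statement~1 is essentially elementary once the Karhunen--Lo\`eve expansion is in place. The hard part is statement~2, and within it the single implication ``$F\in\cH_r$ almost surely $\Rightarrow$ the Karhunen--Lo\`eve series converges in $\cH_r$-norm'': pointwise convergence of $S_n$ to $F$ together with $F\in\cH_r$ does not by itself give convergence in the $\cH_r$ norm, so this step genuinely requires the structure of Gaussian measures on Hilbert spaces (the It\^o--Nisio/zero--one-law argument, or the Luki\'c--Beder characterisation). Phrased as a moment statement, the crux is upgrading ``$\mathbb E[\|F\|_{\cH_r}^2]=\infty$'' to ``$\|F\|_{\cH_r}=\infty$ almost surely,'' which is exactly where the zero--one law is indispensable.
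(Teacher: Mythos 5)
Your argument is correct in outline, but it takes a genuinely different route from the paper's. The paper's proof (Appendix~\ref{sec:Proof-Driscol}) is a pure reduction: it introduces the dominance operator $L$ of \citet{LukBed01}, identifies $L=I_{kr}^*$, shows that nuclear dominance of $r$ over $k$ (i.e.\ $I_{kr}L$ being trace class) is equivalent to $I_{kr}$ being Hilbert--Schmidt via $\tr(I_{kr}I_{kr}^*)=\|I_{kr}\|_{\rm HS}^2$, and then cites \citet[Theorem~7.5]{LukBed01}, for which nuclear dominance is the necessary and sufficient condition for the zero--one law --- both directions of the theorem come for free from that citation. Your proof instead works directly with the Karhunen--Lo\`eve series: statement~1 becomes an elementary $L_2(\Omega;\cH_r)$-Cauchy computation (note that $(e_i)$ is not orthonormal in $\cH_r$, but independence of the $Z_i$ kills the cross terms, as you use), which is more self-contained and more illuminating than the paper's citation; statement~2 is handled by the Kolmogorov zero--one law plus It\^o--Nisio, with the Luki\'c--Beder theorem offered as a fallback --- which is exactly where your route and the paper's reconverge. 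You correctly identify that the only genuinely hard step is upgrading pointwise convergence of $S_n$ to $F\in\cH_r$ into $\cH_r$-norm convergence; that step does require either the It\^o--Nisio machinery (applied to the separating family $\{r(\cdot,x)\}_{x\in\cX}\subset\cH_r^*$) or the cited characterisation, and cannot be done by bare hands. Two small points you should make explicit: (i) the identification $F=G$ in statement~1 and the ``$F$ is a measurable function of $(Z_i)$'' claim in statement~2 both use that elements of $\cH_r$ are continuous, which holds because $r$ is a continuous (hence bounded) kernel on the compact $\cX$ under the chapter's standing assumptions, but is not stated in the theorem itself; (ii) in the reduction to $m=0$ you implicitly use $m\in\cH_r\subset C(\cX)$ so that $F-m$ remains a.s.\ continuous. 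Neither is a gap, just bookkeeping worth recording.
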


\paragraph{Hilbert-Schmidt Condition}
We now discuss the condition of the inclusion operator $I_{kr}$ being Hilbert-Schmidt.
As mentioned earlier, this condition is that the sum of the squared $\cH_r$-norms of the orthonormal basis functions of $\cH_k$ is finite.
If the kernel $k$ is continuous,  Theorem~\ref{theo:mercer-RKHS} implies that the eigenfunctions times the eigenvalues' square roots, $(\lambda_i^{1/2} \phi_i)_{i \in I}$, are orthonormal basis
functions of $\cH_k$, and thus the condition is written as
\begin{align}
    \| I_{kr} \|_{\rm HS}^2 &:= \sum_{i \in I}  \| I_{kr} \lambda^{1/2}_i \phi_i   \|_{\cH_r}^2 \nonumber \\
    &  =  \sum_{i \in I}   \| \lambda_i^{1/2} \phi_i  \|_{\cH_r}^2 = \sum_{i \in I}   \lambda_i \| \phi_i  \|_{\cH_r}^2   < \infty, \label{eq:HS-norm-inclusion-op}
\end{align}
where $\| I_{kr} \|_{\rm HS}$ denotes the Hilbert-Schmidt norm of $I_{kr}$.
From the last expression, the condition is that the sum of the squared $\cH_r$-norms of the eigenfunctions $\phi_i$ times the eigenvalues $\lambda_i$ is finite.
Therefore, the condition holds if the $\cH_r$-norms of the eigenfunctions decay sufficiently faster than the eigenvalues.

Let us first consider the case where the kernel $r$ is the same as the GP covariance kernel: $r = k$.
Suppose that the RKHS $\cH_k$ is infinite-dimensional so that the cardinality of the index set $I$ is infinite, as for many commonly used kernels.
Then the condition~\eqref{eq:HS-norm-inclusion-op} is {\em not} satisfied, because
$$
\sum_{i \in I}   \| \lambda_i^{1/2} \phi_i  \|_{\cH_r}^2 = \sum_{i \in I} 1 = \infty,
$$
since $(\lambda_i^{1/2} \phi_i)_{i \in I}$ are orthonormal in $\cH_r = \cH_k$.
Therefore, Theorem~\ref{theo:gen-Driscol-theorem} implies that a GP sample does {\em not} belong to its covariance kernel's RKHS with probability one.
This recovers a well-known result in the literature; see e.g., \citet[p.~5]{wahba1990spline} and \citet[Corollary 7.1]{LukBed01}.
This is because the covariance kernel's RKHS is not large enough to contain a GP sample, which is less smooth than the functions in the RKHS; see Figure~\ref{fig:GP_intro} for an illustration.

\begin{remark}
 \citet[p.~5]{wahba1990spline} heuristically shows that a GP sample does not belong to its covariance kernel's RKHS with probability one based on the KL expansion~\eqref{eq:KL-exp-informal}; see also \citet[p.~66]{Berlinet2004} and \citet[Section 6.1]{RasmussenWilliams}.

For $F \sim \GP(0,k)$, let
$\ssf = \sum_{i=1}^\infty Z_i \lambda_i^{1/2} \phi_i$ be the KL expansion.
For any $m \in \mathbb{N}$, consider its finite truncation with the first $m$ terms $F_m := \sum_{i=1}^m Z_i \lambda_i^{1/2} \phi_i$.
Then, the expected squared RKHS norm of $F_m$ is $m$, since
$$
  \bE[\| F_m \|_{\cH_k}^2] = \bE \left[ \sum_{i=1}^m Z_i^2  \right] =  \sum_{i=1}^m \bE[ Z_i^2] = \sum_{i=1}^m 1 = m.
$$
Therefore, as $m$ goes to infinity the expected squared RKHS norm diverges: $\lim_{m \to \infty}  \bE[\| \ssf_m \|_{\cH_k}^2] = \infty$.
Since the GP sample $F$ is the limit of $F_m$ with $m$ going to infinity, this {\em may} imply that the expected squared RKHS norm of $F$ is infinity, $\bE[\| \ssf \|_{\cH_k}^2] = \infty$, implying that $F$ does not belong to the RKHS with non-zero probability.

Note that, while this argument may be intuitively plausible, it is {\em not} a proof.
As shown in Theorem~\ref{theo:KL-expansion}, the most basic result for the convergence of the KL expansion $\ssf = \lim_{m \to \infty} \ssf_m$ is in the {\em mean-square sense}, which is {\em weaker} than the convergence in the RKHS norm, and thus it does not imply $ \lim_{m \to \infty} \left\| F - F_m \right\|_{\cH_k} = 0$.
Hence, $\lim_{m \to \infty}  \bE[\| \ssf_m \|_{\cH_k}^2] = \infty$ does {\em not} imply $\bE[\| \ssf \|_{\cH_k}^2] = \infty$.
This shows the importance of understanding the convergence type of the KL expansion, which is investigated and used for establishing GP-sample path properties by \citet{steinwart2019convergence}.
\end{remark}

Let us now consider a kernel $r$ different from the GP's covariance kernel $k$.
Define $r$ by replacing the eigenvalues $\lambda_1 \geq \lambda_2 \geq \cdots > 0$ in the Mercer expansion of the kernel $k$ in \eqref{eq:mercer} by weights  $\gamma_1 \geq \gamma_2 \geq \cdots > 0 $:
\begin{equation} \label{eq:kernel-scaling}
r(x,x') := \sum_{i \in I} \gamma_i \phi_i(x) \phi_i(x').
\end{equation}
Since this is the Mercer expansion of $r$, its RKHS $\cH_r$ can be expanded as (Theorem~\ref{theo:mercer-RKHS})
\begin{equation} \label{eq:RKHS-with-diff-weights}
 \cH_r = \left\{ f := \sum_{i \in I} \alpha_i \gamma_i^{1/2} \phi_i\,:\ \| f \|_{\cH_r}^2 := \sum_{i \in I} \alpha_i^2 < \infty \right\},
\end{equation}
with $(\gamma_i^{1/2} \phi_i)_{i \in I}$ being an orthonormal basis of $\cH_r$.
In this case, the squared $\cH_r$-norm of the eigenfunction $\phi_i$ is $\gamma_i^{-1}$, as
$$
 \|  \phi_i  \|_{\cH_r}^2 =  \gamma_i^{-1} \| \gamma_i^{1/2} \phi_i  \|_{\cH_r}^2 = \gamma_i^{-1}.
$$
Therefore, the condition~\eqref{eq:HS-norm-inclusion-op} is written as
\begin{align}
\| I_{kr} \|_{\rm HS}^2
 =  \sum_{i \in I}  \lambda_i \gamma_i^{-1}  < \infty. \label{eq:HS-cond-eig}
\end{align}
Hence, the condition holds if most weights $\gamma_i$ are sufficiently larger than the corresponding eigenvalues $\lambda_i$.
This means the RKHS $\cH_r$ should be sufficiently larger than the covariance kernel's RKHS $\cH_k$ to contain the GP sample.
The condition \eqref{eq:HS-cond-eig} is essentially the same as the condition of \citet[Theorem 4.3]{karvonen2023small}.

The condition~\eqref{eq:HS-cond-eig} can be more informally derived from the KL expansion~\eqref{eq:KL-exp-informal}.
Write the GP sample using the KL expansion as
$$
F  = \sum_{i \in I} Z_i \lambda_i^{1/2} \phi_i = \sum_{i \in I} W_i \gamma_i^{1/2} \phi_i,
$$
where we defined the new coefficients $W_i := Z_i \lambda_i^{1/2} \gamma_i^{-1/2}$.
Since $(\gamma_i^{1/2}\phi_i)_{i \in I}$ form an orthonormal basis of $\cH_r$, the sample $F$ belongs to $\cH_r$ if and only if the squared sum of the coefficients is finite:
$
\sum_{i \in I} W_i^2 < \infty.
$
For this to hold with non-zero probability, the expectation of the squared sum should be finite.
This recovers the condition~\eqref{eq:HS-cond-eig}, since this expectation is identical to the sum of the eigenvalues $\lambda_i$ divided by the weights $\gamma_i$ in the condition~\eqref{eq:HS-cond-eig}:
$$
\mathbb{E}[  \sum_{i\in I} W_i^2 ] = \mathbb{E}[  \sum_{i\in I}  Z_i^2 \lambda_i \gamma_i^{-1} ]  = \sum_{i \in I} \lambda_i \gamma_i^{-1}.
$$

 In a more general setting,   a necessary and sufficient condition for the inclusion operator $I_{kr}$ to be Hilbert-Schmidt is available in terms of an integral of the metric entropy of the embedding $I_{kr}: \cH_k \to \cH_r$ \citep[Theorem A]{GonDud93}.
See also \citet[Corollary 5.4]{Ste17} for a similar condition.

\section{A Powered RKHS as a GP Sample Space} \label{sec:power-RKHS-sample-path}

We now describe that the RKHS $\cH_r$ to which the GP sample belongs can be constructed as a {\em power} of the GP's RKHS $\cH_k$ \citep{steinwart2019convergence}.
This powered RKHS interpolates between the GP's RKHS and the space of square-integrable functions, and is parametrized by a constant $0 < \theta \leq 1$ that quantifies where the powered RKHS is ``located'' between the two spaces \citep{SteSco12}; a larger $\theta$ indicates that the powered RKHS is closer to the original RKHS.
This $\theta$ quantifies the GP sample's relative smoothness compared with the functions in the RKHS.

A powered RKHS is defined as follows \citep[Definition 4.1]{SteSco12}.
\begin{definition}[Powered RKHS] \label{def:power-RKHS}
Let $\cX$ be a compact metric space, $k$ be a continuous kernel on $\cX$ with RKHS $\cH_k$, and $\nu$ be a finite Borel measure whose support is $\cX$.
Let $0 < \theta \leq 1$ be a constant such that
\begin{equation} \label{eq:cond-powered-RKHS}
    \sum_{i \in I} \lambda_i^\theta \phi_i^2(x) < \infty \quad \text{for all}\ x \in \cX,
\end{equation}
where $(\lambda_i, \phi_i )_{i \in I}$ are the eigenvalues and eigenfunctions of the integral operator in \eqref{eq:integral-eigen-decomp}.
Then the $\theta$-th power of $\cH_k$ is defined as
\begin{align}
 \cH_k^\theta := \biggl\{f: \cX \to \mathbb{R}:\ & f
 = \sum_{i \in I} \alpha_i \lambda_i^{\theta/2} \phi_i \ \ \text{for some }  (\alpha_i)_{i \in I} \subset \mathbb{R} \nonumber \\
 & \text{such that}\ \| f \|_{\cH_k^\theta}^2 := \sum_{i \in I} \alpha_i^2 < \infty \biggr\}, \label{eq:power-RKHS}
\end{align}
and the inner product is given by
\begin{align*}
  & \left< f,g \right>_{\cH_k^\theta} = \sum_{i \in I} \alpha_i\beta_i \\ & \text{for all}
  \q f := \sum_{i \in I} \alpha_i \lambda_i^{\theta/2} \phi_i \in \cH_k^\theta,\q g := \sum_{i\in I} \beta_i \lambda_i^{\theta/2} \phi_i \in \cH_k^\theta.
\end{align*}

\end{definition}

The powered RKHS~\eqref{eq:power-RKHS} itself is an RKHS with the reproducing kernel given by
\begin{equation} \label{eq:power_kernel}
k^\theta(x,x') := \sum_{i\in I} \lambda_i^\theta \phi_i(x) \phi_i(x'), \quad x, x' \in \cX.
\end{equation}
This is called the $\theta$-th power of kernel $k$.
The condition~\eqref{eq:cond-powered-RKHS} is necessary for the powered kernel to be well-defined.

The powered RKHS and kernel are the special case of the RKHS~\eqref{eq:RKHS-with-diff-weights} and the kernel~\eqref{eq:kernel-scaling} where the weights are the $\theta$-th power of the eigenvalues: $\gamma_i = \lambda_i^\theta$.
If the power $\theta$ is close to zero, the powered eigenvalues $\lambda_i^\theta$ become close to one, making them larger than small eigenvalues.
Thus, as discussed earlier, the powered RKHS becomes larger.
On the other hand, if the power $\theta$ is one, then the powered RKHS is identical to the original RKHS.

More precisely, the powered RKHS $\cH_k^\theta$ is an {\em interpolation space} between the original RKHS $\cH_k$ and the space $L_2(\nu)$ of square-integrable functions, where the power $0 < \theta \leq 1$ determines how close $\cH_k^\theta$ is to $\cH_k$ \citep[Theorem 4.6]{SteSco12}.
If $\theta = 1$ we have $\cH_k^\theta = \cH_k$, and $\cH_k^\theta$ gets larger as $\theta$ decreases.  Indeed,  $\cH_k^\theta$ is nesting with respect to $\theta$:
\begin{equation*} \label{eq:nest_pow}
 \cH_k = \cH_k^1 \subset \cH_k^\theta \subset \cH_k^{\theta'}\subset L_2(\nu) \quad {\rm for\ all} \ \ 0 < \theta' < \theta < 1.
\end{equation*}
The powered RKHS contains the RKHS and less smooth functions than the RKHS functions.
As the power $\theta$ becomes smaller, the powered RKHS contains many more less smooth functions.
This is illustrated by the following example on a power of a periodic Sobolev RKHS in Examples~\ref{ex:periodic-sobolev} and \ref{ex:Mercer-periodic-Sobolev}.

\begin{example}[Power of a Periodic Sobolev RKHS]  \label{ex:power-periodic-sobolev-RKHS}
Let us consider the $\theta$-th power of the $s$-th order periodic Sobolev RKHS in Examples~\ref{ex:periodic-sobolev} and \ref{ex:Mercer-periodic-Sobolev} for $s \in \mathbb{N}$ and $0 < \theta < 1$.
Since the eigenvalues decay at the rate $\lambda_i = O(i^{-2s})$ and the eigenfunctions $\phi_i$ are uniformly bounded, the condition~\eqref{eq:cond-powered-RKHS} is satisfied if $
\sum_{i=1}^\infty i^{-2 s \theta} < \infty$,
which holds if $\theta > 1/2s$.
Thus, if $s = 1$ then $\theta$ should be greater than $1/2$; if $s = 2$ then $\theta$ should be greater than $1/4$, etc.

Each function $f$ in the powered RKHS $\cH_{k_s}^\theta$ is then written as
\begin{align*}
    f(x) = 1 & + \sum_{i = 2, 4, 6, \dots} \alpha_i~(i/2)^{-s \theta} \sqrt{2} \cos( i \pi x ) \\
    & +  \sum_{i = 3, 5, 7, \dots} \alpha_i~((i-1)/2)^{-s \theta} \sqrt{2} \sin( (i-1) \pi x ), \quad   x \in [0,1],
\end{align*}
for some square-summable weight sequence $(\alpha_1, \alpha_2, \dots )$.
Hence, the powered RKHS is {\em the $s \theta$-th order periodic Sobolev space}, consisting of functions having square-integrable $s\theta$-th order derivatives.
In particular, if $s\theta$ is an integer, the powered RKHS is the RKHS of the $s\theta$-th order periodic Sobolev kernel:
$$
\cH_{k_s}^\theta = \cH_{k_{s\theta}}.
$$
For example, if $s = 2$ and $\theta = 1/2$, the original RKHS $\cH_{k_s}$ consists of square-integrable {\em twice-differentiable} functions, while the powered RKHS $\cH_{k_s}^\theta$ consists of square-integrable {\em once-differentiable} functions; thus the latter is much bigger than the former.

\end{example}

Theorem~\ref{theo:gp-path-power-RKHS} below, which follows from Theorem \ref{theo:gen-Driscol-theorem}, shows that a GP sample $F$ with covariance $k$ belongs to the powered RKHS $\cH_k^\theta$ with probability one, if the sum of eigenvalues powered to $1-\theta$ is finite: $\sum_{i \in I} \lambda_i^{1-\theta} < \infty$.
This condition is weaker for smaller $\theta$, and the largest possible $\theta$ that satisfies the condition quantifies the relative smoothness of the GP sample $F$ compared to the original RKHS~$\cH_k$.
 This result is a special case of \citet[Theorem 5.2]{steinwart2019convergence}, which holds under weaker assumptions on $\cX$, $k$ and $\nu$.
\begin{theorem} \label{theo:gp-path-power-RKHS}
Let $\cX$ be a compact metric space, $k$ be a continuous kernel on $\cX$ with RKHS $\cH_k$ such that $F \sim \GP(0,k)$ is continuous with probability one, and $\nu$ be a finite Borel measure whose support is $\cX$.
Let $0 < \theta < 1$ be a constant such that $\sum_{i \in I} \lambda_i^\theta \phi_i^2(x) < \infty$ holds for all $x \in \cX$, where $(\lambda_i, \phi_i )_{i \in I}$ are the eigenvalues and eigenfunctions of the integral operator in \eqref{eq:integral-eigen-decomp}.
Then, the following statements are equivalent.
\begin{enumerate}
\item $\sum_{i \in I} \lambda_{i}^{1-\theta} < \infty$.
\item The inclusion operator $I_{k k^\theta}: \cH_k \to \cH_k^\theta$ is Hilbert-Schmidt.
\item   $F \in \cH_k^\theta$ with probability one.
\end{enumerate}
\end{theorem}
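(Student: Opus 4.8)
The plan is to prove the three-way equivalence by combining the already-established generalized Driscoll theorem (Theorem~\ref{theo:gen-Driscol-theorem}) with a one-line computation of the Hilbert--Schmidt norm of the inclusion operator. Concretely, I would take $r := k^\theta$, the $\theta$-th power of $k$ defined in \eqref{eq:power_kernel}, whose RKHS is exactly $\cH_k^\theta$ from Definition~\ref{def:power-RKHS}; this kernel is well-defined and $\cH_k^\theta$ is genuinely an RKHS because the standing hypothesis $\sum_{i\in I}\lambda_i^\theta\phi_i^2(x)<\infty$ for all $x$ is assumed. Since $0<\theta<1$, the nesting of powered RKHSs gives $\cH_k=\cH_k^1\subset\cH_k^\theta$, so the inclusion operator $I_{k k^\theta}:\cH_k\to\cH_k^\theta$ of the form \eqref{eq:inclusion-op} is well-defined, and the mean function $m\equiv 0$ trivially lies in $\cH_k^\theta$. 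Thus all hypotheses of Theorem~\ref{theo:gen-Driscol-theorem} are satisfied (continuity of $F$ being assumed), and that theorem immediately yields the equivalence of statement~2 ($I_{k k^\theta}$ Hilbert--Schmidt) and statement~3 ($F\in\cH_k^\theta$ with probability one), together with the dichotomy that otherwise $F\notin\cH_k^\theta$ almost surely.

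It remains to establish $1\Leftrightarrow 2$. By the Mercer representation (Theorem~\ref{theo:mercer-RKHS}) applied to $k$, the family $(\lambda_i^{1/2}\phi_i)_{i\in I}$ is an orthonormal basis of $\cH_k$; applied to $k^\theta$, the family $(\lambda_i^{\theta/2}\phi_i)_{i\in I}$ is an orthonormal basis of $\cH_k^\theta$. Hence each basis element $\lambda_i^{1/2}\phi_i$ of $\cH_k$, viewed inside $\cH_k^\theta$, equals $\lambda_i^{(1-\theta)/2}\,(\lambda_i^{\theta/2}\phi_i)$, so it indeed belongs to $\cH_k^\theta$ (its coordinate sequence relative to the basis $(\lambda_j^{\theta/2}\phi_j)_{j\in I}$ has a single nonzero entry $\lambda_i^{(1-\theta)/2}$, which is square-summable) and its $\cH_k^\theta$-norm squared equals $\lambda_i^{1-\theta}$. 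Summing over the basis of $\cH_k$,
\[
\|I_{k k^\theta}\|_{\mathrm{HS}}^2 = \sum_{i\in I}\|I_{k k^\theta}\,\lambda_i^{1/2}\phi_i\|_{\cH_k^\theta}^2 = \sum_{i\in I}\lambda_i^{1-\theta},
\]
which is precisely the quantity in statement~1; this is the instance of the general formula \eqref{eq:HS-cond-eig} with weights $\gamma_i=\lambda_i^\theta$. Therefore $I_{k k^\theta}$ is Hilbert--Schmidt if and only if $\sum_{i\in I}\lambda_i^{1-\theta}<\infty$, closing the cycle $1\Leftrightarrow 2\Leftrightarrow 3$.

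I do not anticipate a real obstacle: the substance is carried entirely by Theorem~\ref{theo:gen-Driscol-theorem}, and what remains is verifying its hypotheses and evaluating a single Hilbert--Schmidt norm. The only points deserving care are (i) confirming $\cH_k\subset\cH_k^\theta$ so that the inclusion operator genuinely maps into $\cH_k^\theta$, which uses $\theta<1$ and the nesting property, and (ii) checking that each $\lambda_i^{1/2}\phi_i$ really is an element of $\cH_k^\theta$ with the stated norm, as just argued. One could alternatively derive $1\Rightarrow 3$ by making rigorous the Karhunen--Lo\`eve heuristic $\mathbb{E}[\sum_i W_i^2]=\sum_i\lambda_i^{1-\theta}$ with $W_i=Z_i\lambda_i^{1/2}\lambda_i^{-\theta/2}$, but routing through Driscoll's theorem is cleaner and additionally furnishes the zero--one dichotomy for free.
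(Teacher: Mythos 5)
Your proposal is correct and follows essentially the same route as the paper: the equivalence of statements 2 and 3 is obtained by applying Theorem~\ref{theo:gen-Driscol-theorem} with $r=k^\theta$ (using that $\cH_k^\theta$ is the RKHS of $k^\theta$), and the equivalence of statements 1 and 2 follows from the computation $\|I_{kk^\theta}\|_{\mathrm{HS}}^2=\sum_{i\in I}\|\lambda_i^{(1-\theta)/2}\lambda_i^{\theta/2}\phi_i\|_{\cH_k^\theta}^2=\sum_{i\in I}\lambda_i^{1-\theta}$, exactly as in the paper. Your extra care in verifying the hypotheses of Driscoll's theorem (the inclusion $\cH_k\subset\cH_k^\theta$ and $m\equiv 0\in\cH_k^\theta$) is a welcome but inessential elaboration.
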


\begin{proof}
The equivalence between 2.~and 3.~follows from Theorem \ref{theo:gen-Driscol-theorem} and that $\cH_k^\theta$ is an RKHS with $k^\theta$ being its kernel.
The equivalence between 1.~and 2.~follows from that the Hilbert-Schmidt norm of the inclusion operator $I_{k k^\theta}$  is  $\sum_{i \in I} \lambda_i^{1-\theta}$:
$$
\| I_{k k^\theta} \|_{\rm HS}^2
= \sum_{i \in I} \| I_{k k^\theta} \lambda_i^{1/2} \phi_i \|_{\cH_k^\theta}^2
= \sum_{i \in I} \| \lambda_i^{(1-\theta)/2} \lambda_i^{\theta/2} \phi_i \|_{\cH_k^\theta}^2
=  \sum_{i \in I} \lambda_i^{1-\theta},
$$
where the last equality follows from that $(\lambda_i^{\theta/2}\phi_i)_{i \in I}$ are orthonormal in $\cH_k^\theta$.
\end{proof}

As mentioned, Theorem~\ref{theo:gp-path-power-RKHS} implies that the largest possible $\theta$ satisfying the condition $\sum_{i \in I} \lambda_i^{1-\theta} < \infty$ quantifies the relative smoothness of the GP sample $F$ compared with the RKHS $\cH_k$.
The following example on a GP with a periodic Sobolev kernel illustrates this.

\begin{example}  \label{ex:periodic-sobolev-GP-smoothness}
As mentioned in Example~\ref{ex:power-periodic-sobolev-RKHS}, the eigenvalues associated with the $s$-th order periodic Sobolev kernel $k_s$ decays at the rate $\lambda_i = O(i^{-2s})$ and the $\theta$-th powered RKHS $\cH_{k_s}^\theta$ is the $s\theta$-th order periodic Sobolev space.
The eigenvalue condition in Theorem~\ref{theo:gp-path-power-RKHS} is
\begin{align*}
    \sum_{i = 1}^\infty \lambda_i^{1-\theta} = O\left( \sum_{i = 1}^\infty  i^{-2s (1-\theta)} \right) < \infty,
\end{align*}
which holds if $2s(1-\theta) > 1$ or
$$
 \frac{2s - 1}{2s} > \theta.
$$
This upper bound on $\theta$ quantifies the relative smoothness of the GP sample $F \sim \GP(0,k_s)$.

For example, if $s = 1$, the upper bound is $1/2$, suggesting that the GP sample's smoothness is half of the RKHS $\cH_{k_s}$.
Since this RKHS consists of {\em once}-differentiable functions, this implies that the GP sample is approximately {\em half}-differentiable (in an appropriate sense).
If $s = 2$, the relative smoothness is $3/4$, implying that the GP sample has smoothness $3/2$.
If $s = 3$, the relative smoothness is $5/6$, so the GP sample has smoothness $5/2$, and so on.
In general, the smoothness of a GP sample with covariance kernel $k_s$ is given by
$$
\frac{2s-1}{2s} \times s = s - \frac{1}{2},
$$
which is $1/2$-smaller than the RKHS.

\end{example}

Other examples of GP sample properties are given for squared-exponential and Mat\'ern kernels in the next section.

\section{Examples of GP Sample Properties} \label{sec:examples-sample-path}

Based on the above generic results, we derive sample properties of GPs with squared-exponential and Mat\'ern kernels.

Corollary~\ref{coro:gauss-sample-path} below on a squared-exponential kernel follows from Theorem~\ref{theo:gp-path-power-RKHS} and that the eigenvalues $\lambda_i$ in this case decay exponentially fast as $i \to \infty$.
The proof is given in Appendix~\ref{sec:proof-gp-sample-path-se}.

\begin{corollary}[Sample Properties for Squared-Exponential Kernels] \label{coro:gauss-sample-path}
Let $\cX \subset \Re^d$ be a compact set with Lipschitz boundary, $\nu$ be the Lebesgue measure on $\cX$, $k_\gamma(x,x') = \exp(- \|x - x' \|^2 / \gamma^2)$ be the squared-exponential kernel on $\cX$ with bandwidth $\gamma > 0$, and $\cH_{k_\gamma}$ be its RKHS.
Then for all $0 < \theta < 1$, the $\theta$-th power $\cH_{k_\gamma}^\theta$ of $\cH_{k_\gamma}$ in Definition \ref{def:power-RKHS} is well-defined.
Moreover, for $\ssf \sim \GP(0,k_\gamma)$,  we have  $F \in \cH_{k_\gamma}^\theta$ with probability one for all $0 < \theta < 1$.
\end{corollary}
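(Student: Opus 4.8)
The plan is to verify the three hypotheses of Theorem~\ref{theo:gp-path-power-RKHS} and then read the conclusion off the equivalence of its statements~1 and~3. The single input that makes everything work is the fact that, for the squared-exponential kernel on a bounded $\cX \subset \Re^d$ with $\nu$ the Lebesgue measure, the eigenvalues $\lambda_1 \ge \lambda_2 \ge \cdots$ of the integral operator $T_{k_\gamma}$ decay at a (sub-)exponential rate: there are constants $C_1, C_2 > 0$ depending only on $\cX$, $\gamma$, $d$ with $\lambda_i \le C_1 \exp(-C_2\, i^{1/d})$ for all $i$. This reflects the analyticity of $k_\gamma$ and is classical; I would cite, e.g., \citet[Section~4 and the Gaussian-kernel example therein]{SteSco12} and the references in \citet{steinwart2019convergence}. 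Since this decay is far faster than polynomial --- certainly "not too slow" in the sense of Section~\ref{sec:driscol-sample-path} --- $F \sim \GP(0,k_\gamma)$ has continuous sample paths with probability one, so Theorem~\ref{theo:gp-path-power-RKHS} is indeed applicable; one can also get this directly from the smoothness of $k_\gamma$ by a Kolmogorov-type argument \citep[see][]{AdlJon07,da2014stochastic}.

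Next I would check that the powered RKHS $\cH_{k_\gamma}^\theta$ is well-defined for every $0 < \theta < 1$, i.e., that $\sum_{i\in I} \lambda_i^\theta \phi_i^2(x) < \infty$ for all $x \in \cX$ (condition~\eqref{eq:cond-powered-RKHS}). Equivalently, $\cH_{k_\gamma}^\theta$ should be a genuine RKHS, which holds once it embeds continuously into $C(\cX)$ (equivalently into $L_\infty(\nu)$). For the squared-exponential kernel the embedding $\cH_{k_\gamma}^\theta \hookrightarrow L_\infty(\nu)$ holds for \emph{every} $\theta > 0$ --- a special feature of this kernel --- which forces $\cH_{k_\gamma}^\theta$ to be an RKHS and hence gives \eqref{eq:cond-powered-RKHS}; alternatively one verifies \eqref{eq:cond-powered-RKHS} directly by pairing the at-worst-polynomial growth of $\|\phi_i\|_{L_\infty(\cX)}$ in $i$ with the (sub-)exponential decay of $\lambda_i^\theta$ for fixed $\theta > 0$, which makes the series summable. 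The relevant facts about the Gaussian kernel's spectrum are available in \citet{SteSco12} and \citet{steinwart2019convergence}.

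Finally, fixing $0 < \theta < 1$, I would note that since $1-\theta$ is a fixed positive number and $\lambda_i$ decays faster than any polynomial, so does $\lambda_i^{1-\theta}$; indeed $\lambda_i^{1-\theta} \le C_1^{1-\theta}\exp(-C_2(1-\theta)\, i^{1/d})$, whence $\sum_{i\in I} \lambda_i^{1-\theta} < \infty$. This is statement~1 of Theorem~\ref{theo:gp-path-power-RKHS}, and its statement~3 then yields $F \in \cH_{k_\gamma}^\theta$ with probability one. As $\theta \in (0,1)$ was arbitrary, the claim holds for each such $\theta$; moreover, using the nesting $\cH_{k_\gamma}^{\theta} \subset \cH_{k_\gamma}^{\theta'}$ for $\theta' < \theta$, taking a sequence $\theta_n \uparrow 1$ and intersecting the corresponding probability-one events gives, almost surely, $F \in \cH_{k_\gamma}^\theta$ for all $0 < \theta < 1$ simultaneously.

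The one nonroutine step is the well-definedness condition~\eqref{eq:cond-powered-RKHS}: the naive Mercer bound $\|\phi_i\|_{L_\infty(\cX)} \le \lambda_i^{-1/2}$, combined with the factor $\lambda_i^\theta$, only produces the term bound $\lambda_i^{\theta-1}$, which \emph{diverges}, so a genuinely sharper control of the eigenfunctions' sup-norms (equivalently, the $L_\infty$-embedding property of powers of the Gaussian kernel) is what the argument really rests on. Everything else --- the eigenvalue decay rate, the sample-path continuity of $F$, and the summability $\sum_i \lambda_i^{1-\theta} < \infty$ --- is immediate once that decay rate is in hand.
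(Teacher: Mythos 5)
Your proposal is correct and follows the same skeleton as the paper's proof: verify the two hypotheses of Theorem~\ref{theo:gp-path-power-RKHS} (well-definedness of $\cH_{k_\gamma}^\theta$ and $\sum_{i} \lambda_i^{1-\theta}<\infty$) and read off the conclusion. The differences lie in how the two sub-steps are discharged. For the summability, you invoke the (sub-)exponential eigenvalue decay $\lambda_i \le C_1\exp(-C_2\, i^{1/d})$; the paper instead derives $\lambda_i \le c_p\, i^{-1/p}$ for every $p\in(0,1)$ from the entropy-number bounds of \citet{MeiSte17} and then picks $p<1-\theta$. Both work; yours assumes a sharper spectral fact than is needed. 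For the well-definedness condition~\eqref{eq:cond-powered-RKHS} --- which you rightly single out as the only nonroutine step --- you assert the $L_\infty$-embedding of the $\theta$-power by citation, whereas the paper proves it: by \citet[Theorem 2.5]{Ste17} it suffices that $[L_2(\nu),\cH_{k_\gamma}]_{\theta,2}\hookrightarrow L_\infty(\nu)$, which follows from $\cH_{k_\gamma}\hookrightarrow W_2^m(\cX)$ for any $m$ with $\theta m > d/2$, monotonicity of interpolation spaces, and the Besov--Sobolev embedding $B_{22}^{\theta m}(\cX)\hookrightarrow L_\infty(\nu)$ on a bounded Lipschitz domain. Your citation-based route is legitimate, but your suggested \emph{alternative} --- pairing ``at-worst-polynomial growth of $\|\phi_i\|_{L_\infty(\cX)}$'' with the decay of $\lambda_i^\theta$ --- should be dropped: uniform or polynomially growing sup-norm bounds for eigenfunctions of integral operators are not available on a general compact Lipschitz domain (this is precisely the pathology that makes Mercer-type arguments delicate; see the discussion in \citet{SteSco12}), so the interpolation-space embedding is not an optional refinement but the load-bearing argument. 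Two points in your favour: you explicitly check the sample-path continuity hypothesis of Theorem~\ref{theo:gp-path-power-RKHS}, and you note that the ``for all $\theta$'' conclusion upgrades to a single almost-sure event via nesting and a countable intersection; the paper leaves both implicit.
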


Corollary~\ref{coro:gauss-sample-path} shows that a GP sample $F \sim \GP(0,k_\gamma)$ having a squared-exponential covariance $k_\gamma$ belongs to the $\theta$-th powered RKHS $\cH_{k_\gamma}^\theta$ with probability one for any power $\theta$ arbitrarily close to one.
Thus, the ``relative smoothness'' of the GP sample compared with the original RKHS is one. Indeed, the $\theta$-th powered RKHS here consists of infinitely differentiable functions, so the GP sample is also infinitely differentiable.
While the GP sample does not belong to the original RKHS, the GP sample has very similar properties with it.

Corollary \ref{coro:matern-sample-path} below provides GP sample properties for Mat\'ern kernels.
It requires the {\em interior cone condition} \citep[Definition 3.6]{Wen05} that there is no `pinch point' (i.e.~a $\prec$-shape region) on the boundary of the input space $\cX \subset \mathbb{R}^d$.\footnote{A set $\cX \subset \Re^d$ is said to satisfy the interior cone condition if there exist an angle $\theta \in (0,2\pi)$ and a radius $R > 0$ such that every $x \in \cX$ is associated with a unit vector $\xi(x)$ so that the cone $C(x, \xi(x), \psi, R)$ is contained in $\Omega$,
where
\begin{align*}
   & C(x, \xi(x), \theta, R) := \{ x + a y:\ y \in \Re^d,\ \|y\| = 1,\ \left<y, \xi(x) \right> \geq \cos \theta,\ a \in [0,R] \}.
\end{align*}
}
For example, this condition is satisfied if $\cX$ is a cube or a ball in $\mathbb{R}^d$.

\begin{corollary}[Sample Properties for Mat\'ern  Kernels] \label{coro:matern-sample-path}
Let $\cX \subset \Re^d$ be a compact subset having a Lipschitz boundary and satisfying the interior cone condition.
Let $k_{\alpha}$ be the Mat\'ern kernel on $\cX$ in Example~\ref{ex:matern-kernel} with smoothness $\alpha > 0$ and bandwidth $h > 0$ such that $\alpha + d/2 \in \mathbb{N}$.
Similarly, let $k_{\alpha'}$ be the Mat\'ern kernel on $\cX$ with smoothness $\alpha' > 0$ and bandwidth $h' > 0$ such that $\alpha' + d/2 \in \mathbb{N}$, and $\cH_{k_{\alpha'}}$ be its RKHS.
Then if
\begin{equation} \label{eq:Matern-smoothness-cond}
    \alpha  > \alpha' + d/2,
\end{equation}
a GP sample $\ssf \sim \GP(0,k_{\alpha})$ satisfies $F \in \cH_{k_{\alpha'}}$ with probability one.
\end{corollary}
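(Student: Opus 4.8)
The plan is to deduce the claim from the generalized Driscoll theorem (Theorem~\ref{theo:gen-Driscol-theorem}) applied with $k = k_\alpha$ and $r = k_{\alpha'}$. For this I must verify three things: that $\cH_{k_\alpha} \subset \cH_{k_{\alpha'}}$; that $F \sim \GP(0,k_\alpha)$ has continuous sample paths with probability one; and that the inclusion operator $I_{k_\alpha k_{\alpha'}} : \cH_{k_\alpha} \to \cH_{k_{\alpha'}}$ is Hilbert--Schmidt. The conclusion $F \in \cH_{k_{\alpha'}}$ almost surely then follows from part~1 of that theorem (with $m = 0 \in \cH_{k_{\alpha'}}$). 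The whole argument is run in Sobolev language: since $\cX$ has Lipschitz boundary and $s := \alpha + d/2$ and $s' := \alpha' + d/2$ are integers, Example~\ref{ex:matern-rkhs} gives that $\cH_{k_\alpha}$ is norm-equivalent to $W_2^s(\cX)$ and $\cH_{k_{\alpha'}}$ is norm-equivalent to $W_2^{s'}(\cX)$.

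The inclusion and the continuity are routine. From $\alpha > \alpha' + d/2$ we get $s - s' = \alpha - \alpha' > d/2 > 0$, so $W_2^s(\cX) \subseteq W_2^{s'}(\cX)$ --- indeed $\| f \|_{W_2^{s'}(\cX)} \le \| f \|_{W_2^s(\cX)}$, since the defining sum~\eqref{eq:sobolev_space} for order $s'$ has fewer terms --- and hence $\cH_{k_\alpha} \subset \cH_{k_{\alpha'}}$ with bounded inclusion by norm-equivalence. For continuity, note $\alpha > \alpha' + d/2 \ge 1$, and the Mat\'ern kernel satisfies $k_\alpha(x,x)=1$ with $1 - k_\alpha(x,x') = O(\| x - x' \|^{\min(2\alpha,2)})$ as $\| x - x' \| \to 0$ (visible from the half-integer closed forms in Example~\ref{ex:matern-kernel} and by continuity in $\alpha$); thus $\bE[(F(x) - F(x'))^2] = 2(1 - k_\alpha(x,x')) = O(\| x - x' \|^{\min(2\alpha,2)})$, and Kolmogorov's continuity criterion (applicable since the increments of $F$ are Gaussian) yields sample paths almost surely H\"older continuous of every order below $\min(\alpha,1)$, in particular continuous.

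The crux is the Hilbert--Schmidt property. Since membership in the Hilbert--Schmidt class is unchanged when the norms on the domain and codomain are replaced by equivalent ones, $I_{k_\alpha k_{\alpha'}}$ is Hilbert--Schmidt if and only if the Sobolev embedding $\iota : W_2^s(\cX) \hookrightarrow W_2^{s'}(\cX)$ is. For a bounded domain satisfying the interior cone condition and having Lipschitz boundary --- hence an extension domain --- the singular numbers of $\iota$ obey the classical asymptotics $a_n(\iota) \asymp n^{-(s-s')/d}$ (spectral theory of Sobolev embeddings; see e.g.\ Birman--Solomyak, Edmunds--Triebel, K\"onig). Hence $\iota$ is Hilbert--Schmidt $\iff \sum_{n\ge 1} a_n(\iota)^2 < \infty \iff \sum_{n\ge 1} n^{-2(s-s')/d} < \infty \iff s - s' > d/2 \iff \alpha > \alpha' + d/2$, which is exactly our hypothesis; we only need the forward implication here. (One can avoid the two-sided asymptotics: fixing an auxiliary $\nu$, say Lebesgue measure on $\cX$, the norm-equivalence $\cH_{k_\alpha} \simeq W_2^s(\cX)$ forces $\lambda_i \asymp i^{-2s/d}$, and the interpolation inequality $\| \phi_i \|_{W_2^{s'}} \lesssim \| \phi_i \|_{L_2(\nu)}^{1 - s'/s}\| \phi_i \|_{W_2^s}^{s'/s}$ with $\| \phi_i \|_{L_2(\nu)}=1$, $\| \phi_i \|_{W_2^s} \asymp \lambda_i^{-1/2}$ gives $\| \phi_i \|_{\cH_{k_{\alpha'}}}^2 \lesssim \lambda_i^{-s'/s}$, so the condition~\eqref{eq:HS-norm-inclusion-op} reads $\sum_i \lambda_i \| \phi_i \|_{\cH_{k_{\alpha'}}}^2 \lesssim \sum_i \lambda_i^{1-s'/s} \asymp \sum_i i^{-2(s-s')/d} < \infty$, again by $s - s' > d/2$.) Feeding this into Theorem~\ref{theo:gen-Driscol-theorem}(1) completes the proof.

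The main obstacle is this last step: pinning down a clean reference for (or giving a self-contained proof of) the Hilbert--Schmidt-ness of the Sobolev embedding when $\cX$ is merely a cone-condition/Lipschitz domain rather than, say, a cube --- where one could instead extend functions off $\cX$ to a large torus and reduce to $\sum_{m \in \mathbb{Z}^d} (1 + \| m \|^2)^{-(s-s')} < \infty \iff s - s' > d/2$. Everything else --- the RKHS inclusion, the sample continuity, and the bookkeeping with the norm-equivalences --- is standard.
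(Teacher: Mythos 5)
Your proof is correct and follows the same overall architecture as the paper's: translate both RKHSs into Sobolev spaces via the norm-equivalence of Example~\ref{ex:matern-rkhs}, show that the Sobolev embedding $W_2^s(\cX) \hookrightarrow W_2^{s'}(\cX)$ is Hilbert--Schmidt when $s - s' > d/2$, transfer that property to $I_{k_\alpha k_{\alpha'}}$ (Hilbert--Schmidt-ness is stable under passing to equivalent norms), and invoke Theorem~\ref{theo:gen-Driscol-theorem}(1). The one place you diverge is exactly the step you flag as the main obstacle: you justify the Hilbert--Schmidt property either via the two-sided singular-number asymptotics $a_n(\iota) \asymp n^{-(s-s')/d}$ or via eigenvalue decay plus an interpolation inequality, both of which are heavier than needed and require references you could not pin down for a general cone-condition domain. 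The paper instead cites \emph{Maurin's theorem} \citep[Theorem 6.61]{AdaFou03}, which states directly that for a domain satisfying the interior cone condition the embedding $W_2^s(\cX) \to W_2^{s'}(\cX)$ is Hilbert--Schmidt whenever $s - s' > d/2$; this is precisely the clean, self-contained reference you were looking for, and it explains why the corollary's hypotheses include the interior cone condition. On the other hand, your explicit verification of almost-sure sample-path continuity via the increment bound $\bE[(F(x)-F(x'))^2] = 2(1-k_\alpha(x,x'))$ and Kolmogorov's criterion is a genuine addition: Theorem~\ref{theo:gen-Driscol-theorem} assumes continuity as a hypothesis, and the paper's proof applies the theorem without checking it, so your argument is in this respect more complete.
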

\begin{proof}
Let $W_2^s(\cX)$ and $W_2^\beta(\cX)$ be the Sobolev spaces of order
$$
s := \alpha + d/2 \quad \text{and} \quad  \beta := \alpha' + d/2,
$$
respectively, as defined in Example~\ref{ex:matern-rkhs}.
Since $\cX$ satisfies the interior cone condition and  $s - \beta > d/2$,  Maurin's theorem~\citep[Theorem 6.61]{AdaFou03} implies that the embedding $W_2^s(\cX) \to W_2^\beta(\cX)$ is Hilbert-Schmidt.
Moreover, since the boundary of $\cX$ is Lipschitz, the RKHS $\cH_{k_{\alpha}}$ of $k_{\alpha}$ is norm-equivalent to $W_2^s(\cX)$, and $\cH_{k_{\alpha'}}$ is norm-equivalent to $W_2^\beta(\cX)$ \citep[Corollary 10.48]{Wen05}
 (See also Example \ref{ex:matern-rkhs}).
Therefore the embedding $\cH_{k_{\alpha}} \to \cH_{k_{\alpha'}} $ is also Hilbert-Schmidt.
The assertion then follows from Theorem \ref{theo:gen-Driscol-theorem}.
\end{proof}

Corollary~\ref{coro:matern-sample-path} essentially shows that the smoothness parameter $\alpha$ of the Mat\'ern kernel $k_\alpha$ quantifies the smoothness of the resulting GP sample $F \sim \GP(0, k_\alpha)$.
Let $\alpha'$ be the largest possible constant satisfying the condition~\eqref{eq:Matern-smoothness-cond} so that
$$
\alpha \approx \alpha' + d/2.
$$
This value $\alpha'+d/2$ is the smoothness of the RKHS $\cH_{k_{\alpha'}}$ of the Mat\'ern kernel $k_{\alpha'}$ with smoothness parameter $\alpha'$.
Therefore, the GP sample $F \sim \GP(0, k_\alpha)$ belonging to $\cH_{k_{\alpha'}}$ implies that its smoothness is $\alpha'+d/2$, which is arbitrarily close to $\alpha$.

The smoothness $\alpha$ of the GP sample $F \sim \GP(0, k_\alpha)$ is $d/2$-lower than the smoothness of the covariance kernel's RKHS $\cH_{k_\alpha}$, which is norm-equivalent to the Sobolev space of order $\alpha + d/2$.
This is consistent with Example~\ref{ex:periodic-sobolev-GP-smoothness} where a GP sample of a periodic Sobolev kernel on a one-dimensional input space is shown to have smoothness $1/2$-smaller than the covariance kernel's RKHS.

\begin{remark}
The condition $\alpha > \alpha' + d/2$ in \eqref{eq:Matern-smoothness-cond} is sharp: if $\alpha = \alpha'+ d/2$, the sample $\ssf \sim \GP(0,k_{\alpha})$ does {\em not} belong to $\cH_{k_{\alpha'}}$ with probability one \citep[Corollary 5.7, ii]{steinwart2019convergence}.
The condition $\alpha' + d/2 \in \mathbb{N}$ may also be removed; $\alpha$ can be any positive real satisfying $\alpha > \alpha' + d/2$  \citep[Corollary 5.7, i]{steinwart2019convergence}.
\end{remark}

\chapter{Linear Functionals of Gaussian Processes and RKHS Representations}
\label{sec:fund-equiv-gp-RKHS}

This chapter discusses {\em linear functionals} of Gaussian processes, which are fundamental in any applications of Gaussian processes.
A linear functional of a GP is its real-valued linear function.
Denoting such a linear functional by $A(F)$ with $F$ being a GP, examples include
\begin{itemize}
    \item   Function value evaluation: $A(F) = F(x)$ for any fixed $x \in \cX$;
    \item Integral: $A(F) := \int F(x) dP(x)$ for a probability measure $P$ on a measurable $\cX$;
    \item Partial derivative: $A(F) := \frac{\partial F}{\partial x_i}(x)$ for any fixed $x = (x_1,\dots,x_d)^\top \in \cX \subset \mathbb{R}^d$ and for $i = 1,\dots,d$.
\end{itemize}
To define such a linear functional, one might first consider the sample space of a GP and then define the functional on it.
However, this approach requires understanding the GP sample properties, which may not be sometimes straightforward. Moreover, there exist many important linear functionals that can never be defined in this way, such as stochastic integrals, as described later.

This chapter describes how linear functionals of a GP can be defined using the corresponding RKHS representations in a unifying manner.  Based on this, we will observe a fundamental equivalence between the GP and RKHS approaches: the root mean square of a linear functional  with respect to the GP, $\sqrt{ \mathbb{E}_{F \sim \GP(0,k)} [A(F)^2] }$,  is identical to the maximum of the linear functional over the unit ball of the RKHS, $\sup_{\| f \|_{\cH_k} \leq 1 }A(f)$:
$$
\sqrt{ \mathbb{E}_{F \sim \GP(0,k)} [A(F)^2] } = \sup_{\| f \|_{\cH_k} \leq 1 }A(f).
$$
This equivalence leads to various different equivalences between the GP and RKHS approaches, as described in later sections.

Section~\ref{sec:lin-func-GP} first provides a unifying definition of linear functionals of a GP, with a motivating example of stochastic integrals.
Section~\ref{sec:examples-functionals} then describes how this definition leads to the natural definition of linear functionals in individual cases, such as function-value evaluation, integrals and derivatives.
Section~\ref{sec:master-theorem} formalizes and discusses the equivalence between the root mean square of a linear functional for a GP and the maximum of the linear functional over the unit ball of the RKHS.

\section{Linear Functionals of a Gaussian Process}
\label{sec:lin-func-GP}

A naive way of defining a linear functional  $A(F)$ of a Gaussian process  $F \sim \GP(0,k)$ might be to first consider its samples and then define the functional by applying it to those samples.
However, this two-step approach causes some technical issues.
One issue is that, in this way, if one takes an element $h$ in the RKHS $\mathcal{H}_k$, one cannot define the inner product between the GP sample $F$ and $h$, i.e.,
\begin{equation}\label{eq:rkhs_inner_product_functional}
A(F) = \left< F, h \right>_{\cH_k}
\end{equation}
is not well-defined because $F \sim \GP(0,k)$ is {\em not} included in the RKHS $\cH_k$ almost surely (when the RKHS $\cH_k$ is infinite-dimensional), as we saw in the previous chapter.
However, there are important situations where precisely the functionals of the form \eqref{eq:rkhs_inner_product_functional} are of interest, a notable example being  {\em stochastic integrals}.

\begin{example}[Stochastic Integrals]
\label{ex:stoch-int-motivation}
Let $F \sim \GP(0,k)$ be the Wiener process (or the Brownian motion; see Figure~\ref{fig:GP_intro}) on $\cX = [0,1]$. In this case, the kernel is $k(x,x') = \min(x,x')$ and the RKHS $\cH_k$ is (see e.g., \citet[p.~68]{AdlJon07})
\begin{align}\label{eq:RKHS-Brownian}
\cH_k = \biggl\{ & f \in L_2([0,1]):
D f \ \text{exists,}\\
& f(x) = \int_0^x Df(t) dt , \ \text{and}\ \int_0^1 (D f (x))^2 dx < \infty
  \biggr\}, \nonumber
\end{align}
where $D f$ denotes the weak derivative of $f$. Namely, the RKHS consists of functions having square-integrable weak derivatives (i.e., the first-order Sobolev space). Its inner product is given by
$$
 \left< f, h \right>_{\cH_k} := \int_0^1 Df (x) Dh (x) dx \quad \text{for}\quad f, h \in \cH_k.
$$

For the Wiener process $F \sim \GP(0,k)$, the {\em Paley-Wiener stochastic integral}~\citep{PalWieZyg33} of a function $g \in L_2([0,1])$
(or the {\em It{\=o} integral}, where $g$ is also a stochastic process) is the integral of $g$ with respect to infinitesimal changes of $F$ and is usually denoted as $\int_0^1 g(x) dF(x)$. Thus, it is essentially the $L_2$ inner product of $g$ and the weak derivative of $F$, i.e., $\int_0^1 g(x) DF(x) dx$.
In other words, if we define an RKHS function $h \in \cH_k$ by  $h(x) = \int_0^x g(t)dt$ so that $g = Dh$, then the stochastic integral is given by the RKHS inner product between $h$ and $F$:
\begin{equation} \label{eq:stoch-int-informal}
    \int_0^1 g(x) dF(x) := \left< F, h \right>_{\cH_k} , \quad \text{where} \quad h(x) = \int_0^x g(t)dt.
\end{equation}
However, since  $F \sim \GP(0,k)$ does {\em not} belong to $\cH_k$ almost surely,  $F$ does not have a square-integrable derivative $DF$. Therefore, the stochastic integral cannot be defined in this two-step approach.

\end{example}

\paragraph{A ``Direct Definition.''}
Is there a way to ``directly define'' the inner product $\left<F, h \right>_{\cH_k}$?
This is possible using the Gaussian Hilbert space $\cG_k$ introduced in Section~\ref{sec:GHS}, which is equivalent to the RKHS $\cH_k$ through the canonical isometry $\psi_k: \cH_k \to \cG_k$.
To provide an intuitive idea, let us consider the KL expansion \eqref{eq:KL-exp-informal}\footnote{Here, we assume that the index set $I$ in Chapter~\ref{sec:theory} is the set of natural numbers, $I = \{1, 2, ,\dots,\}$, without loss of generality. Recall that $\cX$ being a compact metric space and $k$ being continuous imply that $\cH_k$ is separable~\citep[Lemma 4.33]{Steinwart2008}.} of $F \sim \GP(0,k)$ in Chapter~\ref{sec:theory}, assuming that $k$ is continuous on a compact metric space $\cX$:
$$
F = \sum_{i=1}^\infty Z_i \lambda_i^{1/2} \phi_i, \quad Z_1, Z_2, \dots \stackrel{i.i.d.}{\sim} \mathcal{N}(0,1),
$$
where $(\lambda_i^{1/2} \phi_i)_{i=1}^\infty$ form an orthonormal basis of $\cH_k$, so that
$$
h = \sum_{i=1}^\infty \lambda_i^{1/2} \phi_i \left< \lambda_i^{1/2} \phi_i, h \right>_{\cH_k} \quad \text{and} \quad Z_i = \psi_k ( \lambda_i^{1/2} \phi_i ) \in \cG_k.
$$
Let $h_n$ and $F_n$ be, respectively, the truncations of the orthonormal expansion of $h$ and the KL expansion of $F$ after the first $n$ terms:
$$
h_n :=  \sum_{i=1}^n \lambda_i^{1/2} \phi_i \left< \lambda_i^{1/2} \phi_i, h \right>_{\cH_k} \quad \text{and} \quad \quad F_n := \sum_{i=1}^n Z_i \lambda_i^{1/2} \phi_i
$$
The $F_n$ belongs to the RKHS $\cH_k$ with probability one, and thus its inner product with the RKHS element $h$ is well defined:
\begin{align*}
    & \left<F_n, h \right>_{\cH_k}
    = \left<\sum_{i=1}^n Z_i \lambda_i^{1/2} \phi_i, h \right>_{\cH_k}
    = \sum_{i=1}^n Z_i  \left< \lambda_i^{1/2} \phi_i, h \right>_{\cH_k}  \\
    & = \sum_{i=1}^n \psi_k(\lambda_i^{1/2} \phi_i)  \left< \lambda_i^{1/2} \phi_i, h \right>_{\cH_k}
    = \psi_k\left(  \sum_{i=1}^n \lambda_i^{1/2} \phi_i \left< \lambda_i^{1/2} \phi_i, h \right>_{\cH_k}  \right)  \\
    & = \psi_k\left(  h_n \right).
\end{align*}
Taking the limit of $n$ going to infinity,
$$
\lim_{n \to \infty} \left< F_n, h \right>_{\cH_k} = \lim_{n \to \infty} \psi_k\left(  h_n \right) =  \psi_k\left( \lim_{n \to \infty}  h_n \right) = \psi_k\left( h \right),
$$
where the second equality holds because the canonical isometry $\psi_k$ is norm-preserving.
Thus, the inner product $\left<F, h \right>_{\cH_k}$ between a Gaussian process $F$ and the RKHS element $h$ may be {\em defined} as the Gaussian random variable corresponding to $h$ via the canonical isometry:
$$
\left<F, h \right>_{\cH_k} := \lim_{n \to \infty} \left< F_n, h \right>_{\cH_k} = \psi_k\left( h \right) \in \cG_k.
$$
Again, the first equality is a definition and not an identity, because the limit $\lim_{n \to \infty} F_n = F$ does not belong to the RKHS.

Note that, by the Riesz representation theorem, {\em any} bounded linear functional $A$ on the RKHS $\cH_k$ can be written as the inner product with a uniquely associated RKHS element, say $f_A \in \cH_k$, called the Riesz representation of $A$:
\begin{equation} \label{eq:Riesz-repr-functional}
    A(f) = \left< f, f_A \right>_{\cH_k} \quad \text{for all } ~ f \in \cH_k.
\end{equation}
Thus, the linear functional $A$ applied to a Gaussian process $F$ can be defined as the Gaussian random variable corresponding to the Riesz representation via the canonical isometry, as formulated below.

\begin{definition}[Linear Functionals of a Gaussian Process] \label{def:GP-linear-functional}
Let $F \sim \GP(0,k)$ be a GP with covariance kernel $k$ on a non-empty set $\cX$, $\cG_k$ be the associated Gaussian Hilbert space, $\cH_k$ be the RKHS of $k$, and $\psi_k: \cH_k \to \cG_k$ be the canonical isometry (see Section~\ref{sec:canonical-isomet-GHS-RKHS}).
Let $A: \cH_k \to \mathbb{R}$ be a bounded linear functional on $\cH_k$ with Riesz representation $f_A \in \cH_k$ satisfying~\eqref{eq:Riesz-repr-functional}.
Then the linear functional $A$ applied to $F$ is defined as the Gaussian random variable in $\cG_k$ corresponding to $f_A$ via the canonical isometry $\psi_k$:
$$
A(F) := \psi_k (f_A) \in \cG_k.
$$
\end{definition}

Definition~\ref{def:GP-linear-functional} is generically applicable since no assumption is needed except that $\cX$ is a non-empty set and $k$ is a kernel on $\cX$.

\begin{remark}
    It can be shown that a linear functional defined as in Definition~\ref{def:GP-linear-functional} is a ``measurable linear functional'' in the theory of Gaussian measures (\citealt[Definition~2.10.1.]{Bog98} ; \citealt[p.158]{kukush2020gaussian}).
\end{remark}

While Definition~\ref{def:GP-linear-functional} may appear abstract, it leads to natural definitions of many linear functionals, as described in the following.
As a preparation, Proposition~\ref{prop:functional-explicit-const} below shows that defining the linear functional $A(F)$ as in Definition~\ref{def:GP-linear-functional} is equivalent to defining it as a weighted sum of (potentially infinitely many) function values of the Gaussian process $F$.

\begin{proposition} \label{prop:functional-explicit-const}
Let $F \sim \GP(0,k)$ be a GP with covariance kernel $k$ on a non-empty set $\cX$,  $\cH_k$ be the RKHS of $k$,  and $A: \cH_k \to \mathbb{R}$ be a bounded linear functional on $\cH_k$.
Let $A(F)$ be the linear functional $A$ applied to $F$ as defined in Definition~\ref{def:GP-linear-functional}.
Then there exist some coefficients $(c^{(n)}_i)_{i=1}^n \subset \mathbb{R}$ and input points $(x^{(n)}_i)_{i=1}^n \subset \cX$ for each $n \in \mathbb{N}$ such that
\begin{equation} \label{eq:linear-functional-limit-form}
A(\ssf)  = \lim_{n \to \infty} \sum_{i=1}^n c^{(n)}_i  \ssf (x_i^{(n)}).
\end{equation}
where the convergence is in the mean-square sense.
\end{proposition}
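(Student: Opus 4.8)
The plan is to peel back Definition~\ref{def:GP-linear-functional} and feed in the explicit construction of the RKHS from Section~\ref{sec:RKHS-explict-const}. Since $A$ is a bounded linear functional on $\cH_k$, the Riesz representation theorem provides a unique $f_A \in \cH_k$ with $A(f) = \langle f, f_A \rangle_{\cH_k}$ for all $f \in \cH_k$, and by Definition~\ref{def:GP-linear-functional} the random variable $A(F)$ is \emph{by definition} $\psi_k(f_A) \in \cG_k$, with $\psi_k : \cH_k \to \cG_k$ the canonical isometry.

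Next I would use that $\cH_k$ is the completion of the pre-Hilbert space $\cH_0$ of~\eqref{eq:RKHS-pre-Hilbert}, so $\cH_0$ is dense in $\cH_k$ and there is a sequence $(g_n)_{n \in \mathbb{N}} \subset \cH_0$ with $\| g_n - f_A \|_{\cH_k} \to 0$. Each $g_n$ is, by the definition of $\cH_0$, a finite weighted sum of canonical features, $g_n = \sum_{i=1}^{m_n} a_i^{(n)} k(\cdot, z_i^{(n)})$. To get the indexing in the statement (exactly $n$ summands in the $n$-th approximant) I would relabel, padding each $g_n$ with additional terms having zero coefficients so that without loss of generality $m_n = n$, and write $g_n = \sum_{i=1}^n c_i^{(n)} k(\cdot, x_i^{(n)})$.

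Then I would apply $\psi_k$ to $g_n$: by linearity of $\psi_k$ on $\cH_0$, the defining relation~\eqref{eq:canonical-isometry}, and the identification $Z_x = F(x)$ of the elementary Gaussian variables with the GP values,
$$
\psi_k(g_n) = \sum_{i=1}^n c_i^{(n)} \psi_k\!\left( k(\cdot, x_i^{(n)}) \right) = \sum_{i=1}^n c_i^{(n)} Z_{x_i^{(n)}} = \sum_{i=1}^n c_i^{(n)} F(x_i^{(n)}).
$$
Finally, since $\psi_k$ is an isometry and hence continuous, $g_n \to f_A$ in $\cH_k$ gives $\psi_k(g_n) \to \psi_k(f_A) = A(F)$ in $\cG_k$; and convergence in $\cG_k$ is, by the definition of the GHS norm as a standard deviation, exactly mean-square convergence. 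This delivers $A(F) = \lim_{n\to\infty} \sum_{i=1}^n c_i^{(n)} F(x_i^{(n)})$ in the mean-square sense.

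The main obstacle is essentially cosmetic: matching the ``$n$ terms in the $n$-th approximant'' bookkeeping of the statement, which is handled by zero-padding (and, if convenient, by passing to a subsequence, which does not affect the limit). Everything else is a direct assembly of results already in place — the Riesz representation theorem, the description of $\cH_0$ as finite linear combinations of canonical features together with its density in $\cH_k$, and the isometry (hence continuity) of $\psi_k$ established in Section~\ref{sec:canonical-isomet-GHS-RKHS}.
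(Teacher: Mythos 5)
Your proposal is correct and follows essentially the same route as the paper's own proof: approximate the Riesz representer $f_A$ by finite linear combinations of canonical features (density of $\cH_0$ in $\cH_k$), push the approximants through the canonical isometry to get $\sum_i c_i^{(n)} F(x_i^{(n)})$, and use norm preservation to convert RKHS convergence into mean-square convergence to $\psi_k(f_A) = A(F)$. The only addition is your zero-padding remark to match the ``$n$ terms in the $n$-th approximant'' indexing, a bookkeeping point the paper passes over silently.
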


\begin{proof}
Let $f_A \in \cH_k$ be the Riesz representation of $A$ on $\cH_k$ and $\psi_k:~\cH_k \to \cG_k$ be the canonical isometry.
Since $f_A \in \cH_k$, there exist some $(c^{(n)}_i)_{i=1}^n \subset \mathbb{R}$ and $(x^{(n)}_i)_{i=1}^n \subset \cX$ such that (see Section~\ref{sec:RKHS-def})
$$
\lim_{n \to \infty} \left\| f_A - \sum_{i=1}^n c^{(n)}_i k(\cdot,x^{(n)}_i) \right\|_{\cH_k} = 0.
$$
Recall $\psi_k (k(\cdot,x)) = \ssf(x) \in \cG_k$ for any $x \in \cX$. Therefore,
\begin{align*}
\left\| f_A - \sum_{i=1}^n c^{(n)}_i k(\cdot,x^{(n)}_i) \right\|_{\cH_k}^2
& = \left\| \psi_k(f_A) - \sum_{i=1}^n c^{(n)}_i \psi_k( k(\cdot,x^{(n)}_i) ) \right\|_{\cG_k}^2 \\
& = \mathbb{E} \left[ \left( A(\ssf) - \sum_{i=1}^n c^{(n)}_i  \ssf(x_i^{(n)}) \right)^2 \right],
\end{align*}
and the assertion follows.
\end{proof}

For example, a stochastic integral is usually defined in the limit form of \eqref{eq:linear-functional-limit-form}, where for each $n$ the weighted sum is a finite approximation of the integral with $n$ function evaluations.
Example~\ref{ex:stoch-integrals} below shows that this usual definition is recovered by defining the stochastic integral as in Definition~\ref{def:GP-linear-functional}.

\begin{example}[Stochastic Integrals (Contd.)] \label{ex:stoch-integrals}
Consider the same notation and setting as Example~\ref{ex:stoch-int-motivation} on stochastic integrals.
For a square-integrable function $g \in L_2([0,1])$, we wish to define the integral of $g$ with respect to infinitesimal changes along the path of a Brownian motion $F \sim \GP(0,k)$ on the interval $[0,1]$, where $k(x,x') = \min(x,x')$.
Such a stochastic integral is usually defined as
\begin{equation} \label{eq:stoch-int-limit-def}
\int_0^1 g(x) dF(x) := \lim_{n \to \infty} \sum_{i=1}^n g(t_i) \left( F(t_i) - F(t_{i-1}) \right).
\end{equation}
where the convergence is in the mean-square sense and
$$
0 \leq t_1 < t_2 < \cdots < t_n \leq 1
$$
for each $n \in \mathbb{N}$ are a partition of interval $[0,1]$ such that $|t_i - t_{i+1} | \leq C/n$ for  all $i=1,\dots,n$ for a constant $C > 0$ (e.g., $t_i = i/n$).
This is the usual definition of the Paley-Wiener integral (or It{\=o} integral when $g$ is also stochastic).
Let us recover this definition from Definition~\ref{def:GP-linear-functional}.

 Define an RKHS function $h \in \cH_k$ through the integral of $g$ as $h(x) = \int_0^x g(t)dt$ so that $g$ is the weak derivative of $h$.
Let $A$ be the bounded linear functional on the RKHS $\cH_k$ defined as the inner product with $h$:
$$
A(f) := \left<  h,  f \right>_{\cH_k} = \int_0^1  g(x) Df(x) dx  \quad \text{for all} \quad f \in \cH_k.
$$
Then, as suggested in \eqref{eq:stoch-int-informal}, the stochastic integral may be defined as the linear functional $A$ applied to the Brownian motion $F \sim \GP(0,k)$.
According to Definition~\ref{def:GP-linear-functional}, this is defined as the Gaussian random variable corresponding to the Riesz representation $h$ of the functional via the canonical isometry:
$$
A(F) := \psi_k(h) \in \cG_k.
$$
We will show that this is identical to the limit definition~\eqref{eq:stoch-int-limit-def}.

To this end, suppose for simplicity that $g$ is Lipschitz continuous with Lipschiz constant $L \geq 0$.
Define $h_n \in \cH_k$ by
$$
h_n(\cdot) := \sum_{i=1}^n \left( g(t_i) - g( t_{i+1} ) \right) k(\cdot,t_i),
$$
where we define $g(t_{n+1}) := 0$.
We first show that  $\lim_{n \to \infty} \left\| h - h_n \right\|_{\cH_k}^2 = 0$.
The weak derivative of $h_n$ is
\begin{align*}
  Dh_n &= \sum_{i=1}^n \left( g(t_i) - g( t_{i+1} ) \right) Dk(\cdot,t_i) \\
  & = \sum_{i=1}^n \left( g(t_i) - g( t_{i+1} ) \right) 1_{[0,t_i]}(\cdot) = \sum_{i=1}^n g(t_i) 1_{[t_{i-1}, t_i]}(\cdot),
\end{align*}
where we define $g(t_0) := 0$ and $1_{B}(\cdot)$ for a subset $B \subset [0,1]$ is the indicator function of $B$.
Thus, by the property of the RKHS $\cH_k$ (see \eqref{eq:RKHS-Brownian}), we have
\begin{align*}
    & \left\| h - h_n \right\|_{\cH_k}^2
     = \left\| Dh - Dh_n \right\|_{L_2([0,1])} \\
    & = \left\| g(\cdot) - \sum_{i=1}^n g(t_i) 1_{[t_{i-1}, t_i]}(\cdot) \right\|_{L_2([0,1])}^2
    = \sum_{i=1}^n  \int_{t_{i-1}}^{t_i} ( g(t) - g(t_i) )^2 dt \\
    & \leq \sum_{i=1}^n L^2 \int_{t_{i-1}}^{t_i} ( t - t_i )^2 dt   \leq \sum_{i=1}^n L^2 \frac{C^2}{n^2} \frac{C}{n}  = L^2 \frac{C^3}{n^2},
\end{align*}
which implies $\lim_{n \to \infty} \left\| h - h_n \right\|_{\cH_k}^2 = 0$.
Therefore, we have
$$\lim_{n \to \infty} \left\| \psi_k(h) - \psi_k(h_n) \right\|_{\cG_k}^2 = 0,
$$
because $\left\| \psi_k(h) - \psi_k(h_n) \right\|_{\cG_k}^2 = \left\| h - h_n \right\|_{\cH_k}^2$.

On the other hand,
\begin{align*}
& \psi_k(h_n)
= \psi_k\left( \sum_{i=1}^n \left( g(t_i) - g( t_{i+1} ) \right) k(\cdot,t_i)  \right) \\
& =  \sum_{i=1}^n \left( g(t_i) - g( t_{i+1} ) \right) F(t_i)
 = \sum_{i=1}^n g(t_i) \left( F(t_i) - F(t_{i-1}) \right).
\end{align*}
The limit of this last expression as $n \to \infty$ is the usual definition of the stochastic integral~\eqref{eq:stoch-int-limit-def}, and this limit is $A(F) := \psi_k(h)$ as shown above.
Therefore we have recovered the stochastic integral from Definition \ref{def:GP-linear-functional}.

Note that the assumption that $g$ is Lipschitz continuous is introduced just for simplicity, and not necessary to define the stochastic integral. Indeed, Definition \ref{def:GP-linear-functional} only requires that $h \in \H_k$, which means $g \in L_2([0,1])$.

\end{example}

\section{Examples of Linear Functionals} \label{sec:examples-functionals}
We now describe how Definition \ref{def:GP-linear-functional} leads to natural definitions of linear functionals of a GP in individual cases.
Our first example is ``function value evaluation,'' which recovers the Gaussian random variable $F(x_0)$ at a specific location $x_0$.
\begin{example}[Function Value Evaluation]
\label{ex:function-value-evaluation}
Let $x_0 \in \cX$ be an arbitrary input point, and define a linear functional on $\cH_k$ by
$$
A(f) := f(x_0) \quad \text{for all } ~ f \in \cH_k.
$$
By definition of the RKHS $\cH_k$, $A$ is bounded on $\cH_k$ and the Riesz representation of $A$ is given by the canonical feature map: $f_A := k(\cdot,x_0)$, as we have $\left< f_A, f \right>_{\cH_k} =  f(x_0)$ for all $f \in \cH_k$ by the reproducing property.
By Definition \ref{def:GP-linear-functional}, the linear functional  for the Gaussian process $\ssf \sim \GP(0,k)$ is then defined by
$$
A(\ssf) := \psi_k (k(\cdot,x_0))  = \ssf(x_0) \in \cG_k.
$$

\end{example}

The above example serves as a ``sanity check'' of Definition~\ref{sec:examples-functionals}. It holds for any GP, because the evaluation functional $f \mapsto f(x_0)$ is always bounded on the RKHS $\cH_k$ by the definition of the RKHS. The reason why $F(x_0)$ is obtained is due to the definition of the canonical isometry $\psi_k$, which maps $k(\cdot,x) \in \cH_k$ to $F(x) = \psi_k( k(\cdot,x)) \in \cG_k$ for any $x \in \cX$.

\paragraph{Lebesgue Integration.}
Our next example is (Lebesgue) integrals of a GP, which are fundamental in Bayesian computation. As it turns out, the Riesz representations in this case are {\em kernel mean embeddings}, discussed in detail in Chapter~\ref{sec:integral_transforms}.

\begin{example}[Integrals]  \label{example:integral}
Let $\cX$ be a measurable space and $P$ be a probability measure on it.
Let $k$ be a measurable kernel on $\cX$ with $\cH_k$ being its RKHS, such that $\int \sqrt{k(x,x)}dP(x) < \infty$.
Under the last condition, the linear functional
$$
A(f) := \int f(x) dP(x) \quad \text{for all } ~ f \in \cH_k
$$
is bounded on $\cH_k$, and the Riesz representation is given by
$$
f_A = \int k(\cdot,x)dP(x),
$$
which is the kernel mean embedding of $P$ in the RKHS \citep{SmoGreSonSch07}.
Then for a Gaussian process $\ssf \sim \GP(0,k)$, the integral is defined as
\begin{equation} \label{eq:def-stoch-integ}
\int \ssf(x) dP(x) := A(\ssf) := \psi_k (f_A) \in \cG_k.
\end{equation}
\end{example}

To give an intuition, we provide an ``explicit'' construction of the integral in the left hand side of \eqref{eq:def-stoch-integ} as in Proposition \ref{prop:functional-explicit-const}.
It can be shown that there exists a sequence of $n$-sets $(x_i^{(n)})_{i=1}^n \subset \cX$ such that
$$
\lim_{n \to \infty} \left\| f_A - \frac{1}{n} \sum_{i=1}^n k(\cdot,x^{(n)}_i) \right\|_{\cH_k} = 0
$$
See Proposition \ref{prop:kmean-est-realization} in Appendix \ref{sec:kmean-est-realization}.
Since this implies that
$$
\lim_{n \to \infty} \left| \int f(x)dP(x) - \frac{1}{n} \sum_{i=1}^n f(x_i^{(n)}) \right| = 0 \quad \text{for all }~ f \in \cH_k,
$$
the $n$-sets $(x_i^{(n)})_{i=1}^n$  provide a consistent approximation of $P$ as an empirical distribution $P_n := \frac{1}{n} \sum_{i=1}^n \delta_{x_i^{(n)}}$ with $\cH_k$ being the space of test functions  (see Section \ref{sec:integral_transforms}).
The integral for the Gaussian process $\ssf \sim \GP(0,k)$ is then given as the limit of the empirical averages with respect to the $n$-sets $(x_i^{(n)})_{i=1}^n$:
$$
\int \ssf(x) dP(x) := A(\ssf)  = \lim_{n \to \infty} \frac{1}{n} \sum_{i=1}^n \ssf(x_i^{(n)}),
$$
where the convergence is in the mean-square sense, as shown in Proposition~\ref{prop:functional-explicit-const}.

\paragraph{Differentiation.}
For the next example of linear functionals, we turn to {\em differentiation}.
We will start by introducing some notation.
Let $\cX \subset \mathbb{R}^d$ be an open set, and $k: \cX \times \cX \to \mathbb{R}$ be a kernel.
For $m \in \mathbb{N}$ and a multi-index $\alpha := (\alpha_1,\dots,\alpha_d) \in \mathbb{N}_{0}^d$ with $|\alpha| := \sum_{i=1}^d \alpha_i \leq m$, denote by $\partial^\alpha f := \frac{\partial^{\alpha_1} \cdots \partial^{\alpha_d}}{\partial^{|\alpha|}} f$  the $\alpha$-th order partial derivative of $f: \cX \to \mathbb{R}$ (assuming that this derivative exists).
For multi-indices $\alpha, \beta \in \mathbb{N}_0^d$ and $g: \mathbb{R}^{2d} \to \mathbb{R}$ , let $\partial ^{\alpha, \beta} g$ be the $(\alpha,\beta)$-th partial derivative of $g$, where $(\alpha,\beta) \in \mathbb{N}_0^{2d}$ is a multi-index with $2d$ components.
Then the kernel $k$ is defined to be {\em $m$-times continuously differentiable}, if $\partial^{\alpha,\alpha} k$ exists and is continuous for all multi-indices $\alpha \in \mathbb{N}_0^d$ with $| \alpha | \leq m$ \citep[Definition 4.5]{Steinwart2008}.

\begin{example}[Derivatives]  \label{example:derivative}
Let $\cX \subset \mathbb{R}^d$ be an open set.
For $m \in \mathbb{N}$, let $k$ be a $m$-times continuously differentiable kernel on $\cX$.
For a multi index $\alpha \in \mathbb{N}_{0}^d$ with $|\alpha|  \leq m$ and $x_0 \in \cX$, define a linear functional by
$$
A(f) := \partial^\alpha f(x_0) \quad \text{for all } ~ f \in \cH_k.
$$
Then this functional is bounded on $\cH_k$, and the Riesz representation is given as
$$
f_A = \partial^{0,\alpha} k(\cdot,x_0)
$$
\citep[Corollary 4.36]{Steinwart2008}, and it follows that $\| f_A \|_{\cH_k}^2 = \partial^{\alpha,\alpha}k(x_0,x_0)$.
For a Gaussian process $\ssf \sim \GP(0,k)$, the $\alpha$-th derivative at $x_0 \in \cX$ is then defined as
$$
\partial^\alpha \ssf(x_0) := A(\ssf) := \psi_k (f_A) = \psi_k (\partial^{0,\alpha} k(\cdot,x_0)) \in \cG_k.
$$
\end{example}

To provide an intuition for Example \ref{example:derivative}, consider the case of a first-order partial derivative, i.e., $\alpha \in \mathbb{N}_0^d$ with $\alpha_i = 1$ for some $i = 1,\dots,d$ and $\alpha_j = 0$ for $j \neq i$.
Then for a sequence $(h_n)_{n=1}^\infty \subset \mathbb{R} \backslash \{ 0 \}$ such that $\lim_{n \to \infty} h_n = 0$, the Riesz representation is given by definition as
$$
 \partial^{0,\alpha} k(\cdot,x_0) = \lim_{n \to \infty} \frac{1}{h_n} \left( k(\cdot,x_0 + h_n \alpha) - k(\cdot,x_0) \right),
$$
where the convergence is in the RKHS norm;  see the proof of \citet[Lemma 4.34]{Steinwart2008}.
Therefore, the partial derivative of the Gaussian process $\ssf \sim \GP(0,k)$ is defined as
$$
\partial^\alpha \ssf(x_0) := \psi_k(\partial^{0,\alpha} k(\cdot,x_0)) = \lim_{n \to \infty} \frac{1}{h_n} \left( \ssf(x_0 + h_n \alpha) - \ssf(x_0) \right),
$$
where the convergence is in the mean-square sense.

\section{Fundamental Equivalence between GP-based and RKHS-based Methods}
\label{sec:master-theorem}

A fundamental equivalence result between GP-based and RKHS-based methods is now described.

A GP's linear functional $A(F)$ is the Gaussian random variable in the GHS corresponding via the canonical isometry $\psi_k: \cH_k \to \cG_k$ to the functional's Riesz representation $f_A$ in the RKHS, i.e., $A(F) = \psi_k(f_A)$, given that the functional is bounded on the RKHS (Definition~\ref{def:GP-linear-functional}).
This implies that the {\em standard deviation} of the GP's linear functional is equal to the RKHS norm of the Riesz representation, since the canonical isometry is norm-preserving and the GHS norm  is the standard deviation:
$$
\sqrt{ \mathbb{E} [A(F)^2] }
=
\left\| \psi_k(f_A) \right\|_{\cG_k}
= \left\| f_A \right\|_{\cH_k}.
$$
On the other hand, the Riesz representation's RKHS norm is identical to the functional's maximum over the unit-norm RKHS functions:
$$
\left\| f_A \right\|_{\cH_k}
= \sup_{f \in \cH_k: \| f \|_{\cH_k} \leq 1} \left< f_A, f \right>_{\cH_k}
= \sup_{f \in \cH_k: \| f \|_{\cH_k} \leq 1} A(f),
$$
which is the {\em operator norm} of $A$.

Therefore, {\em the standard deviation of a GP's linear functional is identical to the functional's maximum over unit-norm RKHS functions}, as summarized in Theorem~\ref{theo:master-theorem} below.
 \citet[Corollary 7 in p.40]{Rit00} provides a special case of this result in which a linear functional is defined in terms of function values.

\begin{theorem} \label{theo:master-theorem}
Let $k$ be a positive definite kernel with RKHS $\cH_k$, and $A: \cH_k \to \mathbb{R}$ be a bounded linear functional with Riesz representation $f_A \in \cH_k$.
For a Gaussian process $\ssf \sim \GP(0,k)$, define its linear functional $A(\ssf)$ as in Definition~\ref{def:GP-linear-functional}.
Then we have
\begin{equation} \label{eq:equiv-std-max-functional}
\sqrt{\mathbb{E}_{\ssf \sim \GP(0,k)}[A(\ssf)^2] }
= \sup_{f \in \cH_k: \| f \|_{\cH_k} \leq 1} A(f)
= \| f_A \|_{\cH_k}.
\end{equation}

\end{theorem}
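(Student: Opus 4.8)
The plan is to establish the two equalities in \eqref{eq:equiv-std-max-functional} separately, reading off both from the properties of the canonical isometry $\psi_k$ already developed in Chapter~\ref{sec:definition} and from the Riesz representation theorem. Neither step requires any new machinery; the proof is essentially an assembly of facts stated earlier.

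First I would treat the right-hand equality, $\sup_{\|f\|_{\cH_k}\le 1} A(f) = \|f_A\|_{\cH_k}$. Using the Riesz representation \eqref{eq:Riesz-repr-functional}, $A(f) = \langle f, f_A\rangle_{\cH_k}$, so by Cauchy--Schwarz $A(f)\le \|f\|_{\cH_k}\|f_A\|_{\cH_k}\le \|f_A\|_{\cH_k}$ for every $f$ in the unit ball, giving the ``$\le$'' direction; equality is attained (when $f_A\neq 0$) by $f = f_A/\|f_A\|_{\cH_k}$, and the $f_A = 0$ case is trivial since then $A\equiv 0$. This is the standard computation of the operator norm of a functional on a Hilbert space.

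Next I would treat the left-hand equality, $\sqrt{\mathbb{E}[A(F)^2]} = \|f_A\|_{\cH_k}$. By Definition~\ref{def:GP-linear-functional}, $A(F) = \psi_k(f_A)\in\cG_k$. The GHS norm of any element is its standard deviation (which equals $\sqrt{\mathbb{E}[\,\cdot\,^2]}$ since GHS elements are zero-mean), so $\sqrt{\mathbb{E}[A(F)^2]} = \|\psi_k(f_A)\|_{\cG_k}$. Then I invoke the norm-preservation of the canonical isometry, i.e. the distance-preservation \eqref{eq:dist-preserve-can-isom} applied with $g = 0$ (equivalently the inner-product preservation established in Section~\ref{sec:can-iso-def-RKHS-GHS}), to conclude $\|\psi_k(f_A)\|_{\cG_k} = \|f_A\|_{\cH_k}$. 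Chaining the two displayed equalities yields \eqref{eq:equiv-std-max-functional}.

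I do not anticipate a genuine obstacle here; the only point deserving care is making sure the hypotheses of Definition~\ref{def:GP-linear-functional} are in force, namely that $A$ is \emph{bounded} on $\cH_k$ so that the Riesz representation $f_A$ exists and $A(F) = \psi_k(f_A)$ is a well-defined element of $\cG_k$ — this is exactly what the theorem assumes. A minor subtlety worth a sentence is that the supremum is over a sup of a \emph{linear} functional (not its absolute value), but since $f\mapsto -f$ preserves the unit ball, $\sup_{\|f\|\le 1} A(f) = \sup_{\|f\|\le 1}|A(f)|$, so this matches the operator norm and the argument above goes through unchanged.
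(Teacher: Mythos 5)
Your proof is correct and follows essentially the same route as the paper: the left equality via $A(F)=\psi_k(f_A)$, the fact that the GHS norm is the standard deviation, and the norm-preservation of the canonical isometry; the right equality via the standard Riesz/Cauchy--Schwarz computation of the operator norm. The extra care you take with the $f_A=0$ case and with the supremum of the linear (rather than absolute) value is a harmless refinement of what the paper leaves implicit.
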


Theorem~\ref{theo:master-theorem} is a fundamental result leading to the equivalences of GP-based and RKHS-based methods in many individual problems, by defining an {\em error metric}  as a linear functional.
Simple examples are given below.

\subsection{Function Values}
Let us first consider an evaluation functional in Example~\ref{ex:function-value-evaluation}, such that the functional $A$ gives the value of an input function $f$ evaluated at a fixed input point $x$, i.e., $A(f) := f(x)$.
The evaluational functional is bounded on any RKHS $\cH_k$ by definition, and its Riesz representation is the canonical feature $k(\cdot,x)$.
Thus, the following result immediately follows from Theorem~\ref{theo:master-theorem}.

\begin{corollary}
\label{coro:std-max-equiv-eval-func}
Let $k$ be a positive definite kernel on a non-empty set $\cX$ with RKHS $\cH_k$ and $\ssf \sim \GP(0,k)$ be a Gaussian process.
Then, for any $x \in \cX$, we have
$$
\sqrt{\mathbb{E} \left[ F(x)^2 \right] }
= \sup_{ \| f \|_{\cH_k} \leq 1} f(x)
= \sqrt{k(x,x)}.
$$
\end{corollary}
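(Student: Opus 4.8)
The plan is to obtain this as an immediate specialization of Theorem~\ref{theo:master-theorem}. Fix $x \in \cX$ and take the evaluation functional $A(f) := f(x)$, which is the linear functional of Example~\ref{ex:function-value-evaluation}. First I would check that $A$ meets the hypothesis of Theorem~\ref{theo:master-theorem}, namely that it is bounded on $\cH_k$: by the reproducing property~\eqref{eq:reproducing-prop} and the Cauchy--Schwarz inequality, $|A(f)| = |\langle f, k(\cdot,x)\rangle_{\cH_k}| \le \sqrt{k(x,x)}\,\|f\|_{\cH_k}$, which is exactly the bound recorded after Definition~\ref{def:RKHS}. Moreover, the reproducing property $f(x) = \langle f, k(\cdot,x)\rangle_{\cH_k}$ together with uniqueness of the Riesz representative identifies the Riesz representation of $A$ as $f_A = k(\cdot,x)$.

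Next I would compute the two quantities appearing in~\eqref{eq:equiv-std-max-functional}. From~\eqref{eq:kernel-inner-reproduce}, $\|f_A\|_{\cH_k}^2 = \langle k(\cdot,x), k(\cdot,x)\rangle_{\cH_k} = k(x,x)$, so $\|f_A\|_{\cH_k} = \sqrt{k(x,x)}$. And by Definition~\ref{def:GP-linear-functional} together with the defining relation~\eqref{eq:canonical-isometry} of the canonical isometry, $A(F) = \psi_k(f_A) = \psi_k(k(\cdot,x)) = F(x)$ (this is precisely Example~\ref{ex:function-value-evaluation}), so the left-hand side of~\eqref{eq:equiv-std-max-functional} is $\sqrt{\mathbb{E}[F(x)^2]}$ and the middle term is $\sup_{\|f\|_{\cH_k}\le 1} f(x)$. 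Theorem~\ref{theo:master-theorem} then yields the asserted chain of equalities directly.

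There is no genuine obstacle here: the corollary is a one-line consequence of Theorem~\ref{theo:master-theorem}, and the argument is entirely bookkeeping of its hypotheses. The only point worth a remark is that the middle supremum is in fact attained --- at $f = k(\cdot,x)/\sqrt{k(x,x)}$ when $k(x,x) > 0$, and trivially otherwise, since positive definiteness of $k$ forces $k(\cdot,x) \equiv 0$ whenever $k(x,x) = 0$, making all three quantities vanish. One could alternatively bypass the master theorem for the outer members of the chain, since Definition~\ref{def:GP} already gives $F(x) \sim \N(0, k(x,x))$ and hence $\mathbb{E}[F(x)^2] = k(x,x)$; but routing everything through Theorem~\ref{theo:master-theorem} keeps the three identities on a common footing.
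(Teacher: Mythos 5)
Your proposal is correct and follows exactly the paper's route: Corollary~\ref{coro:std-max-equiv-eval-func} is obtained by specializing Theorem~\ref{theo:master-theorem} to the evaluation functional $A(f)=f(x)$, whose Riesz representation is $k(\cdot,x)$ with $\|k(\cdot,x)\|_{\cH_k}=\sqrt{k(x,x)}$ and whose GP counterpart is $F(x)$ as in Example~\ref{ex:function-value-evaluation}. The extra remarks on attainment of the supremum and the direct computation $\mathbb{E}[F(x)^2]=k(x,x)$ are fine but not needed.
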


Corollary~\ref{coro:std-max-equiv-eval-func} shows that the standard deviation of a GP's function value is identical to the maximum value of the unit-norm RKHS functions; see Figure~\ref{fig:GP_intro} for a simulated example.
Intuitively, the maximum equals the standard deviation here because RKHS functions are smoother than GP samples.

\subsection{Interpolation}
\label{sec:interpo-sec4}

Let us apply Theorem~\ref{theo:master-theorem} to the problem of {\em interpolation}, which will be discussed in more detail in Chapter~\ref{sec:interpolation}, by defining an interpolation error metric as a linear functional.

For a given unknown function $f$, the task is to estimate its value $f(x)$ at any test point $x$ using the observations $f(x_1), \dots, f(x_n)$ at finite points $x_1, \dots, x_n$ as training data.
Consider an estimator of $f(x)$ given as a weighted sum of the observations
$$
\hat{f}(x) := \sum_{i=1}^n w_i f(x_i)
$$
for some weights $w_1, \dots, w_n$ computed appropriately.
Then the error of the estimator is defined as the difference between $f(x)$ and $\hat{f}(x)$ and is given as a linear functional:
$$
A(f) := f(x) - \sum_{i=1}^n w_i f(x_i).
$$
This is a bounded linear functional on any RKHS $\cH_k$ and its Riesz representation is $f_A = k(\cdot,x) - \sum_{i=1}^n w_i k(\cdot,x_i)$, as can be easily shown from the reproducing property.
Theorem~\ref{theo:master-theorem} then leads to the following result.

\begin{corollary} \label{coro:interpolation-error-equiv}
Let $k$ be a positive definite kernel on a non-empty set $\cX$ with RKHS $\cH_k$ and $\ssf \sim \GP(0,k)$ be a Gaussian process.
Let $x_1, \dots, x_n \in \cX$ and $w_1, \dots, w_n \in \mathbb{R}$ be arbitrary with any $n \in \mathbb{N}$.
Then for any point $x \in \cX$,
we have
\begin{align*}
\sqrt{\mathbb{E} \left( F(x) - \sum_{i=1}^n w_i F(x_i) \right)^2 } &= \sup_{ \| f \|_{\cH_k} \leq 1} \left( f(x) - \sum_{i=1}^n w_i f(x_i) \right) \\
& = \left\| k(\cdot,x) - \sum_{i=1}^n w_i k(\cdot,x_i) \right\|_{\cH_k}  .
\end{align*}

\end{corollary}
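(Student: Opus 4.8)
The plan is to invoke Theorem~\ref{theo:master-theorem} with the linear functional
$$
A(f) := f(x) - \sum_{i=1}^n w_i f(x_i), \qquad f \in \cH_k ,
$$
so that the work reduces to three routine verifications: that $A$ is bounded on $\cH_k$, that its Riesz representation is $f_A := k(\cdot,x) - \sum_{i=1}^n w_i k(\cdot,x_i)$, and that $A(F)$ in the sense of Definition~\ref{def:GP-linear-functional} coincides with the expression $F(x) - \sum_{i=1}^n w_i F(x_i)$ that appears on the left-hand side of the claimed identity.

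For the first two points I would use the reproducing property: for every $f \in \cH_k$,
$$
\bigl\langle k(\cdot,x) - \textstyle\sum_{i=1}^n w_i k(\cdot,x_i),\, f \bigr\rangle_{\cH_k}
= f(x) - \sum_{i=1}^n w_i f(x_i) = A(f) ,
$$
which shows at once that $A$ equals the inner product against the fixed element $f_A \in \cH_k$, hence is bounded, and that $f_A$ is its Riesz representation by the uniqueness part of the Riesz representation theorem. For the third point, since the canonical isometry $\psi_k$ is linear and satisfies $\psi_k(k(\cdot,z)) = F(z)$ for every $z \in \cX$ (Example~\ref{ex:function-value-evaluation}), Definition~\ref{def:GP-linear-functional} gives
$$
A(F) := \psi_k(f_A) = \psi_k(k(\cdot,x)) - \sum_{i=1}^n w_i \psi_k(k(\cdot,x_i)) = F(x) - \sum_{i=1}^n w_i F(x_i) \in \cG_k ,
$$
so $\sqrt{\mathbb{E}[A(F)^2]}$ is exactly the root-mean-square quantity in the statement.

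Applying Theorem~\ref{theo:master-theorem} to this $A$ then yields
$$
\sqrt{\mathbb{E}_{F \sim \GP(0,k)}[A(F)^2]} = \sup_{\|f\|_{\cH_k}\leq 1} A(f) = \|f_A\|_{\cH_k},
$$
and substituting the explicit forms of $A$ and $f_A$ produces the asserted chain of equalities. I do not expect a genuine obstacle: this is a direct corollary of Theorem~\ref{theo:master-theorem}. The only step that warrants any care is the identification of the abstract object $A(F)$ from Definition~\ref{def:GP-linear-functional} with the naive finite linear combination of GP values, but this is immediate because $A$ is a finite linear combination of evaluation functionals, for each of which the definition reduces to the trivial case spelled out in Example~\ref{ex:function-value-evaluation}, and $\psi_k$ is linear.
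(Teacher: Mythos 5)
Your proof is correct and follows the same route as the paper: identify $A(f) = f(x) - \sum_i w_i f(x_i)$ as a bounded linear functional with Riesz representation $k(\cdot,x) - \sum_i w_i k(\cdot,x_i)$ via the reproducing property, and apply Theorem~\ref{theo:master-theorem}. Your extra care in checking that $A(F)$ from Definition~\ref{def:GP-linear-functional} equals $F(x) - \sum_i w_i F(x_i)$ by linearity of $\psi_k$ is a point the paper leaves implicit, but it is the same argument.
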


Corollary~\ref{coro:interpolation-error-equiv} shows the equivalence between two error metrics, one probabilistic and the other non-probabilistic, for an interpolation estimator.
The probabilistic one is the {\em average-case error} where the true function is assumed to be a GP.
The non-probabilistic one is the {\em worst-case error} over unit-norm RKHS functions.

Optimizing the weight coefficients $w_1, \dots,w_n$ to minimize the average case error leads to {\em GP interpolation}, while minimizing the worst case error yields {\em RKHS interpolation}.
The equivalence of the two error metrics implies the equivalence of the two interpolation estimators.
Their relations will be discussed in more detail in Chapter~\ref{sec:interpolation}.
The analysis is extended to {\em regression} where observations contain noises in Chapter~\ref{sec:regression}.

\subsection{Distance between Probability Distributions}

Theorem~\ref{theo:master-theorem} yields a probabilistic interpretation of the {\em Maximum Mean Discrepancy (MMD)} \citep{GreBorRasSchetal12}, a distance metric between probability distributions based on RKHS, and is discussed in Chapter~\ref{sec:integral_transforms}.

Let $\cX$ be a measurable space (e.g., a compact subset of $\mathbb{R}^d$), and $P$ and $Q$ be arbitrary probability distributions on $\cX$.
We consider a linear functional that outputs the difference between the integrals of a function under the two distributions:
$$
A(f) := \int f(x) dP(x) - \int f(x) dQ(x).
$$
The MMD between $P$ and $Q$ quantifies their discrepancy as the maximum of this functional over unit-norm RKHS functions
$$
{\rm MMD}_k(P, Q) := \sup_{ \left\| f \right\|_{\cH_k} \leq 1} A(f).
$$

It can be shown that the functional $A(f)$ is bounded on an RKHS $\cH_k$ whose reproducing kernel $k$ is bounded, i.e., $\sup_{x \in \cX} k(x,x) < \infty$.
Its Riesz representation is given as the difference between the {\em kernel mean embeddings} of $P$ and $Q$:
$$
f_A = \int k(\cdot, x)dP(x) - \int k(\cdot, x)dQ(x).
$$
Theorem~\ref{theo:master-theorem} then yields the following probabilistic interpretation of MMD in terms of a Gaussian process.

\begin{corollary} \label{coro:mmd-gp}
Let $\cX$ be a measurable space, $k$ be a bounded kernel on $\cX$ with RKHS $\cH_k$, and $F \sim \GP(0,k)$ be a Gaussian process.
Let $P$ and $Q$ be arbitrary probability distributions on $\cX$. Then we have
\begin{align*}
& \sqrt{\mathbb{E} \left( \int F(x) dP(x) - \int F(x) dQ(x) \right)^2 } \\
= &     \sup_{\| f \|_{\cH_k} \leq 1}   \int f(x)dP(x) - \int f(x)dQ(x)  \\
= &  \left\| \int k(\cdot,x)dP(x) - \int k(\cdot,x)dQ(x) \right\|_{\cH_k}  .
\end{align*}
\end{corollary}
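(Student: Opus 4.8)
The statement is Corollary~\ref{coro:mmd-gp}, and it is essentially a direct instantiation of the master equivalence in Theorem~\ref{theo:master-theorem}. The plan is to exhibit the relevant linear functional, verify its boundedness on $\cH_k$, identify its Riesz representation, and then read off the three-way equality from Theorem~\ref{theo:master-theorem}. Concretely, fix $P$ and $Q$ and define $A: \cH_k \to \mathbb{R}$ by $A(f) := \int f(x)\,dP(x) - \int f(x)\,dQ(x)$.

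First I would establish that $A$ is a bounded linear functional on $\cH_k$. Linearity is immediate from linearity of the integral. For boundedness, since $k$ is bounded, say $\sup_{x \in \cX} k(x,x) =: \kappa < \infty$, the pointwise bound $|f(x)| \le \|f\|_{\cH_k}\sqrt{k(x,x)} \le \sqrt{\kappa}\,\|f\|_{\cH_k}$ (from the reproducing property and Cauchy--Schwarz) makes every $f \in \cH_k$ integrable against both $P$ and $Q$, and gives $|A(f)| \le 2\sqrt{\kappa}\,\|f\|_{\cH_k}$. Hence $A$ is bounded, and in particular the integral $\int F(x)\,dP(x)$ of the GP is well-defined in the sense of Definition~\ref{def:GP-linear-functional}, as in Example~\ref{example:integral}.

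Next I would identify the Riesz representation $f_A$. Using the reproducing property $f(x) = \langle f, k(\cdot,x)\rangle_{\cH_k}$ and exchanging the inner product with the integral (justified by the boundedness above, so that $x \mapsto k(\cdot,x)$ is Bochner-integrable against $P$ and $Q$), one gets $A(f) = \langle f, \int k(\cdot,x)\,dP(x) - \int k(\cdot,x)\,dQ(x)\rangle_{\cH_k}$. Thus $f_A = \int k(\cdot,x)\,dP(x) - \int k(\cdot,x)\,dQ(x)$, the difference of the kernel mean embeddings of $P$ and $Q$. By Definition~\ref{def:GP-linear-functional}, the GP functional is $A(F) = \psi_k(f_A)$, which by linearity of $\psi_k$ and Example~\ref{example:integral} equals $\int F(x)\,dP(x) - \int F(x)\,dQ(x)$.

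Finally, I would invoke Theorem~\ref{theo:master-theorem} applied to this $A$, which yields directly
\[
\sqrt{\mathbb{E}_{F\sim\GP(0,k)}\!\left[A(F)^2\right]} = \sup_{\|f\|_{\cH_k}\le 1} A(f) = \|f_A\|_{\cH_k},
\]
and substituting the expressions for $A(F)$ and $f_A$ gives the claimed three equalities. The only genuinely non-routine point is the exchange of the RKHS inner product with the integral over $\cX$ when identifying $f_A$; this is handled by the boundedness of $k$ via the Bochner-integrability criterion (integrability of $x \mapsto \|k(\cdot,x)\|_{\cH_k} = \sqrt{k(x,x)}$ against $P$ and $Q$), exactly the condition already used in Example~\ref{example:integral}. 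Everything else is a transcription of the general machinery of Section~\ref{sec:master-theorem}.
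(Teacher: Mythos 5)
Your proposal is correct and follows essentially the same route as the paper: define the functional $A(f) = \int f\,dP - \int f\,dQ$, verify boundedness from $\sup_x k(x,x) < \infty$, identify the Riesz representation as the difference of the kernel mean embeddings, and apply Theorem~\ref{theo:master-theorem}. The extra details you supply (the explicit bound $2\sqrt{\kappa}\,\|f\|_{\cH_k}$ and the Bochner-integrability justification for exchanging the inner product with the integral) are exactly the points the paper leaves implicit, so nothing is missing.
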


Corollary~\ref{coro:mmd-gp} shows that the MMD between two probability distributions is the root mean squared difference between the integrals of a GP under the two distributions.
Thus, the GP's properties characterize MMD's ability to distinguish probability distributions. Chapter~\ref{sec:integral_transforms} discusses probabilistic interpretations of RKHS methods for comparing probability distributions in more detail.

\subsection{It{\=o} Isometry}

Lastly, we show that Theorem~\ref{theo:master-theorem} applied to a stochastic integral (Example~\ref{ex:stoch-integrals}) leads to the {\em It{\=o} isometry}, a fundamental result in stochastic calculus.
The It{\=o} isometry states that the expected square of the stochastic integral of a function is equal to the integral of that function squared.

\begin{corollary}[It{\=o} Isometry]
Let $F \sim \GP(0,k)$ be a Wiener process on $\cX = [0,1]$ with $k(x,x') = \min(x,x')$.
For a square-integrable function $g \in L_2([0,1])$, define $ \int_0^1 g(x) dF(x)$ as the stochastic integral of $g$ with respect to $F$ in Example~\ref{ex:stoch-integrals}.
Then we have
\begin{align*}
 & \mathbb{E} \left[ \left( \int_0^1 g(x)  dF(x) \right)^2 \right]  = \int_0^1 g(x)^2 dx.
\end{align*}

\end{corollary}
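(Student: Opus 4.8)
The statement is an immediate consequence of Theorem~\ref{theo:master-theorem} applied to the stochastic integral functional from Example~\ref{ex:stoch-integrals}, so the proof is short. First I would recall that, by the construction in Example~\ref{ex:stoch-integrals}, the stochastic integral $\int_0^1 g(x)\,dF(x)$ is precisely $A(F) := \psi_k(f_A)$, where $A$ is the bounded linear functional on $\cH_k$ given by $A(f) = \langle h, f\rangle_{\cH_k} = \int_0^1 g(x) Df(x)\,dx$ and its Riesz representation is $f_A = h$, with $h(x) := \int_0^x g(t)\,dt$. Note $h \in \cH_k$ holds exactly because $g \in L_2([0,1])$: the function $h$ is absolutely continuous with $h(0)=0$ and weak derivative $Dh = g$, so $h$ satisfies the membership conditions in \eqref{eq:RKHS-Brownian}. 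This also fixes the bounded linear functional $A$ unambiguously.

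Next I would invoke Theorem~\ref{theo:master-theorem}, which gives $\sqrt{\mathbb{E}_{F\sim\GP(0,k)}[A(F)^2]} = \|f_A\|_{\cH_k}$, hence
\begin{equation*}
\mathbb{E}\left[ \left( \int_0^1 g(x)\,dF(x) \right)^2 \right] = \mathbb{E}[A(F)^2] = \|f_A\|_{\cH_k}^2 = \|h\|_{\cH_k}^2.
\end{equation*}
Finally I would evaluate the right-hand side using the explicit description of the Brownian-motion RKHS in \eqref{eq:RKHS-Brownian}, namely $\|h\|_{\cH_k}^2 = \int_0^1 (Dh(x))^2\,dx$. Since $Dh = g$, this equals $\int_0^1 g(x)^2\,dx$, which is the claimed identity.

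\textbf{Where the work is.} There is essentially no obstacle here: the heavy lifting was already done in establishing Theorem~\ref{theo:master-theorem} (via the norm-preserving canonical isometry) and in Example~\ref{ex:stoch-integrals} (identifying the Riesz representation of the stochastic integral as $h$ and showing the Definition~\ref{def:GP-linear-functional} construction recovers the usual limit-of-Riemann-sums definition). The only point requiring a line of care is justifying that $h$ genuinely lies in $\cH_k$ and that its weak derivative is $g$, i.e.\ that the map $g \mapsto h$ is the inverse of weak differentiation on the first-order Sobolev space with $h(0)=0$; this is standard and follows from $g \in L_2([0,1]) \subset L_1([0,1])$ together with the characterization \eqref{eq:RKHS-Brownian}. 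Everything else is a direct substitution.
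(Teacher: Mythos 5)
Your proposal is correct and follows exactly the paper's own argument: identify $h(x)=\int_0^x g(t)\,dt$ as the Riesz representation of the stochastic-integral functional, apply Theorem~\ref{theo:master-theorem} to get $\mathbb{E}[A(F)^2]=\|h\|_{\cH_k}^2$, and evaluate the norm via \eqref{eq:RKHS-Brownian}. The extra care you take in checking $h\in\cH_k$ with $Dh=g$ is a welcome (if standard) addition, but the route is the same.
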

\begin{proof}
Let $\cH_k$ be the RKHS of $k$, and define $h \in \cH_k$ by
$$
h(x) = \int_0^x g(t)dt ~~\text{for all}~~ x \in [0,1].
$$
Define a bounded linear functional
$$
A(f) := \left< f, h \right>_{\cH_k} \quad \text{for all }~ f \in \cH_k
$$
so that $h$ is the Riesz representation of $A$.
Then, the stochastic integral of $g$ with respect to $F$ in Example~\ref{ex:stoch-integrals} is defined as in Definition~\ref{def:GP-linear-functional}:
$$
\int_0^1 g(x) dF(x) := A(F).
$$
Theorem~\ref{theo:master-theorem} implies that
\begin{align*}
 & \mathbb{E} \left[ \left( \int_0^1 g(x)  dF(x) \right)^2 \right]   =  \mathbb{E} \left[ A(F)^2 \right]
 = \left\| h \right\|_{\cH_k}^2 = \int_0^1 g(x)^2 dx,
\end{align*}
where the last identity follows from the definition of $h$ and the property of $\cH_k$ in \eqref{eq:RKHS-Brownian}.
\end{proof}

In this chapter, we have seen that a linear functional of a GP is defined as the Gaussian random variable corresponding via the canonical isometry to the functional's Riesz representation in the RKHS.
The root mean square of a GP's linear functional is then shown to be identical to the functional's maximum over unit RKHS functions.
This generic equivalence implies the equivalences between GP-based and RKHS-based methods in many individual problems.

The next three chapters will discuss the GP-RKHS connections in more detail, focusing on fundamental statistical problems: interpolation (Chapter~\ref{sec:interpolation}), regression (Chapter~\ref{sec:regression}) and comparison of distributions (Chapter~\ref{sec:integral_transforms}).

\chapter{Interpolation}
\label{sec:interpolation}

This chapter discusses the GP and RKHS approaches to {\em interpolation}, which are illustrated in Figure~\ref{fig:RKHS_v_GP} and will be respectively referred to as {\em GP interpolation} and {\em RKHS interpolation}.
The problem is to estimate an unknown function based on its noise-free function values at finite points, which is fundamental in many fields such as statistics, machine learning and numerical analysis.
To be precise, let $f^*: \cX \to \mathbb{R}$ be a function defined on a nonempty set $\cX$.
We do not know $f^*$, but we can observe its function values $f^*(x_1), \dots, f^*(x_n)$ at $n \in \mathbb{N}$ locations $x_1, \dots, x_n \in \cX$: write these observations as
\begin{equation} \label{eq:interpolation-4645}
    y_i = f^*(x_i), \quad i = 1,\dots, n.
\end{equation}
The task of interpolation is to estimate $f^*$ based on the data
$$
(x_1, y_1), \dots, (x_n, y_n).
$$
We will discuss the regression problem, where observations are noisy, in Chapter~\ref{sec:regression}.

Noise-free observations are available, for example, when measurement equipment is very accurate, or when the function values are given by computer experiments, in which case
performing interpolation is called {\em emulation} in the literature \citep{Sacks89,KenHag01}.
In these situations, the function to be estimated is typically costly to evaluate, so estimation can only use a small number of function evaluations.
The GP and RKHS interpolation methods are routinely used in this situation, as estimation can benefit from prior knowledge about the function incorporated via the choice of a kernel.

\begin{figure}[t]
    \centering
    \includegraphics[width=0.7\textwidth]{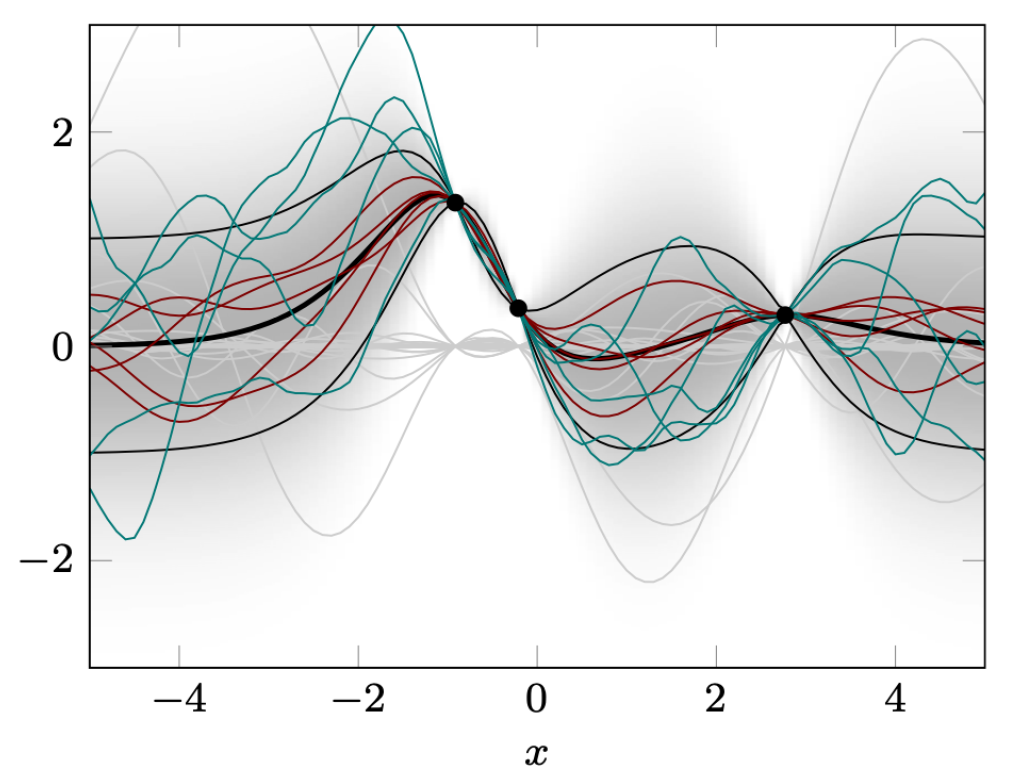}
    \caption{Illustration of GP/RKHS interpolation using the Mat\'ern kernel with smoothness $\alpha = 5/2$ (Example~\ref{ex:matern-kernel}).
    The three black dots are training observations.
    The thick black curve represents the GP posterior mean function, identical to the RKHS interpolant (Corollary~\ref{coro:equivalnce-interp}).
    The two thin black curves are the posterior mean plus-minus one GP posterior standard deviation, identical to the worst-case error of RKHS interpolation over unit-RKHS functions (Theorem~\ref{theo:equivalence-post-cov-worst-average}).
    Examples of such unit-norm RKHS functions are described in red and contained in the plus-minus one standard deviation.
    The green curves represent  GP posterior samples.
    }
    \label{fig:RKHS_v_GP}
\end{figure}

This chapter is organized as follows.
Section~\ref{sec:GP-interp-new} derives GP interpolation using a geometric argument based on the Gaussian Hilbert space.
Section~\ref{sec:kernel-interpolation-new} introduces RKHS interpolation as a solution to the minimum-norm interpolation and derives its analytic expression based on an RKHS geometric argument.
Section~\ref{sec:equiv-GP-kernel-interp} summarizes the equivalence between the two approaches and explains how it arises from the equivalence between the Gaussian Hilbert space and the RKHS.
Section~\ref{sec:RKHS-interp-GP-UQ} studies the posterior (co-)variance of GP interpolation, which is used for uncertainty quantification and is a distinguishing functionality of the GP approach, from an RKHS viewpoint.
We provide two RKHS interpretations, one geometric and the other as the worst-case error of RKHS interpolation.
Lastly, Section~\ref{sec:upper-bound-variance} discusses how the GP posterior variance contracts as the sample size increases.

\section{Gaussian Process Interpolation}
\label{sec:GP-interp-new}

In GP interpolation, the unknown function $f^*$ is modeled as a realization of a Gaussian process
\begin{equation} \label{eq:GP-prior-ell}
F \sim \GP(m,k),
\end{equation}
where the mean function $m: \cX \to \mathbb{R}$ and the covariance function $k: \cX \times \cX \to \mathbb{R}$ are chosen so that $f^*$ can be a sample realization of $F$.
Function-value observations $y_i = f^*(x_i)$ in \eqref{eq:interpolation-4645} are thus assumed to be realizations of the Gaussian random variables $F(x_1), \dots, F(x_n)$ at input points $x_1, \dots, x_n$.

GP interpolation is to derive the posterior distribution, i.e., the conditional distribution of $F$ given the conditioning $F(x_i) = y_i$ for $i=1,\dots,n$:
\begin{equation} \label{eq:GPI-conditioning-4707}
    F \mid \left( F(x_1) = y_1, \dots, F(x_n) = y_n \right).
\end{equation}
As is well known, this posterior distribution is another GP whose mean and covariance functions are analytically available, as summarized below.

\begin{theorem} \label{theo:GP-interpolation}
Let $\cX$ be a non-empty set, $m: \mathcal{X} \to \mathbb{R}$ be a  function, $k: \mathcal{X} \times \mathcal{X} \to \mathbb{R}$ be a kernel, and $F \sim \GP(m, k)$ be a Gaussian process.
Let $x_1, \dots, x_n \in \cX$ be such that the kernel matrix ${\bm K}_n = (k(x_i,x_j))_{i,j=1}^n \in \Re^{n \times n}$ is invertible.
Then, for almost every $y_1, \dots, y_n \in \mathbb{R}$ in the support of the joint random variables $F(x_1), \dots, F(x_n)$, we have
$$
F \ |  \left( F(x_1) = y_1, \dots, F(x_n) = y_n \right) \ \sim \ \GP(\bar{m},\bar{k}),
$$
where $\bar{m}:\cX \to \Re$ and $\bar{k}:\cX \times \cX \to \Re$ are the posterior mean and covariance functions, given by
\begin{align}
 \bar{m}(x) &= m(x) + {\bm k}_n(x)^\top {\bm K}_n^{-1}({\bm y}_n -  {\bm m}_n) \quad (x \in \cX), \label{eq:posteior_mean_interp} \\
  \bar{k}(x,x') &= k(x,x') - {\bm k}_n(x)^\top {\bm K}_n^{-1} {\bm k}_n(x') \quad (x,x' \in \cX), \label{eq:posterior-variance_interp}
\end{align}
with
\begin{align*}
     {\bm y} &:= (y_1, \dots, y_n)^\top \in \mathbb{R}^n, \\
   {\bm m}_n &:= (m(x_1), \dots, m(x_n))^\top \in \mathbb{R}^n, \\
   {\bm k}_n(x) &:= (k(x_1, x), \dots, k(x_n, x))^\top \in \mathbb{R}^n.
\end{align*}
\end{theorem}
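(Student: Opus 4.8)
The plan is to reduce the statement to the elementary fact about conditioning within a jointly Gaussian vector. First I would fix an arbitrary finite collection of ``test'' points $t_1,\dots,t_m \in \cX$. By Definition~\ref{def:GP}, the joint vector $(F(t_1),\dots,F(t_m),F(x_1),\dots,F(x_n))^\top$ is Gaussian, with mean read off from $m$ and covariance matrix blocked as $\begin{pmatrix} {\bm K}_{tt} & {\bm K}_{tx} \\ {\bm K}_{tx}^\top & {\bm K}_n \end{pmatrix}$, where $({\bm K}_{tt})_{ij} = k(t_i,t_j)$ and $({\bm K}_{tx})_{ij} = k(t_i,x_j)$. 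Since ${\bm K}_n$ is invertible by hypothesis, the standard Gaussian conditioning identity applies: the (regular) conditional law of $(F(t_1),\dots,F(t_m))$ given $(F(x_1),\dots,F(x_n)) = {\bm y}$ is Gaussian with mean vector $(m(t_i))_{i=1}^m + {\bm K}_{tx}{\bm K}_n^{-1}({\bm y} - {\bm m}_n)$ and covariance matrix ${\bm K}_{tt} - {\bm K}_{tx}{\bm K}_n^{-1}{\bm K}_{tx}^\top$. Reading off the $i$-th entry of this mean and the $(i,j)$-th entry of this covariance gives exactly $\bar m(t_i)$ of \eqref{eq:posteior_mean_interp} and $\bar k(t_i,t_j)$ of \eqref{eq:posterior-variance_interp}.

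Since the computation above goes through for every finite collection of test points and all these conditional distributions are marginals of the conditional law of the same process, the resulting conditional finite-dimensional distributions are automatically mutually consistent; by Definition~\ref{def:GP} they then identify the conditional process as $\GP(\bar m, \bar k)$, provided $\bar k$ is a bona fide positive definite kernel. The latter I would verify by observing that, for any points $z_1,\dots,z_\ell$, the matrix $(\bar k(z_a,z_b))_{a,b}$ is the Schur complement of ${\bm K}_n$ in the (positive semidefinite) Gram matrix of $k$ over $\{z_1,\dots,z_\ell,x_1,\dots,x_n\}$, hence positive semidefinite. Alternatively — and this connects to the projection viewpoint of Chapter~\ref{sec:fund-equiv-gp-RKHS} and Section~\ref{sec:RKHS-interp-GP-UQ} — one can write $\bar k(x,x') = \langle k(\cdot,x) - P k(\cdot,x),\, k(\cdot,x') - P k(\cdot,x')\rangle_{\cH_k}$, where $P$ is the orthogonal projection in $\cH_k$ onto $\spa\{k(\cdot,x_1),\dots,k(\cdot,x_n)\}$, which makes positive definiteness manifest.

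The one genuinely measure-theoretic point, and the main thing to be careful about, is the meaning of conditioning on the null event $\{F(x_1)=y_1,\dots,F(x_n)=y_n\}$: one must work with a regular version of the conditional distribution given the $\sigma$-algebra generated by $F(x_1),\dots,F(x_n)$, which for jointly Gaussian vectors is precisely the Gaussian law written above and is valid for ${\bm y}$ in the support of the law of $(F(x_1),\dots,F(x_n))$ — all of $\mathbb{R}^n$ here, since ${\bm K}_n$ is invertible — so the ``almost every ${\bm y}$'' qualifier in the statement is exactly this caveat and needs no separate argument. The potential obstacle, namely patching the test-point-by-test-point conditional Gaussians into an honest stochastic process, dissolves once one notes that they all arise as marginals of a single regular conditional distribution of the entire family $\{F(x)\}_{x\in\cX}$, so Kolmogorov consistency is inherited for free; everything else is bookkeeping around the Gaussian conditioning formula.
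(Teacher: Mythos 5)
Your proposal is correct, but it follows the classical route rather than the one taken in the text. You condition finite-dimensional marginals directly: block the joint Gaussian law of $(F(t_1),\dots,F(t_m),F(x_1),\dots,F(x_n))$, apply the standard Gaussian conditioning identity (Schur complement), check Kolmogorov consistency of the resulting family, and handle the null-event conditioning via regular conditional distributions. This is essentially the "standard Gaussian calculus" derivation the chapter attributes to the literature (e.g., Rasmussen and Williams), and it is complete and self-contained; your Schur-complement argument for positive semidefiniteness of $\bar{k}$ is a nice touch that the paper does not spell out. The paper instead works in the Gaussian Hilbert space of Section~\ref{sec:GHS}: it first proves Lemma~\ref{lemma:interp-post-mean-var-ghs-gen}, identifying the conditional expectation of $F(x)$ with the orthogonal projection $F_n(x)$ of $F(x)$ onto the subspace $\cU_n = \spa\{F(x_1),\dots,F(x_n)\}$ and the conditional covariance with the inner product of the projection residuals, then computes the projection weights ${\bm w}(x) = {\bm K}_n^{-1}{\bm k}_n(x)$ by least squares, and finally reduces the non-zero-mean case by centering. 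What the geometric route buys is not a shorter proof of this theorem but the machinery for what follows: the GHS projection corresponds, via the canonical isometry, to the RKHS projection underlying minimum-norm interpolation, which is exactly how Section~\ref{sec:equiv-GP-kernel-interp} explains the equivalence in Corollary~\ref{coro:equivalnce-interp} and the worst-case-error reading of the posterior variance. Your second, parenthetical expression $\bar{k}(x,x') = \langle k(\cdot,x) - Pk(\cdot,x),\, k(\cdot,x') - Pk(\cdot,x')\rangle_{\cH_k}$ is in fact the paper's identity \eqref{eq:post-cov-inner-prod-RKHS-5229}, so you have independently rediscovered the bridge between the two viewpoints; the only thing your write-up does not deliver is the probabilistic (GHS-side) interpretation of that projection, which is the paper's main pedagogical point here.
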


\begin{remark} \label{rem:disintegration}
    Since $F(x_1), \dots, F(x_n)$ are continuous random variables, the probability of the conditioning event $F(x_1) = y_1, \dots F(x_n) = y_n$ is zero.
    Thus, rigorously, the conditioning is only defined for almost every $y_1, \dots, y_n$ in the support of the joint random variables $F(x_1), \dots, F(x_n)$  \citep{ChaPol97,cockayne2019bayesian}.
\end{remark}

The posterior mean function $\bar{m}(x)$ is used as an estimate of the unknown function value $f^*(x)$, the uncertainty of which can quantified by using the posterior standard deviation $\sqrt{\bar{k}(x,x)}$, as will be discussed later.
See Figure~\ref{fig:RKHS_v_GP} for an illustration.

Theorem~\ref{theo:GP-interpolation} is well-known in the literature~\citep[e.g.,][]{RasmussenWilliams}.
Below, we will describe a geometric derivation of this result based on the Gaussian Hilbert space (GHS) in Section~\ref{sec:GHS}.
This derivation helps us understand the equivalence between GP and RKHS interpolants.

\subsection{Geometrically Deriving the Conditional Mean and Variance}

We first consider the case where the prior mean function is zero:
$$
F \sim \GP(0, k).
$$
The generic case~\eqref{eq:GP-prior-ell} will be considered later.
Throughout, the kernel matrix ${\bm K}_n = (k(x_i, x_j))_{i, j =1}^n \in \mathbb{R}^{n \times n}$ is assumed to be invertible.

Let $\cG_k$ be the Gaussian Hilbert space generated from $F \sim \GP(0,k)$, where each element is a zero-mean Gaussian random variable given as a weighted sum of the GP's function values at finitely or infinitely many locations (see Section~\ref{sec:GHS}).
Let $\cU_n$ be the $n$-dimensional subspace of $\cG_k$ spanned by the GP's function values $F(x_1), \dots, F(x_n)$ at the training locations, and $\cU_n^\perp$ be its orthogonal complement, consisting of elements of $\cG_k$ orthogonal to all elements in~$\cU_n$:
\begin{align}
&\cU_n
:= \left\{ c_1 F(x_1) + \cdots + c_n F(x_n):\   c_1, \dots, c_n \in \mathbb{R}  \right\}   \subset \cG_k \label{eq:subspace-GHS}  \\
&\cU_n^\perp := \left\{  V \in \cG_k:\  \left<  V, Y   \right>_{\cG_k} = \mathbb{E}[V Y] = 0\ \text{for all }\ \ssy \in \cU_n \right\}  \subset \cG_k.
\nonumber
\end{align}
Since the orthogonality in the GHS implies statistical independence, each element in the orthogonal complement $\cU_n^\perp$ is independent of any element of the subspace $\cU_n$ and thus independent of the GP's function values $F(x_1), \dots, F(x_n)$ at the training locations.

\begin{figure}[t]
    \centering
    \includegraphics[width=0.9\linewidth]{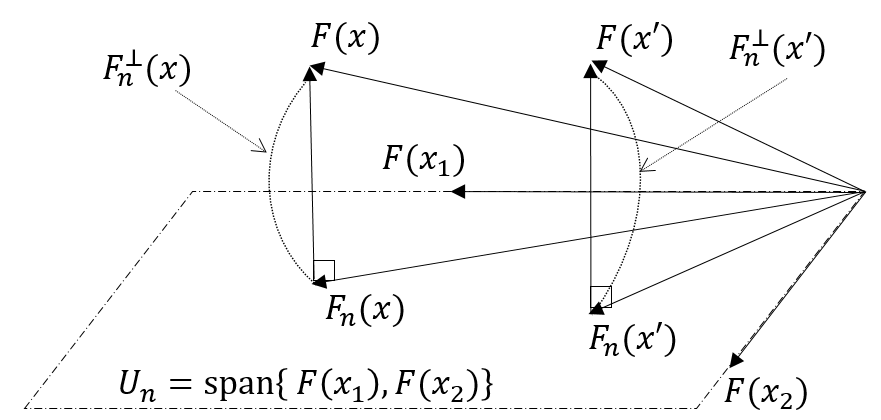}
    \caption{
    Geometric interpretation of the conditional expectation and covariance in GP interpolation in Lemma~\ref{lemma:interp-post-mean-var-ghs-gen}.
    The figure illustrates the GHS subspace $\cU_n$  spanned by two GP function values $F(x_1)$ and $F(x_2)$.
    For a GP function value $F(x)$ at a test location $x$, its conditional expectation given $F(x_1)$ and $F(x_2)$ is the orthogonal projection $F_n(x)$ onto the subspace, and the conditional variance is the squared norm of the projection residual $F_n^\perp(x)$.
    The conditional covariance between $F(x)$ and $F(x')$ at locations $x$ and $x'$ are the inner product between the two projection residuals $F_n^\perp(x)$ and $F_n^\perp(x')$.
 }
    \label{fig:GHS-projections-cov}
\end{figure}

Lemma~\ref{lemma:interp-post-mean-var-ghs-gen} below shows that
\begin{enumerate}
    \item the conditional expectation of the GP's function value $F(x)$ at any test location $x$ given the GP's function values $F(x_1), \dots, F(x_n)$ at the training locations is  the {\em orthogonal projection} of $F(x)$ onto the subspace $\cU_n$ spanned by  $F(x_1), \dots, F(x_n)$:
    \begin{equation} \label{eq:Gauss-projection}
     F_n(x) := \arg\min_{Y \in \cU_n} \| F(x) - Y \|_{\cG_k} \in \cU_n  \quad (x \in \cX).
    \end{equation}
    In other words, the conditional expectation is the {\em best approximation} of $F(x)$ as a linear combination of $F(x_1), \dots, F(x_n)$ in the mean-square sense;

    \item the conditional covariance of the GP's function values $F(x)$ and $F(x')$ at any two locations $x$ and $x'$ given $F(x_1), \dots, F(x_n)$ is the  inner product between the {\em residuals} of the orthogonal projections of $F(x)$ and $F(x')$ onto the subspace $\cU_n$ spanned by $F(x_1), \dots, F(x_n)$:
    \begin{equation} \label{eq:GHS-residuals}
F_n^\perp(x) := F(x) - F_n(x) \in \cU_n^\perp \quad (x \in \cX).
    \end{equation}
    In particular, the conditional {\em variance} of $F(x)$ given $F(x_1), \dots, F(x_n)$ is the {\em squared norm} of the difference between  $F(x)$ and its projection onto the subspace $\cU_n$.
\end{enumerate}
Figure \ref{fig:GHS-projections-cov} illustrates these geometric interpretations.

\begin{lemma}
\label{lemma:interp-post-mean-var-ghs-gen}
Let $k$ be a kernel on a non-empty set $\cX$ and $\cG_k$ be the GHS associated with a Gaussian process $F \sim \GP(0, k)$.
Let $x_1, \dots, x_n \in \cX$ be points such that the kernel matrix $ {\bm K}_n := (k(x_i, x_j))_{i,j = 1}^n \in \mathbb{R}^{n \times n} $ is invertible.
Then, for all $x, x \in \cX$, we have
\begin{align}
\mathbb{E}[F(x) \mid F(x_1), \dots, F(x_n)] &=  F_n(x), \label{eq:post-mean-interp-4760} \\
   {\rm Cov}[F(x), F(x') \mid F(x_1), \dots, F(x_n)] & =  \left< F_n^\perp(x), F_n^\perp(x') \right>_{\cG_k}, \label{eq:post-cov-interp-4761}
\end{align}
where $F_n(x)$ is the orthogonal projection~\eqref{eq:Gauss-projection} of $F(x)$ onto the subspace $\cU_n \subset \cG_k$ spanned by $F(x_1), \dots, F(x_n)$ and  $F_n^\perp(x
)$ is its residual~\eqref{eq:GHS-residuals}.
\end{lemma}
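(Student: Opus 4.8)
The plan is to reduce both identities to the Hilbert-space geometry of $\cG_k$ together with the elementary fact --- recalled just before the statement --- that orthogonality in a Gaussian Hilbert space is equivalent to stochastic independence. First I would record the setup: by Definition~\ref{def:GP}, for any $x,x'\in\cX$ the random vector $(F(x),F(x'),F(x_1),\dots,F(x_n))$ is jointly Gaussian. The subspace $\cU_n\subset\cG_k$ in \eqref{eq:subspace-GHS} is finite-dimensional, hence closed, so the orthogonal projection $F_n(x)$ of $F(x)$ onto $\cU_n$ in \eqref{eq:Gauss-projection} exists and is characterized by its residual $F_n^\perp(x)=F(x)-F_n(x)$ in \eqref{eq:GHS-residuals} being orthogonal to $\cU_n$; since the $\cG_k$-inner product is the covariance, this reads $\mathbb{E}[F_n^\perp(x)F(x_i)]=0$ for $i=1,\dots,n$, and likewise with $x'$ in place of $x$. (Invertibility of ${\bm K}_n$ is not actually needed for this lemma beyond guaranteeing $\dim\cU_n=n$; it will matter only when extracting the explicit formulas of Theorem~\ref{theo:GP-interpolation}.)

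Next I would establish the key independence. Because $F_n(x),F_n(x')\in\cU_n$ are finite linear combinations of $F(x_1),\dots,F(x_n)$, the vector $(F_n^\perp(x),F_n^\perp(x'),F(x_1),\dots,F(x_n))$ is a linear image of the jointly Gaussian vector above, hence itself jointly Gaussian; by the orthogonality just noted, its covariance matrix is block-diagonal between the pair $(F_n^\perp(x),F_n^\perp(x'))$ and the block $(F(x_1),\dots,F(x_n))$, so these two groups of variables are independent. In particular $(F_n^\perp(x),F_n^\perp(x'))$ is independent of the $\sigma$-algebra $\mathcal{A}_n:=\sigma(F(x_1),\dots,F(x_n))$.

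For the conditional mean \eqref{eq:post-mean-interp-4760}, I would decompose $F(x)=F_n(x)+F_n^\perp(x)$ and condition on $\mathcal{A}_n$: the term $F_n(x)$ is $\mathcal{A}_n$-measurable, so $\mathbb{E}[F_n(x)\mid\mathcal{A}_n]=F_n(x)$, while $F_n^\perp(x)$ is independent of $\mathcal{A}_n$ and has mean zero (every element of $\cG_k$ is a zero-mean Gaussian), so $\mathbb{E}[F_n^\perp(x)\mid\mathcal{A}_n]=\mathbb{E}[F_n^\perp(x)]=0$; adding the two gives $\mathbb{E}[F(x)\mid\mathcal{A}_n]=F_n(x)$. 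For the conditional covariance \eqref{eq:post-cov-interp-4761}, using its definition together with the mean formula just proved,
\begin{align*}
{\rm Cov}[F(x),F(x')\mid\mathcal{A}_n]
&=\mathbb{E}\!\left[(F(x)-F_n(x))(F(x')-F_n(x'))\mid\mathcal{A}_n\right]\\
&=\mathbb{E}\!\left[F_n^\perp(x)\,F_n^\perp(x')\mid\mathcal{A}_n\right],
\end{align*}
and since $(F_n^\perp(x),F_n^\perp(x'))$ is independent of $\mathcal{A}_n$, the conditional expectation collapses to the unconditional one, $\mathbb{E}[F_n^\perp(x)F_n^\perp(x')]=\left<F_n^\perp(x),F_n^\perp(x')\right>_{\cG_k}$, which is the claimed identity.

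The only step deserving care --- the ``hard part'' --- is the independence argument: one should read off independence from the block-diagonal structure of the covariance matrix of the \emph{whole} joint Gaussian vector, rather than trying to deduce independence of the residual pair from the conditioning block one coordinate at a time. Once this is in hand, the conditional-mean and conditional-covariance claims are routine; and evaluating these conditional quantities at $F(x_i)=y_i$ then recovers the disintegrated statement (cf.\ Remark~\ref{rem:disintegration}) and, via the Gram-matrix computation, the explicit expressions of Theorem~\ref{theo:GP-interpolation}.
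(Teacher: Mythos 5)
Your proof is correct and follows essentially the same route as the paper's: the residual is orthogonal to the span of $F(x_1),\dots,F(x_n)$, hence (by joint Gaussianity) independent of them, so the conditional mean of the residual vanishes and the conditional covariance collapses to the unconditional inner product of the residuals. The only differences are cosmetic — the paper delegates the mean identity \eqref{eq:post-mean-interp-4760} to the more general Lemma~\ref{lemma:post-mean-var-ghs-gen} (whose proof is the same decomposition you give), and your explicit block-diagonal-covariance justification of the independence step is a slightly more careful rendering of the paper's blanket assertion that GHS-orthogonality implies independence.
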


\begin{proof}
The identity~\eqref{eq:post-mean-interp-4760} follows from a more generic result of Lemma~\ref{lemma:post-mean-var-ghs-gen} in Chapter~\ref{sec:integral_transforms} with $Z := F(x)$.
Thus, we shall prove the second identity~\eqref{eq:post-cov-interp-4761}.
The residual  $F_n^\perp(x)$ is statistically independent of the conditioning variables $F(x_1), \dots, F(x_n)$, because $F_n^\perp(x)$ belongs to the orthogonal complement $\cU_n^\perp$ and is orthogonal to $F(x_1), \dots, F(x_n)$.
Hence, the residual $F_n^\perp(x)$ conditioned on $F(x_1), \dots, F(x_n)$ is identical to  $F_n^\perp(x)$ without conditioning:
\begin{equation*}
F_n^\perp(x) \mid F(x_1), \dots, F(x_n) =  F_n^\perp(x).
\end{equation*}
Using this and the first identity~\eqref{eq:post-mean-interp-4760} yields
\begin{align*}
& \mathrm{Cov}[F(x), F(x') \mid F(x_1), \dots, F(x_n)]   \\
& =  \mathbb{E}\left[ \left (F(x) - F_n(x))  (F(x') - F_n(x') \right) \mid F(x_1), \dots, F(x_n) \right]  \\
&=  \mathbb{E}[ F_n^\perp(x) F_n^\perp (x') \mid F(x_1), \dots, F(x_n) ]  \\
&= \mathbb{E}[ F_n^\perp(x) F_n^\perp (x')   ]\\
& = \left< F_n^\perp(x), F_n^\perp(x') \right>_{\cG_k}.
\end{align*}
\end{proof}

\subsection{Closed-form Expression for the Posterior Mean}
The first identity~\eqref{eq:post-mean-interp-4760} leads to a closed-form expression of the conditional expectation.
The orthogonal projection $F_n(x)$ can be written as a weighted sum of $F(x_1), \dots, F(x_n)$:
\begin{align} \label{eq:GP-post-mean-interp-proj}
F_n(x) = \sum_{i=1}^n w_i(x) F(x_i),
\end{align}
where the weights $ {\bm w}(x) := (w_1(x), \dots w_n(x))^\top \in \mathbb{R}^n$ are those that make $F_n(x)$ the best approximation of $F(x)$:
\begin{align*}
 {\bm w}(x)
 & := \argmin_{ {\bm w} \in \mathbb{R}^n } \left\|  \sum_{i=1}^n w_i F(x_i) - F(x) \right\|_{\cG_k}^2 \\
 & = \argmin_{ {\bm w} \in \mathbb{R}^n } \mathbb{E}\left[ \left( \sum_{i=1}^n w_i F(x_i) - F(x) \right)^2   \right]   \\
 & = \argmin_{{\bm w} \in \mathbb{R}^n} k(x,x) - 2 {\bm w}^\top {\bm k}_n(x) + {\bm w}^\top {\bm K}_n {\bm w} = {\bm K}_n^{-1}{\bm k}_n(x),
\end{align*}
where ${\bm k}_n(x) = (k(x_1, x), \dots, k(x_n, x))^\top \in \mathbb{R}^n$.
By \eqref{eq:post-mean-interp-4760}, the closed-form expression~\eqref{eq:GP-post-mean-interp-proj} is that of the conditional expectation.

So far, the conditioning is done for random variables $F(x_1), \dots, F(x_n)$, but what we want is the conditioning on the actual, fixed observed data $y_1, \dots, y_n$.
This is done by ``fixing'' the random variables to the observed data, $F(x_i) = y_i$, leading to the closed form expression~\eqref{eq:posteior_mean_interp} of the posterior mean (with zero prior mean):
\begin{align} \label{eq:GP-inter-post-mean4819}
\mathbb{E}[F(x) \mid F(x_1) = y_1, \dots, F(x_n) = y_n] = \sum_{i=1}^n w_i(x) y_i.
\end{align}

\subsection{Closed-form Expression for the Posterior Covariance}

The second identity~\eqref{eq:post-cov-interp-4761} shows that the conditional covariance of $F(x)$ and $F(x')$ given $F(x_1), \dots, F(x_n)$ is  the GHS inner product between the residuals $F_n^\perp(x) = F(x) - F_n(x)$ and $F_n^\perp(x') = F(x') - F_n(x')$ of the orthogonal projections $F_n(x)$ and $F_n(x')$.
Thus, using the closed-form expression~\eqref{eq:GP-post-mean-interp-proj} for the projections, the conditional covariance is expressed as
\begin{align}
&  {\rm Cov}[F(x), F(x') \mid F(x_1), \dots, F(x_n)]  \nonumber \\
& = \left< F_n^\perp(x),\  F_n^\perp(x')  \right>_{\cG_k}   =  \mathbb{E} \left[ F_n^\perp(x) F_n^\perp(x')   \right] \nonumber \\
& = \mathbb{E} \left[ \left( F(x) - \sum_{i=1}^n w_i(x) F(x_i) \right) \left( F(x') - \sum_{i=1}^n w_i(x') F(x_i) \right)  \right] \nonumber \\
& = k(x,x') - {\bm k}_n^\top(x) {\bm K}_n^{-1}{\bm k}_n(x').    \label{eq:post-cov-expres-4844}
\end{align}

The expression~\eqref{eq:post-cov-expres-4844} does {\em not} depend on the conditioning variables $F(x_1), \dots, F(x_n)$.
Geometrically, this is because the residuals $F_n^\perp(x)$ and $F_n^\perp(x')$ are orthogonal to $F(x_1), \dots, F(x_n)$ in the GHS and thus statistically independent of $F(x_1), \dots, F(x_n)$.
Therefore, the conditioning $F(x_1) = y_1, \dots, F(x_n) = y_n$ by the actual observations $y_1, \dots, y_n$ also does not affect the posterior covariance:\footnote{This independence from observations $y_1, \dots, y_n$ is, intuitively, the consequence of assuming that the GP model is well-specified.
If the model is misspecified, which is usually the case in practice, this independence from observations can be problematic as it can lead to miscalibrated uncertainty estimates. Therefore, in practice, one needs to adapt the kernel to the specific observations by, e.g., optimizing it by marginal likelihood maximization. See Remark~\ref{remark:post-indep-observations} for a further discussion.}
\begin{align}
& {\rm Cov}[F(x), F(x') \mid F(x_1) = y_1, \dots, F(x_n) = y_n] \nonumber \\
& =     k(x,x') - {\bm k}_n^\top(x) {\bm K}_n^{-1}{\bm k}_n(x'),  \label{eq:post-cov-GPI-4828}
\end{align}
yielding the closed form expression~\eqref{eq:posterior-variance_interp} of the posterior covariance.
In particular, the posterior variance is obtained by setting $x = x'$:
\begin{align}
& {\rm Var}[F(x) \mid F(x_1) = y_1, \dots, F(x_n) = y_n] \nonumber \\
& = k(x,x) - {\bm k}_n^\top(x) {\bm K}_n^{-1}{\bm k}_n(x).  \label{eq:post-var-4855}
\end{align}

\subsection{Closed-form Expressions with a Non-zero Prior Mean Function}

Now, consider the generic case of a GP prior $F \sim \GP(m,k)$ with a non-zero prior mean function $m: \cX \to \mathbb{R}$.
Define a centered GP
$$
F_0(x) := F(x) - m(x),
$$
so that $F_0 \sim \GP(0, k)$ is a zero-mean GP.
The conditioning $F(x_i) = y_i$ then becomes
$$
F_0(x_i) = y_i - m(x_i) \quad (i = 1,\dots,n).
$$
Therefore, the posterior mean of $F(x)$ given the conditioning by the actual observations $F(x_1) = y_1, \dots, F(x_n) = y_n$ and the GP prior is
\begin{align}
   & \mathbb{E}[F(x) \mid F(x_1) = y_1, \dots, F(x_n) = y_n]  \nonumber \\
   & = m(x) + \mathbb{E}[F_0(x)  \mid F_0(x_1) = y_1 - m(x_1), \dots, F_0(x_n) = y_n - m(x_n)] \nonumber \\
   & = m(x) + \sum_{i=1}^n w_i(x) (y_i - m(x_i)), \label{eq:post-mean-GPI-non-zero-prior-mean}
\end{align}
where the last expression follows from the closed-form expression~\eqref{eq:GP-inter-post-mean4819} for the posterior mean with zero prior mean.
This proves \eqref{eq:posteior_mean_interp}.

The posterior covariance for the GP prior with a non-zero prior mean function is the same as the zero-mean case~\eqref{eq:post-cov-GPI-4828} since the mean shift does not change the covariance.
Thus, we obtain \eqref{eq:posterior-variance_interp}.

\section{RKHS Interpolation}
\label{sec:kernel-interpolation-new}

This section describes the interpolation method using an RKHS, which we refer to as {\em RKHS interpolation}.
With this method, the unknown function is estimated as the smoothest RKHS function that interpolates the training data.

Let $k$ be a kernel on a non-empty set $\cX$ and $\cH_k$ be its RKHS.
For the unknown function $f^*$ in \eqref{eq:interpolation-4645} generating the data $(x_1, y_1), \dots, (x_n, y_n)$, where $y_i = f^*(x_i)$, its estimator is the solution to the following {\em minimum-norm interpolation} problem:
\begin{equation} \label{eq:kernel-interp-opt}
\hat{f} := \argmin_{f \in \cH_k} \| f  \|_{\cH_k}\quad {\rm subject\ to}\ \quad   f(x_i) = y_i, \quad i = 1,\dots,n.
\end{equation}
That is, the estimate $\hat{f}$ has the smallest RKHS norm among RKHS functions that interpolate the data $(x_1, y_1), \dots, (x_n, y_n)$.
Since the RKHS norm $\| f \|_{\cH_k}$ quantifies the smoothness of function $f$ (i.e., smaller $\| f \|_{\cH_k}$ implies that $f$ is smoother), we can interpret $\hat{f}$ as the smoothest interpolant in the RKHS $\cH_k$.
See the thick black curve in Figure~\ref{fig:RKHS_v_GP}.
Therefore, one should choose the kernel $k$ so that the RKHS $\cH_k$ would contain the unknown function $f^*$.

As is well known \citep[e.g.,][Theorem 58 in p.~112]{Berlinet2004}, the solution $\hat{f}$ can be computed by simple matrix operations, as summarized below.

\begin{theorem} \label{theo:kernel-interpolation}
Let $k: \mathcal{X} \times \mathcal{X} \to \mathbb{R}$ be a kernel.
Let $(x_i, y_i)_{i=1}^n \subset \mathcal{X} \times \mathbb{R}$ be such that the kernel matrix ${\bm K}_n = (k(x_i, x_j))_{i,j=1}^n \in \mathbb{R}^{n \times n}$ is invertible.
Then, the solution of the optimization problem \eqref{eq:kernel-interp-opt} is given by
\begin{equation} \label{eq:KRR_interpolation}
   \hat{f} (x)  = {\bm k}_n(x)^\top {\bm K}_n^{-1}  {\bm y}_n \quad (x \in \cX),
\end{equation}
where
\begin{align*}
    {\bm y}_n & :=(y_1,\dots,y_n)^\top \in \mathbb{R}^n \\
    {\bm k}_n(x) &:= (k(x_1, x), \dots, k(x_n, x))^\top \in \mathbb{R}^n.
\end{align*}
\end{theorem}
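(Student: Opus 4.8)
The plan is to verify the stated formula by exhibiting the natural candidate and then proving it is the unique minimizer via an orthogonal-projection (``representer-theorem'') argument in $\cH_k$. Concretely, set $\bm{\alpha} := {\bm K}_n^{-1}{\bm y}_n \in \Re^n$, which is well-defined because ${\bm K}_n$ is invertible, and consider $\hat f := \sum_{i=1}^n \alpha_i k(\cdot,x_i) \in \cH_k$. First I would check feasibility: by the reproducing property~\eqref{eq:reproducing-prop}, for each $j$ we have $\hat f(x_j) = \langle \hat f, k(\cdot,x_j)\rangle_{\cH_k} = \sum_{i=1}^n \alpha_i k(x_i,x_j) = ({\bm K}_n\bm{\alpha})_j = y_j$, so $\hat f$ satisfies the constraints in~\eqref{eq:kernel-interp-opt}. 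In particular the feasible set is nonempty, so the infimum in~\eqref{eq:kernel-interp-opt} is finite, and it suffices to show $\|\hat f\|_{\cH_k} \le \|f\|_{\cH_k}$ for every feasible $f$, with equality only at $f=\hat f$.

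The main step is the orthogonal decomposition of $\cH_k$ relative to the finite-dimensional subspace $\cH_k^{(n)} := \spa\{ k(\cdot,x_1),\dots,k(\cdot,x_n) \}$. Take an arbitrary feasible $f \in \cH_k$ and write $f = f_\parallel + f_\perp$ with $f_\parallel \in \cH_k^{(n)}$ and $f_\perp$ in the orthogonal complement of $\cH_k^{(n)}$. The reproducing property gives $f_\perp(x_i) = \langle f_\perp, k(\cdot,x_i)\rangle_{\cH_k} = 0$ for every $i$, hence $f_\parallel(x_i) = f(x_i) = y_i$. Writing $f_\parallel = \sum_{i=1}^n \beta_i k(\cdot,x_i)$, this forces ${\bm K}_n\bm{\beta} = {\bm y}_n$, so $\bm{\beta} = {\bm K}_n^{-1}{\bm y}_n = \bm{\alpha}$ by invertibility of ${\bm K}_n$, i.e.\ $f_\parallel = \hat f$. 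Then the Pythagorean identity yields $\|f\|_{\cH_k}^2 = \|\hat f\|_{\cH_k}^2 + \|f_\perp\|_{\cH_k}^2 \ge \|\hat f\|_{\cH_k}^2$, with equality precisely when $f_\perp = 0$, that is, when $f = \hat f$. This shows $\hat f$ is the unique solution of~\eqref{eq:kernel-interp-opt}.

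Finally, evaluating $\hat f$ at an arbitrary $x \in \cX$ and substituting $\bm{\alpha} = {\bm K}_n^{-1}{\bm y}_n$ gives $\hat f(x) = \sum_{i=1}^n \alpha_i k(x_i,x) = {\bm k}_n(x)^\top \bm{\alpha} = {\bm k}_n(x)^\top {\bm K}_n^{-1}{\bm y}_n$, which is exactly~\eqref{eq:KRR_interpolation}. The only genuinely nontrivial ingredient is the projection step establishing that a minimum-norm feasible function must lie in $\cH_k^{(n)}$; the rest is bookkeeping with the reproducing property and linear algebra. I would also note in passing an alternative derivation that avoids repeating the argument: the right-hand side of~\eqref{eq:KRR_interpolation} coincides with the zero-prior-mean posterior mean~\eqref{eq:posteior_mean_interp}, which was obtained in Section~\ref{sec:GP-interp-new} as an orthogonal projection in the Gaussian Hilbert space $\cG_k$; transporting that projection through the inverse canonical isometry $\psi_k^{-1}$ reproduces the present statement, since $\psi_k$ maps $\cH_k^{(n)}$ onto the subspace $\cU_n$ spanned by $F(x_1),\dots,F(x_n)$ and preserves norms and orthogonality.
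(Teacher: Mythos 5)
Your proof is correct and follows essentially the same route as the paper: orthogonal decomposition of a feasible $f$ relative to $\spa\{k(\cdot,x_1),\dots,k(\cdot,x_n)\}$, the reproducing property to show the orthogonal component vanishes at the training points, and the Pythagorean identity plus invertibility of ${\bm K}_n$ to pin down the minimizer. The only cosmetic difference is that you identify $f_\parallel=\hat f$ directly for every feasible $f$ (which also gives uniqueness in one stroke), whereas the paper first argues the solution must lie in the subspace and then solves the linear system.
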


Let us derive the analytic solution $\hat{f}$ in Theorem~\ref{theo:kernel-interpolation} by a geometric argument, which helps study the relation with GP interpolation.

\paragraph{The Solution Lies in the Subspace.}

Let $\cS_n \subset \cH_k$ be the  subspace of the RKHS spanned by the canonical features $k(\cdot, x_1), \dots, k(\cdot, x_n)$ of training locations $x_1, \dots, x_n$, and $\cS_n^\perp \subset \cH_k$ be its orthogonal complement:
\begin{align} \label{eq:subspace-RKHS}
& \cS_n := \left\{ c_1 k(\cdot, x_1) + \cdots + c_n k(\cdot, x_n) : \   c_1, \dots, c_n \in \mathbb{R}  \right\}, \\
& \cS_n^\perp := \left\{ h \in \cH_k :\ \left< h, f  \right>_{\cH_k} = 0 \text{ for all } f \in \cS_n \right\}. \nonumber
\end{align}
We first show that the solution $\hat{f}$ to \eqref{eq:kernel-interp-opt} lies in the subspace $\cS_n$ i.e., it is a weighted sum of the canonical features:
\begin{equation} \label{eq:kernel-interp-finite-basis}
    \hat{f} \in \cS_n\ \Longleftrightarrow \  \hat{f} = \sum_{j=1}^n \alpha_i k(\cdot, x_j) \ \ \text{for some}\ \ {\bm \alpha}_n := (\alpha_1, \dots, \alpha_n)^\top \in \mathbb{R}^n.
\end{equation}
By orthogonal decomposition, any $f \in \cH_k$ can be written as
\begin{align}
    & f = f_n + f_n^\perp, \quad \text{where}\quad f_n \in \cS_n, \quad f_n^\perp \in \cS_n^\perp, \nonumber \\
    & \| f \|_{\cH_k}^2  = \| f_n \|_{\cH_k}^2 + \| f_n^\perp \|_{\cH_k}^2.  \label{eq:orth-decomp-norm-5009}
\end{align}
Here, $f_n \in \cS_n$ is the orthogonal projection of $f$ onto the subspace $\cS_n$ and thus is given in the form~\eqref{eq:kernel-interp-finite-basis}, while $f_n^\perp = f - f_n \in \cS_n^\perp$ is its projection residual.
By the reproducing property and the definition of the orthogonal complement $\cS_n^\perp$, we have
$$
f_n^\perp(x_i) = \left< k(\cdot, x_i), f_n^\perp \right>_{\cH_k} = 0 \quad \text{for all} \quad i = 1,\dots, n.
$$
That is, the residual $f_n^\perp$ vanishes at training input points $x_1,\dots,x_n$.
This implies that
\begin{align*}
f(x_i) = f_n(x_i), \quad i = 1,\dots, n.
\end{align*}
Therefore, if $f$ satisfies the constraint $f(x_i) = y_i$, so does the projection~$f_n$, while the RKHS norm of $f_n$ is not greater than the RKHS norm of $f$ from \eqref{eq:orth-decomp-norm-5009}:
$$
f_n(x_i) = y_i, \quad i = 1,\dots, n, \quad \| f_n \|_{\cH_k} \leq \| f \|_{\cH_k}.
$$
Hence, the optimal solution $\hat{f}$ should lie in $\cS_n$ and thus is given in the form~\eqref{eq:kernel-interp-finite-basis}.

\paragraph{Closed-form Solution.}
We now derive the analytic expression of the solution $\hat{f}$ in Theorem~\ref{theo:kernel-interpolation}.
    The constraint $\hat{f}(x_i) = y_i$ for $i=1,\dots,n$ leads to a linear equation for the coefficients ${\bm \alpha}_n = (\alpha_1, \dots, \alpha_n)^\top $ in \eqref{eq:kernel-interp-finite-basis}, which is
$$
{\bm K}_n {\bm \alpha}_n = {\bm y}_n,
$$
where ${\bm K}_n = (k(x_i, x_j))_{i,j=1}^n \in \mathbb{R}^{n \times n}$ is the kernel matrix and ${\bm y}_n = (y_1, \dots, y_n)^\top \in \mathbb{R}^n$.
Assuming that ${\bm K}_n$ is invertible, the solution is uniquely determined as ${\bm \alpha}_n = {\bm K}_n^{-1} {\bm y}_n$, which leads to the unique solution to the minimum-norm interpolation problem~\eqref{eq:kernel-interp-opt}.
Thus Theorem~\ref{theo:kernel-interpolation} is proven.

\section{Equivalence between the GP and RKHS Interpolants}
\label{sec:equiv-GP-kernel-interp}

The following equivalence between the posterior mean of GP interpolation and the solution to RKHS interpolation follows from their analytic expressions \eqref{eq:posteior_mean_interp} and \eqref{eq:KRR_interpolation}, as is well known~\citep{kimeldorf1970correspondence}.
See Figure~\ref{fig:equiv-interpolation} for an illustration.

\begin{corollary} \label{coro:equivalnce-interp}
Let $k: \mathcal{X} \times \mathcal{X} \to \mathbb{R} $ be a kernel and $(x_i, y_i)_{i=1}^n \subset \cX \times \Re$ be such that the kernel matrix ${\bm K}_n = (k(x_i, x_j))_{i,j=1}^n \in \mathbb{R}^{n \times n}$ is invertible.
Then we have $\bar{m} = \hat{f}$, where
\begin{itemize}
\item $\bar{m}$ is the posterior mean function \eqref{eq:posteior_mean_interp} of GP interpolation based on  the zero-mean GP prior $F \sim \GP(0,k)$;
\item $\hat{f}$ is the solution \eqref{eq:KRR_interpolation} to RKHS interpolation \eqref{eq:kernel-interp-opt} based on the RKHS $\cH_k$.
\end{itemize}
\end{corollary}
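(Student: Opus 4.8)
The plan is simply to compare the two closed-form expressions that have already been established. First I would specialize Theorem~\ref{theo:GP-interpolation} to the zero-mean prior $F \sim \GP(0,k)$: then $m \equiv 0$, so ${\bm m}_n = {\bm 0}$ and the posterior mean formula \eqref{eq:posteior_mean_interp} collapses to $\bar{m}(x) = {\bm k}_n(x)^\top {\bm K}_n^{-1} {\bm y}_n$ for every $x \in \cX$. On the other side, Theorem~\ref{theo:kernel-interpolation} gives the RKHS interpolant as $\hat{f}(x) = {\bm k}_n(x)^\top {\bm K}_n^{-1} {\bm y}_n$, using the identical notation ${\bm k}_n(x)$, ${\bm K}_n$, ${\bm y}_n$ and the identical invertibility hypothesis on ${\bm K}_n$. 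Since the two right-hand sides are literally the same function of $x$, I conclude $\bar{m} = \hat{f}$, which is the claim.

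If one wants the conceptual reason behind this algebraic coincidence, I would add a paragraph tracing it back to the two projection pictures. In the GHS, $\bar m(x)$ is the evaluation at the data of the orthogonal projection $F_n(x) = \sum_i w_i(x) F(x_i)$ of $F(x)$ onto $\cU_n = \spa\{F(x_1),\dots,F(x_n)\}$ (Lemma~\ref{lemma:interp-post-mean-var-ghs-gen}), whose weights satisfy ${\bm w}(x) = {\bm K}_n^{-1}{\bm k}_n(x)$. In the RKHS, $\hat f$ lies in $\cS_n = \spa\{k(\cdot,x_1),\dots,k(\cdot,x_n)\}$ and is the unique element of that subspace meeting the interpolation constraints, i.e.\ $\hat f = \sum_j \alpha_j k(\cdot,x_j)$ with ${\bm \alpha}_n = {\bm K}_n^{-1}{\bm y}_n$, so that $\hat f(x) = {\bm k}_n(x)^\top {\bm \alpha}_n$ by the reproducing property. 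Both linear systems are governed by the same Gram matrix ${\bm K}_n$ and the same cross-vector ${\bm k}_n(x)$, and this is no accident: the canonical isometry $\psi_k$ of \eqref{eq:canonical-isometry} sends $k(\cdot,x_i) \mapsto F(x_i)$ isometrically, hence carries $\cS_n$ onto $\cU_n$ and preserves exactly the inner products that define the two systems. So the weights, and therefore the estimators, must agree.

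I do not expect any real obstacle here: the mathematical content sits entirely in Theorems~\ref{theo:GP-interpolation} and \ref{theo:kernel-interpolation} (and in the geometric derivations leading to them). The only point needing a moment's care is the reduction to the zero-mean case, so that ${\bm m}_n$ drops out of \eqref{eq:posteior_mean_interp}; once that is noted, the identification is immediate.
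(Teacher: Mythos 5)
Your proposal is correct and follows exactly the paper's route: the corollary is obtained by setting $m \equiv 0$ in \eqref{eq:posteior_mean_interp} and observing that the resulting expression coincides with \eqref{eq:KRR_interpolation}. Your supplementary paragraph on the canonical isometry mirrors the geometric discussion the paper gives immediately after the corollary, so nothing is missing or superfluous.
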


\begin{figure}[t]
    \centering
    \includegraphics[width=1\linewidth]{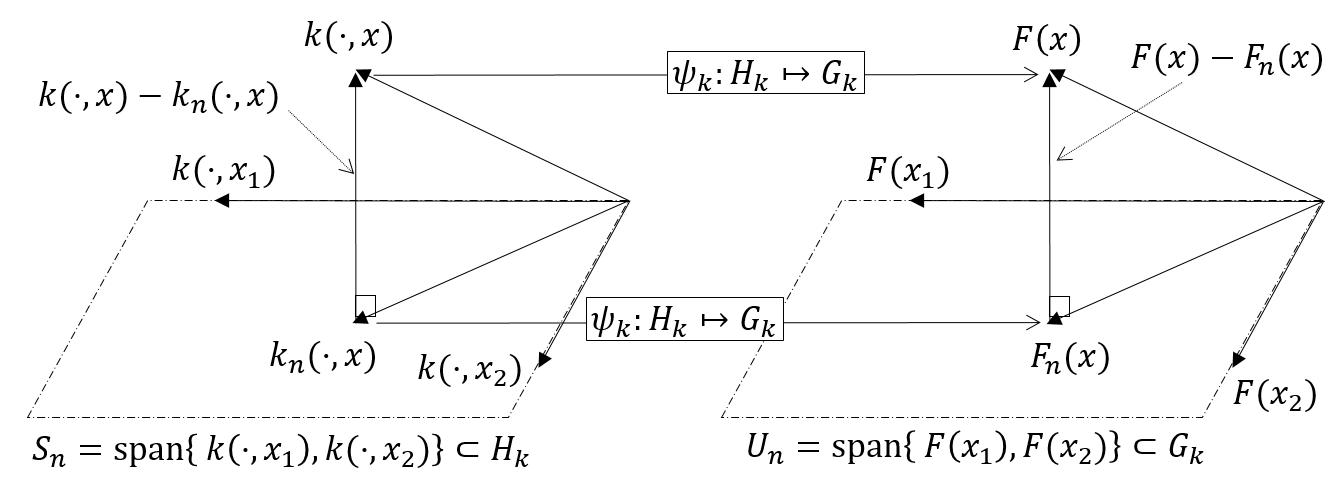}
    \caption{
    Geometric interpretation of the equivalence between the RKHS (left) and GP (right) interpolants in Sections~\ref{sec:equiv-GP-kernel-interp} and \ref{sec:RKHS-interp-GP-UQ}.
    Via the canonical isometry $\psi_k: \cH_k \to \cG_k$, the Riesz representation $k(\cdot,x)$ of the evaluation functional at a test location~$x$ corresponds to the GP function value $F(x)$; the RKHS subspace $\cS_n$ spanned by $k(\cdot,x_1), \dots, k(\cdot,x_n)$ to the GHS subspace $\cU_n$ spanned by $F(x_1), \dots, F(x_n)$; the best approximation $k_n(\cdot,x)$, which is the orthogonal projection of $k(\cdot,x)$ onto $\cS_n$, to the best approximation $F_n(x)$ of $F(x)$ on $\cU_n$; the residual $k(\cdot, x) - k_n(\cdot, x)$ of the RKHS projection to the residual $F(x) - F_n(x)$ of the GHS projection, the length of which is the posterior standard deviation of GP interpolation.
    }
    \label{fig:equiv-interpolation}
\end{figure}

Let us geometrically understand the equivalence in Corollary~\ref{fig:equiv-interpolation} in the following.

\subsection{RKHS Interpolation as Approximate Evaluation Functional}
Interpolation is to estimate the function value $f^*(x)$ of the unknown true function $f^*$ at any test location $x$.
Assuming that $f^*$ belongs to the RKHS of kernel $k$, the function value is written as the RKHS inner product between $f^*$ and the Riesz representation $k(\cdot,x)$ of the evaluation functional $f \mapsto f(x)$ at $x$:
\begin{equation} \label{eq:func-val-inter-4371}
f^*(x) = \left< f^*, k(\cdot,x) \right>_{\cH_k}.
\end{equation}
This suggests two ways to estimate the unknown function value $f^*(x)$.
One is to estimate $f^*$ using the training data by minimum-norm interpolation, as explained above, and take the inner product of the resulting estimate $\hat{f}$ and the Riesz representation $k(\cdot,x)$ of the evaluation functional.
The other, less intuitive way is to {\em approximate the Riesz representation $k(\cdot,x)$ using the training data} and take the inner product with $f^*$.
We will show that these two ways are equivalent, and the second way gives a geometric interpretation for the equivalence to GP interpolation in Corollary~\ref{fig:equiv-interpolation}.

Let us describe the second way.
Denote by $k_n(\cdot,x)$ the best approximation of the evaluation functional's Riesz representation $k(\cdot,x)$ based on the Riesz representations $k(\cdot,x_1), \dots, k(\cdot, x_n)$ at the training locations:
\begin{equation} \label{eq:RKHS-projection}
    \quad k_n(\cdot, x) := \argmin_{f \in \cS_n} \left\| k(\cdot,x) - f \right\|_{\cH_k} \in \cS_n,
\end{equation}
which is the orthogonal projection of $k(\cdot,x)$ onto the subspace $\cS_n$ in \eqref{eq:subspace-RKHS} spanned by $k(\cdot,x_1), \dots, k(\cdot,x_n)$.
Let $k_n^\perp(\cdot, x)$ be the residual of this projection:
\begin{align*}
 k_n^\perp(\cdot, x) := k(\cdot, x) - k_n(\cdot, x) \in \cS_n^\perp,
\end{align*}
which belongs to the orthogonal complement $\cS_n^\perp$ of the subspace $\cS_n$.
See Figure~\ref{fig:equiv-interpolation} for an illustration.

Given that $k(\cdot,x_1), \dots, k(\cdot,x_n)$ are linearly independent (which is equivalent to that the kernel matrix ${\bm K}_n$ is invertible), the best approximation $k_n(\cdot,x)$ can be uniquely written as a weighted sum of $k(\cdot,x_1), \dots, k(\cdot,x_n)$:
$$
k_n(\cdot,x) = \sum_{i=1}^n w_i(x) k(\cdot,x_i),
$$
where the weights $w_1(x),\dots,w_n(x)$ make $k_n(\cdot,x)$ the best approximation of $k(\cdot,x)$ in the RKHS:
\begin{align}
(w_1(x),\dots,w_n(x))^\top & := \argmin_{w_1, \dots, w_n \in \mathbb{R}} \left\| k(\cdot,x) - \sum_{i=1}^n w_i k(\cdot,x_i) \right\|_{\cH_k}^2  \nonumber \\
 =  {\bm K}_n^{-1} {\bm k}_n(x) \in \mathbb{R}^n, \label{eq:weights-RKHS-int-4405}
\end{align}
with  ${\bm k}_n(x) := (k(x_1, x), \dots, k(x_n, x))^\top \in \mathbb{R}^n$.
Notice the exact parallel with the orthogonal projection~\eqref{eq:GP-post-mean-interp-proj} in GP interpolation, which will be discussed shortly.

Assuming that the unknown true function $f^*$  belongs to the RKHS, its RKHS inner product with the best approximation $k_n(\cdot,x)$ of the evaluation functional's Riesz representation $k(\cdot,x)$ is given by
\begin{align}
    \left<f^*, k_n(\cdot,x)  \right>_{\cH_k}
    & =  \left<f^*, \sum_{i=1}^n w_i(x) k(\cdot,x_i)  \right>_{\cH_k} \nonumber \\
    & = \sum_{i=1}^n w_i(x) f^*(x_i) = \hat{f}(x). \label{eq:RKHS-interp-approx-eval}
\end{align}
This is identical to the analytic solution~\eqref{eq:KRR_interpolation} of RKHS interpolation, since $y_i = f^*(x_i)$.
Thus, RKHS interpolation approximates the evaluation functional to approximately evaluate the function value $f^*(x)$.

\subsection{Geometric Interpretation of the Equivalence}

The geometric formulations of GP and RKHS interpolations enable understanding their equivalence in Corollary~\ref{coro:equivalnce-interp}.

Let us first summarize the correspondences between the relevant RKHS and GHS quantities; see also Figure~\ref{fig:equiv-interpolation}.
As before, let $\psi_k :\cH_k \to \cG_k$ be the canonical isometry such that the evaluation functional's Riesz representation $k(\cdot,x)$ at any location $x$ is mapped to the GP function value $F(x)$ at that location:
$$
\psi_k  \left( k(\cdot, x) \right) = F(x) \quad  \left(x \in \cX\right).
$$
Then the RKHS subspaces $\cS_n$ and $\cS_n^\perp$ in \eqref{eq:subspace-RKHS} correspond to the GHS subspaces $\cU_n$ and $\cU_n^\perp$ in \eqref{eq:subspace-GHS}, respectively, via the canonical isometry:
\begin{align*}
    \cU_n &= \psi_k (\cS_n) := \left\{ \psi_k( f ) :\ f \in \cS_n \right\}, \\
    \cU_n^\perp & = \psi_k (\cS_n^\perp) := \left\{ \psi_k( h ) :\ h \in \cS_n^\perp \right\}.
\end{align*}
Likewise, the best approximation
$k_n(\cdot,x)$ of $k(\cdot,x)$ in \eqref{eq:RKHS-projection} and its residual $k_n^\perp(\cdot,x)$ correspond to the best projection $F_n(x)$ of $F(x)$ in \eqref{eq:Gauss-projection} and its residual $F_n^\perp(x)$, respectively:
\begin{align} \label{eq:projection-equivalences}
F_n(x) &= \psi_k\left( k_n(\cdot, x) \right),
 \\
F_n^\perp(x) &= \psi_k\left( k_n^\perp(\cdot, x) \right). \nonumber
\end{align}

The identity~\eqref{eq:projection-equivalences} is the source of the equivalence between the GP and RKHS interpolants in Corollary~\ref{coro:equivalnce-interp}.
Since the RKHS inner product between $k_n(\cdot,x)$ and the true unknown function $f^*$ is the RKHS interpolant $\hat{f}(x)$ as shown in \eqref{eq:RKHS-interp-approx-eval}, and the canonical isometry preserves this inner product, the identity~\eqref{eq:projection-equivalences} yields
\begin{align}
\hat{f}(x) & = \left< f^*, k_n(\cdot, x) \right>_{\cH_k} \nonumber \\
& = \left< \psi_k(f^*), \psi_k\left( k_n(\cdot, x) \right) \right>_{\cG_k} \nonumber \\
& = \left< \psi_k(f^*), F_n(x) \right>_{\cG_k}  = \mathbb{E}\left[ \psi_k(f^*) F_n(x)  \right].  \label{eq:last-express-der-4493}
\end{align}

The GHS element $\psi_k(f^*)$ is a Gaussian random variable whose covariance with the GP function value $F(x')$ at any location $x'$ recovers the function value $f^*(x')$ at that location:
$$
\mathbb{E}[ \psi_k(f^*) F(x') ] = f^*(x') \quad \text{for all }\ x' \in \cX,
$$
which is the GHS version of the reproducing property.
Recall that $F_n(x)$ is the conditional expectation of $F(x)$ given $F(x_1), \dots, F(x_n)$, as shown in \eqref{eq:post-mean-interp-4760}, and is given as a weighted sum of $F(x_1), \dots, F(x_n)$:
\begin{align*}
    F_n(x) = \mathbb{E}[F(x) \mid F(x_1), \dots, F(x_n)] = \sum_{i=1}^n w_i(x) F(x_i).
\end{align*}
Thus, the covariance of $\psi_k(f^*)$ and $F_n(x)$ in \eqref{eq:last-express-der-4493} ``fixes'' the conditioning variables  $F(x_1), \dots, F(x_n)$ to the actual observed values $f^*(x_1), \dots, f^*(x_n)$, leading to the GP posterior mean $\bar{m}(x)$:
\begin{align*}
    \mathbb{E}\left[ \psi_k(f^*) F_n(x)  \right]
    & = \mathbb{E}[F(x) \mid F(x_i) = f^*(x_i),\ i=1,\dots,n] \\
    &= \bar{m}(x).
\end{align*}

We have shown that the equivalence between the GP and RKHS interpolants in Corollary~\ref{coro:equivalnce-interp} follows from the  equivalence~\eqref{eq:projection-equivalences} between the best approximation in the GHS of a GP function value and the best approximation in the RKHS of the corresponding evaluation functional's Riesz representation.
The next section studies these best approximations' {\em residuals}, which provide geometric interpretations of GP posterior (co)variances and their RKHS counterparts.

\section{RKHS Interpretations of GP Uncertainty Estimates}

\label{sec:RKHS-interp-GP-UQ}

GP interpolation quantifies the uncertainty of the value $f^*(x)$ of the unknown $f^*$ at any test location $x$ by using its {\em posterior variance}~(see \eqref{eq:posterior-variance_interp} and \eqref{eq:post-var-4855}):
\begin{align} \label{eq:post-var-5209}
\bar{k}(x,x) & =  k(x, x) - {\bm k}_n(x)^\top {\bm K}_n^{-1} {\bm k}_n(x) \\
 &  = {\rm Var}[F(x) \mid F(x_i) = f^*(x_i),\ i = 1, \dots, n] . \nonumber
\end{align}
Its square root, $\sqrt{\bar{k}(x,x)}$, is the posterior standard deviation and may quantify the error of the posterior mean $\bar{m}(x)$ as an estimator of $f^*(x)$.
Thus,  a credible interval for $f^*(x)$ may be constructed as
\begin{equation} \label{eq:GP-interval-5214}
    \left[  \bar{m}(x) - \alpha \sqrt{\bar{k}(x,x)},\quad   \bar{m}(x) + \alpha \sqrt{\bar{k}(x,x)}  \right],
\end{equation}
where $\alpha > 0$ is a constant (e.g., $\alpha = 1.96$ for $95 \%$ credible intervals.).
The two thin black curves in Figure~\ref{fig:RKHS_v_GP} describe the credible band for $\alpha = 1$.
This way of uncertainty quantification is the basis of GP interpolation's applications, such as Bayesian optimisation~\citep{garnett2023bayesian}, probabilistic numerics~\citep{hennig2022probabilistic}, inverse problems \citep{Stu10} and so on.

This section describes two RKHS interpretations of the GP posterior variance.
One is geometric, showing it to be identical to the squared distance between the evaluation functional's Riesz representation and its best approximation (Section~\ref{sec:geomet-int-gp-var}).
The second interpretation shows it to be the square of the {\em worst case error} of RKHS interpolation when the unknown true function has the unit RKHS norm (Section~\ref{sec:worst-int-gp-var}).
These offer non-probabilistic interpretations of GP uncertainty estimates.

\begin{remark} \label{rem:post-ver-GP-interp}
In this vanilla form, the {\em width} of the credible interval~\eqref{eq:GP-interval-5214} is independent of the unknown true function $f^*$, since the posterior variance~\eqref{eq:post-var-5209} is independent of the values $y_i = f^*(x_i)$ of $f^*$.
Therefore, unless $f^*$ is a sample of the GP prior with the specified kernel $k$, the credible interval would not be well-calibrated, i.e., it does not contain the function value $f^*(x)$ of $f^*$ with the designed probability (e.g., 95~\%).
To make it well-calibrated, the kernel (or its hyperparameters) must be selected based on the actual function values $f^*(x_1), \dots, f^*(x_n)$ of $f^*$ using a method such as maximum likelihood and cross validation.
See e.g.~\citet{naslidnyk2025comparing} for a further discussion.
\end{remark}

\subsection{Geometric Interpretation}
\label{sec:geomet-int-gp-var}

By Lemma~\ref{lemma:interp-post-mean-var-ghs-gen} and the second identity in \eqref{eq:projection-equivalences}, the GP posterior covariance at any two points $x$ and $x'$ is identical to the inner product between the projection residuals for the evaluation functionals at $x$ and $x'$ in the RKHS:
\begin{align}
&  k(x,x') - {\bm k}_n(x)^\top {\bm K}_n^{-1} {\bm k}_n(x')  \nonumber  \\
 & =  \left< F(x) - F_n(x),~~ F(x') - F_n(x') \right>_{\cG_k} \nonumber \\
& = \left< k(\cdot, x) - k_n(\cdot, x),~~
k(\cdot, x') - k_n(\cdot, x') \right>_{\cH_k}. \label{eq:post-cov-inner-prod-RKHS-5229}
\end{align}
Therefore, the posterior covariance becomes near zero when the two projection residuals are nearly orthogonal in the RKHS or when their norms are near zero.

Accordingly, the posterior variance is the squared distance between the evaluation functional's Riesz representation and its best approximation in the RKHS:
\begin{align}
 & k(x, x) - {\bm k}_n(x)^\top {\bm K}_n^{-1} {\bm k}_n(x) \nonumber \\
 & =  \left\| F(x) - F_n(x)  \right\|_{\cG_k}^2   = \left\| k(\cdot, x) - k_n(\cdot, x)  \right\|_{\cH_k}^2, \label{eq:GP-post-cov-residual}
\end{align}
as illustrated in Figure~\ref{fig:equiv-interpolation}.
In other words, the GP posterior variance represents the squared {\em approximation error} of approximating the evaluation functional at test location $x$ using the $n$ evaluation functionals at training locations $x_1, \dots, x_n$.
Hence, it becomes small if $x$ is surrounded more densely by $x_1, \dots, x_n$.
See Section~\ref{sec:upper-bound-variance} for quantitative bounds on the contraction of the posterior variance.

\subsection{Worst-Case-Error Interpretation}
\label{sec:worst-int-gp-var}

To describe the worst-case-error interpretation of the GP posterior variance, let us define an {\em error functional} for interpolation as in Section~\ref{sec:interpo-sec4}.

For a function $f$, we define $A(f)$ as the difference between its value $f(x)$ at test location $x$ and its GP/RKHS interpolation estimate based on training observations $f(x_1), \dots, f(x_n)$:
\begin{align*}
    A(f) :=  f(x) - \sum_{i=1}^n w_i(x)f(x_i) .
\end{align*}
where the weights $w_1(x), \dots, w_n(x)$ are given as \eqref{eq:weights-RKHS-int-4405}.
This is bounded as a linear functional on the RKHS, and its Riesz representation is the difference between the Riesz representation of the evaluation functional at $x$ and its best approximation:
\begin{equation} \label{eq:error-functional-5258}
A(f) = \left<  f,\ k(\cdot,x) - \sum_{i=1}^n w_i(x) k(\cdot,x_i) \right>_{\cH_k}  \quad (f \in \cH_k).
\end{equation}

  Theorem~\ref{theo:master-theorem} applied to this error functional and the identity~\eqref{eq:GP-post-cov-residual} lead to Theorem~\ref{theo:equivalence-post-cov-worst-average} below; see also Corollary~\ref{coro:interpolation-error-equiv}.
  It shows that the GP posterior standard deviation is identical to the maximum of the error functional over unit-norm RKHS functions, i.e, the worst case error of RKHS interpolation.
  See Figure~\ref{fig:RKHS_v_GP} for an illustration.

\begin{theorem} \label{theo:equivalence-post-cov-worst-average}
Let $k: \cX \times \cX \to \mathbb{R} $ be a kernel and $\cH_k$ be its RKHS.
Let $x_1, \dots, x_n \in \cX $ be such that the kernel matrix $ {\bm K}_n = (k(x_i, x_j))_{i,j =1}^n \in \mathbb{R}^{n \times n} $ is invertible.
Then for all $x \in \cX$, we have
\begin{align*}
    & \sqrt{ k(x, x) - {\bm k}_n(x)^\top {\bm K}_n^{-1} {\bm k}_n(x) }\\
   & =  \sup_{f \in \cH_k:\ \| f \|_{\cH_k} \leq 1 } \left( f(x) - \sum_{i=1}^n w_i(x) f(x_i) \right),
\end{align*}
where $w_1(x), \dots, w_n(x) \in \mathbb{R}$ are given as \eqref{eq:weights-RKHS-int-4405}.
\end{theorem}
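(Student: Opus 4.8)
The plan is to apply Theorem~\ref{theo:master-theorem} to the error functional
$$
A(f) := f(x) - \sum_{i=1}^n w_i(x) f(x_i)
$$
introduced in \eqref{eq:error-functional-5258}, with weights $(w_1(x),\dots,w_n(x))^\top = {\bm K}_n^{-1}{\bm k}_n(x)$ from \eqref{eq:weights-RKHS-int-4405}, and then to evaluate the RKHS norm of its Riesz representation. First I would verify that $A$ is a bounded linear functional on $\cH_k$: linearity is immediate, and boundedness follows because each evaluation functional $f \mapsto f(x_i)$ is bounded on $\cH_k$ by the reproducing property and Cauchy--Schwarz, so $A$, a finite linear combination of such functionals together with $f \mapsto f(x)$, is bounded. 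By the reproducing property,
$$
A(f) = \left\langle f,\ k(\cdot,x) - \sum_{i=1}^n w_i(x) k(\cdot,x_i) \right\rangle_{\cH_k} \quad (f \in \cH_k),
$$
so the Riesz representation is $f_A = k(\cdot,x) - \sum_{i=1}^n w_i(x) k(\cdot,x_i)$. Theorem~\ref{theo:master-theorem} then gives $\sup_{\|f\|_{\cH_k}\le 1} A(f) = \|f_A\|_{\cH_k}$.

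Next I would compute $\|f_A\|_{\cH_k}^2$. The cleanest route is geometric: by \eqref{eq:weights-RKHS-int-4405}, the vector $(w_1(x),\dots,w_n(x))^\top = {\bm K}_n^{-1}{\bm k}_n(x)$ is exactly the coefficient vector of the orthogonal projection $k_n(\cdot,x)$ of $k(\cdot,x)$ onto the subspace $\cS_n$ defined in \eqref{eq:RKHS-projection}. Hence $f_A = k(\cdot,x) - k_n(\cdot,x) = k_n^\perp(\cdot,x)$ is the projection residual, and by the identity \eqref{eq:GP-post-cov-residual} (which follows from Lemma~\ref{lemma:interp-post-mean-var-ghs-gen} together with the projection equivalence \eqref{eq:projection-equivalences}),
$$
\|f_A\|_{\cH_k}^2 = \left\| k(\cdot,x) - k_n(\cdot,x) \right\|_{\cH_k}^2 = k(x,x) - {\bm k}_n(x)^\top {\bm K}_n^{-1}{\bm k}_n(x).
$$
Alternatively, one can expand directly: with ${\bm w} := {\bm K}_n^{-1}{\bm k}_n(x)$ and using the reproducing property and the symmetry of ${\bm K}_n$,
$$
\|f_A\|_{\cH_k}^2 = k(x,x) - 2{\bm w}^\top {\bm k}_n(x) + {\bm w}^\top {\bm K}_n {\bm w} = k(x,x) - {\bm k}_n(x)^\top {\bm K}_n^{-1}{\bm k}_n(x).
$$
Taking square roots of both sides of this identity and combining with the display from Theorem~\ref{theo:master-theorem} yields the claimed equality in Theorem~\ref{theo:equivalence-post-cov-worst-average}.

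I do not anticipate a genuine obstacle here: the statement reduces to the already-established master equivalence (Theorem~\ref{theo:master-theorem}) plus an elementary norm computation. The only points requiring a little care are making the boundedness and Riesz-representation identification fully rigorous so that Theorem~\ref{theo:master-theorem} applies, and keeping track of the symmetry and invertibility of ${\bm K}_n$ in the direct expansion; neither is substantive, and both are entirely parallel to the derivations already carried out for RKHS interpolation in Section~\ref{sec:kernel-interpolation-new}.
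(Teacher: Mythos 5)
Your proposal is correct and follows essentially the same route as the paper: the paper derives Theorem~\ref{theo:equivalence-post-cov-worst-average} precisely by applying Theorem~\ref{theo:master-theorem} to the error functional \eqref{eq:error-functional-5258} and invoking the residual-norm identity \eqref{eq:GP-post-cov-residual}. Your additional direct expansion of $\|f_A\|_{\cH_k}^2$ is a valid, self-contained check of that identity but does not change the argument.
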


With this interpretation, and using the identity between the GP interpolant $\bar{m}(x)$ and the RKHS interpolant $\hat{f}(x)$, the upper and lower bounds of the credible interval~\eqref{eq:GP-interval-5214} can be written as $\hat{f}(x)$ plus-minus a constant times the worst-case error:
\begin{align*}
 &     \hat{f}(x) \pm \alpha \sup_{ \| f \|_{\cH_k} \leq 1 } \left( f(x) - \sum_{i=1}^n w_i(x) f(x_i) \right).
\end{align*}
Its width becomes large at a test point $x$ where the worst case error for RKHS interpolation is large.
This way, the ``uncertainty'' estimate can be defined without using probabilistic arguments.

Another consequence of  Theorem~\ref{theo:equivalence-post-cov-worst-average} is that the posterior standard deviation gives an upper bound of the error of the RKHS interpolant if the unknown function $f^*$ belongs to the RKHS.
\begin{corollary} \label{coro:kernel-interp-upper-5310}
Let $k: \cX \times \cX \to \mathbb{R} $ be a kernel and $\cH_k$ be its RKHS.
Let $x_1, \dots, x_n \in \cX $ be such that the kernel matrix ${\bm K}_n = (k(x_i, x_j))_{i,j =1}^n \in \mathbb{R}^{n \times n} $ is invertible.
Then for all $f^* \in \cH_k$ and for all $x \in \cX$, we have
$$
| f^*(x) - \hat{f}(x) | \leq \| f^* \|_{\cH_k} \sqrt{  k(x, x) - {\bm k}_n(x)^\top {\bm K}_n^{-1} {\bm k}_n(x) },
$$
where
$\hat{f}(x) := \sum_{i=1}^n w_i(x) f^*(x_i)$ with $w_1(x), \dots, w_n(x) \in \mathbb{R}$ in \eqref{eq:weights-RKHS-int-4405}.
\end{corollary}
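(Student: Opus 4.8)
The plan is to read off the bound directly from the error-functional representation~\eqref{eq:error-functional-5258} together with the Cauchy--Schwarz inequality in the RKHS, using the geometric identity~\eqref{eq:GP-post-cov-residual} to rewrite the norm of the Riesz representation as the posterior variance. Concretely, I would fix $x \in \cX$ and consider the linear functional
$$
A(f) := f(x) - \sum_{i=1}^n w_i(x) f(x_i) \qquad (f \in \cH_k),
$$
with $w_1(x),\dots,w_n(x)$ as in~\eqref{eq:weights-RKHS-int-4405}. Since $\hat f(x) = \sum_{i=1}^n w_i(x) f^*(x_i)$, we have $f^*(x) - \hat f(x) = A(f^*)$ for every $f^* \in \cH_k$.

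Next I would invoke~\eqref{eq:error-functional-5258}, which identifies the Riesz representation of $A$ as $k(\cdot,x) - \sum_{i=1}^n w_i(x) k(\cdot,x_i) = k(\cdot,x) - k_n(\cdot,x)$, the residual of the orthogonal projection of $k(\cdot,x)$ onto $\cS_n$. Applying Cauchy--Schwarz to $A(f^*) = \langle f^*,\ k(\cdot,x) - k_n(\cdot,x) \rangle_{\cH_k}$ gives
$$
| f^*(x) - \hat f(x) | \;\le\; \| f^* \|_{\cH_k}\, \bigl\| k(\cdot,x) - k_n(\cdot,x) \bigr\|_{\cH_k}.
$$
Finally, the identity~\eqref{eq:GP-post-cov-residual} states that $\bigl\| k(\cdot,x) - k_n(\cdot,x) \bigr\|_{\cH_k}^2 = k(x,x) - {\bm k}_n(x)^\top {\bm K}_n^{-1} {\bm k}_n(x)$, and substituting this yields the claimed inequality.

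There is essentially no obstacle here: every ingredient is already established earlier in the chapter. The only point that deserves a sentence of care is the invertibility of ${\bm K}_n$, which is assumed in the statement and is exactly what guarantees that the weights $w_i(x)$ in~\eqref{eq:weights-RKHS-int-4405}, the projection $k_n(\cdot,x)$ in~\eqref{eq:RKHS-projection}, and hence the functional $A$ and its Riesz representation, are all well defined. Alternatively, one could derive the corollary from Theorem~\ref{theo:equivalence-post-cov-worst-average} by a homogeneity argument: applying the supremum identity to $f^*/\|f^*\|_{\cH_k}$ (for $f^* \neq 0$; the case $f^* = 0$ being trivial) shows that $|f^*(x) - \hat f(x)| = |A(f^*)| \le \|f^*\|_{\cH_k}\sup_{\|f\|_{\cH_k}\le 1} A(f)$, and the supremum equals the posterior standard deviation by that theorem. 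I would present the first, self-contained Cauchy--Schwarz argument as the main proof and perhaps mention the second as a remark.
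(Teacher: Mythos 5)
Your proposal is correct, and in fact both of your arguments appear verbatim in the paper: the paper's main proof of Corollary~\ref{coro:kernel-interp-upper-5310} is exactly your ``alternative'' homogeneity argument via Theorem~\ref{theo:equivalence-post-cov-worst-average} (bound by the supremum over the ball of radius $\|f^*\|_{\cH_k}$, rescale, apply the theorem), while your primary Cauchy--Schwarz derivation through the Riesz representation $k(\cdot,x)-k_n(\cdot,x)$ and the identity~\eqref{eq:GP-post-cov-residual} is given in the remark immediately following. You have simply swapped which argument is primary; both are complete and the choice between them is a matter of exposition.
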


\begin{proof}
Using $f^* \in \cH_k$ and Theorem~\ref{theo:equivalence-post-cov-worst-average}, we have
\begin{align*}
 \left| f^*(x) - \hat{f}(x) \right|  & \leq \sup_{ \| f \|_{\cH_k} \leq \left\| f^* \right\|_{\cH_k} } \left| f(x) - \sum_{i=1}^n w_i(x) f(x_i) \right| \\
 & = \left\| f^* \right\|_{\cH_k} \sup_{\| f \|_{\cH_k} \leq 1 } \left| f(x) - \sum_{i=1}^n w_i(x) f(x_i) \right| \\
 & = \left\| f^* \right\|_{\cH_k} \sqrt{  k(x, x) - {\bm k}_n(x)^\top {\bm K}_n^{-1} {\bm k}_n(x) }.
\end{align*}

\end{proof}

Corollary~\ref{coro:kernel-interp-upper-5310} is a well-known result in the RKHS interpolation literature \citep[e.g.,][]{SchWen06,Wen05}, where the posterior standard deviation is known as the {\em power function}.
Its asymptotic properties have been extensively studied, as it provides an upper bound for the error of RKHS interpolation.
Such results will be described in the next section.

\begin{remark}
    Corollary~\ref{coro:kernel-interp-upper-5310} can also be derived using the reproducing property and the Cauchy-Schwartz inequality:
\begin{align*}
 \left| f^*(x) - \hat{f}(x) \right| &= \left< f^*, k(\cdot, x) - \sum_{i=1}^n w_i(x) k(\cdot,x) \right>_{\cH_k}     \\
& \leq \left\| f^* \right\|_{\cH_k} \left\|  k(\cdot, x) - \sum_{i=1}^n w_i(x) k(\cdot,x) \right\|_{\cH_k} \\
& = \left\| f^* \right\|_{\cH_k} \sqrt{  k(x, x) - {\bm k}_n(x)^\top {\bm K}_n^{-1} {\bm k}_n(x) }.
\end{align*}

\end{remark}

\section{Contraction of Posterior Variance} \label{sec:upper-bound-variance}

Lastly, we briefly review basic results from the RKHS interpolation literature on the large-sample asymptotics of the posterior variance $\bar{k}(x,x) = k(x, x) - {\bm k}_n(x)^\top {\bm K}_n^{-1} {\bm k}_n(x)$ as the sample size $n$ increases  \citep[e.g.,][]{Wen05,SchWen06,SchSchSch13}.
Such results have been used in asymptotic analyses of GP/kernel-based algorithms \citep[e.g.,][]{Bul11,briol2019probabilistic,TuoWu16,StuTec18}.
Here we assume that the input space is $\cX \subset \mathbb{R}^d$.

The key quantity is the {\em fill distance}, which quantifies the denseness of training input points $X^n := \{ x_1, \dots, x_n \} \subset \Re^d$ around the test input point $x \in \mathbb{R}^d$.
Let $B(x, \rho) := \{ \tilde{x} \in \mathbb{R}^d:~  \| x - \tilde{x} \| \leq \rho  \}$ be the ball around $x$ of radius $\rho > 0$.
Then, for any $\rho > 0$, the fill distance at $x \in \mathbb{R}^d$ is defined as
\begin{equation} \label{eq:local-fill-dist}
h_{\rho,X^n}(x) := \sup_{\tilde{x} \in B(x, \rho) } \min_{i = 1,\dots,n} \| \tilde{x} - x_i \|.
\end{equation}
In other words,  $h_{\rho,X^n}(x)$ is the radius of the largest ``hole'' in $B(x, \rho)$ in which no points from $X^n$ are contained.
Thus, $h_{\rho,X^n}(x)$ being smaller implies that more points exist around~$x$.

Theorem~\ref{theo:upper-bound-post-var} below gives an upper-bound on the posterior variance in terms of the fill distance \eqref{eq:local-fill-dist}.
This result is a simplified version of \citet[Theorem 5.14]{WuSch93}.
For completeness and pedagogy reasons, we prove it in
Appendix~\ref{sec:proof-post-var-contraction} using the proof strategy of \citet{WuSch93}.

\begin{theorem} \label{theo:upper-bound-post-var}
Let $k: \mathbb{R}^d \times \mathbb{R}^d \to \mathbb{R}$ be a kernel for which there exists $\Phi: \mathbb{R}^d \to \mathbb{R}$ such that $k(x,x') = \Phi(x-x')$ for all $x, x' \in \mathbb{R}^d$ and its Fourier transform $\cF[\Phi]$ satisfies
\begin{equation} \label{eq:kernel-Fourier-5410}
    0 \leq \cF[\Phi](\omega) \leq C_\Phi (1 + \| \omega \|)^{-2s} \quad (\omega \in \mathbb{R}^d)
\end{equation}
for some $C_\Phi > 0$ and $s > d/2$.
Then, for any $\rho > 0$, there exist constants $h_0 > 0$ and $C > 0$ such that the following holds.

For any $n \in \mathbb{N}$,  $X^n = \{x_1, \dots, x_n \} \subset \mathbb{R}^d$ such that the kernel matrix ${\bm K}_n = (k(x_i, x_j))_{i,j =1}^n \in \mathbb{R}^{n \times n} $ is invertible, and $x \in \mathbb{R}^d$ such that $h_{\rho,X^n}(x) \leq \min(h_0, 1)$, we have
\begin{equation} \label{eq:bound-post-var}
k(x, x) - {\bm k}_n(x)^\top {\bm K}_n^{-1} {\bm k}_n(x) \leq C   h_{\rho,X^n}(x)^{2s - d},
\end{equation}
where  ${\bm k}_n(x) := (k(x_1, x), \dots, k(x_n, x))^\top \in \mathbb{R}^n$.

\end{theorem}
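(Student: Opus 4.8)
The plan is to bound the posterior variance by the squared RKHS approximation error of the evaluation functional, as established in \eqref{eq:GP-post-cov-residual}, and then control that error using a polynomial reproduction argument local to the ball $B(x,\rho)$. Concretely, by \eqref{eq:GP-post-cov-residual} we have
$$
k(x,x) - {\bm k}_n(x)^\top {\bm K}_n^{-1} {\bm k}_n(x)
= \inf_{{\bm w}\in\mathbb{R}^n} \left\| k(\cdot,x) - \sum_{i=1}^n w_i k(\cdot,x_i) \right\|_{\cH_k}^2,
$$
so it suffices to exhibit \emph{one} good choice of weights ${\bm w} = {\bm w}(x)$. For such weights, the error functional $f \mapsto f(x) - \sum_i w_i f(x_i)$ has Riesz representation $k(\cdot,x) - \sum_i w_i k(\cdot,x_i)$, and by Theorem~\ref{theo:master-theorem} (or directly Corollary~\ref{coro:interpolation-error-equiv}) its squared RKHS norm equals $\sup_{\|f\|_{\cH_k}\le 1}\bigl(f(x)-\sum_i w_i f(x_i)\bigr)^2$. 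Thus the task reduces to: choose weights supported on the points $x_i$ lying in $B(x,\rho)$ that \textbf{(a)} reproduce polynomials of degree $<s$ exactly, \textbf{(b)} have controlled magnitude, and \textbf{(c)} are ``local'' (zero for points outside $B(x,\rho)$), and then estimate $|f(x)-\sum_i w_i f(x_i)|$ for $\|f\|_{\cH_k}\le 1$ using that the Fourier-side hypothesis~\eqref{eq:kernel-Fourier-5410} forces $\cH_k$ to embed continuously into the Sobolev space $W_2^s(\mathbb{R}^d)$ (via Theorem~\ref{theo:RKHS-shift-invariant}, comparing spectral densities).

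The key steps, in order, are as follows. First, invoke the standard existence result for \emph{local polynomial reproduction}: since $h := h_{\rho,X^n}(x) \le \min(h_0,1)$ is small enough (this is where $h_0$, depending on $\rho$ and the polynomial degree $m := \lceil s \rceil - 1$, enters), there exist weights $(w_i)_{i=1}^n$ with $w_i = 0$ whenever $\|x_i - x\| > c_1 h$ (hence all active $x_i$ lie in $B(x,\rho)$ once $h_0 \le \rho/c_1$), satisfying $\sum_i w_i p(x_i) = p(x)$ for all polynomials $p$ of degree $\le m$, and $\sum_i |w_i| \le c_2$ for constants $c_1,c_2$ depending only on $d$ and $m$; this is \citet[Theorem 3.14]{Wen05} or the construction in \citet[Section 3]{WuSch93}. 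Second, for $\|f\|_{\cH_k}\le 1$, use polynomial reproduction to write $f(x) - \sum_i w_i f(x_i) = (f - p)(x) - \sum_i w_i (f-p)(x_i)$ for the best local Taylor/averaged-Taylor polynomial $p$ of $f$ on $B(x, c_1 h)$, then apply the Sobolev error estimate $\|f - p\|_{L_\infty(B(x,c_1 h))} \le C h^{s - d/2} \|f\|_{W_2^s(B(x,c_1 h))}$ (Bramble–Hilbert on the ball; e.g. \citet[Theorem 3.4]{WuSch93} adapted to fractional $s$, or the standard averaged-Taylor-polynomial argument). Third, combine: $|f(x) - \sum_i w_i f(x_i)| \le (1 + \sum_i |w_i|)\, \|f - p\|_{L_\infty(B(x,c_1h))} \le (1+c_2)\, C h^{s-d/2}$, using $\|f\|_{W_2^s} \lesssim \|f\|_{\cH_k} \le 1$. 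Squaring and taking the supremum over the unit ball gives the bound $C h^{2s-d}$.

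The main obstacle I expect is the Sobolev approximation estimate for \emph{fractional} smoothness $s > d/2$ on a ball: when $s$ is not an integer, the Bramble–Hilbert lemma has to be invoked in its fractional-order form (or via interpolation between integer-order estimates), and one must be careful that the constant $C$ and the threshold $h_0$ depend only on $d$, $s$, and $\rho$ (through the shape of $B(x,\rho)$, which is a fixed-radius ball, so this is uniform in $x$) — not on $x$, $n$, or the configuration $X^n$. The other technical point requiring care is translating the Fourier hypothesis~\eqref{eq:kernel-Fourier-5410} into the continuous embedding $\cH_k \hookrightarrow W_2^s(\mathbb{R}^d)$: from Theorem~\ref{theo:RKHS-shift-invariant}, $\|f\|_{\cH_k}^2 \propto \int |\cF[f](\omega)|^2 / \cF[\Phi](\omega)\, d\omega$, and the upper bound $\cF[\Phi](\omega) \le C_\Phi(1+\|\omega\|)^{-2s}$ yields $\|f\|_{\cH_k}^2 \gtrsim \int |\cF[f](\omega)|^2 (1+\|\omega\|)^{2s}\, d\omega \asymp \|f\|_{W_2^s}^2$, so this direction is immediate — but one should note it gives the embedding on all of $\mathbb{R}^d$, and restriction to $B(x,c_1h)$ only decreases the norm, which is the direction we need. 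Once these two ingredients are in place, the assembly is routine.
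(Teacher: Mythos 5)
Your proposal is correct in outline, and it shares the paper's first step — the reduction to $\inf_{\bm w}\|k(\cdot,x)-\sum_i w_i k(\cdot,x_i)\|_{\cH_k}^2$ and the choice of weights via the Wu--Schaback local polynomial reproduction lemma — but the second half takes a genuinely different route. The paper stays entirely in the Fourier domain: it writes the squared RKHS norm of the residual as $C_d\int|1-\sum_i u_i e^{-\iu(x_i-x)^\top\omega}|^2\,\cF[\Phi](\omega)\,d\omega$, splits the integral at $\|\omega\|=h^{-1}$, and on the low-frequency part uses a Taylor expansion of the complex exponential so that polynomial reproduction kills the first $q$ terms (with $q$ the smallest integer exceeding $s-d/2$), leaving a factor $h^{2q}\|\omega\|^{2q}$ that is integrated against the spectral bound; the high-frequency tail is handled by the crude bound $|1-\sum_i u_i e^{\cdots}|\le 1+C_1$. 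You instead dualize via Theorem~\ref{theo:master-theorem} to the worst-case error over the unit ball of $\cH_k$, push $\cH_k$ into $W_2^s(\mathbb{R}^d)$ using the spectral upper bound, and invoke a scaled (fractional) Bramble--Hilbert estimate on $B(x,c_1h)$. Both are standard in the scattered-data literature; the paper's route is self-contained given Theorem~\ref{theo:RKHS-shift-invariant} and sidesteps fractional Sobolev seminorms entirely, needing only polynomial reproduction of degree just above $s-d/2$ rather than $\lceil s\rceil-1$, while your route is more conceptual (it exhibits the power function explicitly as a worst-case error over a Sobolev ball and works for any kernel whose RKHS embeds in $W_2^s$) at the cost of importing the fractional Bramble--Hilbert lemma, whose uniformity in $h$ and $x$ you correctly flag as the point requiring care. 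Your remaining steps — the uniformity of the constant over balls of fixed radius, and the direction of the embedding $\cH_k\hookrightarrow W_2^s$ — are handled correctly.
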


Theorem~\ref{theo:upper-bound-post-var} holds for shift-invariant kernels $k(x,x') = \Phi(x-x')$ on $\mathbb{R}^d$ that satisfy the condition \eqref{eq:kernel-Fourier-5410}.
This condition requires that  the tail of the Fourier transform $\cF[\Phi](\omega)$ decays at the rate $\| \omega \|^{-2s}$ as $\| \omega \| \to \infty$, and thus $s > d/2$ quantifies the smoothness of $\Phi$.
For example, it is satisfied by Mat\'ern kernels of order $\alpha > 0$, for which we have $s = \alpha + d/2$ (see Example~\ref{ex:RKHS-Matern}).
In this case, the upper bound in \eqref{eq:bound-post-var} becomes $C h_{\rho,X^n}(x)^{\alpha}$.
The condition~\eqref{eq:kernel-Fourier-5410} is also satisfied by squared-exponential kernels for any $s > d/2$ (see Example~\ref{ex:RKHS-squared-exponential}), and thus the bound~\eqref{eq:bound-post-var} holds for arbitrarily large $s$.
(Note, however, that the constant $C$ depends on $s$ and becomes larger for larger $s$.)

\begin{remark}
For a squared-exponential kernel, it is possible to obtain an upper bound of exponential order for the posterior variance  \citep[Theorem 11.22]{Wen05}.
To present this result, let us assume that $\cX$ is a cube in $\mathbb{R}^d$ and define the fill distance globally on $\cX$ as
$$
h_{X^n} := \sup_{y \in \cX} \min_{i = 1, \dots, n} \| y - x_i \|.
$$
Then, the proof of \citet[Theorem 11.22]{Wen05}\footnote{The power function $P_{\Phi,X}^2(x)$ corresponds to the posterior variance.} shows that there exists a constant $c > 0$ that does not depend on $h_{X^n}$ such that
$$
k(x, x) - {\bm k}_n(x)^\top {\bm K}_n^{-1} {\bm k}_n(x)  \leq  \exp( - c / h_{X^n} )
$$
for any $X^n = \{x_1, \dots, x_n\} \subset \cX$ with sufficiently small $h_{X^n}$.

\end{remark}

Theorem \ref{theo:upper-bound-post-var} implies that the posterior variance's contraction rate is determined by the smoothness $s$ of the kernel and how quickly the fill distance $h_{\rho, X^n}(x)$ goes to $0$ (i.e., how quickly the denseness of  $x_1,\dots,x_n$ around $x$ increases).
We summarize this implication in the following corollary.

\begin{corollary} \label{coro:post-var-contrat}
Suppose that the conditions of Theorem \ref{theo:upper-bound-post-var} hold.
Moreover, suppose that there exists a constant $b > 0$ such that $h_{\rho,X^n}(x) = O(n^{-b})$ as $n \to \infty$.
Then we have
\begin{equation} \label{eq:marginal-var-rate}
k(x, x) - {\bm k}_n(x)^\top {\bm K}_n^{-1} {\bm k}_n(x) = O(n^{-b(2s - d)}) \quad (n \to \infty).
\end{equation}
\end{corollary}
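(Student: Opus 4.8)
The plan is to read the conclusion off directly from Theorem~\ref{theo:upper-bound-post-var} by substituting the assumed decay of the fill distance. First I would fix the radius $\rho > 0$ and let $h_0 > 0$ and $C > 0$ be the constants supplied by Theorem~\ref{theo:upper-bound-post-var} for this $\rho$. The hypothesis $h_{\rho,X^n}(x) = O(n^{-b})$ with $b > 0$ furnishes a constant $C' > 0$ and an index $N_0 \in \mathbb{N}$ with $h_{\rho,X^n}(x) \leq C' n^{-b}$ for all $n \geq N_0$; since $n^{-b} \to 0$, there is an $N_1 \geq N_0$ with $h_{\rho,X^n}(x) \leq \min(h_0,1)$ for all $n \geq N_1$, which is exactly the regime in which the bound~\eqref{eq:bound-post-var} applies (and the kernel matrix is assumed invertible throughout).

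For every such $n$ I would then chain the two inequalities:
\[
k(x,x) - {\bm k}_n(x)^\top {\bm K}_n^{-1} {\bm k}_n(x) \;\leq\; C\, h_{\rho,X^n}(x)^{2s-d} \;\leq\; C\,(C')^{2s-d}\, n^{-b(2s-d)} ,
\]
where the second step uses that the exponent $2s-d$ is strictly positive (guaranteed by the standing assumption $s > d/2$ in Theorem~\ref{theo:upper-bound-post-var}), so that $t \mapsto t^{2s-d}$ is monotone increasing on $[0,\infty)$. Since $C\,(C')^{2s-d}$ is a fixed constant, this is precisely the assertion $k(x,x) - {\bm k}_n(x)^\top {\bm K}_n^{-1} {\bm k}_n(x) = O(n^{-b(2s-d)})$ as $n \to \infty$.

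There is no substantive obstacle here: the statement is a one-line corollary of the already-established non-asymptotic bound. The only points requiring a moment's care are (i) passing to $n$ large enough that the smallness condition $h_{\rho,X^n}(x) \leq \min(h_0,1)$ of Theorem~\ref{theo:upper-bound-post-var} holds, which is harmless because the conclusion is purely asymptotic, and (ii) the positivity of $2s-d$, without which raising the fill-distance bound to that power would not preserve the inequality.
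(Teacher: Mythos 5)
Your proof is correct and is exactly the argument the paper intends: the corollary is stated as an immediate consequence of Theorem~\ref{theo:upper-bound-post-var}, obtained by substituting $h_{\rho,X^n}(x) \leq C' n^{-b}$ into the bound~\eqref{eq:bound-post-var} once $n$ is large enough that $h_{\rho,X^n}(x) \leq \min(h_0,1)$. Your explicit attention to the eventual-smallness condition and the positivity of $2s-d$ is appropriate and matches the paper's implicit reasoning.
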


\begin{remark}
If $x_1,\dots,x_n$ are equally-spaced grid points in the ball $B(x,\rho)$, then  $h_{\rho,X}(x) = O(n^{-1/d})$ as $n \to \infty$, i.e., $b = 1/d$ in Corollary~\ref{coro:post-var-contrat}.
In this case, the posterior standard deviation, which is identical to the worst case error of RKHS interpolation (see Section~\ref{sec:RKHS-interp-GP-UQ}),  contracts as
\begin{equation} \label{eq:post-cov-optimal-rate-grids}
\sqrt{k(x, x) - {\bm k}_n(x)^\top {\bm K}_n^{-1} {\bm k}_n(x)}
= \sup_{\| f \|_{\cH_k} \leq 1} | f(x) - \hat{f}(x) |
= O(n^{-(s/d - 1/2) }),
\end{equation}
where $\hat{f}(x)$ is the RKHS interpolant based on observations $f(x_1), \dots, f(x_n)$.
This rate matches the minimax optimal rate for the worst case error of interpolation in the Sobolev space of order $s$ \citep{novak1988deterministic}.

More precisely, let $\cX = [0,1]^d$ and suppose that the RKHS $\cH_k$ is norm-equivalent to the Sobolev space of order $s$ (e.g., when $k$ is the Mat\'ern kernel of order $\alpha = s - d/2$).
Denote by $\mathcal{A}_n$ the set of all  interpolation algorithms based on function evaluations at $x_1, \dots, x_n$.
Then, \citet[Proposition in Section 1.3.11]{novak1988deterministic} shows that
$$
\inf_{\hat{f} \in \mathcal{A}_n} \sup_{\| f \|_{\cH_k} \leq 1} \sup_{x \in [0,1]^d} | f(x) - \hat{f}(x) |  = \Theta( n^{- (s/d-1/2)} ),
$$
where $\Theta$ denotes asymptotic lower and upper bounds.
Thus, the rate in \eqref{eq:post-cov-optimal-rate-grids} cannot be improved if we also consider the supremum over $x \in [0,1]^d$.
\end{remark}

\chapter{Regression}
\label{sec:regression}
\label{sec:GP-regression}

We compare here GP-based and RKHS-based methods for {\em regression}, where noisy observations of function values are available for function estimation.
Let $f^*:\cX \to \Re$ be an unknown function defined on an input space $\cX$.
The task of regression is to estimate $f^*$ based on data $(x_1, y_1), \dots, (x_n, y_n) \in \cX \times \Re$ generated as
\begin{equation}  \label{eq:regres_noise_model}
 y_i = \ff(x_i) + \xi_i,\qq i=1,\dots,n,
\end{equation}
where $\xi_i \in \mathbb{R}$ represents a noise or the variation unexplained by the input features.

We refer to the GP-based method  {\em Gaussian Process Regression (GPR)} and the RKHS-based one {\em Kernel Ridge Regression (KRR)}, following the convention in machine learning.
The former is also known as  ``Kriging'' or ``Wiener-Kolmogorov prediction,'' and the latter as ``spline smoothing'' or ``regularized least-squares.''

This chapter is organized as follows.
GPR is introduced in Section~\ref{sec:GPR}.
It is shown that GPR is a special case of GP interpolation when using a {\em regularized kernel} in Section~\ref{sec:GP-regres-interpo}.
KRR is introduced in Section \ref{sec:KRR}, which is shown to be a special case of RKHS interpolation in Section \ref{sec:KRR-as-KI}.
The equivalence between GPR and KRR is summarized and interpreted via their interpolation formulations in Section~\ref{sec:GP-KRR-equivalence}.
An RKHS interpretation of the posterior variance of GPR as a worst-case eror of KRR is given in Section~\ref{sec:geometric-worst-post-var}.
Theoretical convergence rates for GPR and KRR are reviewed and compared in Section~\ref{sec:rates-and-posterior-contraction}.

 \section{Gaussian Process Regression} \label{sec:GPR}
Gaussian process regression, also known as \emph{Kriging} or \emph{Wiener-Kolmogorov prediction}, is a Bayesian nonparametric method for regression.
It starts from defining a prior distribution for the unknown regression function $f^*$ in \eqref{eq:regres_noise_model} using a Gaussian process
\begin{equation} \label{eq:GP-prior}
\ssf \sim \GP(m,k),
\end{equation}
where $m: \mathcal{X} \to \mathbb{R}$ is the mean function and $k: \mathcal{X} \times \mathcal{X} \to \mathbb{R}$ is the covariance function, which are specified so that $f^*$ can be a realization of the random function $F$.

Replacing $f^*$ by $F$ in \eqref{eq:regres_noise_model}, the regression model becomes
\begin{equation} \label{eq:regress-model-random}
Y_i = \ssf(x_i) + \xi_i,\qq i=1,\dots,n,
\end{equation}
where we also assume\footnote{In general, however, GPR allows for the noise variables to be correlated Gaussian with varying magnitudes of variances. We assume i.i.d.~Gaussian noises for simplicity.} that $\xi_1, \dots, \xi_n$ are zero-mean i.i.d.~Gaussian noises with variance $\sigma^2 > 0$:
\begin{equation} \label{eq:Gaussian-noise}
\xi_1, \dots, \xi_n \iid \N(0,\sigma^2).
\end{equation}
Since both $F(x_i)$ and $\xi_i$ are random variables, $Y_i$ in \eqref{eq:regress-model-random} is also random variable.
Thus, the observed outputs $y_1, \dots, y_n$, which are fixed, should be regarded as realizations of random variables $Y_1, \dots, Y_n$.

In GPR, the unknown $f^*$ is inferred by deriving the {\em posterior distribution} of the latent function $F$ in \eqref{eq:regress-model-random} given the conditioning
$$
Y_1 = y_1,\ \dots,\ Y_n = y_n
$$
on the actual observations $y_1, \dots, y_n$.
As can be shown by standard Gaussian calculus \citep[e.g.,][Section 2.2]{RasmussenWilliams}, this posterior distribution is again a GP and analytically given as follows.

\begin{theorem}\label{theo:GP-posterior}
 Let $\cX$ be a non-empty set, $m: \mathcal{X} \to \mathbb{R}$ be a  function, $k: \mathcal{X} \times \mathcal{X} \to \mathbb{R}$ be a kernel, and $F \sim \GP(m, k)$.
 Let $x_1, \dots, x_n \in \cX$ and $Y_i := F(x_i) + \xi_i$ for all $i = 1,\dots,n$, where $\xi_1, \dots, \xi_n \stackrel{i.i.d.}{\sim} \mathcal{N}(0,\sigma^2)$ with $\sigma^2 > 0$.
Then, for almost every ${\bm y}_n := (y_1, \dots, y_n)^\top \in \mathbb{R}^n$ in the support of the random vector $(Y_1, \dots, Y_n)^\top \in \mathbb{R}^n$, we have
\begin{equation*}
\ssf \mid \left(Y_1 = y_1, \dots, Y_n = y_n\right) \sim \GP(\bar{m},\bar{k}),
\end{equation*}
where $\bar{m}:\cX \to \mathbb{R}$ and $\bar{k}:\cX \times \cX \to \mathbb{R}$ are given by
\begin{align}
 \bar{m}(x) &= m(x) + {\bm k}_n(x)^\top ({\bm K}_n + \sigma^2 {\bm I}_n)^{-1}({\bm y}_n -  {\bm m}_n)\quad (x \in \cX),  \label{eq:posteior_mean} \\
  \bar{k}(x,x') &= k(x,x') - {\bm k}_n(x)^\top({\bm K}_n + \sigma^2 {\bm I}_n)^{-1} {\bm k}_n(x') \quad (x,x' \in \cX),
  \label{eq:posterior-variance}
\end{align}
where ${\bm m}_n := (m(x_1), \dots, m(x_n))^\top \in \mathbb{R}^n$,  ${\bm K}_n := (k(x_i, x_j))_{i,j = 1}^n \in \mathbb{R}^{n \times n}$,  and ${\bm k}_n(x) := (k(x_1,x),\ldots,k(x_n,x))^\top \in~\mathbb{R}^n$.
\end{theorem}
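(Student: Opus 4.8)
The plan is to reduce the statement to a finite-dimensional Gaussian conditioning identity, invoking Definition~\ref{def:GP}: to prove that the conditioned process $F \mid (Y_1 = y_1, \dots, Y_n = y_n)$ is $\GP(\bar m, \bar k)$, it suffices to show that for every finite collection of test points $z_1, \dots, z_m \in \cX$ the conditional law of $U := (F(z_1), \dots, F(z_m))^\top$ given $V := (Y_1, \dots, Y_n)^\top = {\bm y}_n$ is the $m$-variate Gaussian with mean $(\bar m(z_a))_{a=1}^m$ and covariance $(\bar k(z_a, z_b))_{a,b=1}^m$, for $\bar m$ and $\bar k$ as in \eqref{eq:posteior_mean}--\eqref{eq:posterior-variance}. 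Since $F$ is a Gaussian process and $\xi_1, \dots, \xi_n$ are jointly Gaussian and independent of $F$, the stacked vector $(U^\top, V^\top)^\top$ is jointly Gaussian, so its conditional law is Gaussian and given by the usual formulas; the work is just to read off its mean and covariance blocks.

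Next I would compute those blocks. Using $\mathbb{E}[\xi_i] = 0$, $\xi_i$ independent of $F$, and $\mathbb{E}[\xi_i \xi_j] = \sigma^2$ for $i = j$ and $0$ otherwise: $\mathbb{E}[V] = {\bm m}_n$ and $\mathbb{E}[U] = (m(z_a))_a$; $\mathrm{Cov}[U] = (k(z_a, z_b))_{a,b}$; the cross-covariance $\mathrm{Cov}[U, V]$ has $(a,i)$-entry $\mathrm{Cov}[F(z_a), F(x_i)] = k(z_a, x_i)$, i.e.\ its $a$-th row is ${\bm k}_n(z_a)^\top$; and $\mathrm{Cov}[V] = {\bm K}_n + \sigma^2 {\bm I}_n$, which is invertible because ${\bm K}_n$ is positive semidefinite and $\sigma^2 > 0$. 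Plugging these into $\mathbb{E}[U \mid V = {\bm y}_n] = \mathbb{E}[U] + \mathrm{Cov}[U,V]\,\mathrm{Cov}[V]^{-1}({\bm y}_n - \mathbb{E}[V])$ and $\mathrm{Cov}[U \mid V = {\bm y}_n] = \mathrm{Cov}[U] - \mathrm{Cov}[U,V]\,\mathrm{Cov}[V]^{-1}\mathrm{Cov}[V,U]$ gives precisely $\bar m(z_a) = m(z_a) + {\bm k}_n(z_a)^\top ({\bm K}_n + \sigma^2 {\bm I}_n)^{-1}({\bm y}_n - {\bm m}_n)$ and $\bar k(z_a, z_b) = k(z_a, z_b) - {\bm k}_n(z_a)^\top ({\bm K}_n + \sigma^2 {\bm I}_n)^{-1} {\bm k}_n(z_b)$. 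As $z_1, \dots, z_m$ were arbitrary, Definition~\ref{def:GP} identifies the conditional process as $\GP(\bar m, \bar k)$.

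There is no genuinely hard step here --- it is standard Gaussian calculus --- so the only point I would emphasize is one of rigor rather than difficulty: $V$ has a Lebesgue density on its support, so $\{V = {\bm y}_n\}$ is a null event and the conditional law must be understood as a disintegration (regular conditional distribution), which is exactly why the conclusion holds only for almost every ${\bm y}_n$ in the support of $V$ (Remark~\ref{rem:disintegration} and the references therein); for jointly Gaussian $(U,V)$ this disintegration is given by the displayed formulas. Two remarks I would add: (i) the hypotheses should state explicitly that $\xi_1, \dots, \xi_n$ are independent of $F$; and (ii) an equivalent and more structural route, previewing Section~\ref{sec:GP-regres-interpo}, is to write $Y_i = \widetilde F(x_i)$ for $\widetilde F \sim \GP(m, k + r)$ with the regularized kernel $r$ defined by $r(x,x') := \sigma^2$ when $x = x'$ and $r(x,x') := 0$ otherwise (a positive definite kernel), apply Theorem~\ref{theo:GP-interpolation} to $\widetilde F$, and then remove the independent test-point noise; using $\widetilde{\bm K}_n = {\bm K}_n + \sigma^2 {\bm I}_n$, $\widetilde{\bm k}_n(x) = {\bm k}_n(x)$ and $\widetilde k(x,x) = k(x,x) + \sigma^2$ for $x$ outside the design, this reproduces \eqref{eq:posteior_mean}--\eqref{eq:posterior-variance}.
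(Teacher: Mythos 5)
Your proposal is correct. Your primary route—reducing to finite-dimensional marginals of the jointly Gaussian vector $(U^\top,V^\top)^\top$ and applying the Gaussian conditioning formulas—is exactly the ``standard Gaussian calculus'' that the monograph invokes by citation at the theorem without writing it out, so you have in effect supplied the omitted computation; the block identifications and the invertibility of ${\bm K}_n + \sigma^2 {\bm I}_n$ are all as you state, and your point about disintegration matches Remark~\ref{rem:disintegration}. The derivation the monograph actually develops in detail is the one you sketch in your remark (ii): write $Y_i = G(x_i)$ for the latent-plus-noise process $G \sim \GP(m, k^\sigma)$ with the regularized kernel $k^\sigma = k + \sigma^2\delta$, apply Theorem~\ref{theo:GP-interpolation}, and then use Lemmas~\ref{lemma:relation-post-mean-gp-int-reg} and \ref{theo:relation-two-cov-funcs} to pass from the conditional law of $G(x)$ back to that of $F(x)$ (noting that the covariance correction $\sigma^2\delta(x,x')$ vanishes for test points off the design). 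The two routes buy different things: yours is self-contained and makes the ``almost every ${\bm y}_n$'' caveat transparent, while the regularized-kernel route is what exposes the geometric projection structure in the Gaussian Hilbert space that drives the later equivalence with kernel ridge regression. Your observation that independence of $\xi_1,\dots,\xi_n$ from $F$ should be stated explicitly in the hypotheses is a fair catch.
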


Let us discuss connections to GP interpolation in Section~\ref{sec:GP-interp-new}.
First, setting the noise variance to zero, $\sigma^2 = 0$, in Theorem~\ref{theo:GP-posterior} leads to Theorem~\ref{theo:GP-interpolation} on the analytic expression for GP interpolation.
This is unsurprising given that interpolation is a regression problem with noise-free observations.

On the other hand, we showed in Section~\ref{sec:GP-interp-new} that the posterior mean and covariance functions in GP interpolation are geometrically characterized via {\em orthogonal projections} in the Gaussian Hilbert space.
A natural question would be whether a similar geometric characterization exists for GPR.
This is an important question to understand the equivalence between GPR and KRR, summarized below. The equivalence between GP-based and RKHS-based methods generally holds because the GP-based method's formulation in the Gaussian Hilbert space is equivalent to the RKHS-based method's geometric formulation in the RKHS.

The next section shows that a geometric characterization of GPR is possible by making an opposite interpretation of GPR as a special case of GP interpolation.

\section{Deriving GP Regression from GP Interpolation} \label{sec:GP-regres-interpo}

We describe GP regression as a special case of GP interpolation when the latter uses a {\em regularized kernel}, defined below as the original kernel plus the noise kernel.

\subsubsection{Latent-plus-noise Process and the Regularized Kernel}
As for the interpolation setting,\footnote{This assumption was implicitly made for GP interpolation by assuming the kernel matrix is invertible.} we henceforth assume that input points $x_1,\dots,x_n$ are pairwise distinct:
\begin{equation} \label{eq:distint-inputs-5887}
x_i \not = x_j \quad {\rm for} \quad i \not= j, \quad i,j=1,\dots,n.
\end{equation}

First, define $\delta:\cX \times \cX \to \Re$ as the Kronecker delta function
\begin{equation} \label{eq:Kronecker-delta-kernel}
\delta(x,x') =
\begin{cases}
1 \quad (x = x') \\
0 \quad (x \not= x')
\end{cases}.
\end{equation}
As this is a positive definite kernel,\footnote{That is, it satisfies $\sum_{i,j=1}^m c_i c_j \delta(x_i,x_j) \geq 0$ for all $m \in \mathbb{N}$, $c_1, \dots, c_m \in \mathbb{R}$ and $x_1, \dots, x_m \in \cX$. This can be shown as follows. Let $I_1, \dots, I_N \subset \{1, \dots, m\}$ be disjoint subsets such that $I_1 \cup \cdots \cup I_N = \{1,\dots, m\}$, $x_i = x_j$ for all $i,j \in I_a$ for all $a = 1,\dots,N$, and $x_i \not= x_j$ for all $i \in I_a$ and $j \in I_b$ with $a \not= b$.
Then $\sum_{i,j=1}^m c_i c_j \delta(x_i,x_j) = \sum_{a=1}^N \sum_{i \in I_a} c_i^2 \geq 0$.}
 $\delta$ multiplied by the noise variance $\sigma^2$ is also positive definite.
 Define $\xi: \cX \to \mathbb{R}$ as a zero-mean GP with this kernel:
\begin{equation} \label{eq:noise-process-5905}
    \xi \sim \GP(0, \sigma^2 \delta),
\end{equation}
which is interpreted as the noise process.
Under the assumption~\eqref{eq:distint-inputs-5887}, the evaluations of $\xi$ at $x_1, \dots, x_n$ are i.i.d.~zero-mean Gaussian variables with noise variance $\sigma^2$:
$$
\xi(x_1), \dots, \xi(x_n) \stackrel{i.i.d.}{\sim} \mathcal{N}(0, \sigma^2).
$$
Thus, the noise variables $\xi_1, \dots, \xi_n$ in \eqref{eq:Gaussian-noise} can be interpreted as the evaluations of as $\xi$ at $x_1, \dots, x_n$.

Now, let us define a Gaussian process $G :\cX \to \mathbb{R}$ as the sum of the latent process $F \sim \GP(m,k)$ in \eqref{eq:GP-prior} and the noise process in \eqref{eq:noise-process-5905}:
\begin{equation} \label{eq:noise-contamination}
\ssg  := \ssf + \xi \sim \GP(m, k^\sigma),
\end{equation}
where $k^\sigma: \cX \times \cX \to \mathbb{R}$ is the kernel defined as the sum of $k$ and $\sigma^2 \delta$:
\begin{equation} \label{eq:reg-kernel}
  k^\sigma(x,x') := k(x,x') + \sigma^2 \delta(x,x') \quad (x,x' \in \cX),
\end{equation}
which we call {\em regularized kernel}.

Then, the output variables $Y_1, \dots, Y_n$ of the GP regression model in \eqref{eq:regress-model-random} can be formulated as the values of this latent-plus-noise process $G$ at the training locations $x_1,\dots,x_n$:
\begin{align} \label{eq:observe-contam}
Y_i := \ssg(x_i) &= \ssf(x_i) + \xi(x_i) \quad (i=1,\dots, n).
\end{align}

GP interpolation using the latent-plus-noise process~\eqref{eq:noise-contamination} recovers GP regression.
Let us first see this with the posterior mean.

\subsubsection{Posterior Mean}
Since the noise $\xi(x)$ at a test location $x$ is independent of both latent $F(x_i)$ and noise $\xi(x_i)$ values at the training locations, the conditional expectation of the latent-plus-noise $G(x) = F(x) + \xi(x)$ at the test location is the same as the conditional expectation of the latent $F(x)$ without the noise.
This is summarized below.

\begin{lemma}
    \label{lemma:relation-post-mean-gp-int-reg}
Assume \eqref{eq:distint-inputs-5887}.
Let $F$ be the latent process in \eqref{eq:GP-prior} and $G = F + \xi$ be the latent-plus-noise process in \eqref{eq:noise-contamination}.
Then for all $x \in \mathcal{X}$ with $x \not= x_1, \dots, x_n$,  we have
\begin{equation} \label{eq:reg-int-mean-equiv}
 \mathbb{E}[ G(x)  \mid G(x_1), \dots, G(x_n)] = \mathbb{E} [ F(x)  \mid G(x_1), \dots, G(x_n) ].
\end{equation}

\end{lemma}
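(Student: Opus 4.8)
The plan is to exploit the additive structure $G = F + \xi$, the point being that the noise process contributes nothing to the conditional expectation at a fresh test location. Writing $\mathcal{D} := (G(x_1),\dots,G(x_n))$ for the conditioning vector, linearity of the conditional expectation gives
\[
\mathbb{E}[G(x) \mid \mathcal{D}] = \mathbb{E}[F(x) \mid \mathcal{D}] + \mathbb{E}[\xi(x) \mid \mathcal{D}],
\]
so it suffices to prove that $\mathbb{E}[\xi(x) \mid \mathcal{D}] = 0$.

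First I would note that, because $F \sim \GP(m,k)$ and $\xi \sim \GP(0,\sigma^2\delta)$ are independent Gaussian processes, the finite family $F(x), \xi(x), F(x_1), \xi(x_1), \dots, F(x_n), \xi(x_n)$ is jointly Gaussian; in particular $\xi(x)$ is jointly Gaussian with $\mathcal{D}$, since each $G(x_i) = F(x_i) + \xi(x_i)$ is a linear combination of these variables. Next I would compute, for each $i = 1,\dots,n$,
\[
\mathrm{Cov}[\xi(x), G(x_i)] = \mathrm{Cov}[\xi(x), F(x_i)] + \mathrm{Cov}[\xi(x), \xi(x_i)] = 0 + \sigma^2 \delta(x, x_i) = 0,
\]
where the first term vanishes by independence of $F$ and $\xi$, and $\delta(x,x_i) = 0$ holds because $x \neq x_i$ by hypothesis. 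Hence $\xi(x)$ is uncorrelated with every coordinate of $\mathcal{D}$ and, being jointly Gaussian with $\mathcal{D}$, is therefore independent of $\mathcal{D}$.

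Since $\xi(x)$ has mean zero and is independent of $\mathcal{D}$, we obtain $\mathbb{E}[\xi(x) \mid \mathcal{D}] = \mathbb{E}[\xi(x)] = 0$, and the lemma follows. Geometrically, in the Gaussian Hilbert space generated by the pair $(F,\xi)$, the element $\xi(x)$ is orthogonal to each $G(x_i)$ and hence to the whole data subspace they span, so its orthogonal projection onto that subspace — which is precisely the conditional expectation by the projection characterization of Lemma~\ref{lemma:interp-post-mean-var-ghs-gen} — is zero. I do not anticipate any genuine difficulty: the only step that must be stated with care is the joint Gaussianity of $\xi(x)$ and $\mathcal{D}$, since that is what upgrades ``uncorrelated'' to ``independent,'' and it is immediate from the independence of the latent and noise processes.
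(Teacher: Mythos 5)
Your proposal is correct and follows essentially the same route as the paper's proof: decompose $G(x) = F(x) + \xi(x)$ and show that $\mathbb{E}[\xi(x) \mid G(x_1),\dots,G(x_n)] = 0$ because $\xi(x)$ is independent of the conditioning variables when $x \neq x_1,\dots,x_n$. The only difference is that you spell out the joint-Gaussianity-plus-zero-covariance argument that upgrades uncorrelatedness to independence, which the paper simply asserts.
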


\begin{proof}

Since $x \not= x_1,\dots,x_n$, it follows that $\xi(x)$ is independent of $G(x_1), \dots, G(x_n)$, so we have
$$
\mathbb{E}[ \xi(x) \mid G(x_1), \dots, G(x_n) ] = \mathbb{E}[ \xi(x)  ] = 0.
$$
Therefore,
\begin{align*}
  \mathbb{E}[ \ssg(x) \mid  G(x_1), \dots, G(x_n) ]
&= \mathbb{E}[  \ssf(x) + \xi(x) \mid  G(x_1), \dots, G(x_n) ]  \\
&=  \mathbb{E} [  F(x) \mid G(x_1), \dots, G(x_n)].
\end{align*}

\end{proof}

 The identity~\eqref{eq:reg-int-mean-equiv} implies that the posterior mean of GP interpolation using the latent-plus-noise process conditioning on the actual output values $y_1, \dots, y_n$ is identical to that of GP regression:
 \begin{align*}
 & \mathbb{E}[ G(x)  \mid G(x_1) = y_1, \dots, G(x_n) = y_n] \\
 & = \mathbb{E} [ F(x)  \mid Y_1 = y_1, \dots,  Y_n  = y_n] = \bar{m}(x),
\end{align*}
where $Y_1, \dots, Y_n$ are the output random variables of the GP regression model~\eqref{eq:regress-model-random}, the conditioning of which is equivalent to the conditioning of the values $G(x_1), \dots, G(x_n)$ of the latent-plus-noise process.
In other words, the posterior mean of GP interpolation using the regularized kernel $k^\sigma$ in \eqref{eq:reg-kernel} is identical to the posterior mean of GP regression using the original kernel $k$.

 Therefore, the formula~\eqref{eq:posteior_mean} for the posterior mean of GP regression can be derived from the formula~\eqref{eq:posteior_mean_interp} for the posterior mean of GP interpolation.
 Under the conditions in Lemma~\ref{theo:relation-two-cov-funcs}, we have
\begin{align}
     {\bm k}^\sigma_n(x) &:=  (k^\sigma(x,x_1), \dots, k^\sigma(x,x_n))^\top =  {\bm k}_n(x), \nonumber    \\
      {\bm K}_n^\sigma &:= (k^\sigma(x_i, x_j))_{i,j=1}^n = {\bm K}_n + \sigma^2 {\bm I}_n.  \label{eq:identities-vector-matrix}
\end{align}
Using these identities in the formula~\eqref{eq:posteior_mean_interp}, we obtain
 \begin{align*}
& \mathbb{E}[ F(x)  \mid Y_1 = y_1, \dots, Y_n = y_n] \\
& = \mathbb{E}[ G(x)  \mid G(x_1) = y_1, \dots, G(x_n) = y_n] \\
& = m(x) + {\bm k}_n^\sigma(x)^\top ({\bm K}_n^\sigma)^{-1}({\bm y}_n -  {\bm m}_n)  \\
& =  m(x) + {\bm k}_n(x)^\top ({\bm K}_n + \sigma^2 {\bm I}_n)^{-1} ({\bm y}_n -  {\bm m}_n),
 \end{align*}
which is identical to the formula~\eqref{eq:reg-int-mean-equiv} for GP regression.

Importantly, through the identity~\eqref{eq:reg-int-mean-equiv}, the conditional expectation in GP regression admits a geometric interpretation as the orthogonal projection in the Gaussian Hilbert space of the latent-plus-noise process, or as the best approximation of the unknown latent-plus-noise value $G(x)$ at the test location $x$ by a linear combination of the latent-plus-noise values $G(x_1), \dots, G(x_n)$ at the training locations $x_1, \dots, x_n$; see Lemma~\ref{lemma:interp-post-mean-var-ghs-gen}.

\subsubsection{Posterior Covariance}

Lemma~\ref{theo:relation-two-cov-funcs} below shows that the conditional covariance of the latent-plus-noise process $G$ at test points, given its values $G(x_1), \dots, G(x_n)$ at training locations, is the sum of the conditional covariance of the latent process $F$ given  $G(x_1), \dots, G(x_n)$, which is calculated in GP regression, and the covariance of the noise process $\xi$.

\begin{lemma}
    \label{theo:relation-two-cov-funcs}
Assume \eqref{eq:distint-inputs-5887}.
Let $F$ be the latent process in \eqref{eq:GP-prior} and $G = F + \xi$ be the latent-plus-noise process in \eqref{eq:noise-contamination}.
Then for all $x, x' \in \mathcal{X}$ with $x, x' \not= x_1, \dots, x_n$,  we have
\begin{align}
& \mathrm{Cov}[ G(x), G(x') \mid G(x_1), \dots, G(x_n) ]   \nonumber \\
&  =   \mathrm{Cov}[ F(x), F(x') \mid G(x_1), \dots, G(x_n) ] + \sigma^2 \delta(x,x'). \label{eq:reg-int-cov-equiv}
\end{align}
\end{lemma}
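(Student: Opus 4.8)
The plan is to decompose $G = F + \xi$ and exploit two independence facts: first, the noise process $\xi$ is independent of the latent process $F$ by construction; second, since $x, x' \neq x_1, \dots, x_n$, the evaluations $\xi(x)$ and $\xi(x')$ are independent of the training-location noise values $\xi(x_1), \dots, \xi(x_n)$ (because $\xi \sim \GP(0, \sigma^2\delta)$ has the Kronecker delta as its covariance, so distinct inputs give independent values), and hence independent of the whole conditioning vector $(G(x_1), \dots, G(x_n))$, which is a function of $(F(x_i))_i$ and $(\xi(x_i))_i$. First I would expand the left-hand side using $G(x) = F(x) + \xi(x)$ and bilinearity of the conditional covariance:
\begin{align*}
\mathrm{Cov}[G(x), G(x') \mid G(x_1), \dots, G(x_n)]
& = \mathrm{Cov}[F(x), F(x') \mid G(x_1), \dots, G(x_n)] \\
& \quad + \mathrm{Cov}[F(x), \xi(x') \mid G(x_1), \dots, G(x_n)] \\
& \quad + \mathrm{Cov}[\xi(x), F(x') \mid G(x_1), \dots, G(x_n)] \\
& \quad + \mathrm{Cov}[\xi(x), \xi(x') \mid G(x_1), \dots, G(x_n)].
\end{align*}

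Next I would argue that the three terms involving $\xi(x)$ or $\xi(x')$ simplify. Because $\xi(x)$ is independent of $G(x_1), \dots, G(x_n)$, conditioning on the training values does not change its law, so $\mathrm{Cov}[\xi(x), \xi(x') \mid G(x_1), \dots, G(x_n)] = \mathrm{Cov}[\xi(x), \xi(x')] = \sigma^2 \delta(x, x')$ by the definition of the noise kernel. Similarly, $\xi(x')$ is independent of $(F(x), G(x_1), \dots, G(x_n))$ jointly — it is independent of $F$ entirely and of the training noise — so the conditional cross-covariance $\mathrm{Cov}[F(x), \xi(x') \mid G(x_1), \dots, G(x_n)]$ vanishes; here it is cleanest to note that the jointly Gaussian vector $(F(x), G(x_1), \dots, G(x_n), \xi(x'))$ has $\xi(x')$ uncorrelated with all the other entries, so after conditioning on the middle block, $\xi(x')$ remains a zero-mean Gaussian uncorrelated with (and independent of) the residual of $F(x)$. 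The same argument kills the $\mathrm{Cov}[\xi(x), F(x')\mid \cdots]$ term. Substituting these back yields exactly \eqref{eq:reg-int-cov-equiv}.

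The main obstacle — really the only subtlety — is making the conditional-independence step fully rigorous in the Gaussian setting, i.e. justifying that "uncorrelated with the conditioning block and with the target" implies the conditional covariance is zero. The clean way is to work in the Gaussian Hilbert space: let $\cU_n$ be the subspace spanned by $G(x_1), \dots, G(x_n)$, decompose $F(x) = \Pi_{\cU_n} F(x) + R$ with $R \perp \cU_n$, and observe that $\xi(x')$ is orthogonal to $\cU_n$ and to $R$ (the latter because $\xi(x')$ is orthogonal to all of $\{F(x)\} \cup \{G(x_i)\}$, hence to any linear combination, hence to $R$), so by Lemma~\ref{lemma:interp-post-mean-var-ghs-gen}-style reasoning the conditional cross-covariance is the inner product of the residuals, which is zero. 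Alternatively one can simply cite the standard formula for conditional covariance of jointly Gaussian vectors and verify the relevant blocks of the covariance matrix vanish. Either route is routine; I would pick the Hilbert-space framing to stay consistent with the rest of the chapter.
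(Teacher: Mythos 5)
Your proof is correct and follows essentially the same route as the paper's: decompose $G = F + \xi$, use the independence of $\xi(x), \xi(x')$ from the conditioning block $G(x_1), \dots, G(x_n)$ (and from $F$) to kill the cross terms and reduce the noise-noise term to its unconditional covariance $\sigma^2\delta(x,x')$. The only difference is that you spell out the justification for the vanishing cross-covariances (via the Gaussian Hilbert space or the block-covariance formula), which the paper leaves implicit; that extra care is fine but not a departure in method.
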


\begin{proof}

Since $x, x' \not= x_1,\dots, x_n$,
it follows that $\xi (x)$  and $\xi(x')$ are independent of $G(x_1), \dots, G(x_n)$, so we have $$
\mathrm{Cov}[\xi (x), \xi(x') \mid G(x_1), \dots, G(x_n)] = \mathrm{Cov}[\xi (x), \xi(x')] = \sigma^2 \delta(x,x').
$$
Note also that $\xi(x)$ is independent of $F(x'), G(x_1), \dots, G(x_n)$, and $\xi(x')$ is independent of $F(x), G(x_1), \dots, G(x_n)$
Therefore,
\begin{align*}
 & \mathrm{Cov}[ \ssg(x), \ssg(x') \mid G(x_1), \dots, G(x_n) ]  \\
 & =   \mathrm{Cov}[ (\ssf(x) + \xi(x) ), ( \ssf(x') + \xi(x') ) \mid  G(x_1), \dots, G(x_n) ]  \\
& = \mathrm{Cov}[\ssf(x), \ssf(x') \mid G(x_1), \dots, G(x_n) ] \\
& \quad + \mathrm{Cov}[\xi (x), \xi(x') \mid G(x_1), \dots, G(x_n)] \\
& = \mathrm{Cov}[\ssf(x), \ssf(x') \mid G(x_1), \dots, G(x_n) ] + \sigma^2 \delta(x,x').
\end{align*}

\end{proof}

Lemma~\ref{theo:relation-two-cov-funcs} implies that the posterior covariance in GP regression of the latent process $F$ is recovered from the posterior covariance in GP interpolation of the latent-plus-noise process $G$ by subtracting the noise covariance:
 \begin{align*}
\bar{k}(x,x') & = \mathrm{Cov}[ F(x), F(x') \mid Y_1 = y_1, \dots, Y_n = y_n ] \\
 & =\mathrm{Cov}[ G(x), G(x') \mid G(x_1) = y_1, \dots, G(x_n) = y_n ] - \sigma^2 \delta(x,x').
 \end{align*}
Indeed, the formula~\eqref{eq:posterior-variance} of the posterior covariance in GP regression can be obtained from the formula~\eqref{eq:posterior-variance_interp} of the posterior covariance in GP interpolation using the regularized kernel~\eqref{eq:reg-kernel}, which is the covariance of the latent-plus-noise process.
Using the identities in \eqref{eq:identities-vector-matrix}, we have
 \begin{align*}
& \mathrm{Cov}[ G(x), G(x') \mid G(x_1) = y_1, \dots, G(x_n) = y_n ] - \sigma^2 \delta(x,x') \\
& = k^\sigma(x,x') - {\bm k}_n^\sigma(x)^\top ({\bm K}_n^\sigma)^{-1} {\bm k}_n^\sigma(x') - \sigma^2 \delta(x,x')  \\
& = k(x,x') - {\bm k}_n(x)^\top ({\bm K}_n + \sigma^2 {\bm I}_n)^{-1} {\bm k}_n(x'),
 \end{align*}
which is identical to the formula~\eqref{eq:posterior-variance} for GP regression.

Lemmas~\ref{lemma:relation-post-mean-gp-int-reg}~and~\ref{theo:relation-two-cov-funcs} show that GP regression is a specific case of GP interpolation using the latent-plus-noise process, whose covariance is the regularized kernel.
This formulation of GP regression enables understanding its equivalence to kernel ridge regression, introduced below, from the viewpoint of the equivalence between GP interpolation and RKHS interpolation in Chapter~\ref{sec:interpolation}.

\section{Kernel Ridge Regression}
\label{sec:kernel-ridge-regression}
\label{sec:KRR}

Kernel ridge regression (KRR), also known as \emph{spline smoothing} \citep{wahba1990spline} for a specific case, is a regression method based on a positive definite kernel $k$ and its RKHS $\cH_k$.
Its estimator of the unknown regression function $f^*$ in \eqref{eq:regres_noise_model} is the solution to the following regularized least-squares problem:
\begin{equation}
 \label{eq:square-loss}
 \hat{f} = \argmin_{f\in\cH_k} \frac{1}{n} \sum_{i=1}^n ( f(x_i) - y_i )^2 + \lambda \|f\|_{\cH_k} ^2,
\end{equation}
where, as before, $(x_1,y_1), \dots, (x_n, y_n) \in \cX \times \mathbb{R}$ are training data, and  $\lambda > 0$ is a regularization constant.

In the objective function~\eqref{eq:square-loss}, the first term is the squared error, and the second term is a regularization term.
Intuitively, since the RKHS norm $\| f \|_{\cH_k}$ increases as $f$ gets less smooth (see Example \ref{ex:matern-rkhs} and Section \ref{sec:theory}), this regularization term penalizes an over-complicated~$f$.
The constant $\lambda$, which is a hyperparameter of KRR, controls the strength of this penalization.

\paragraph{Representer Theorem.}
While the optimization problem~\eqref{eq:square-loss} is over the RKHS $\cH_k$, which can be infinite-dimensional, its solution is given as a finite linear combination of $k(\cdot,x_1), \dots, k(\cdot,x_n)$,
\begin{equation} \label{eq:KRR-estimator-weighted}
\hat{f} = \sum_{i=1}^n \alpha_i k(\cdot,x_i),
\end{equation}
for some coefficients $\alpha_1,\dots,\alpha_n \in \Re$.
This is an instance of the {\em representer theorem} \citep[Theorem 1]{SchHerSmo01} and can be shown as follows.

Similar to Section~\ref{sec:kernel-interpolation-new} on RKHS interpolation, let $\cS_n \subset \cH_k$ be the RKHS subspace spanned by $k(\cdot, x_1), \dots, k(\cdot, x_n)$ and $\cS_n^\perp \subset \cH_k$ be its orthogonal complement:
\begin{align*}
& \cS_n := \left\{ c_1 k(\cdot, x_1) + \cdots + c_n k(\cdot, x_n):~   c_1, \dots, c_n \in \mathbb{R}  \right\}, \\
& \cS_n^\perp := \left\{ h \in \cH_k:~ \left< h, f  \right>_{\cH_k} = 0 \text{ for all } f \in \cS_n \right\}. \nonumber
\end{align*}
Any $f \in \cH_k$ can be  decomposed uniquely as
$$
f = f_n + f_n^\perp \quad \text{with} \quad f_n \in S_n \quad \text{and} \quad f_n^\perp \in S_n^\perp .
$$
Then for any $i = 1,\dots, n$, we have, by the reproducing property and $\left< f_n^\perp, k(\cdot,x_i) \right>_{\cH_k} = 0$,
\begin{align*}
 f(x_i) & = \left<f, k(\cdot,x_i) \right>_{\cH_k} = \left<f_n + f_n^\perp, k(\cdot,x_i) \right>_{\cH_k}  \\
& = \left< f_n, k(\cdot,x_i) \right>_{\cH_k} = f_n(x_i).
\end{align*}
Therefore, the first term in \eqref{eq:square-loss} depends only on the component $f_n$ of $f$ that lies in the subspace $\cS_n$.
On the other hand, the RKHS norm of $f_n$ is not greater than $f$, since
$$
\| f \|_{\cH_k}^2 = \| f_n \|_{\cH_k}^2 + \| f_n^\perp \|_{\cH_k}^2 \geq \| f_n \|_{\cH_k}^2.
$$
If $f$ has non-zero orthogonal component $f_n^\perp \in \cS_n^\perp$, then this inequality becomes strict, i.e., $\| f \|_{\cH_k} > \| f_n \|_{\cH_k}$ and the objective value~\eqref{eq:square-loss} for $f_n$ becomes strictly smaller than that for $f$.
Thus, the optimal solution $\hat{f}$ of \eqref{eq:square-loss} should not have the orthogonal component and must lie in the subspace $\cS_n$; this implies that $\hat{f}$ must be given in the form~\eqref{eq:KRR-estimator-weighted}.

\paragraph{Analytic Solution.}
Now, the optimization problem~\eqref{eq:square-loss} can be solved by optimizing the coefficients $\alpha_1, \dots, \alpha_n \in \mathbb{R}$ in \eqref{eq:KRR-estimator-weighted}. This can be done straightforwardly using the reproducing property.
The resulting analytic solution, well known in the literature, is summarized below.
For completeness, we provide the remaining proof.

\begin{theorem} \label{theo:KRR-estimator}
Let $k: \mathcal{X} \times \mathcal{X} \to \mathbb{R}$ be a kernel, $\cH_k$ be its RKHS, and $(x_i, y_i)_{i=1}^n \subset \mathcal{X} \times \mathbb{R}$.
If $\lambda > 0$, then the solution $\hat{f}$ to the optimization problem~\eqref{eq:square-loss} is unique as an element in $\cH_k$ and is given by
\begin{align}
   & \hat{f} (x) = {\bm k}_n(x)^\top({\bm K}_n + n \lambda {\bm I}_n)^{-1} {\bm y}_n \quad (x \in \cX),  \label{eq:KRR_estimator}
\end{align}
where ${\bm K}_n := (k(x_i, x_j))_{i,j = 1}^n \in \mathbb{R}^{n \times n}$, ${\bm k}_n(x) :=(k(x_1,x),\ldots,k(x_n,x))^\top \in \mathbb{R}^n$, and ${\bm y}_n :=(y_1,\dots,y_n)^\top \in \mathbb{R}^n$.
\end{theorem}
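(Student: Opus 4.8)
The plan is to complete the proof from where the representer-theorem argument in the excerpt leaves off: we already know the minimizer, if it exists, must lie in the subspace $\cS_n = \spa\{k(\cdot,x_1),\dots,k(\cdot,x_n)\}$ and hence take the form $\hat f = \sum_{i=1}^n \alpha_i k(\cdot,x_i)$. So it remains to (i) argue existence and uniqueness of the minimizer as an element of $\cH_k$, and (ii) identify the optimal coefficient vector ${\bm\alpha}_n = (\alpha_1,\dots,\alpha_n)^\top$.

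For uniqueness, I would first observe that the objective $J(f) := \frac{1}{n}\sum_{i=1}^n (f(x_i)-y_i)^2 + \lambda\|f\|_{\cH_k}^2$ is \emph{strictly convex} on $\cH_k$ whenever $\lambda>0$: the data-fit term is convex (it is a sum of squares of bounded linear functionals, using the boundedness of evaluation functionals on $\cH_k$), and the regularizer $f\mapsto\lambda\|f\|_{\cH_k}^2$ is strictly convex. Strict convexity gives at most one minimizer; existence follows because $J$ is coercive (it dominates $\lambda\|f\|_{\cH_k}^2$) and continuous, or simply because, once restricted to the finite-dimensional $\cS_n$ via the representer theorem, $J$ becomes a continuous coercive function on $\mathbb{R}^n$ (modulo the kernel of the parametrization). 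The main subtlety to flag here is that if ${\bm K}_n$ is singular the parametrization ${\bm\alpha}\mapsto\sum_i\alpha_i k(\cdot,x_i)$ is not injective, so uniqueness holds for the \emph{function} $\hat f$, not necessarily for ${\bm\alpha}_n$; the theorem only claims uniqueness of $\hat f$ in $\cH_k$, which is exactly what strict convexity delivers.

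Next I would plug $f=\sum_i\alpha_i k(\cdot,x_i)$ into $J$ and use the reproducing property to get the finite-dimensional problem: $f(x_j)=({\bm K}_n{\bm\alpha})_j$ and $\|f\|_{\cH_k}^2={\bm\alpha}^\top{\bm K}_n{\bm\alpha}$, so
\begin{equation*}
J(f) = \frac{1}{n}\left\| {\bm K}_n{\bm\alpha} - {\bm y}_n \right\|^2 + \lambda\,{\bm\alpha}^\top{\bm K}_n{\bm\alpha}.
\end{equation*}
Setting the gradient in ${\bm\alpha}$ to zero yields $\tfrac{2}{n}{\bm K}_n({\bm K}_n{\bm\alpha}-{\bm y}_n) + 2\lambda{\bm K}_n{\bm\alpha} = {\bm 0}$, i.e.\ ${\bm K}_n\bigl[({\bm K}_n+n\lambda{\bm I}_n){\bm\alpha}-{\bm y}_n\bigr]={\bm 0}$. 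Since ${\bm K}_n$ is positive semidefinite and $\lambda>0$, the matrix ${\bm K}_n+n\lambda{\bm I}_n$ is positive definite, hence invertible, so ${\bm\alpha}_n := ({\bm K}_n+n\lambda{\bm I}_n)^{-1}{\bm y}_n$ makes the bracket vanish and is therefore a stationary point; by convexity it is a global minimizer, and the associated function is the (unique) KRR solution. Reading off $\hat f(x)=\sum_i\alpha_i k(x_i,x)={\bm k}_n(x)^\top{\bm\alpha}_n = {\bm k}_n(x)^\top({\bm K}_n+n\lambda{\bm I}_n)^{-1}{\bm y}_n$ gives \eqref{eq:KRR_estimator}. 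The only genuinely delicate point is the singular-${\bm K}_n$ case in the stationarity equation, handled as above by exhibiting an explicit solution and invoking convexity rather than trying to invert ${\bm K}_n$.
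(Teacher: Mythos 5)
Your proposal is correct and follows essentially the same route as the paper: restrict to the span of $k(\cdot,x_1),\dots,k(\cdot,x_n)$ via the orthogonality argument, reduce to a finite-dimensional quadratic in ${\bm\alpha}$, exhibit ${\bm\alpha}_n=({\bm K}_n+n\lambda{\bm I}_n)^{-1}{\bm y}_n$ as a solution of the (possibly degenerate) stationarity equation, and invoke convexity to conclude it is a global minimizer. The only cosmetic difference is that you establish uniqueness of $\hat f$ directly from the strict convexity of $f\mapsto\lambda\|f\|_{\cH_k}^2$, whereas the paper cites the convexity of the square loss together with a standard result; both are valid, and your handling of the singular-${\bm K}_n$ case matches the paper's.
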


\begin{proof}

Let ${\bm \alpha} := (\alpha_1,\dots,\alpha_n)\Trans \in \Re^n$.
By substituting the expression \eqref{eq:KRR-estimator-weighted} in \eqref{eq:square-loss} and using the reproducing property, the optimization problem now becomes
\begin{equation} \label{eq:KRR-finite-dim-objective}
\min_{{\bm \alpha} \in \Re^n} \frac{1}{n} \left[ {\bm \alpha}\Trans {\bm K}_n^2 {\bm \alpha} - 2 {\bm \alpha}\Trans {\bm K}_n {\bm y}_n + \| {\bm y}_n \|^2 \right] + \lambda {\bm \alpha}\Trans {\bm K}_n {\bm \alpha}.
\end{equation}
Differentiating this objective function with respect to ${\bm \alpha}$, setting it equal to $0$ and arranging the resulting equation yields
\begin{equation} \label{eq:KRR-finite-dim-opt}
{\bm K}_n ({\bm K}_n + n \lambda {\bm I}_n) {\bm \alpha} = {\bm K}_n {\bm y}_n.
\end{equation}
The vector
$$
{\bm \alpha} = ({\bm K}_n + n \lambda \Id_n)^{-1} {\bm y}_n
$$
is one of the solutions to \eqref{eq:KRR-finite-dim-opt}.
Since \eqref{eq:KRR-finite-dim-objective} is a convex function of ${\bm \alpha}$ (while it may not be strictly convex unless ${\bm K}_n$ is strictly positive definite or invertible), this ${\bm \alpha}$ attains the minimum of \eqref{eq:KRR-finite-dim-objective}.
Thus, the function \eqref{eq:KRR-estimator-weighted} with ${\bm \alpha} = ({\bm K}_n + n \lambda \Id_n)^{-1} {\bm y}_n$ attains the minimum of \eqref{eq:square-loss}.
Moreover, it is the unique solution as an element in $\cH_k$ since the square loss is convex\footnote{A loss function $L: \Re \times \Re \to \Re$ is called convex, if $L(y,\cdot): \Re \to \Re$ is convex for all fixed
$y \in \Re$ \citep[Definition 2.12]{Steinwart2008}} \citep[Theorem 5.5]{Steinwart2008}.

\end{proof}

As a preparation for studying the equivalence between GPR and KRR, we shall show that KRR can be interpreted as a special case of RKHS interpolation in the next section.
This is parallel to what we have shown for GPR in Section~\ref{sec:GP-regres-interpo}.

\begin{remark} \label{rem:KI-as-KRR}
    It is also possible to interpret KRR as subsuming RKHS interpolation.
    Dividing the objective function~\eqref{eq:square-loss} by the regularization constant $\lambda > 0$, the KRR objective can be written as
    \begin{equation*}
 \hat{f} = \argmin_{f\in\cH_k} \frac{1}{n \lambda} \sum_{i=1}^n ( f(x_i) - y_i )^2 + \|f\|_{\cH_k} ^2.
\end{equation*}
By taking the limit $\lambda \to 0$, the first term becomes the hard constraint that $f(x_i) = y_i$ for all $i = 1,\dots, n$, leading to the minimum-norm interpolation problem.

\end{remark}

\section{Deriving Kernel Ridge Regression from RKHS interpolation}
\label{sec:KRR-as-KI}

We will show that KRR~\eqref{eq:square-loss} is a special case of RKHS interpolation~\eqref{eq:kernel-interp-opt} when using the regularized kernel in \eqref{eq:reg-kernel}.
To this end, let us first study the RKHS of the regularized kernel.

\subsubsection{RKHS of the Regularized Kernel}

We rewrite the regularized kernel in \eqref{eq:reg-kernel} as the sum of the original kernel and the noise covariance kernel:
$$
k^\sigma(x,x') = k(x,x') + \ell_{\sigma^2}(x,x'),
$$
where $\ell_{\sigma^2} : \cX \times \cX \to \mathbb{R}$ is the noise covariance kernel and defined as
\begin{equation} \label{eq:delta-kernel-sigma-6354}
    \ell_{\sigma^2} (x,x') := \sigma^2 \delta(x,x') = \begin{cases}
    \sigma^2 & {\rm if}~~  x = x' \\
    0 & {\rm if}~~  x \not= x'
\end{cases}.
\end{equation}
The noise kernel takes the value $\sigma^2$ only when the two input locations coincide, and takes zero for distinct input locations.
Its RKHS, denoted here by $\mathcal{H}_{\sigma^2}$, consists of weighted sums of such ``spikes'':
\begin{equation} \label{eq:RKHS-noise-6369}
    \mathcal{H}_{\sigma^2}
=  \left\{
 \eta = \sum_{x \in \mathcal{X}} c_x \ell_{\sigma^2}(\cdot,x): ~  (c_x)_{ x \in \cX} \subset \mathbb{R},~~  \left\| \eta   \right\|_{\mathcal{H}_{\sigma^2}}^2 = \sigma^2 \sum_{x \in \cX} c_x^2 < \infty   \right\},
\end{equation}
where its inner product is given by
\begin{align*}
    & \left< \eta, \eta' \right>_{\mathcal{H}_{\sigma^2}} = \sigma^2 \sum_{x \in \cX} c_x c_x', \\
    &\text{for any}\quad  \eta = \sum_{x \in \mathcal{X}} c_x \ell_{\sigma^2}(\cdot,x) \in \mathcal{H}_{\sigma^2},\quad  \eta' = \sum_{x \in \mathcal{X}} c_x' \ell_{\sigma^2}(\cdot,x) \in \mathcal{H}_{\sigma^2}.
\end{align*}
Therefore, the spikes of height $\sigma$, the noise standard deviation,
$$
(\sigma^{-1} \ell_{\sigma^2}(\cdot,x))_{x \in \cX} = (\sigma \delta(\cdot,x))_{x \in \cX}
$$
are an orthonormal basis of $\cH_{\sigma^2}$ \citep[e.g.,][Example 3.9]{SteSco12}.\footnote{Thus, if $\cX$ is uncountable (e.g., $\cX = [0,1])$, then $\cH_{\sigma^2}$ is not separable; see \citet[Example 3.9]{SteSco12}.}
We will call $\cH_{\sigma^2}$ {\em noise RKHS}.

Since the regularized kernel is the sum of the original and noise kernels, the regularized kernel's RKHS  consists of functions that can be written as a sum of a function from the original RKHS and a function from the noise RKHS \citep[e.g.,][Section 6]{Aronszajn1950}:
\begin{align} \label{eq:RKHS-new-kernel-6382}
\cH_{k^\sigma} & = \left\{g = f + \eta:  \ f \in \cH_k,\ \eta \in \cH_{\sigma^2} \right\},
\end{align}
where the squared norm of $g \in \cH_{k^\sigma}$ is the minimum of the sum of the squared norm of $f \in \cH_k$ and the squared norm of $\eta \in \cH_{\sigma^2}$, where the minimum is over pairs of functions $f$ and $\eta$ whose sum is $g$:
$$
 \| g \|_{k^\sigma}^2 = \min_{\substack{f \in \cH_k, \eta \in \cH_{\sigma^2 }, \\ g = f + \eta}} \| f \|_{\cH_k}^2 + \| \eta \|_{\cH_{\sigma^2}}^2 .
$$
In other words, each function $g$ in the regularized kernel's RKHS is an original RKHS function $f$ plus a noise function $\eta$.

\subsubsection{Minimum-norm Interpolation with the Regularized Kernel}

Consider applying RKHS interpolation~\eqref{eq:kernel-interp-opt} using the regularized kernel's RKHS~\eqref{eq:RKHS-new-kernel-6382}, which consists of original-plus-noise functions, to noisy training observations $(x_1,y_1), \dots, (x_n,y_n)$:
\begin{equation}   \label{eq:min-norm-interp-aug-6353}
\hat{g} := \argmin_{g \in \cH_{k^\sigma}} \| g  \|_{\cH_{ k^\sigma}}\quad {\rm subject\ to}\ \quad   g(x_i) = y_i, \quad i = 1,\dots,n.
\end{equation}
Note that the kernel matrix ${\bm K}_n^
\sigma := ( k^\sigma(x_i,x_j) )_{i,j=1}^n$ is invertible, as
$$
{\bm K}_n^
\sigma
= ( k(x_i,x_j) + \sigma^2 \delta(x_i, x_j) )_{i,j=1}^n
= {\bm K}_n + \sigma^2 {\bm I}_n,
$$
where the second identity follows from the assumption  $x_i \not= x_j$ for $i \not= j$.
Therefore, Theorem~\ref{theo:kernel-interpolation} implies that the solution $\hat{g}$ to \eqref{eq:min-norm-interp-aug-6353} is unique.

By using\footnote{Set $\ell := \ell_{\sigma^2}$ and $v := k^\sigma$ in Lemma~\ref{lemma:sum-kernel-min-norm}, for which the condition~\eqref{eq:cond-lemma-8146} is satisfied by taking, e.g., $g := \sum_{i=1}^n y_i \delta(\cdot, x_i)$ and $h := \sum_{i=1}^n (y_i - f(x_i) ) \delta(\cdot,x_i)$.} Lemma~\ref{lemma:sum-kernel-min-norm} in Appendix~\ref{sec:proof-KRR-as-interpolation}, the solution $\hat{g}$  can be written as the sum of an original RKHS function $\hat{f}$ and a noise RKHS function $\hat{\eta}$ such that the sum of their respective squared RKHS norms is minimized while the interpolation constraints are satisfied:
\begin{align}
&\hat{g} = \hat{f} + \hat{\eta}, \nonumber \\ \text{where}\quad & (\hat{f}, \hat{\eta}) := \argmin_{ \substack{ f \in \cH_k,~ \eta \in \cH_{\sigma^2}, \\ {\bm f}_n + {\bm \eta}_n = {\bm y}_n } }  \| f \|_{\cH_k}^2 + \| \eta \|_{\cH_{\sigma^2}}^2,  \label{eq:regression-decomposition}
\end{align}
where ${\bm f}_n + {\bm \eta}_n = {\bm y}_n$ represents the interpolation constraints as
\begin{align*}
& {\bm f}_n := (f(x_1), \dots, f(x_n))^\top, \quad {\bm \eta}_n := (\eta(x_1), \dots, \eta(x_n))^\top   \\
& {\bm y}_n := (y_1,\dots,y_n)^\top.
\end{align*}
Thus, RKHS interpolation~\eqref{eq:min-norm-interp-aug-6353} using the regularized kernel's RKHS is equivalent to the joint optimization~\eqref{eq:regression-decomposition} of the original and noise RKHS functions.

A representer theorem holds for the optimal noise function $\hat{\eta}$ in
\eqref{eq:regression-decomposition} as it is given as a weighted sum of ``spikes'' only at the training locations, as summarized below. See Appendix~\ref{sec:proof-noise-represent} for the proof.

\begin{lemma} \label{lemma:noise-represent}
Let $\hat{\eta} \in \cH_{\sigma^2}$ be the solution in
\eqref{eq:regression-decomposition}.
Then we have
\begin{equation} \label{eq:noise-func-form-6423}
    \hat{\eta} = \sum_{i=1}^n \hat{b}_i \sigma \delta(\cdot, x_i) \quad {\rm for~some}~~ \hat{b}_1, \dots, \hat{b}_n \in \mathbb{R},
\end{equation}

\end{lemma}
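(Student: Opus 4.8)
The plan is to mimic the representer-theorem argument already used twice in the chapter (for RKHS interpolation in Section~\ref{sec:kernel-interpolation-new} and for KRR in Section~\ref{sec:KRR}), but now carried out inside the noise RKHS $\cH_{\sigma^2}$. Recall from \eqref{eq:RKHS-noise-6369} that $(\sigma\delta(\cdot,x))_{x\in\cX}$ is an orthonormal basis of $\cH_{\sigma^2}$, so every $\eta\in\cH_{\sigma^2}$ has an orthogonal decomposition $\eta = \eta_n + \eta_n^\perp$, where $\eta_n$ lies in the finite-dimensional subspace $\mathcal{T}_n := \spa\{ \sigma\delta(\cdot,x_1),\dots,\sigma\delta(\cdot,x_n) \}$ and $\eta_n^\perp$ is orthogonal to it, with $\| \eta \|_{\cH_{\sigma^2}}^2 = \| \eta_n \|_{\cH_{\sigma^2}}^2 + \| \eta_n^\perp \|_{\cH_{\sigma^2}}^2$. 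The key observation is that the evaluation of the orthogonal part at any training point vanishes: by the reproducing property in $\cH_{\sigma^2}$, $\eta_n^\perp(x_i) = \langle \eta_n^\perp,\, \ell_{\sigma^2}(\cdot,x_i) \rangle_{\cH_{\sigma^2}} = 0$ since $\ell_{\sigma^2}(\cdot,x_i) = \sigma \cdot \sigma\delta(\cdot,x_i) \in \mathcal{T}_n$. Hence $\eta(x_i) = \eta_n(x_i)$ for $i=1,\dots,n$.

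First I would take the optimal pair $(\hat f,\hat\eta)$ in \eqref{eq:regression-decomposition} and write $\hat\eta = \hat\eta_n + \hat\eta_n^\perp$ as above. The constraint ${\bm f}_n + {\bm \eta}_n = {\bm y}_n$ depends on $\hat\eta$ only through $(\hat\eta(x_1),\dots,\hat\eta(x_n)) = (\hat\eta_n(x_1),\dots,\hat\eta_n(x_n))$, so the pair $(\hat f, \hat\eta_n)$ still satisfies the interpolation constraints. Meanwhile the objective value for $(\hat f,\hat\eta_n)$ is $\| \hat f \|_{\cH_k}^2 + \| \hat\eta_n \|_{\cH_{\sigma^2}}^2 \leq \| \hat f \|_{\cH_k}^2 + \| \hat\eta \|_{\cH_{\sigma^2}}^2$, with equality iff $\hat\eta_n^\perp = 0$. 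Since $(\hat f,\hat\eta)$ is optimal, we must have $\hat\eta_n^\perp = 0$, i.e. $\hat\eta = \hat\eta_n \in \mathcal{T}_n$, which is exactly the form \eqref{eq:noise-func-form-6423} with $\hat b_i$ the coefficients of $\hat\eta$ in the orthonormal basis $(\sigma\delta(\cdot,x_i))_{i=1}^n$.

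The one point needing a little care — and the place I'd expect to do the most work — is the claim that the minimizer of \eqref{eq:regression-decomposition} is \emph{unique enough} for this argument to pin down $\hat\eta_n^\perp = 0$: if the optimization had multiple minimizers, one could only conclude that \emph{some} minimizer has the stated form. I would handle this either by invoking uniqueness of the solution $\hat g$ to the minimum-norm interpolation problem \eqref{eq:min-norm-interp-aug-6353} (guaranteed by Theorem~\ref{theo:kernel-interpolation}, since ${\bm K}_n^\sigma = {\bm K}_n + \sigma^2{\bm I}_n$ is invertible) together with the fact that the decomposition $\hat g = \hat f + \hat\eta$ achieving the minimum in the norm formula for $\cH_{k^\sigma}$ is itself essentially determined — or, more simply and self-containedly, by the strict-inequality observation above: \emph{any} optimal pair must have $\hat\eta_n^\perp = 0$ because otherwise replacing $\hat\eta$ by $\hat\eta_n$ strictly decreases the objective while preserving feasibility, contradicting optimality. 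This strict-convexity-type step is clean and needs no invertibility of ${\bm K}_n$ itself, only the orthogonal-decomposition identity in $\cH_{\sigma^2}$ and the vanishing of $\eta_n^\perp$ at training points, both of which follow directly from the structure of the noise RKHS recalled in \eqref{eq:RKHS-noise-6369}. I would close by noting that this is the exact analogue, for the noise component, of the representer theorem for $\hat f$ established in Section~\ref{sec:KRR}.
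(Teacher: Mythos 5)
Your proof is correct and is essentially the same argument as the paper's: the paper expands $\hat\eta$ in the orthonormal basis $(\sigma\delta(\cdot,x))_{x\in\cX}$ and shows that truncating to the coefficients at $x_1,\dots,x_n$ (which is exactly your orthogonal projection onto $\mathcal{T}_n$) preserves the constraints while strictly decreasing the norm whenever any off-training-point coefficient is nonzero, contradicting optimality. Your closing observation that the strict-decrease step resolves the uniqueness worry is also how the paper handles it.
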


Using Lemma~\ref{lemma:noise-represent}, the optimization problem~\eqref{eq:regression-decomposition} can be simplified to the joint minimization of a function's RKHS norm and the squared sum of noise levels at the training locations, subject to the constraints that each training observation is the sum of the function's value and the noise at that training location.
More precisely, set the noise function in the form of \eqref{eq:noise-func-form-6423},
$$
\eta = \sum_{i=1}^n b_i \sigma \delta(\cdot,x_i), \quad \text{for which}~~~ \| \eta \|_{\cH_{\sigma^2}}^2 = \sum_{i=1}^n b_i^2,
$$
where $b_1, \dots, b_n \in \mathbb{R}$ represent the noise levels at the training locations $x_1, \dots, x_n$.
Then the problem~\eqref{eq:regression-decomposition} is equivalent to optimizing an RKHS function $f$ and noise levels $b_1, \dots, b_n$ such that
\begin{align} \label{eq:KRR-KI-connection-optim}
\hat{f},~ \hat{b}_1, \dots, \hat{b}_n = & \argmin_{f \in \cH_k,\ b_1,\dots,b_n \in \mathbb{R}}  \| f \|_{\cH_k}^2 +  \sum_{i=1}^n b_i^2 \\
& \text{\rm subject to}~~    y_i = f(x_i) + b_i \sigma \ \  (i=1,\dots,n). \nonumber
\end{align}
The RKHS interpolant~\eqref{eq:min-norm-interp-aug-6353} using the regularized kernel is then given as  the sum of the optimal RKHS function $\hat{f}$ and the noise function with the optimal noise levels $\hat{b}_1, \dots, \hat{b}_n$:
\begin{equation} \label{eq:KRR-KI-connection}
\hat{g} = \hat{f} + \sum_{i=1}^n \hat{b}_i \sigma \delta (\cdot, x_i),
\end{equation}

\subsubsection{KRR as RKHS Interpolation}

A further simplification of the optimization problem~\eqref{eq:KRR-KI-connection-optim} shows that the optimal RKHS function $\hat{f}$ is a KRR estimator.
The constraints in \eqref{eq:KRR-KI-connection-optim} mean that the noise level $b_i$ at each training location $x_i$ is the difference between the corresponding observation $y_i$ and function value~$f(x_i)$ divided by the noise standard deviation $\sigma$:
$$
b_i = \sigma^{-1} (y_i - f(x_i)) \quad (i = 1,\dots, n).
$$
Therefore, the noise levels represent the {\em training residuals} of a function, scaled by the noise standard deviation.

Using this expression, the optimization problem~\eqref{eq:KRR-KI-connection-optim} is reduced to the {\em joint minimization of a function's RKHS norm and the squared sum of the training residuals}, i.e., the {\em regularized least squares}:
\begin{align*}
\hat{f}
&= \argmin_{f \in \cH_k} \| f \|_{\cH_k}^2 + \sum_{i=1}^n  \sigma^{-2} (f(x_i) - y_i)^2 \\
&= \argmin_{f \in \cH_k} \sigma^2 \| f \|_{\cH_k}^2 + \sum_{i=1}^n    (f(x_i) - y_i)^2.
\end{align*}
Indeed, this is identical to the KRR optimization problem~\eqref{eq:square-loss} with kernel $k$ and regularization constant $\lambda = \sigma^2 / n$.

Thus, the minimum RKHS-norm interpolation using the regularized kernel's RKHS, where each function is the sum of an original RKHS function and a noise function, recovers the regularized least squares of KRR.
We summarize this below.

\begin{theorem} \label{coro:connection-KRR-KI}
Let $(x_i, y_i)_{i=1}^n \subset \mathcal{X} \times \mathbb{R}$ and assume \eqref{eq:distint-inputs-5887}.
Let $k: \cX \times \cX \to \mathbb{R}$ be a kernel and  $k^\sigma : \cX \times \cX \to \mathbb{R}$ be the regularized kernel in \eqref{eq:reg-kernel} with $\sigma > 0$.
Let $\hat{g}$ be the RKHS interpolant~\eqref{eq:min-norm-interp-aug-6353} using $k^\sigma$, and $\hat{f}$ be the KRR estimator~\eqref{eq:square-loss} using $k$ and $\lambda := \sigma^2 / n$.
Then we have
\begin{equation} \label{eq:KRR-KI-connection-solution}
\hat{g}  = \hat{f} + \sum_{i=1}^n (y_i - \hat{f}(x_i) ) \delta(\cdot, x_i).
\end{equation}
\end{theorem}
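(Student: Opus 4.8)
The plan is to read off the result from the decomposition of the regularized kernel's RKHS established just above. First I would recall that $k^\sigma = k + \ell_{\sigma^2}$ with $\ell_{\sigma^2} = \sigma^2\delta$, so that by the standard sum-of-kernels construction \citep[Section 6]{Aronszajn1950}, $\cH_{k^\sigma}$ is the vector-space sum $\{g = f + \eta : f \in \cH_k,\ \eta \in \cH_{\sigma^2}\}$ equipped with the infimal-convolution norm $\|g\|_{k^\sigma}^2 = \min_{g = f + \eta} \|f\|_{\cH_k}^2 + \|\eta\|_{\cH_{\sigma^2}}^2$. Since the inputs are pairwise distinct by \eqref{eq:distint-inputs-5887}, the Gram matrix ${\bm K}_n^\sigma = {\bm K}_n + \sigma^2 {\bm I}_n$ is invertible, so Theorem~\ref{theo:kernel-interpolation} gives a unique minimum-norm interpolant $\hat g$ in \eqref{eq:min-norm-interp-aug-6353}.

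Next I would convert the minimum-norm interpolation into a joint optimization over the pair $(f,\eta)$. Because the $k^\sigma$-norm of $g$ is the minimum of $\|f\|_{\cH_k}^2 + \|\eta\|_{\cH_{\sigma^2}}^2$ over decompositions $g = f + \eta$, minimizing $\|g\|_{\cH_{k^\sigma}}$ subject to $g(x_i) = y_i$ is equivalent to minimizing $\|f\|_{\cH_k}^2 + \|\eta\|_{\cH_{\sigma^2}}^2$ subject to $f(x_i) + \eta(x_i) = y_i$, and the optimal $\hat g = \hat f + \hat\eta$ is assembled from the minimizing pair; this is exactly Lemma~\ref{lemma:sum-kernel-min-norm}. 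Lemma~\ref{lemma:noise-represent} (a representer theorem for the noise component) then lets me restrict to $\eta = \sum_{i=1}^n b_i\, \sigma\delta(\cdot,x_i)$, for which $\eta(x_i) = b_i\sigma$ and $\|\eta\|_{\cH_{\sigma^2}}^2 = \sum_{i=1}^n b_i^2$. Eliminating the noise variables via the constraint $b_i = \sigma^{-1}(y_i - f(x_i))$ reduces the objective to $\|f\|_{\cH_k}^2 + \sigma^{-2}\sum_{i=1}^n (f(x_i) - y_i)^2$, which after scaling by $\sigma^2$ is the KRR objective \eqref{eq:square-loss} with $\lambda = \sigma^2/n$; uniqueness of the KRR minimizer (Theorem~\ref{theo:KRR-estimator}) identifies the optimal $f$ as $\hat f$, whence $\hat b_i = \sigma^{-1}(y_i - \hat f(x_i))$ and $\hat\eta = \sum_i \hat b_i\,\sigma\delta(\cdot,x_i) = \sum_i (y_i - \hat f(x_i))\delta(\cdot,x_i)$. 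Adding $\hat f$ and $\hat\eta$ gives the claimed identity \eqref{eq:KRR-KI-connection-solution}.

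The main obstacle — and the reason the two auxiliary lemmas are needed — is the step from the abstract minimum-norm interpolation over the sum space $\cH_{k^\sigma}$ to the concrete finite-dimensional joint optimization over $(f, b_1,\dots,b_n)$: one must check that the infimal-convolution norm is attained, that the interpolation constraint on $g$ decouples cleanly into $f(x_i) + \eta(x_i) = y_i$, and that the optimal noise component is supported only on the training inputs — the last point being delicate precisely because $\cH_{\sigma^2}$ is non-separable when $\cX$ is uncountable. Once Lemmas~\ref{lemma:sum-kernel-min-norm} and~\ref{lemma:noise-represent} are in hand, the remaining computation is routine algebra.
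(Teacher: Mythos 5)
Your proposal is correct and follows essentially the same route as the paper: it invokes the sum-of-kernels description of $\cH_{k^\sigma}$, Lemma~\ref{lemma:sum-kernel-min-norm} to pass to the joint optimization over $(f,\eta)$, Lemma~\ref{lemma:noise-represent} to localize the noise component on the training inputs, and the elimination $b_i = \sigma^{-1}(y_i - f(x_i))$ to recover the KRR objective with $\lambda = \sigma^2/n$. The only (welcome) addition is your explicit appeal to the uniqueness of the KRR minimizer from Theorem~\ref{theo:KRR-estimator} when identifying the optimal $f$ with $\hat f$, which the paper leaves implicit.
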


Theorem~\ref{coro:connection-KRR-KI} implies that the RKHS interpolant using the regularized kernel is identical to the KRR estimator at any test location different from the training locations:
$$
\hat{g}(x) = \hat{f}(x) \quad \text{for all }~ x \not= x_1, \dots, x_n,
$$
since the second term in \eqref{eq:KRR-KI-connection-solution} vanishes for such $x$.

Theorem~\ref{coro:connection-KRR-KI} is similar to Lemma~\ref{theo:relation-two-cov-funcs} showing that GP interpolation subsumes GP regression.
We will next discuss this connection.

\begin{remark}

The analytic expression of KRR in Theorem~\ref{theo:KRR-estimator} can be obtained using Theorems~\ref{coro:connection-KRR-KI} and \ref{theo:kernel-interpolation} and the identities in \eqref{eq:identities-vector-matrix}, assuming \eqref{eq:distint-inputs-5887} to hold.

\end{remark}

\section{Equivalence between the GPR Posterior Mean and KRR}
\label{sec:GP-KRR-equivalence}

The equivalence between the GPR posterior mean function and the KRR estimator, as is well known in the literature~\citep{kimeldorf1970correspondence}, follows immediately from their analytic expressions in \eqref{eq:posteior_mean} and \eqref{eq:KRR_estimator}, as summarized below.

\begin{corollary}
     \label{theo:equivalnce}
Let $k: \mathcal{X} \times \mathcal{X} \to \mathbb{R}$ be a kernel and $(x_i, y_i)_{i=1}^n \subset \mathcal{X} \times \mathbb{R}$.
Then $\bar{m} = \hat{f}$, where
\begin{itemize}
    \item $\bar{m}: \cX \to \mathbb{R}$ is the posterior mean function~\eqref{eq:posteior_mean} of GPR with zero prior mean, covariance function $k$, and noise variance $\sigma^2 > 0$;
    \item $\hat{f}: \cX \to \mathbb{R}$ is the KRR estimator~\eqref{eq:KRR_estimator} using kernel $k$ and  regularization constant $\lambda = \sigma^2/n$.
\end{itemize}

\end{corollary}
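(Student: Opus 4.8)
The plan is to prove Corollary~\ref{theo:equivalnce} directly, by specialising the two closed-form expressions already established for the estimators and observing that they coincide. First I would take the GPR posterior mean formula~\eqref{eq:posteior_mean} and set the prior mean function to zero, i.e.\ $m \equiv 0$ so that ${\bm m}_n = {\bm 0} \in \mathbb{R}^n$, which gives
\[
\bar{m}(x) = {\bm k}_n(x)^\top ({\bm K}_n + \sigma^2 {\bm I}_n)^{-1} {\bm y}_n \qquad (x \in \cX).
\]
Next I would take the KRR formula~\eqref{eq:KRR_estimator} with regularisation constant $\lambda = \sigma^2/n$, so that $n\lambda {\bm I}_n = \sigma^2 {\bm I}_n$, giving
\[
\hat{f}(x) = {\bm k}_n(x)^\top ({\bm K}_n + \sigma^2 {\bm I}_n)^{-1} {\bm y}_n \qquad (x \in \cX).
\]
The right-hand sides are literally the same function of $x$ and ${\bm y}_n$, which proves $\bar{m} = \hat{f}$. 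I would also note that no invertibility assumption on ${\bm K}_n$ is needed: since ${\bm K}_n$ is positive semidefinite and $\sigma^2 > 0$, the matrix ${\bm K}_n + \sigma^2 {\bm I}_n$ is positive definite, and $\lambda = \sigma^2/n > 0$ is admissible in Theorem~\ref{theo:KRR-estimator}.

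The one point worth flagging explicitly is the domain of definition. By Theorem~\ref{theo:GP-posterior} and Remark~\ref{rem:disintegration}, the GPR posterior, and hence $\bar{m}$, is defined only for almost every ${\bm y}_n$ in the support of $(Y_1,\dots,Y_n)^\top$, whereas the KRR estimator is defined for every ${\bm y}_n \in \mathbb{R}^n$. The statement should therefore be read as: wherever $\bar{m}$ is defined it equals $\hat{f}$, and equivalently $\hat{f}$ furnishes a canonical version of the GPR posterior mean. This is immediate from the displays above, since both are the same explicit function of ${\bm y}_n$, so the measure-zero subtlety of the conditioning event plays no essential role.

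I do not expect a genuine obstacle here: the result is an algebraic coincidence of two matrix formulas once the identifications $m \equiv 0$ and $\lambda = \sigma^2/n$ are made. If a more conceptual derivation is preferred, I would instead route through the interpolation picture built in this chapter: Theorem~\ref{coro:connection-KRR-KI} realises KRR as minimum-norm interpolation with the regularised kernel $k^\sigma$, Lemma~\ref{lemma:relation-post-mean-gp-int-reg} together with the identities~\eqref{eq:identities-vector-matrix} realises the GPR posterior mean as the GP-interpolation posterior mean for the latent-plus-noise process with covariance $k^\sigma$, and the equivalence of GP and RKHS interpolants (Corollary~\ref{coro:equivalnce-interp}), applied to $k^\sigma$, then yields $\bar{m} = \hat{f}$ at test points distinct from the training locations; equality everywhere follows since both agree with the same shared closed form. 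In that route the only step needing care is the bookkeeping $n\lambda = \sigma^2$ matching the KRR regularisation constant to the noise variance, exactly as carried out in Section~\ref{sec:KRR-as-KI}.
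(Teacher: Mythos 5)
Your proposal is correct and matches the paper's own argument: the paper likewise derives the equivalence ``immediately from their analytic expressions'' in \eqref{eq:posteior_mean} and \eqref{eq:KRR_estimator} by setting $m \equiv 0$ and $\lambda = \sigma^2/n$, and then offers the same conceptual reinterpretation via GP/RKHS interpolation with the regularized kernel. Your added remarks on the non-necessity of invertibility of ${\bm K}_n$ and on the almost-everywhere definition of the conditioning are accurate refinements, not deviations.
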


This equivalence is understood as a special case of the equivalence between the GP and RKHS interpolants, which itself is a consequence of a more generic equivalence between the RKHS and GHS (Section~\ref{sec:interpolation}).
The posterior mean function of GPR is that of GP interpolation using the regularized kernel (Lemma~\ref{lemma:relation-post-mean-gp-int-reg} and its consequences).
The KRR estimator is the RKHS interpolant using the regularized kernel (Theorem~\ref{coro:connection-KRR-KI}).
Since the posterior mean function of GP interpolation is the RKHS interpolant for any kernel (Corollary~\ref{coro:equivalnce-interp}), the equivalence in Corollary~\ref{theo:equivalnce} holds.

We will next discuss RKHS interpretations of the posterior (co)variance in GPR, which also follow from those of the posterior covariance in GP interpolation.

\begin{remark}
The requirement $\lambda = \sigma^2 / n$ in Corollary~\ref{theo:equivalnce} has two interpretations.
First, if the GPR model \eqref{eq:regress-model-random}--\eqref{eq:Gaussian-noise} is correct, the regularization constant for KRR should decay at the rate $n^{-1}$ as the sample size $n$ increases; see Section~\ref{sec:convergence-GP-kernel} for details.
Second, assuming the existence of noise is equivalent to assuming that the latent function is smoother than the observed latent-plus-noise function, thus equivalent to performing smoothing = regularization.

\end{remark}

\section{RKHS Interpretations of the Posterior Variance}
\label{sec:geometric-worst-post-var}

RKHS interpretations of the posterior (co)variance in GP regression are obtained as corollaries to those for GP interpolation in Section~\ref{sec:RKHS-interp-GP-UQ}, as GP regression is a particular case of GP interpolation using the regularized kernel.
Focusing on the posterior variance, we will derive the RKHS interpretations step-by-step, assuming \eqref{eq:distint-inputs-5887} as before.

Let $G$ be the latent-plus-noise process~\eqref{eq:noise-contamination} whose covariance is the regularized kernel $k^\sigma$ in \eqref{eq:reg-kernel}:
\begin{align*}
 & G = F + \xi \sim \GP(0, k_\sigma),  \\
 & \text{where }~~ F \sim \GP(0,k),\quad \xi \sim \GP(0, \sigma^2 \delta).
\end{align*}
The conditional variance of the latent process's value $F(x)$ at a test location $x$, which is the posterior variance of GPR, plus the noise variance $\sigma^2$, is identical to the conditional variance of the latent-plus-noise process's value $G(x)$ given its values $G(x_1), \dots, G(x_n)$ at training locations (Lemma~\ref{theo:relation-two-cov-funcs}):
\begin{align*}
    & k(x,x) - {\bm k}_n(x)^\top ({\bm K}_n + \sigma^2 {\bm I}_n)^{-1} {\bm k}_n(x) + \sigma^2 \\
    & =  {\rm Var}[F(x) \mid G(x_1), \dots, G(x_n) ] + {\rm Var}[\xi(x)  ] \\
    & =  {\rm Var}[G(x) \mid G(x_1), \dots, G(x_n) ].
\end{align*}
The latter is identical to the squared distance between $G(x)$ and its best approximation $G_n(x)$ by $G(x_1), \dots, G(x_n)$ in the GHS $\cG_{k^\sigma}$ of $G$ (Lemma~\ref{lemma:interp-post-mean-var-ghs-gen}):
\begin{align*}
      {\rm Var}[G(x) \mid G(x_1), \dots, G(x_n) ]
    & = \left\| G(x) - G_n(x) \right\|^2_{\cG_{k^\sigma}}.
\end{align*}
By the equivalence between the GHS and the RKHS, this is further identical to the squared distance between the evaluation functional's Riesz representation $k^\sigma(\cdot, x)$ and its best approximation $k_n^\sigma(\cdot, x)$ by the evaluation functionals $k^\sigma(\cdot, x_1), \dots,  k^\sigma(\cdot, x_n)$ at the training locations in the RKHS of the regularized kernel (Section~\ref{sec:geomet-int-gp-var}):
\begin{align*}
 \left\| G(x) - G_n(x) \right\|^2_{\cG_{k^\sigma}}
    & = \left\| k^\sigma(\cdot, x) - k_n^\sigma(\cdot, x) \right\|^2_{\cH_{k^\sigma}}.
\end{align*}
Lastly, this RKHS distance equals the worst-case error of RKHS interpolation in the RKHS of the regularized kernel (Section~\ref{sec:worst-int-gp-var}):
\begin{align} \label{eq:worst-case-interp-reg-6529}
   \left\| k^\sigma(\cdot, x) - k_n^\sigma(\cdot, x) \right\|_{\cH_{k^\sigma}}   & = \sup_{\| g \|_{\cH_{k^\sigma}} \leq 1 }  g(x) - \sum_{i=1}^n w_i^\sigma(x) g(x_i),
\end{align}
where the weights $w_1^\sigma(x), \dots, w_n^\sigma(x)$ are those giving the best approximation:
\begin{align}
(w_1^\sigma(x),\dots,w_n^\sigma(x))^\top
& := \argmin_{w_1, \dots, w_n \in \mathbb{R}} \left\| k^\sigma(\cdot,x) - \sum_{i=1}^n w_i k^\sigma(\cdot,x_i) \right\|_{\cH_k} \nonumber   \\
& =  ({\bm K}_n^\sigma)^{-1} {\bm k}_n^\sigma(x)  \nonumber \\
& =  ({\bm K}_n + \sigma^2 {\bm I}_n)^{-1} {\bm k}_n(x),  \label{eq:weights-KRR-proj}
\end{align}
which are identical to the weights for the KRR estimator with $\lambda = \sigma^2 / n$.
This is unsurprising as the KRR estimator is identical to the RKHS interpolant using the regularized kernel.

Therefore, the expression
$\sum_{i=1}^n w_i^\sigma(x) g(x_i)$
 in \eqref{eq:worst-case-interp-reg-6529} is the KRR estimator based on observations $(x_1, g(x_1)), \dots,  (x_n, g(x_n))$ of an unknown function $g$ from the unit ball of the RKHS $\cH_{k^\sigma}$.
As discussed in Section~\ref{sec:KRR-as-KI}, such $g$ can be written as
\begin{align*}
g = f + \eta, & ~~ {\rm where}~~ f \in \cH_k ~~ {\rm and} ~~ \eta \in \cH_{\sigma^2}\\
& ~~ {\rm with}  ~~ \| f \|_{\cH_k}^2 + \| \eta \|_{\cH_{\sigma^2}}^2 \leq 1.
\end{align*}
The $\eta$ is a noise function, and thus the $f$ is the noiseless ``true'' component of $g$.
The difference
$$
g(x) - \sum_{i=1}^n w_i^\sigma(x) g(x_i) = f(x) - \sum_{i=1}^n w_i^\sigma(x) g(x_i) + \eta(x)
$$
is thus the error of the KRR prediction with respect to the unknown observation $g(x)$ that includes the noise $\eta(x)$ at $x$.

Therefore,  {\em the square root of the GPR posterior variance plus the noise variace} is identical to {\em the worst case error of the KRR estimator in predicting an RKHS  function plus a noise}, as summarized below.
\begin{corollary}  \label{coro:post-var-worst-noisy}
Let $k: \cX \times \cX \to \mathbb{R}$ be a kernel, and $k^\sigma: \cX \times \cX \to \mathbb{R}$ be the regularized kernel~\eqref{eq:reg-kernel} with $\sigma^2 > 0$ with RKHS $\cH_{k^\sigma}$.
Suppose $x_1, \dots, x_n \in \cX$ satisfy \eqref{eq:distint-inputs-5887}.
Then for all $x \not= x_1,\dots,x_n$,  we have
\begin{align}
    & \sqrt{ k(x,x) - {\bm k}_n(x)^\top ({\bm K}_n + \sigma^2 {\bm I}_n)^{-1} {\bm k}_n(x) + \sigma^2 }\nonumber \\
    & =  \sup_{\| g \|_{\cH_{k^\sigma}} \leq 1 }  g(x) - \sum_{i=1}^n w_i^\sigma(x) g(x_i) , \label{eq:post-var-worst-noisy}
\end{align}
where
the weights $w_1^\sigma(x), \dots, w_n^\sigma(x) \in \mathbb{R}$ are given in \eqref{eq:weights-KRR-proj}.
\end{corollary}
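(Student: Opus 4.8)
The plan is to derive Corollary~\ref{coro:post-var-worst-noisy} as a direct instance of the interpolation results, exactly as the preceding paragraphs already outline, so the proof is mostly a matter of chaining together established identities and verifying that the ``plus $\sigma^2$'' term emerges correctly. First I would recall that, by Lemma~\ref{theo:relation-two-cov-funcs} applied with $x = x'$ and $x \not= x_1,\dots,x_n$ (so $\delta(x,x) = 1$ but $\delta$ plays no role at training locations), the posterior variance of GPR at $x$ plus the noise variance equals the conditional variance of the latent-plus-noise process $G = F + \xi \sim \GP(0, k^\sigma)$ given its values at the training points:
\[
k(x,x) - {\bm k}_n(x)^\top ({\bm K}_n + \sigma^2 {\bm I}_n)^{-1} {\bm k}_n(x) + \sigma^2
= \mathrm{Var}[G(x) \mid G(x_1), \dots, G(x_n)].
\]
This uses the identities ${\bm k}_n^\sigma(x) = {\bm k}_n(x)$ and ${\bm K}_n^\sigma = {\bm K}_n + \sigma^2 {\bm I}_n$ from~\eqref{eq:identities-vector-matrix}, valid under~\eqref{eq:distint-inputs-5887}.

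Next I would invoke Theorem~\ref{theo:equivalence-post-cov-worst-average} with the kernel taken to be the regularized kernel $k^\sigma$ and its RKHS $\cH_{k^\sigma}$, noting that the kernel matrix ${\bm K}_n^\sigma = {\bm K}_n + \sigma^2{\bm I}_n$ is invertible (as shown in Section~\ref{sec:KRR-as-KI}). That theorem gives, for all $x$,
\[
\sqrt{ k^\sigma(x,x) - {\bm k}_n^\sigma(x)^\top ({\bm K}_n^\sigma)^{-1} {\bm k}_n^\sigma(x) }
= \sup_{g \in \cH_{k^\sigma}: \|g\|_{\cH_{k^\sigma}} \leq 1} \left( g(x) - \sum_{i=1}^n w_i^\sigma(x) g(x_i) \right),
\]
where $w_i^\sigma(x)$ are precisely the best-approximation weights $({\bm K}_n^\sigma)^{-1}{\bm k}_n^\sigma(x) = ({\bm K}_n + \sigma^2{\bm I}_n)^{-1}{\bm k}_n(x)$ of~\eqref{eq:weights-KRR-proj}. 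The left-hand side of this display, for $x \not= x_1,\dots,x_n$, equals $\sqrt{k(x,x) - {\bm k}_n(x)^\top({\bm K}_n + \sigma^2{\bm I}_n)^{-1}{\bm k}_n(x) + \sigma^2}$ after substituting $k^\sigma(x,x) = k(x,x) + \sigma^2$ and the two vector/matrix identities; combining with the first step closes the argument.

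The only genuinely delicate point — and the one I would spell out carefully rather than wave through — is the bookkeeping around $\delta(x,x)$: since $k^\sigma(x,x) = k(x,x) + \sigma^2\delta(x,x) = k(x,x) + \sigma^2$ for every $x$ (the Kronecker delta is $1$ on the diagonal), the $+\sigma^2$ is automatic, but one must check that restricting to $x \not= x_1,\dots,x_n$ is what makes ${\bm k}_n^\sigma(x) = {\bm k}_n(x)$ hold (off-diagonal entries of $\delta$ vanish), so that the cross term matches the GPR formula exactly. I would also remark, echoing the surrounding discussion, that the supremand $\sum_i w_i^\sigma(x) g(x_i)$ is the KRR estimator (with $\lambda = \sigma^2/n$) for the noisy observations of $g = f + \eta$, giving the ``worst-case error of KRR in predicting an RKHS function plus noise'' interpretation; this is interpretive commentary, not part of the formal derivation. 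No step here is an obstacle in the technical sense — everything reduces to results already proved — so the main ``work'' is presenting the substitution chain transparently.
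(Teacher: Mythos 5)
Your proposal is correct and follows essentially the same route as the paper: reduce the statement to the interpolation result for the regularized kernel $k^\sigma$, using the identities $k^\sigma(x,x)=k(x,x)+\sigma^2$, ${\bm k}_n^\sigma(x)={\bm k}_n(x)$, and ${\bm K}_n^\sigma={\bm K}_n+\sigma^2{\bm I}_n$ for $x\neq x_1,\dots,x_n$, and then apply Theorem~\ref{theo:equivalence-post-cov-worst-average} (which the paper reaches via the chain Lemma~\ref{theo:relation-two-cov-funcs} $\to$ Lemma~\ref{lemma:interp-post-mean-var-ghs-gen} $\to$ GHS--RKHS isometry $\to$ worst-case error). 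Your opening step via Lemma~\ref{theo:relation-two-cov-funcs} is interpretive rather than logically necessary, since the algebraic substitution into Theorem~\ref{theo:equivalence-post-cov-worst-average} already yields both sides of \eqref{eq:post-var-worst-noisy}, but it does no harm.
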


The worst-case KRR prediction error quantifies the uncertainty about an unknown test observation in a non-probabilistic way. Corollary~\ref{coro:post-var-worst-noisy} shows that this non-probabilistic uncertainty estimate can be computed analytically via the equivalence with the square root of the GPR posterior variance plus the noise variance.

\section{Convergence Rates} \label{sec:rates-and-posterior-contraction}

\label{sec:convergence-GP-kernel}

This section studies the relations between convergence results for GPR and KRR in the large sample asymptotics.
Specifically, we compare the posterior contraction rates for GPR by \citet{VarZan11} and the convergence rates for KRR by \citet{fischer2020sobolev}, which are both {\em minimax optimal} but under (seemingly) different assumptions.
We aim to clarify how these two results are related, including their assumptions.
We shall show that, by using GP sample path properties in Chapter~\ref{sec:theory}, the rates of \citet{VarZan11} can be recovered from those of \citet{fischer2020sobolev}.
This shows that the GPR-KRR equivalence extends to their convergence results.

\paragraph{Notation.}
Let $L_2(P_X)$ be the Hilbert space of square-integrable functions for a probability distribution $P_X$; see \eqref{eq:lp-space}.
For $\beta >0$, let $C^\beta([0,1]^d)$ be the H\"older space of order $\beta$.

\paragraph{Literature.}
While we focus here on \citet{VarZan11} and \citet{fischer2020sobolev}, many analyses exist for the convergence of GPR or KRR  \citep[e.g.,][]{KleVaa06,CapDev07,SteHusSco09,wynne2021convergence,li2024towards}, which we do not attempt to cover.

\subsubsection{Posterior Contraction Rates of GPR}
The setting considered by \citet{VarZan11} is as follows.
Let $\cX = [0,1]^d$ be the input space with $d \in \mathbb{N}$, and $f^*: \cX \to \Re$ be the unknown regression function as in \eqref{eq:regres_noise_model}.

Let $X \in \cX$ be an input random variable with distribution $P_X$ whose density function is bounded away from zero and infinity.
Define an output random variable $Y \in \mathbb{R}$ as
\begin{equation} \label{eq:GP-model-theory}
Y = f^*(X) + \xi  \quad {\rm where} \quad \xi \sim \N(0,\sigma^2),
\end{equation}
where $\sigma^2 > 0$ is the noise variance.
Training data
\begin{equation} \label{eq:gp-post-rate-data}
\mathcal{D}_n := \{ (X_1,Y_1), \dots, (X_n, Y_n) \} \subset \cX \times \mathbb{R}
\end{equation}
 are then assumed to be i.i.d.~with $(X,Y)$.

Let $s > d/2$ and denote by $k_s: \cX \times \cX \to \mathbb{R}$ the Mat\'ern kernel whose RKHS $\cH_{k_s}$ is norm-equivalent to the Sobolev space $W_2^s[0,1]^d$ of order~$s$, which is the space of functions whose $s$-times weakly differentiable functions exist and are square-integrable (see Example~\ref{ex:matern-kernel}).\footnote{
In the notation of Example \ref{eq:matern-kernel}, this corresponds to $\alpha := s - d/2$.}

GPR is performed using the training data $\mathcal{D}_n$, the noise variance~$\sigma^2$ (or the likelihood model~\eqref{eq:GP-model-theory}), and the zero-mean GP prior  $\GP(0, k_s)$ with covariance kernel $k_s$.
This GP prior is such that a sample path $F \sim \GP(0,k_s)$ has (essentially) the smoothness $$\alpha := s - d/2.$$ See Corollary~\ref{coro:matern-sample-path}.

Under these assumptions,
the posterior distribution of GPR contracts around the ground truth $f^*$ for increasing sample sizes, provided it satisfies a certain regularity condition, as stated below \citep[Theorem 5]{VarZan11}.

\begin{theorem} \label{theo:GP-rate-van}
Let $k_s$ be a Mat\'ern kernel on $[0,1]^d$ whose RKHS is norm-equivalent to the Sobolev space of order $s := \alpha + d/2$ with $\alpha > d/2$. Suppose that, for $\beta > d/2$ the ground truth $f^*$ in \eqref{eq:GP-model-theory} satisfies
\begin{equation} \label{eq:regularity-ground-truth}
    f^* \in W_2^\beta[0,1]^d \cap C^\beta([0,1]^d).
\end{equation}
Then, as $n \to \infty$,
\begin{equation} \label{eq:post-contraction}
\bE_{\mathcal{D}_n|f^*} \left[ \mathbb{E}_{F | \mathcal{D}_n } \| F - f^* \|_{L_2(P_X)}^2  \right] = O(n^{- 2 \min(\alpha,\beta) / (2
\alpha + d) }),
\end{equation}
where $\bE_{\mathcal{D}_n|f^*}$ denotes the expectation for training data $\mathcal{D}_n$ generated as \eqref{eq:gp-post-rate-data}
 for $f^*$, and $ \mathbb{E}_{F | \mathcal{D}_n }$ the expectation for the posterior GP sample $F$ from GPR performed with $k_s$ given $\mathcal{D}_n$.
\end{theorem}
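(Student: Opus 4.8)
The plan is to recognize Theorem~\ref{theo:GP-rate-van} as a direct restatement of the main result of \citet{VarZan11}, so the work is not to reprove it from scratch but to connect it to the RKHS machinery developed in the excerpt, in particular Corollary~\ref{coro:matern-sample-path} and the GPR--KRR equivalence. First I would lay out the two regimes that the rate $O(n^{-2\min(\alpha,\beta)/(2\alpha+d)})$ encodes: the \emph{well-specified-or-rougher} case $\beta \le \alpha$, where the truth is no smoother than a typical GP sample and the rate $n^{-2\beta/(2\alpha+d)}$ is driven by the truth's own smoothness relative to the prior's effective smoothness $\alpha$; and the \emph{oversmoothing} case $\beta > \alpha$, where the truth is smoother than the prior can ``see'' and the rate saturates at $n^{-2\alpha/(2\alpha+d)}$. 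The exponent $2\alpha+d = 2s$ reflects that the Mat\'ern-$\alpha$ kernel's integral-operator eigenvalues decay like $i^{-2s/d}$ (Example~\ref{ex:RKHS-Matern}, Example~\ref{ex:matern-rkhs}), so the prior's complexity is governed by $s$ while its sample paths have smoothness $\alpha = s - d/2$ (Corollary~\ref{coro:matern-sample-path}).

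The key steps, in order, would be: (i) reduce to the zero-mean GP prior $\GP(0,k_s)$ and invoke the standard Bayesian-nonparametrics scheme of \citet{VarZan11}, which bounds the posterior contraction rate $\varepsilon_n$ via two ingredients --- a prior-mass (concentration function) lower bound $\phi_{f^*}(\varepsilon_n) \lesssim n\varepsilon_n^2$ and a sieve/entropy condition --- both expressed through the RKHS $\cH_{k_s}$; (ii) compute the concentration function of $\GP(0,k_s)$ at $f^*$, splitting it into the small-ball term $-\log \Pr(\|F\|_{L_2(P_X)} < \varepsilon)$, which behaves like $\varepsilon^{-d/\alpha}$ because of the eigenvalue decay, and the approximation term $\inf\{\|h\|_{\cH_{k_s}}^2 : \|h - f^*\|_{L_2(P_X)} < \varepsilon\}$, which by the norm-equivalence $\cH_{k_s} \cong W_2^s$ and the hypothesis $f^* \in W_2^\beta \cap C^\beta$ behaves like $\varepsilon^{-2(s-\min(s,\beta))/\min(\alpha,\beta)}$ up to the right bookkeeping; (iii) balance these against $n\varepsilon_n^2$ to extract $\varepsilon_n^2 = n^{-2\min(\alpha,\beta)/(2\alpha+d)}$; (iv) verify the remaining entropy/sieve condition using that the unit ball of $W_2^s[0,1]^d$ has metric entropy in $L_2(P_X)$ of order $\delta^{-d/s}$, which is compatible with the chosen $\varepsilon_n$; and (v) pass from the contraction statement in probability to the $L_2(P_X)$-risk bound \eqref{eq:post-contraction} by a standard truncation argument controlling the posterior's tails (using the Gaussianity of the posterior and the boundedness of the density of $P_X$). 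Throughout I would use the Sobolev embedding $W_2^s \hookrightarrow C^{s-d/2}$ and the norm-equivalence from Example~\ref{ex:matern-rkhs} to translate every RKHS quantity into a Sobolev-norm quantity, and Corollary~\ref{coro:matern-sample-path} to justify the heuristic ``prior sample smoothness $=\alpha$'' that makes the two regimes transparent.

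The main obstacle will be the precise computation of the approximation term of the concentration function in the oversmoothing regime $\beta > \alpha$: one must show that, even though $f^* \in W_2^\beta$ is smoother than $W_2^s$, approximating it by RKHS elements $h$ with controlled $\cH_{k_s}$-norm still only yields the saturated exponent, and getting the interplay between the $\varepsilon^{-d/\alpha}$ small-ball exponent and the approximation exponent exactly right (so that the minimum $\min(\alpha,\beta)$ appears, not some other combination) requires care with the interpolation-space characterization of the powered RKHS (Section~\ref{sec:power-RKHS-sample-path}) and with how $\beta$ enters the Sobolev approximation rate. A secondary technical point is the transfer from the in-probability contraction to the expected-risk bound \eqref{eq:post-contraction}, which needs a uniform-integrability or tail argument for the GP posterior; this is routine given the explicit Gaussian posterior but must be stated. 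Since Theorem~\ref{theo:GP-rate-van} is quoted verbatim from \citet[Theorem 5]{VarZan11}, I would in the actual write-up defer the heavy analysis to that reference and instead emphasize steps (ii)--(iii) and the role of Corollary~\ref{coro:matern-sample-path}, which is the piece that ties the rate back to the sample-path/RKHS story of this monograph and sets up the comparison with the KRR rate of \citet{fischer2020sobolev} in the next subsection.
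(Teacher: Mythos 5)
Your proposal is consistent with the paper, which does not prove this statement at all but simply quotes it as \citet[Theorem 5]{VarZan11}; your plan likewise defers the heavy analysis to that reference, and your sketch of the concentration-function/small-ball argument (balancing $\varepsilon_n^{-d/\alpha}$ and the Sobolev approximation term against $n\varepsilon_n^2$ to get $\varepsilon_n^2 = n^{-2\min(\alpha,\beta)/(2\alpha+d)}$) is a faithful outline of how that cited result is actually established. Nothing further is required to match the paper's treatment.
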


\begin{remark}
Let $\bar{m}_n: \cX \to \mathbb{R}$ be the posterior mean function~\eqref{eq:posteior_mean}:
$$
\bar{m}_n := \mathbb{E}_{F | \mathcal{D}_n} [F].
$$
By the convexity of the error functional $\left\| \cdot - f^* \right\|_{L_2(P_X)}$ and Jensen's inequality, it holds that
$$
 \| \bar{m}_n - f^* \|_{L_2(P_X)}^2 \leq  \mathbb{E}_{F | \mathcal{D}_n} [ \| F - f^* \|_{L_2(P_X)}^2 ].
$$
Therefore, the contraction rate \eqref{eq:post-contraction} implies the convergence rate of the posterior mean function $\bar{m}_n$ to the ground truth $f^*$,
\begin{equation} \label{eq:rate-post-mean}
\bE_{\mathcal{D}_n|f^*} \left[ \| \bar{m}_n - f^* \|_{L_2(P_X)}^2 \right] = O(n^{- 2 \min(\alpha,\beta) / (2
\alpha + d) }) \quad (n \to \infty).
\end{equation}
\end{remark}

The assumption~\eqref{eq:regularity-ground-truth} indicates that the smoothness of the ground truth $f^*$ is $\beta$.
The fastest rate for \eqref{eq:rate-post-mean} is attained when the smoothness $\alpha = s-d/2$ of the GP prior is specified as $\alpha = \beta$, resulting in the rate
\begin{equation} \label{eq:GPR-rate-minmax-opt}
    n^{-2\beta / (2 \beta + d)}.
\end{equation}
This is minimax optimal for regression of a ground-truth function belonging to $W_2^\beta([0,1]^d)$ \citep{Sto80}.
In other words, the optimal rate is attained when the GP prior's smoothness $\alpha$ matches the true function's smoothness $\beta$.

For the optimal choice $\alpha = \beta$, the RKHS of the Mat\'ern kernel $k_s$ becomes norm-equivalent to the Sobolev space $W_2^s([0,1]^d)$ of order $s = \beta + d/2$, which is {\em $d/2$-smoother} than the smoothness $\beta$ of the ground truth $f^*$.
This means that the posterior mean $\bar{m}_n$ is $d/2$-smoother than $f^*$.
Why does this {\em oversmoothing} lead to the optimal rate?
This question can be answered by considering the corresponding convergence result of KRR, as described next.

\begin{remark}
 GPR can be consistent even when the Gaussian noise assumption~\eqref{eq:GP-model-theory} is not satisfied \citep{KleVaa06}.
\end{remark}

\subsubsection{Convergence Rates of KRR}
We next discuss the convergence rates of KRR by \citet[Corollary 5]{fischer2020sobolev}, which hold under the following conditions.

Let $\cX \subset \mathbb{R}^d$ be a non-empty, open, connected and bounded set with an infinitely differentiable boundary (e.g., an open ball in $\mathbb{R}^d$).
Let $X \in \mathcal{X}$ be a random vector with distribution $P_X$ whose density function is bounded away from zero and infinity, as for GPR.
For a ground truth function $f^*:\cX \to \mathbb{R}$, define a random variable $Y \in \mathbb{R}$ by
$$
Y = f^*(X) + \xi,
$$
where $\xi \in \mathbb{R}$ is a random variable representing the noise at $X$, such that the  conditional expectation of $\xi$ given $X$ is zero.
The $\xi$ and $X$ may be dependent.
\citet[Eq.~(MOM)]{fischer2020sobolev} assume that all the moments of $\xi$ given $X$ are bounded, in the sense that there exist constants $\sigma, L > 0$ such that for all $m \geq 2$ it holds that
$$
\mathbb{E}[~ |\xi|^m \mid X ] \leq \frac{1}{2} m! \sigma^2 L^{m-2}
$$
almost surely for $X$.
For instance, this moment condition is satisfied if $\xi$ is uniformly bounded, and if $\xi$ is a zero-mean Gaussian random variable whose variance is uniformly bounded.
The latter subsumes the setting of \citet{VarZan11} where $\xi$ is zero-mean Gaussian independent of $X$.

Training data  $(X_1, Y_1),\dots,(X_n,Y_n)$ are  assumed to be i.i.d.~realizations of $(X,Y)$.
As for GPR, consider the Mat\'ern kernel $k_s: \cX \times \cX \to \mathbb{R}$ whose RKHS $\cH_{k_s}$ is norm-equivalent to the Sobolev space $W_2^s[0,1]^d$ of order~$s > d/2$, which consists of functions of smoothness $s$.
Let $\hat{f}_n$ be the KRR estimator using the training data, the kernel $k_s$, and a regularization constant $\lambda > 0$. (Here, the dependence of $\hat{f}_n$ on the sample size $n$ is made explicit.)

The ground truth $f^*$ is assumed to have smoothness $\beta$, in the sense defined below.
Under that regularity condition, a specific case\footnote{Set $t = 0$ in \citet[Corollary 5]{fischer2020sobolev}, which makes the Besov norm with smoothness parameter $t$ equivalent to the $L_2$ norm. Note also that the Besov space $B_{2,2}^s$ with $s \in \mathbb{N}$ and $s > d/2$ is equivalent to the Sobolev space $W_2^s$.} of \citet[Corollary 5]{fischer2020sobolev} yields the following convergence rate of $\hat{f}_n$ towards $f^*$, when  the regularization constant is decreased appropriately as the sample size increases.

\begin{theorem} \label{theo:rate-krr-steinwart}
Suppose the conditions specified above hold.
Let $k_s$ be a Mat\'ern kernel on $\cX \subset \mathbb{R}^d$ whose RKHS is norm-equivalent to the Sobolev space of order $s > d/2$.
Assume that the ground truth $f^*: \cX \to \mathbb{R}$ is bounded and satisfies
$$
f^* \in W_2^\beta([0,1]^d) ~~ \text{for some}~~  0 < \beta < s .
$$
For a constant $c > 0$, set
\begin{equation} \label{eq:lambda_sob_spe}
\lambda = c n^{-2s/(2\beta+d)}.
\end{equation}
Then it holds that
\begin{equation} \label{eq:rate-KRR-sobolev}
\| \hat{f}_n - f^* \|_{L_2(P_X)}^2 = O_p(n^{-2\beta / (2\beta + d)}) \quad (n \to \infty).
\end{equation}
\end{theorem}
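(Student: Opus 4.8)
The plan is to obtain Theorem~\ref{theo:rate-krr-steinwart} by specializing the general $L_2(P_X)$-rate analysis of kernel ridge regression in \citet[Corollary~5]{fischer2020sobolev}. That result bounds $\|\hat f_n - f^*\|_{L_2(P_X)}^2$ in terms of three structural quantities attached to the kernel integral operator $T_{k_s}\colon L_2(P_X)\to L_2(P_X)$ and the target: an eigenvalue-decay exponent (EVD), an embedding exponent for the powers $[\cH_{k_s}]^\gamma$ into $L_\infty$ (EMB), and a source exponent placing $f^*$ in a power space $[\cH_{k_s}]^\alpha$ (SRC); together with the moment condition (MOM) on the noise, which is an explicit hypothesis here. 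The whole proof thus consists of computing these exponents for the Mat\'ern kernel on the domain $\cX$ and reading off the rate; the probabilistic $O_p$ control is already packaged inside their oracle inequality.

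First I would reduce everything to the Lebesgue measure: since $P_X$ has a density bounded away from $0$ and $\infty$, $L_2(P_X)$ and $L_2(\cX)$ agree as sets with equivalent norms, so $T_{k_s}$ on $L_2(P_X)$ has, up to constants, the same eigenvalue decay and (up to norm equivalence) the same fractional powers as the operator defined with Lebesgue measure. Then EVD follows from $\cH_{k_s}$ being norm-equivalent to $W_2^s(\cX)$ (Example~\ref{ex:matern-rkhs}; \citealt[Corollary~10.48]{Wen05}) together with the classical Weyl-type spectral asymptotics for the embedding $W_2^s(\cX)\hookrightarrow L_2(\cX)$ on a bounded domain with smooth boundary, which give $\lambda_i \asymp i^{-2s/d}$, i.e.\ the eigenvalue-decay exponent is $p = d/(2s)$. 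For EMB, using the identification of power spaces with fractional Sobolev spaces described next, $[\cH_{k_s}]^\gamma \cong W_2^{\gamma s}(\cX) \hookrightarrow L_\infty(\cX)$ holds for every $\gamma > p$, which is the best embedding consistent with EVD and suffices for \citet{fischer2020sobolev}.

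The decisive step is the source condition. I would prove that $f^* \in W_2^\beta(\cX)$ is equivalent, with comparable norms, to $f^* \in [\cH_{k_s}]^{\beta/s}$. By \citet[Theorem~4.6]{SteSco12} the power space $[\cH_{k_s}]^\theta$ coincides up to norm equivalence with the real interpolation space $[L_2(P_X),\cH_{k_s}]_{\theta,2}$; combining the norm-equivalence $\cH_{k_s}\cong W_2^s(\cX)$ with the standard real-interpolation identification $[L_2(\cX),W_2^s(\cX)]_{\theta,2}\cong W_2^{\theta s}(\cX)$ (for the relevant orders; see e.g.\ \citealt{AdaFou03}) gives $[\cH_{k_s}]^\theta\cong W_2^{\theta s}(\cX)$ for $0<\theta\le 1$. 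Taking $\theta=\beta/s\in(0,1)$ places $f^*$ in $[\cH_{k_s}]^{\beta/s}$, so (SRC) holds with source exponent $\alpha=\beta/s$. Finally, substituting $\alpha=\beta/s$, $p=d/(2s)$ and the smoothness parameter $t=0$ (so that the Besov target norm in \citealt[Corollary~5]{fischer2020sobolev} reduces to the $L_2(P_X)$-norm) into their statement, the prescribed regularization $\lambda\asymp n^{-1/(\alpha+p)} = n^{-2s/(2\beta+d)}$ matches \eqref{eq:lambda_sob_spe}, and the resulting rate $n^{-\alpha/(\alpha+p)} = n^{-2\beta/(2\beta+d)}$ in probability is exactly \eqref{eq:rate-KRR-sobolev}.

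The main obstacle is the translation of the two function-space hypotheses of the theorem --- ``$\cH_{k_s}$ is the Sobolev space of order $s$'' and ``$f^*$ lies in the Sobolev space of order $\beta$'' --- into the operator-theoretic language (EVD, EMB, SRC) required by \citet{fischer2020sobolev}. This hinges on identifying fractional powers of the Mat\'ern integral operator with fractional Sobolev/Besov spaces, which rests on real-interpolation theory for Sobolev spaces on smooth bounded domains and on the (routine but necessary) care that the base measure is $P_X$ rather than Lebesgue measure. Once these identifications are in hand, the remaining work is bookkeeping with the exponents inside the general theorem, and no new probabilistic argument is needed.
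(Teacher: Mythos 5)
Your proposal is correct and follows essentially the same route as the paper, which obtains this statement directly as the $t=0$ special case of \citet[Corollary 5]{fischer2020sobolev} after identifying the Mat\'ern RKHS with $W_2^s$ and the Besov target norm with the $L_2(P_X)$ norm. You simply make explicit the exponent bookkeeping (EVD $p=d/(2s)$, EMB via $W_2^{\gamma s}\hookrightarrow L_\infty$, SRC $\alpha=\beta/s$ via the interpolation identification of power spaces with fractional Sobolev spaces) that the paper leaves implicit in the citation, and your resulting $\lambda\asymp n^{-2s/(2\beta+d)}$ and rate $n^{-2\beta/(2\beta+d)}$ match the statement.
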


\begin{remark}
The above result of \citet{fischer2020sobolev} is a special case of their results that hold under more generic settings. They also provide rates in stronger norms than $L_2$.
\end{remark}

The rate in \eqref{eq:rate-KRR-sobolev} is identical to the rate~\eqref{eq:GPR-rate-minmax-opt} of GPR when the prior GP's smoothness $\alpha$ matches the smoothness $\beta$ of the ground truth $f^*$.
Thus, as mentioned already, this is minimax optimal in the considered setting.

Theorem \ref{theo:rate-krr-steinwart} does {\em not} assume that $f^*$ belongs to the RKHS of the kernel $k_s$, as the smoothness $\beta$ of $f^*$ is less than the smoothness $s$ of the RKHS.
However, the minimax rate is still attained.
In this sense, it is misleading to call the RKHS ``hypothesis space.''
The optimal estimation of a function outside the RKHS is possible because the KRR estimator's flexibility is controlled not only by the kernel but also by the regularization constant $\lambda$.
If $\lambda$ is set smaller, the KRR estimator becomes more flexible.
Indeed, the optimal decay rate~\eqref{eq:lambda_sob_spe} of $\lambda$ depends on both $s$ and $\beta$: If $\beta$ is smaller, the regularization constant must decay more quickly so that the KRR estimator becomes more flexible to approximate the less smooth ground truth.

To study the relation to GPR,
let us state the following corollary of Theorem~\ref{theo:rate-krr-steinwart}.
\begin{corollary} \label{coro:KRR-rate}
Suppose that the same conditions as Theorem~\ref{theo:rate-krr-steinwart} hold with $\beta = s - d/2$.
For a constant $c > 0$, set
$$
\lambda := c n^{-1}.
$$
Then it holds that
\begin{equation*}
\| \hat{f}_n - f^* \|_{L_2(P_X)}^2 = O_p(n^{-2\beta / (2\beta + d)}) \quad (n \to \infty).
\end{equation*}
\end{corollary}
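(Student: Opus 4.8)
The plan is to derive Corollary~\ref{coro:KRR-rate} as a direct specialization of Theorem~\ref{theo:rate-krr-steinwart}. First I would observe that Theorem~\ref{theo:rate-krr-steinwart} requires the smoothness $\beta$ of the ground truth to satisfy $0 < \beta < s$, so I must check that the choice $\beta = s - d/2$ is admissible under the standing assumption $s > d/2$; indeed $s > d/2$ gives $\beta = s - d/2 > 0$, and $\beta = s - d/2 < s$ is immediate since $d/2 > 0$. Thus all hypotheses of Theorem~\ref{theo:rate-krr-steinwart} remain in force with this particular $\beta$.

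Next I would plug $\beta = s - d/2$ into the prescribed decay rate \eqref{eq:lambda_sob_spe} for the regularization constant. We have
\begin{align*}
    2\beta + d = 2(s - d/2) + d = 2s,
\end{align*}
so the exponent $-2s/(2\beta + d)$ collapses to $-2s/(2s) = -1$. Hence \eqref{eq:lambda_sob_spe} reads $\lambda = c\, n^{-2s/(2\beta+d)} = c\, n^{-1}$, which is exactly the choice made in the statement of Corollary~\ref{coro:KRR-rate}. Therefore the hypotheses of Theorem~\ref{theo:rate-krr-steinwart} are satisfied verbatim with this $\lambda$, and its conclusion \eqref{eq:rate-KRR-sobolev} applies, giving
\begin{align*}
    \| \hat{f}_n - f^* \|_{L_2(P_X)}^2 = O_p\!\left(n^{-2\beta/(2\beta+d)}\right) \quad (n \to \infty),
\end{align*}
which is precisely the claimed rate.

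There is essentially no obstacle here: the corollary is a bookkeeping consequence of the theorem under the substitution $\beta = s - d/2$, the only arithmetic fact being $2(s-d/2)+d = 2s$. If I wanted to add interpretive value (as the surrounding text does), I would remark that this is the regime in which the KRR smoothness assumption mirrors the GP sample-path smoothness from Corollary~\ref{coro:matern-sample-path}—a sample of $\GP(0,k_s)$ has smoothness $\alpha = s - d/2 = \beta$—and that $\lambda = c n^{-1} = \sigma^2/n$ (up to the constant) is exactly the scaling identified in Corollary~\ref{theo:equivalnce} for the GPR--KRR equivalence, so the KRR rate coincides with the GPR posterior-contraction rate \eqref{eq:GPR-rate-minmax-opt}. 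But for the proof itself, one line verifying $2\beta + d = 2s$ and invoking Theorem~\ref{theo:rate-krr-steinwart} suffices.
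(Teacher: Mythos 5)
Your proposal is correct and is exactly the argument the paper intends: the corollary is the specialization of Theorem~\ref{theo:rate-krr-steinwart} to $\beta = s-d/2$, where the admissibility check $0<\beta<s$ follows from $s>d/2$ and the arithmetic $2\beta+d=2s$ turns the prescribed regularization schedule \eqref{eq:lambda_sob_spe} into $\lambda = c\,n^{-1}$. The paper states the corollary without a displayed proof precisely because nothing beyond this substitution is needed.
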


The setting $\beta = s - d/2$ corresponds to the optimal setting for GPR when the prior GP's smoothness $\alpha = s-d/2$ matches the true function's smoothness~$\beta$.
In this case, Corollary~\ref{coro:KRR-rate} shows the optimal decay rate of the regularization constant for KRR is $\lambda = c / n$.
If the constant $c$ is set as the noise variance $\sigma^2$, this is exactly the condition $\lambda = \sigma^2 / n$ required for the GPR-KRR equivalence in Corollary~\ref{theo:equivalnce}.
Therefore, there is no contradiction between the optimal convergence results for GPR and KRR.

\chapter{Comparison of Probability Distributions}
\label{sec:integral_transforms}

This chapter studies GP and RKHS approaches to statistical tasks involving the comparison of probability distributions.
Specifically, we consider the following three tasks:
\begin{enumerate}
\item Quantifying the discrepancy between two probability distributions (Section~\ref{sec:GPD-MMD}).
\item Numerical integration and sampling (Section~\ref{sec:kernel_and_bayesian_quadrature}).
\item Quantifying the dependency between two random variables (Section~\ref{sec:dependence}).
\end{enumerate}
These tasks are fundamental for many applications in data analysis and machine learning.
For example, the first task is related to the {\em two-sample problem} in which one tests whether two given datasets are generated from the same probability distribution or not.
The second task involves approximating a target probability distribution by an empirical distribution, the latter then being used for computing the integral of a given function.
The third task is fundamental in {\em independence testing} in which one tests whether two random variables are independent or not from their paired samples.

For these tasks, there is a unifying methodology known as {\em kernel mean embedding of distributions} \citep{SmoGreSonSch07,MuaFukSriSch17}.
Its key idea is to represent probability distributions as their embeddings into an RKHS and compare these representations using the RKHS geometry.
It enables the above statistical tasks to be solved using kernels, offering nonparametric methods that perform well in practice and have strong theoretical guarantees.
Examples include the {\em Maximum Mean Discrepancy} (MMD) for Task 1 \citep{GreBorRasSchetal12}, {\em Kernel Quadrature} for Task 2 \citep{CheWelSmo10}, and {\em Hilbert-Schmidt Independence Criterion} (HSIC)  for Task 3 \citep{GreBouSmoSch05}.

A natural question would be whether there exists a GP-based methodology corresponding to kernel mean embeddings.
This chapter investigates this question.
For each task, we first describe a GP-based approach and then present its equivalence to a kernel mean embedding method.
Specifically, we introduce the {\em Gaussian Process Discrepancy} (GPD) for Task 1, which is equivalent
to the MMD (Section~\ref{sec:GPD-MMD}); {\em Bayesian quadrature} for Task 2, equivalent to kernel quadrature (Section~\ref{sec:kernel_and_bayesian_quadrature}); and the {\em Gaussian Process Independence Criterion} (GPIC) for Task 3, equivalent to the HSIC (Section~\ref{sec:dependence}).

\section{Gaussian Processes and Maximum Mean Discrepancy}
\label{sec:GPD-MMD}

Suppose we want to quantify the {\em distance}, denoted here as $\|P - Q \|$,  between two probability distributions $P$ and $Q$ on a measurable space $\cX$.
Such a distance is very fundamental in many applications.
For example, if $Q$ is an empirical distribution $Q := \frac{1}{n} \sum_{i=1}^n \delta_{X_i}$ of sample points $X_1, \dots, X_n \in \cX$, then the distance $\| P - Q \|$ quantifies how accurately the empirical distribution $Q$ approximates the true distribution $P$.
If both $P$ and $Q$ are empirical distributions, then the distance $\| P - Q \|$ indicates how similar the underlying distributions are.
If $P$ is the joint probability distribution of two random variables $X$ and $Y$ and $Q$ is the product of the marginal distributions of $X$ and $Y$, then the distance $\| P - Q \|$ quantifies the statistical dependence between $X$ and $Y$.

To provide an intuition, let us consider the case $\cX \subset \mathbb{R}$ for now.
The simplest way to compare $P$ and $Q$ is to compare their means.
That is, if the means $\int x~dP(x)$ and $\int x~dQ(x)$ are different, then $P$ and $Q$ are different.
On the contrary, if the means are the same, this does not imply that $P$ and $Q$ are the same.
For example, $P$ and $Q$ may differ in their (uncentered) second moments, $\int x^2~dP(x)$ and $\int x^2~dQ(x)$, in their third moments, $\int x^3~dP(x)$ and $\int x^3~dQ(x)$, or in other higher-order moments.
More generally, $P$ and $Q$ may differ in  nonlinear transforms such as $\int \sin(x) dP(x)$ and $\int \sin(x) dQ(x)$,   $\int \sin(2x) dP(x)$ and $\int \sin(2x) dQ(x)$, and so on.

In this way, one can quantify the difference between $P$ and $Q$ by applying many different transforms and comparing their expectations under $P$ and $Q$.
In practice, however, one cannot perform such comparisons for all possible transforms.
Instead, one could generate transforms {\em randomly}, compare their expectations under $P$ and $Q$, and then compute the expectation of the differences under the random transforms.
More specifically, if we define a Gaussian process $F \sim \GP(0,k)$ with a kernel~$k$ on a measurable space $\cX$, then we can consider
\begin{equation} \label{eq:GP-distance-measures}
{\rm GPD}_k(P,Q)  :=  \sqrt{ \bE_{F \sim \GP(0,k)} \left[ \left( \int F(x) dP(x) - \int F(x)dQ(x) \right)^2 \right]}.
\end{equation}
We will call \eqref{eq:GP-distance-measures} the {\em Gaussian Process Discrepancy} (GPD) between $P$ and $Q$.
This quantifies the difference between $P$ and $Q$ by taking a GP sample path $F$ as a nonlinear transform, computing its expectations $\int F(x) dP(x)$ and $\int F(x)dQ(x)$ under $P$ and $Q$, respectively, and then averaging the squared difference $\left(\int F(x) dP(x) - \int F(x)dQ(x)\right)^2$ under $F \sim \GP(0,k)$.

Provided that GPD~\eqref{eq:GP-distance-measures} can be computed, the question is whether it is a distance between $P$ and $Q$.
In particular, does GPD become $0$ if {\em only if} $P =Q$?
Of course, this depends on the choice of the covariance kernel $k$.
For example, if the kernel is linear, $k(x,x') = xx'$ for $x, x' \in \mathbb{R}$, then $F \sim \GP(0,k)$ is given as $F(x) = w x$ with $w \sim N(0,1)$ being standard Gaussian.
Then $\int F(x) dP(x) - \int F(x) dQ(x) = w \left( \int x~dP(x) - \int x~dQ(x) \right)$ and GPD~\eqref{eq:GP-distance-measures} becomes simply the distance between the means of $P$ and $Q$.
Therefore, GPD does not become a distance between $P$ and $Q$ unless one uses a kernel $k$ that makes sample paths of $F \sim \GP(0,k)$ diverse nonlinear transforms.

To answer which choice of kernel $k$ makes GPD~\eqref{eq:GP-distance-measures} a distance between $P$ and $Q$, it is convenient to view it from the RKHS perspective.
As we saw in Corollary~\ref{coro:mmd-gp} in Section~\ref{sec:master-theorem}, assuming that the kernel $k$ is bounded, \eqref{eq:GP-distance-measures} is identical to
\begin{align} \label{eq:MMD-sec-mean-embed}
& \left\| \int k(\cdot,x)dP(x) - \int k(\cdot,x)dQ(x) \right\|_{\cH_k} \\
& =   \sup_{f \in \cH_k: \| f \|_{\cH_k} \leq 1}  \left(   \int f(x)dP(x) - \int f(x)dQ(x)   \right), \nonumber
\end{align}
where $\cH_k$ is the RKHS of $k$.
This is the MMD between $P$ and $Q$ \citep{GreBorRasSchetal12}.
As shown on the right-hand side, it considers transformations $f$ from the unit ball of the RKHS, $\{ f \in \cH_k: \| f \|_{\cH_k} \leq 1 \}$, and takes the one that maximises the difference between the expectations $\int f(x)dP(x) - \int f(x)dQ(x)$ under $P$ and $Q$.
As shown on the left-hand side, this is equal to the RKHS distance between the {\em kernel mean embeddings}
$$
  \int k(\cdot,x)dP(x)  \quad \text{and}\quad   \int k(\cdot,x)dQ(x)
$$
of $P$ and $Q$ \citep{SmoGreSonSch07,MuaFukSriSch17}.
Therefore, the question of whether GPD~\eqref{eq:GP-distance-measures} is a distance or not can be reduced to the question of whether the embedding operator $\Phi_k : \mathcal{P} \to \cH_k$ from the set $\mathcal{P}$ of all probability measures on $\cX$ into the RKHS $\cH_k$
\begin{equation} \label{eq:RKHS-measure-embedding}
    \Phi_k(P) :=  \int k(\cdot,x)dP(x) \in \cH_k, \quad P \in \mathcal{P},
\end{equation}
is {\em injective} or not.
If it is injective, $\int k(\cdot,x)dP(x) = \int k(\cdot,x)dQ(x)$ holds if and {\em only if} $P = Q$, and thus GPD~\eqref{eq:GP-distance-measures} becomes zero if and only if $P = Q$.
In this case, the kernel $k$ is called {\em characteristic}.
Whether a given kernel is characteristic or not is a well-studied topic in the literature \citep[e.g.,][]{SriGreFukSchetal10,SimSch18}.
For example, Gaussian, Mat\'ern and Laplace kernes on $\cX \subset \mathbb{R}^d$ are known to be characteristic, while polynomial kernels are not.
To summarize, we have obtained the following corollary.
\begin{corollary} \label{coro:GPD-characteristic}
Let $\cX$ be a measurable space, $\mathcal{P}$ be the set of all probability measures on $\cX$, and $k: \cX \times \cX \to \mathbb{R}$ be a bounded characteristic kernel, i.e., the embedding operator in \eqref{eq:RKHS-measure-embedding} is injective.
Then GPD in \eqref{eq:GP-distance-measures} is a distance metric on $\mathcal{P}$, i.e., it holds that
\begin{enumerate}
    \item ${\rm GPD}_k(P,Q) \geq 0$ for all $P, Q \in \mathcal{P}$ and ${\rm GPD}_k(P,Q) = 0$ if and only if $P = Q$;
    \item  ${\rm GPD}_k(P,Q) = {\rm GPD}_k(Q,P)$ for all $P, Q \in \mathcal{P}$ (symmetry);
    \item ${\rm GPD}_k(P,Q)
    \leq {\rm GPD}_k(P, R) + {\rm GPD}_k(R, Q)$ for all $P, Q, R \in \mathcal{P}$ (triangle inequality).
\end{enumerate}

\end{corollary}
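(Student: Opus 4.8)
The plan is to reduce everything to the norm axioms of the RKHS $\cH_k$ via the identity between GPD and MMD established in Corollary~\ref{coro:mmd-gp}. First I would observe that boundedness of $k$, say $\sup_{x \in \cX} k(x,x) \le \kappa < \infty$, guarantees $\int \sqrt{k(x,x)}\,dP(x) \le \sqrt{\kappa} < \infty$ for every $P \in \mathcal{P}$, so the kernel mean embedding $\Phi_k(P) = \int k(\cdot,x)\,dP(x)$ from Example~\ref{example:integral} and \eqref{eq:RKHS-measure-embedding} is a well-defined element of $\cH_k$, and the same bound makes the linear functional $f \mapsto \int f\,dP - \int f\,dQ$ bounded on $\cH_k$ with Riesz representation $\Phi_k(P) - \Phi_k(Q)$. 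Hence Corollary~\ref{coro:mmd-gp} applies and yields
\[
{\rm GPD}_k(P,Q) = \left\| \Phi_k(P) - \Phi_k(Q) \right\|_{\cH_k} \qquad \text{for all } P, Q \in \mathcal{P}.
\]

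With this identity in hand, the three claimed properties follow immediately from the axioms of the norm $\|\cdot\|_{\cH_k}$. Non-negativity is immediate. For the ``if and only if'' part, ${\rm GPD}_k(P,Q) = 0$ holds iff $\Phi_k(P) = \Phi_k(Q)$ in $\cH_k$ by definiteness of the norm, and by the assumed injectivity of the embedding operator $\Phi_k$ (i.e., $k$ being characteristic) this is equivalent to $P = Q$. Symmetry follows from $\|a - b\|_{\cH_k} = \|b - a\|_{\cH_k}$ applied with $a = \Phi_k(P)$ and $b = \Phi_k(Q)$. The triangle inequality follows by inserting $\Phi_k(R)$ and invoking the triangle inequality of $\|\cdot\|_{\cH_k}$:
\[
{\rm GPD}_k(P,Q) = \left\| \bigl(\Phi_k(P) - \Phi_k(R)\bigr) + \bigl(\Phi_k(R) - \Phi_k(Q)\bigr) \right\|_{\cH_k} \le {\rm GPD}_k(P,R) + {\rm GPD}_k(R,Q).
\]

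The only genuinely substantive point — and the one I would spell out most carefully — is the well-definedness step: that under boundedness of $k$ the Bochner-type integral $\int k(\cdot,x)\,dP(x)$ converges in $\cH_k$ and represents the integration functional, since this is where the hypothesis that $k$ is bounded is actually used (the remaining verifications use only that $k$ is characteristic). Everything else is a mechanical transcription of the norm axioms through the GPD--MMD identity, so I would keep it short. I do not anticipate any real obstacle; if a self-contained argument avoiding Corollary~\ref{coro:mmd-gp} were preferred, one could instead verify the three properties directly from the definition~\eqref{eq:GP-distance-measures} using the linearity of $F \mapsto \int F\,dP$ and Minkowski's inequality in $L_2$ of the underlying probability space, but routing through the RKHS embedding is cleaner and reuses results already in place.
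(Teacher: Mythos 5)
Your proof is correct and follows essentially the same route as the paper: both reduce the three metric axioms to the norm axioms of $\cH_k$ via the GPD--MMD identity of Corollary~\ref{coro:mmd-gp}, with injectivity of the embedding supplying the ``only if'' in the definiteness property. Your explicit treatment of the well-definedness of $\Phi_k(P)$ under boundedness of $k$ is a minor addition the paper leaves implicit, but the argument is the same.
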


\begin{remark}
The triangle inequality in Corollary~\ref{coro:GPD-characteristic} is a consequence of the equivalence between the GPD and the RKHS distance between the embeddings:
\begin{align*}
 {\rm GPD}_k(P,Q)
 & = \left\| \Phi_k (P) - \Phi_k (Q) \right\|_{\cH_k} \\
& \leq  \left\| \Phi_k (P) - \Phi_k (R) \right\|_{\cH_k} + \left\| \Phi_k (R) - \Phi_k (Q) \right\|_{\cH_k}  \\
& = {\rm GPD}_k(P,R) + {\rm GPD}_k(R,Q),
\end{align*}
where the triangle inequality in the RKHS holds because $\left\| \cdot \right\|_{\cH_k}$ is a distance metric on $\cH_k$.

\end{remark}

Intuitively, a kernel $k$ being characteristic implies that the capacity of the RKHS $\cH_k$ is large enough to distinguish all the probability distributions.
In turn, this implies that GP sample paths $F \sim \GP(0,k)$ are diverse enough so that integrals $\int F(x)dP(x)$, which can be understood as ``features'' of distribution $P$, capture all the characteristics of $P$.
This also suggests that one can get insights about GP sample paths by understanding the properties of the RKHS.

\paragraph{Empirical estimation of GPD}

As is well known, the square of the MMD~\eqref{eq:MMD-sec-mean-embed}, and thus the square of the GPD~\eqref{eq:GP-distance-measures}, can be written in terms of the integrals of the kernel:
\begin{align*}
 &   \int \int  k(x,x') dP(x)dP(x')
    +  \int \int  k(y,y') dQ(y)dQ(y') \\
 &   - 2 \int \int  k(x,y) dP(x)dQ(y).
\end{align*}
Thus, if we have i.i.d.~samples $X_1, \dots, X_n$ from $P$ and $Y_1, \dots, Y_m$ from $Q$, then, as for the MMD, an unbiased estimator of the squared GPD can be defined as
$$
\frac{1}{n(n-1)} \sum_{i \not= j}k(X_i, X_j)
+ \frac{1}{m(m-1)} \sum_{i \not= j}k(Y_i, Y_j)
-  \frac{2}{nm} \sum_{i, j}k(X_i, Y_j),
$$
which can be used as a test statistic for two-sample hypothesis testing, where one wants to test $P = Q$ or $P
\not= Q$ based on the samples \citep{GreBorRasSchetal12}.

On the other hand, if $P$ is a known distribution and $Q$ is an empirical distribution $Q = \sum_{i=1}^n w_i \delta_{x_i}$ with sample points $x_1, \dots, x_n \in \cX$ and weight coefficients $w_1, \dots, w_n \in \mathbb{R}$, and $Q$ is constructed to approximate $P$, then the GPD (=MMD) quantifies the {\em error} of $Q$ approximating $P$.
In this context, the equivalence between the GPD~\eqref{eq:GP-distance-measures} and the MMD~\eqref{eq:MMD-sec-mean-embed} can be interpreted as the equivalence between the {\em average case error} and the {\em worst case error}:
\begin{align*}
& \sqrt{ \bE_{F \sim \GP(0,k)} \left[ \left( \int F(x) dP(x) - \sum_{i=1}^n w_i F(x_i)  \right)^2 \right]} \\
& = \sup_{ \| f \|_{\cH_k} \leq 1}  \left(   \int f(x)dP(x) - \sum_{i=1}^n w_i f(x_i)   \right).
\end{align*}
This equivalence is classic and known in the numerical integration literature \citep[e.g.,][Corollary 7 in p.40]{Rit00}.

\section{Bayesian and Kernel Quadrature for Numerical Integration}

\label{sec:kernel_and_bayesian_quadrature}

We next discuss the problem of {\em numerical integration}.
Suppose that there is a function $f: \cX \to \mathbb{R}$ and we are interested in computing its integral
\begin{equation} \label{eq:integral-3369}
    \int f(x) dP(x)
\end{equation}
for a known probability measure $P$ on $\cX$.
For example, if we consider a generic Bayesian inference problem, $x$ may be the parameters of a statistical model, $f(x)$ the likelihood function and $P(x)$ the prior distribution; the integral~\eqref{eq:integral-3369} then is the marginal likelihood or the model evidence,  a key quantity of interest one wants to compute.

The task of numerical integration is to {\em estimate} the value of the integral~\eqref{eq:integral-3369} by appropriately selecting input points $x_1, \dots, x_n \in \cX$ and evaluating the corresponding function values $f(x_1), \dots, f(x_n)$.
For example, if we generate $x_1, \dots, x_n$ by i.i.d.~random sampling from $P$, then the empirical average $\frac{1}{n}\sum_{i=1}^n f(x_i)$ is a Monte Carlo estimate of the integral, whose error decays at the rate $n^{-1/2}$ as $n$ increases.
However, Monte Carlo is inappropriate if the evaluation of $f(x)$ takes time or is expensive.
For example, suppose $f(x)$ is the output of a computationally expensive simulator, where $x$ represents the input parameters. For instance, $f(x)$ could be the simulated global average temperature 50 years in the future, with $x$ encoding relevant environmental and socio-economic inputs (e.g., greenhouse gas emission trajectories, population growth, or technological development).
In this case, one may need to run a supercomputer for days to weeks to compute one output $f(x)$ for a given $x$.
In such cases, one must carefully select the evaluation locations $x_1, \dots, x_n$ to compute the function values $f(x_1), \dots, f(x_n)$, as the total number $n$ of evaluations cannot be large.

To select $x_1, \dots, x_n$ wisely, one should exploit not only the knowledge about the probability measure $P$ but also prior knowledge or assumption about the function $f$.
For example, one could assume the continuity of $f$ such that if we change $x$ slightly, the corresponding simulation output $f(x)$ also changes slightly.
Such a smoothness assumption can be naturally encoded by modeling that $f$ is a sample path of a specific GP.
Based on this GP model, one can decide the evaluation locations $x_1, \dots, x_n$ and estimate the integral~\eqref{eq:integral-3369}; this approach is called {\em Bayesian quadrature} \citep[e.g.,][]{diaconis1988bayesian,Oha91,GhaZou03,briol2019probabilistic,karvonen2018bayes}.
Alternatively, one could express prior knowledge about the function $f$ using an RKHS.
By assuming that $f$ belongs to a specific RKHS, one could select the evaluation points $x_1, \dots, x_n$ and estimate the integral; this is the approach of {\em kernel quadrature} \citep[e.g.,][]{CheWelSmo10,DicKuoSlo13,Bac17,kanagawa2020convergence}.

What is the relation between  Bayesian quadrature and kernel quadrature?
As one could anticipate, there is a certain equivalence between them \citep{huszar2012optimally}, while their modelling assumptions seem different (i.e., $f \sim \GP(0,k)$ v.s.~$f \in \cH_k$).
Here, we will discuss this equivalence.

\subsection{Bayesian Quadrature from a Geometric Viewpoint}

\begin{figure}[t]
    \centering
    \includegraphics[width=1\linewidth]{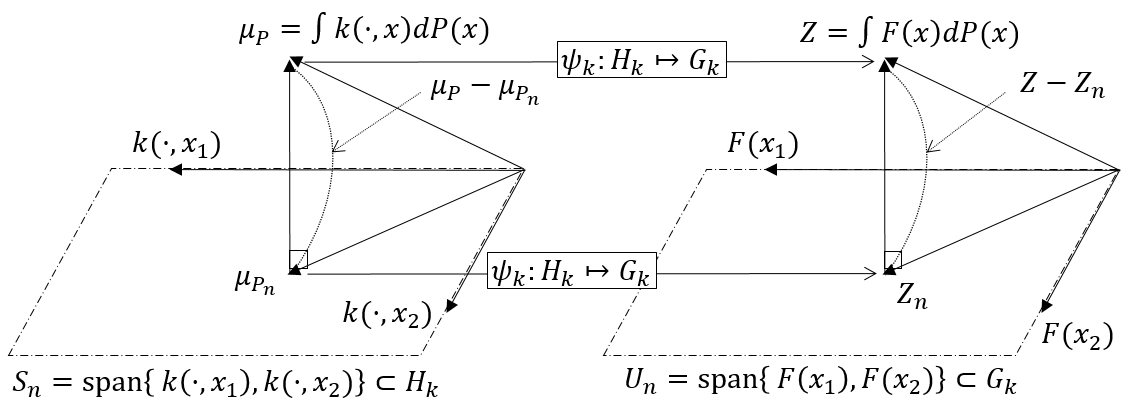}
    \caption{Geometric interpretation of the equivalence between kernel and Bayesian quadrature.
    On the left, $\mu_{P_n} = \sum_{i=1}^n c_i^* k(\cdot,x_i)$ is the best approximation of the kernel mean embedding $\mu_P = \int k(\cdot,x)dP(x)$ using   $k(\cdot, x_1), \dots, k(\cdot, x_n)$, or equivalently the orthogonal projection of $\mu_P$ onto the subspace $\cS_n \subset \cH_k$ spanned by $k(\cdot, x_1), \dots, k(\cdot, x_n)$ (here $n = 2$).
On the right, $Z_n$ is the best approximation of the integral $Z = \int F(x)dP(x)$ of a Gaussian process $F \sim \GP(0,k)$ using  $F(x_1), \dots, F(x_n)$, or equivalently the orthogonal projection of $Z$ onto the subspace $\cU_n \subset \cG_k$ spanned by $F(x_1), \dots, F(x_n)$.  Lemma~\ref{lemma:post-mean-var-ghs-gen} shows that $Z_n$ is the posterior mean of $Z$ given $F(x_1), \dots, F(x_n)$,  and that  the (squared) length of the residual $Z- Z_n$ is the posterior variance given $F(x_1), \dots, F(x_n)$. The canonical isometry $\psi_k: \cH_k \to \cG_k$ maps $\mu_P$ to $Z$, and $\mu_{P_n}$ to $Z_n$, thus implying the equivalence between $\mu_P$ and $Z$ and that between $\mu_{P_n}$ to $Z_n$. Accordingly, the (squared) length of $\mu_P-\mu_{P_n}$ is equal to that of $Z - Z_n$, which is the posterior variance.
    }
    \label{fig:equiv-quadrature}
\end{figure}

To simplify the discussion, we suppose here that the evaluation points $x_1, \dots, x_n$ have already been given, and focus on estimating the integral~\eqref{eq:integral-3369} using the observed function values $f(x_1), \dots, f(x_n)$.
In Bayesian quadrature, we suppose that the function $f$ is a sample of $\GP(0,k)$, and thus
\begin{equation} \label{eq:conditioning-BQ-3415}
    F(x_i) = f(x_i)  \quad (i = 1,\dots,n), \quad F \sim \GP(0,k).
\end{equation}
What we want is the posterior distribution of the integral
\begin{equation} \label{eq:BQ-integral-4319}
    \int F(x) dP(x),
\end{equation}
given the conditioning~\eqref{eq:conditioning-BQ-3415}.
This posterior is Gaussian, because the integral~\eqref{eq:BQ-integral-4319} without conditioning is Gaussian and the conditioning variables $F(x_1), \dots, F(x_n)$ are also Gaussian.
Therefore, to derive the posterior, it is sufficient to identify the conditional expectation and variance of \eqref{eq:BQ-integral-4319} given the conditioning~\eqref{eq:conditioning-BQ-3415}.

Here, recall that the integral~\eqref{eq:BQ-integral-4319} is an element of the Gaussian Hilbert space $\cG_k$, which is induced from the GP, that corresponds to the kernel mean embedding of $P$ in the RKHS $\cH_k$ (see Example~\ref{example:integral}):
\begin{equation} \label{eq:GP-integral-embedd-3427}
    \int F(x) dP(x) = \psi_k \left( \int k(\cdot, x)dP(x)  \right) \in \cG_k,
\end{equation}
where $\psi_k: \cH_k \to \cG_k$ is the canonical isometry.

The following lemma provides formulas and geometric interpretations for the conditional expectation and variance of a generic element $Z \in \cG_k$ in the Gaussian Hilbert space.
It is a generalization of Lemma~\ref{lemma:interp-post-mean-var-ghs-gen} in which $Z$ is the GP function value $F(x)$ at a given point $x$.
We prove it for completeness.

\begin{lemma} \label{lemma:post-mean-var-ghs-gen}
Let $k: \cX \times \cX \to \mathbb{R}$ be a kernel and $ \cG_k$ be the Gaussian Hilbert space associated with $F \sim \GP(0, k)$.
Let $x_1, \dots, x_n \in \cX$ such that the kernel matrix $ {\bm K}_n := (k(x_i, x_j)_{i,j = 1}^n \in \mathbb{R}^{n \times n} $ is invertible.
Define
$$
\cU_n
:= \left\{ c_1 F(x_1) + \cdots + c_n F(x_n):~   c_1, \dots, c_n \in \mathbb{R}  \right\}
\subset \cG_k
$$
as the subspace spanned by $F(x_1), \dots, F(x_n) \in \cG_k$.
Let $Z \in \cG_k$ be arbitrary, and define $Z_n \in \cG_k$ and $Z_n^\perp$ as the orthogonal projection of $Z$ onto $\cU_n$ and its residual, respectively:
$$
Z_n := \arg\min_{Y \in \cU_n} \| Z - Y \|_{\cG_k}, \quad Z_n^\perp := Z - Z_n.
$$
Then we have
\begin{align} \label{eq:3696}
  Z_n &= \mathbb{E}[Z \mid F(x_1), \dots, F(x_n)], \\
 \| Z_n^\perp \|_{\cG_k}^2 &=  {\rm Var}[Z \mid F(x_1), \dots, F(x_n)]. \nonumber
\end{align}
\end{lemma}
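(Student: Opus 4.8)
The plan is to reduce both identities to the single fact that, inside a Gaussian Hilbert space, orthogonality and statistical independence coincide. First I would invoke the orthogonal decomposition $Z = Z_n + Z_n^\perp$, where $Z_n \in \cU_n$ is the orthogonal projection of $Z$ onto $\cU_n$ — which exists and is unique because $\cU_n$ is finite-dimensional, hence closed, in the Hilbert space $\cG_k$ — and $Z_n^\perp := Z - Z_n$ is the residual, lying in the orthogonal complement $\cU_n^\perp$. Since $Z_n \in \cU_n$, it can be written as $Z_n = \sum_{i=1}^n c_i F(x_i)$ for some $c_1,\dots,c_n \in \mathbb{R}$, so $Z_n$ is measurable with respect to $\mathcal{F}_n$, the $\sigma$-algebra generated by $F(x_1),\dots,F(x_n)$. (The invertibility of ${\bm K}_n$ ensures $\cU_n$ is genuinely $n$-dimensional, but this is not needed for the argument.)

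The crucial step is to argue that $Z_n^\perp$ is independent of $(F(x_1),\dots,F(x_n))$. By the definition of the GHS inner product, $Z_n^\perp \perp \cU_n$ is exactly the statement $\mathbb{E}[Z_n^\perp F(x_i)] = \langle Z_n^\perp, F(x_i)\rangle_{\cG_k} = 0$ for every $i$, i.e., $Z_n^\perp$ is uncorrelated with each conditioning variable. Now every element of $\cG_k$ is a zero-mean Gaussian random variable, and any finite family of elements of $\cG_k$ is jointly Gaussian: this follows from the construction of $\cG_k$ in Section~\ref{sec:explict-const-GHS} as the mean-square closure of finite linear combinations of the $F(x)$'s, together with the fact that a mean-square limit of Gaussian vectors is again Gaussian. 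Applying this to the vector $(Z_n^\perp, F(x_1),\dots,F(x_n))$, being uncorrelated is equivalent to being independent, so $Z_n^\perp$ is independent of $\mathcal{F}_n$; in particular $\mathbb{E}[Z_n^\perp \mid \mathcal{F}_n] = \mathbb{E}[Z_n^\perp] = 0$, since $Z_n^\perp \in \cG_k$ has mean zero.

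With these two observations the conclusion follows by routine manipulation of conditional expectations. For the first identity,
\begin{align*}
\mathbb{E}[Z \mid F(x_1),\dots,F(x_n)]
&= \mathbb{E}[Z_n \mid \mathcal{F}_n] + \mathbb{E}[Z_n^\perp \mid \mathcal{F}_n] \\
&= Z_n + 0 = Z_n ,
\end{align*}
using the $\mathcal{F}_n$-measurability of $Z_n$. For the second identity, since $Z_n$ is precisely this conditional expectation,
\begin{align*}
{\rm Var}[Z \mid F(x_1),\dots,F(x_n)]
&= \mathbb{E}[(Z - Z_n)^2 \mid \mathcal{F}_n] \\
&= \mathbb{E}[(Z_n^\perp)^2 \mid \mathcal{F}_n]
= \mathbb{E}[(Z_n^\perp)^2]
= \| Z_n^\perp \|_{\cG_k}^2 ,
\end{align*}
where the third equality again uses independence of $Z_n^\perp$ from $\mathcal{F}_n$ and the last is the definition of the GHS norm.

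I expect the only delicate point to be the passage from orthogonality to independence, which hinges on all elements of a Gaussian Hilbert space being jointly Gaussian; once that is granted, everything else is bookkeeping. This argument also makes transparent that Lemma~\ref{lemma:interp-post-mean-var-ghs-gen} is the special case $Z := F(x)$, so one could alternatively present this proof first and obtain that lemma as a corollary.
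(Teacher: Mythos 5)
Your proposal is correct and follows essentially the same route as the paper's proof: orthogonality of $Z_n^\perp$ to $\cU_n$ gives zero correlation with each $F(x_i)$, joint Gaussianity upgrades this to independence, and the two identities then follow by routine conditional-expectation bookkeeping. The only difference is that you spell out why elements of $\cG_k$ are jointly Gaussian (mean-square limits of Gaussian vectors are Gaussian), a point the paper takes for granted; this is a welcome clarification but not a different argument.
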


\begin{proof}
Because $Z_n^\perp$ is orthogonal to any element in $\cU_n$ by definition, we have $\left< Z_n^\perp, F(x_i) \right>_{\cG_k} = \bE[ Z_n^\perp F(x_i) ] = 0$ for all $i = 1, \dots, n$.
Thus $Z_n^\perp$ is independent of $F(x_1), \dots, F(x_n)$ and
 $$
\mathbb{E}[ Z_n^\perp \mid F(x_1), \dots, F(x_n)  ] = \mathbb{E}[Z_n^\perp] = 0,
 $$
where $\mathbb{E}[Z_n^\perp] = 0$ follows from $\cG_k$ consisting of zero-mean Gaussian random variables.

Since $Z_n \in \cU_n$,  it can be written as $Z_n = c_1^* F(x_1) + \cdots + c_n^* F(x_n)$ for some constants $c_1^*, \dots, c_n^* \in \mathbb{R}$.
Therefore,
\begin{align*}
& \mathbb{E}[Z \mid F(x_1), \dots, F(x_n)]
 =
 \mathbb{E}[ Z_n + Z_n^\perp \mid F(x_1), \dots, F(x_n)  ] \\
 & =    \mathbb{E}[ Z_n \mid F(x_1), \dots, F(x_n)  ] \\
 & =  \mathbb{E}\left[ \sum_{i=1}^n c_i^* F(x_n) \mid F(x_1), \dots, F(x_n)  \right] = \sum_{i=1}^n c_i^* F(x_i) = Z_n.
\end{align*}
This proves the first identity in \eqref{eq:3696}.
For the second identity in \eqref{eq:3696}, we have
\begin{align*}
 & {\rm Var}[Z \mid F(x_1), \dots, F(x_n)]
  = \mathbb{E}[ (Z-Z_n)^2 \mid F(x_1), \dots, F(x_n) ] \\
 & = \mathbb{E}[ (Z_n^\perp)^2 \mid F(x_1), \dots, F(x_n) ] =  \mathbb{E}[ (Z_n^\perp)^2  ]  = \left\| Z_n^\perp \right\|_{\cG_k}^2,
\end{align*}
which completes the proof.
\end{proof}

Lemma~\ref{lemma:post-mean-var-ghs-gen} shows that the conditional expectation of any element $Z \in \cG_k$ is its orthogonal projection $Z_n := \arg\min_{Y \in \cU_n} \| Z - Y \|_{\cG_k}$ onto the subspace $\cU_n$ spanned by the conditioning variables $F(x_1), \dots, F(x_n) \in \cG_k$.
(Here, note that $\mathbb{E}[Z \mid F(x_1), \dots, F(x_n)]$ is a random variable, as $F(x_1), \dots, F(x_n)$ are random variables.)
Writing $Y = c_1 F(x_1) + \cdots + c_n F(x_n)$, the orthogonal projection $Z_n$ can be written as $Z_n = c_1^* F(x_1) + \cdots + c_n^*F(x_n)$, where the coefficients $c_1^*, \dots, c_n^* \in \mathbb{R}$ are given by
\begin{align} \label{eq:BQ-coefficients-3489}
 c_1^*, \dots, c_n^*
& := \argmin_{c_1, \dots, c_n \in \mathbb{R}} \left\| \sum_{i=1}^n c_i F(x_i) - Z \right\|_{\cG_k}^2 \\
& = \argmin_{c_1, \dots, c_n \in \mathbb{R}} \mathbb{E} \left[ \left(\sum_{i=1}^n c_i F(x_i) - Z\right)^2 \right], \nonumber
\end{align}
i.e., $Z_n$ solves the least-squares problem of approximating $Z$ as a linear combination of $F(x_1), \dots, F(x_n)$ and thus $Z_n$ is the best approximation of $Z$ in this sense.

On the other hand, the conditional variance ${\rm Var}[Z \mid F(x_1), \dots, F(x_n)]$ is the squared norm $\| Z_n^\perp \|_{\cG_k}^2$ of the residual $Z_n^\perp = Z - Z_n$, i.e., the squared distance between the original element $Z$ and its projection $Z_n \in \cU_n$.
This can be understood as the mean-square error of the best approximation $Z_n$ against $Z$:
\begin{align*}
  {\rm Var}[Z \mid F(x_1), \dots, F(x_n)]
 = \left\| Z_n - Z \right\|_{\cG_k}^2
 = \mathbb{E} \left[\left( Z_n - Z \right)^2 \right].
\end{align*}

Let us now apply Lemma~\ref{lemma:post-mean-var-ghs-gen} to the case $Z :=
\int F(x) dP(x)$ to derive its conditional distribution.
Figure \ref{fig:equiv-quadrature} visualizes Lemma~\ref{lemma:post-mean-var-ghs-gen} in this case.
To derive the conditional expectation, we need to compute the coefficients $c_1^*, \dots, c_n^*$ in \eqref{eq:BQ-coefficients-3489}.
Recalling the equivalence between the GP integral $\int F(x)dP(x) \in \cG_k$ and the kernel mean embedding $\int k(\cdot, x)dP(x) \in \cH_k$ via the canonical isometry $\psi_k: \cH_k \to \cG_k$ (see \eqref{eq:GP-integral-embedd-3427}), we can write
\begin{align*}
& \left\| \sum_{i=1}^n c_i F(x_i) - \int F(x)dP(x) \right\|_{\cG_k}^2
=   \left\| \sum_{i=1}^n c_i k(\cdot, x_i) - \int k(\cdot, x)dP(x) \right\|_{\cH_k}^2  \\
& = \sum_{i,j =1}^n c_i c_j k(x_i,x_j) - 2 \sum_{i=1}^n \int k(x_i, x)dP(x) + \int \int k(x,x')dP(x)dP(x').
\end{align*}
Minimizing this with respect to $c_1, \dots, c_n$ leads to the solution
\begin{align*}
    & (c_1^*, \dots, c_n^*)^\top := {\bm K}_n^{-1} {\bm \mu}_n, \\
    &\text{where}~~  {\bm \mu}_n := \left( \int k(x_1, x)dP(x), \dots,  \int k(x_n, x)dP(x) \right)^\top \in \mathbb{R}^n,
\end{align*}
where ${\bm K}_n := (k(x_i, x_j))_{i,j=1}^n \in \mathbb{R}^{n \times n}$ is the kernel matrix.
With these coefficients $c_1^*, \dots, c_n^*$, the conditional expectation of the integral is given by
\begin{align}
& \mathbb{E}\left[ \left. \int F(x) dP(x) \right| F(x_1), \dots, F(x_n) \right]
= Z_n = \sum_{i=1}^n c_i^* F(x_i).   \label{eq:3755}
\end{align}
For the conditional variance, we can write
\begin{align}
& {\rm Var}\left[ \int F(x) dP(x) \mid F(x_1), \dots, F(x_n) \right]  =  \left\| Z - Z_n \right\|_{\cG_k}^2 \nonumber \\
& = \left< Z - Z_n, Z - Z_n \right>_{\cG_k}
 \stackrel{(*)}{=} \left< Z, Z - Z_n \right>_{\cG_k}  \nonumber \\
& = \left< \int k(\cdot, x)dP(x),\ \int k(\cdot, x)dP(x) - \sum_{i=1}^n c_i^* k(\cdot, x_i) \right>_{\cH_k} \nonumber \\
& = \left\| \int k(\cdot, x)dP(x) \right\|_{\cH_k}^2  - \sum_{i=1}^n c_i^* \int k(x_i, x)dP(x) \nonumber \\
& = \int \int k(x, x')dP(x)dP(x') - {\bm \mu}_n^\top {\bm K}_n^{-1} {\bm \mu}_n, \label{eq:3761}
\end{align}
where $\stackrel{(*)}{=}$ follows from $Z_n$ and $Z-Z_n$ being orthogonal in $\cG_k$.

Recall that, as stated in \eqref{eq:conditioning-BQ-3415}, we want the conditional distribution of the integral given the conditioning $F(x_i) = f(x_i)$, $i=1,\dots,n$.
Substituting this conditioning in
\eqref{eq:3755}, we thus obtain the conditional expectation (or the posterior mean) of the integral
\begin{align} \label{eq:posterior-mean-BQ-3570}
& \mathbb{E}\left[ \left. \int F(x) dP(x) \right| F(x_i) = f(x_i), i=1,\dots,n \right]
=   \sum_{i=1}^n c_i^* f(x_i).
\end{align}
On the other hand, notice that the last expression of the posterior variance~ \eqref{eq:3761} does {\em not} depend on $F(x_1), \dots, F(x_n)$; thus, the conditioning $F(x_i) = f(x_i)$ by the actual observed function values $f(x_1), \dots, f(x_n)$ does {\em not} affect the posterior variance; see Remarks~\ref{rem:post-ver-GP-interp} and \ref{remark:post-indep-observations}.

To summarize, the posterior distribution of the integral $\int F(x)dP(x)$, given the prior $F \sim \GP(0,k)$ and the conditioning $F(x_i) = f(x_i)$ for $i = 1, \dots, n$, is the Gaussian distribution with  mean~\eqref{eq:posterior-mean-BQ-3570} and variance~\eqref{eq:3761}.
This is how Bayesian quadrature estimates the integral and quantifies its uncertainty.

While we assumed that the evaluation points $x_1, \dots, x_n$ have already been given, it is also possible to {\em optimize} them by using
the variance~\eqref{eq:3761} as an objective function to minimize \citep[e.g.,][]{gessner2020active,pronzato2023performance}.
We will see next that this optimal selection of the design points $x_1, \dots, x_n$ is equivalent to the minimization of the MMD, the fact first pointed out by \citet{huszar2012optimally}.

\begin{remark} \label{remark:post-indep-observations}
The posterior variance~\eqref{eq:3761} being independent of observed function values is not specific to Bayesian quadrature but is common for Gaussian process regression-based methods {\em without} hyperparameter selection for the kernel. Suppose the kernel parameters (or the kernel itself) are selected based on observed function values (e.g., via marginal likelihood maximization). In that case, the posterior variance will depend on the function values, and thus, it can adapt to the unknown ground-truth function $f$. See e.g.~\citet{naslidnyk2025comparing} for an asymptotic argument about this point.

\end{remark}

\begin{remark}
To compute the posterior mean
~\eqref{eq:posterior-mean-BQ-3570} and variance~\eqref{eq:3761}, one needs an analytic expression of $\int k(\cdot, x)dP(x)$ for the kernel $k$ and distribution $P$, thus requiring a  compatibility between $k$ and $P$.
For example, a closed-form expression is available when $k$ and $P$ are Gaussian on $\cX = \Re^d$.
For other examples, including the kernel Stein discrepancy \citep{OatGirCho17}, we refer to \citet{briol2025dictionary} and references therein.
\end{remark}

\subsection{MMD Interpretation of the Quadrature Posterior Variance}

Let us now discuss the equivalence between kernel quadrature and Bayesian quadrature.
As can be seen from Lemma~\ref{lemma:post-mean-var-ghs-gen} and Figure~\ref{fig:equiv-quadrature}, the posterior variance~\eqref{eq:3761} is the squared distance between $Z = \int F(x)dP(x)$ and its projection $Z_n$.
Here, note that
$$
Z = \psi_k\left( \int k(\cdot, x) dP(x) \right), \quad
Z_n = \sum_{i=1}^n c_i^* F(x_i) = \psi_k \left( \sum_{i=1}^n c_i^* k(\cdot, x_i) \right).
$$
Since the canonical isometry $\psi_k: \cH_k \to \cG_k$ preserves the distance, we have
\begin{align}
&     {\rm Var}\left[ \int F(x) dP(x) \mid F(x_1), \dots, F(x_n) \right]
 = \left\| Z - Z_n \right\|_{\cG_k}^2 &  \nonumber \\
 & = \left\| \int k(\cdot, x)dP(x) - \sum_{i=1}^n c_i^* k(\cdot, x_i) \right\|_{\cH_k}^2 = {\rm MMD}_k^2(P, P_n),&  \label{eq:equiv-BQ-post-var-MMD}
\end{align}
where the last expression is the squared MMD between $P$ and the empirical distribution
$$
P_n := \sum_{i=1}^n c_i^* \delta_{x_i}.
$$
Here,
$$
\sum_{i=1}^n c_i^* k(\cdot, x_i)
$$ is the orthogonal projection of the  mean embedding $\int k(\cdot, x)dP(x)$ onto the subspace
$$
S_n := \left\{ c_1 k(\cdot, x_1) + \cdots + c_n k(\cdot, x_n):~ \ c_1, \dots, c_n \in \mathbb{R}   \right\} \subset \cH_k
$$
spanned by $k(\cdot, x_1), \dots, k(\cdot, x_n)$ (see \eqref{eq:BQ-coefficients-3489}):
$$
c_1^*, \dots, c_n^* = \argmin_{c_1, \dots, c_n \in \mathbb{R}} \left\|  \int k(\cdot, x)dP(x) - \sum_{i=1}^n c_i k(\cdot, x_i) \right\|_{\cH_k}^2.
$$
In this sense, $\sum_{i=1}^n c_i^* k(\cdot,x_i)$ is the best approximation of $\int
k(\cdot, x)dP(x)$.

Therefore, using the posterior variance~\eqref{eq:3761} as an objective function for selecting $x_1, \dots, x_n$ is equivalent to selecting $x_1, \dots, x_n$ to minimize the MMD between $P$ and its best approximation $P_n = \sum_{i=1}^n c_i^* \delta_{x_i}$.
Kernel quadrature is a methodology for numerical integration and sampling using MMD in such a way.

Lastly, note that the squared distance between $Z$ and $Z_n$ can be written as the squared GPD~\eqref{eq:GP-distance-measures} between $P$ and $P_n$:
$$
\left\| Z - Z_n \right\|_{\cG_k}^2
= \mathbb{E}\left[ \left( \int F(x)dP(x) - \sum_{i=1}^n c_i^* F(x_i) \right)^2 \right]
= {\rm GPD}^2_k(P, P_n).
$$
Therefore, by the equivalence between GPD and MMD discussed in~Section~\ref{sec:GPD-MMD}. we have
$$
\left\| Z - Z_n \right\|_{\cG_k}^2 = {\rm GPD}^2_k(P, P_n) = {\rm MMD}_k^2(P, P_n).
$$
Thus, the equivalence in \eqref{eq:equiv-BQ-post-var-MMD} also follows from the GPD-MMD equivalence.

\section{Gaussian Processes and Hilbert Schmidt Independence Criterion} \label{sec:dependence}

Suppose that, given two random variables $X \in \cX$ and $Y \in \cY$, where $\cX$ and $\cY$ are measurable spaces (which can be different), we want to quantify the strength (or the lack) of the statistical dependence between $X$ and $Y$.
(In practice, one must estimate the dependence of $X$ and $Y$ based on data $(X_1,Y_1), \dots, (X_n, Y_n)$; we will discuss this later.)
If $\cX = \cY = \mathbb{R}$, the most basic way is to compute the covariance between $X$ and $Y$,
$$
{\rm Cov}[X, Y]  := \mathbb{E}\left[ \left(X - \mathbb{E}[X] \right) \left(Y - \mathbb{E}[Y] \right)  \right]
$$
and the correlation,
$
{\rm Corr}[X, Y] := \frac{{\rm Cov}[X,Y]}{ \sqrt{{\rm Var}[X]} \sqrt{{\rm Var}[Y]} }.
$
The covariance and correlation only quantify the {\em linear} dependence between $X$ and $Y$.
Therefore, they cannot capture the {\em nonlinear} dependence between $X$ and $Y$ even when it exists.

For example, consider the following simple setting.
Let $Z \sim {\rm unif}[-2, 2]$ be a uniform random variable on the interval $[-2, 2] \subset \mathbb{R}$ and define random variables $X$ and $Y$ by
\begin{equation} \label{eq:example-nonlinear-dependence}
X = \begin{cases}
1 & \text{if } Z \in [0,1) \\
-1 & \text{if } Z \in (-1,0) \\
0 & \text {otherwise}
\end{cases},
\quad
Y = \begin{cases}
-1 & \text{if } Z \in [1,2] \\
1 & \text{if } Z \in [-2,-1] \\
0 & \text {otherwise}
\end{cases}.
\end{equation}
In this case, there is dependence between $X$ and $Y$, since $X$ takes non-zero values only when $Y$ is zero (thus $Y$ provides information about $X$).
However,  it is easy to see that the covariance between $X$ and $Y$ is zero, ${\rm Cov}[X, Y] = 0$ (and so ${\rm Corr}[X,Y] = 0$).
Thus, the covariance cannot capture the dependence between $X$ and $Y$ in this example.

One way to capture the nonlinear dependence between $X$ and $Y$ is to apply {\em nonlinear transforms} to $X$ and $Y$ and compute the covariance between the transformed variables.
That is, if we appropriately choose nonlinear functions $f: \cX \to \mathbb{R}$ and $g: \cY \to \mathbb{R}$ so that the covariance ${\rm Cov}[ f(X), g(Y) ]$ between $f(X)$ and $g(Y)$ is non-zero, we can detect the dependence between $X$ and $Y$.
Let us consider again the example in \eqref{eq:example-nonlinear-dependence}.
Define $f$ as $f(x) := 1(x \not= 0)$ and $g$ as $g(y) = 1(y=0)$, where $1(A)$ denotes the indicator function such that $1(A) = 1$ if statement $A$ is true and $1(A) = 0$ otherwise.
Then we have ${\rm Cov}[f(X), g(Y)] = 1/4$, thus detecting the nonlinear dependence between $X$ and $Y$.

\subsection{Gaussian Process Independence Criterion}
The question is how to find appropriate nonlinear transforms $f$ and $g$.
As in the above example, to choose nonlinear transforms appropriately, one needs to know how $X$ and $Y$ are related.
This is a chicken-and-egg problem because to detect the dependence between $X$ and $Y$, one needs to know how they are dependent.
Therefore, in practice, one must try various nonlinear transforms and compute the resulting covariances.

Like GPD in Section~\ref{sec:GPD-MMD}, one possible way to consider various nonlinear transforms $f$ and $g$ is to generate them {\em randomly}, compute the covariance ${\rm Cov}[f(X), g(Y)]$ for each generated $f$ and $g$, and take the expectation of the squared covariance.
Specifically, if we consider GPs, $F \sim \GP(0,k)$ and $G \sim \GP(0,\ell)$, where $k: \cX \times \cX \to \mathbb{R}$ and $\ell: \cY \times \cY \to \mathbb{R}$, as nonlinear transforms, we can define the following {\em Gaussian Process Independence Criterion} (GPIC):
\begin{equation} \label{eq:GPIC}
    {\rm GPIC}_{k,\ell}(X,Y)
:= \mathbb{E}_{F \sim \GP(0,k),\ G \sim \GP(0, \ell)} \left[ {\rm Cov}^2 [ F(X), G(Y) ]  \right].
\end{equation}
Intuitively, if $F \sim \GP(0, k)$ and $G \sim \GP(0, \ell)$ are diverse enough, there is a possibility to generate $F$ and $G$ such that ${\rm Cov}[F(X), G(Y)]$ is non-zero, making GPIC  positive and thus detecting the nonlinear dependence between $X$ and $Y$.
On the other hand, if $X$ and $Y$ are independent, then ${\rm Cov}[F(X), G(Y)] = 0$ for any $F$ and $G$ and the GPIC becomes zero.

Now the question is which kernels $k$ and $\ell$ make the GPIC~\eqref{eq:GPIC} work as an independence criterion. That is, when does the following hold?
\begin{equation} \label{eq:GPIC-independence-4178}
    {\rm GPIC}_{k,\ell}(X,Y) = 0\  \text{holds if and {\em only if}}\ X \ \text{and}\ Y \ \text{are independent}.
\end{equation}
Similar to the case of GPD, it is convenient to view the problem from an RKHS perspective.
In the following, we will see that the GPIC~\eqref{eq:GPIC} is identical to the {\em Hilbert-Schmidt Independence Criterion} (HSIC) \citep{GreBouSmoSch05}, a kernel measure of statistical dependence widely studied in the literature.

\begin{remark} \label{remark:GPIC-brownian}
 GPIC~\eqref{eq:GPIC} was proposed as the Brownian Distance Covariance by \citet[Definitions 4 and 5]{SzeRiz09}, focusing on the setting where $\cX = \mathbb{R}^p$ and $\cY = \mathbb{R}^q$ with $p, q \in \mathbb{N}$ are Euclidean spaces and the kernels $k$ and $\ell$ are the Brownian motion covariance kernels defined as
\begin{align*}
    k(x,x') &:= \| x \| + \| x' \| - 2 \| x - x' \|, \quad x,x' \in \Re^p, \\
\ell(y,y') &:=  \| y \| + \| y' \| - 2 \| y - y' \|, \quad y,y' \in \Re^q.
\end{align*}
These kernels make $F \sim \GP(0,k)$ and $G \sim \GP(0,\ell)$ Brownian motions.
The definition~\eqref{eq:GPIC} generalizes the Brownian distance covariance to be defined for generic GPs, allowing for discussing connections to its RKHS counterpart, the HSIC.

\end{remark}

Before proceeding, let us show that
the GPIC can be expressed analytically in terms of the kernels $k$ and $\ell$.
To this end, below $\bE_{X,Y}$ denotes that the expectation is taken with respect to the joint distribution $P_{X,Y}$ of $X$ and $Y$; $\bE_{X}$ (or $\bE_{Y}$) denotes that the expectation is with respect to the marginal distribution $P_X$ of $X$ (or the marginal distribution $P_Y$ of $Y$).
Let $(X', Y')$ be independent copies of $(X,Y)$.
Then, we can rewrite the GPIC as follows.

\begin{proposition} \label{prop:GPIC-kernel-expression}
Let $\cX$ and $\cY$ be measurable spaces, and $(X,Y) \in \cX \times \cY$ be random variables, and $k: \cX \times \cX \to \mathbb{R}$ and $\ell: \cY \times \cY \to \mathbb{R}$ be kernels.
Suppose $\mathbb{E}_X [k(X,X)] < \infty$ and $\mathbb{E}_Y [\ell(Y,Y)] < \infty$.
Then, for the GPIC defined in \eqref{eq:GPIC}, we have
\begin{align}
    {\rm GPIC}_{k,\ell}(X,Y)
=&   \bE_{X,Y} \bE_{X',Y'} [ k(X,X') \ell(Y,Y') ] \nonumber \\
& +  \bE_{X}  \bE_{X'}   \bE_{Y} \bE_{Y'}[ k(X,X') \ell(Y, Y') ] \nonumber \\
&  -  2 \bE_{X, Y}  \bE_{X'}  \bE_{Y'} [ k(X,X') \ell(Y,Y') ]. \label{eq:GPIC-kernel-expression}
\end{align}
\end{proposition}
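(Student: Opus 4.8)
The plan is to prove Proposition~\ref{prop:GPIC-kernel-expression} by a direct expansion of the squared covariance: write it out using an independent copy $(X',Y')$ of $(X,Y)$ so that products of data-expectations become single joint expectations, then interchange the expectation over the Gaussian processes with the data-expectations and use the defining second-moment property of a GP.

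First I would fix realizations of $F \sim \GP(0,k)$ and $G \sim \GP(0,\ell)$, taken independent of each other and of $(X,Y)$, and write the inner covariance (over the joint law $P_{X,Y}$) as
$$
{\rm Cov}[F(X), G(Y)] = \bE_{X,Y}[F(X)G(Y)] - \bE_X[F(X)]\,\bE_Y[G(Y)].
$$
Squaring, with $a := \bE_{X,Y}[F(X)G(Y)]$, $b := \bE_X[F(X)]$, $c := \bE_Y[G(Y)]$, gives $a^2 - 2abc + (bc)^2$. Relabelling the dummy variables of the several marginal expectations as $X'$ and $Y'$, each of these three terms is of the form $\bE[\,F(X)F(X')G(Y)G(Y')\,]$ with the data-expectation taken over: $(X,Y)$ and $(X',Y')$ jointly for $a^2$; $(X,Y)$ together with the marginals of $X'$ and $Y'$ for $abc$; and the marginals of $X, X', Y, Y'$ for $(bc)^2$.

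Next I would take $\bE_{F,G}$ and interchange it with the data-expectations. Since $F$ and $G$ are independent and zero-mean, for fixed data points $\bE_{F,G}[F(X)F(X')G(Y)G(Y')] = \bE_F[F(X)F(X')]\,\bE_G[G(Y)G(Y')] = k(X,X')\,\ell(Y,Y')$, by the second-moment property of a GP with covariance $k$ (resp.\ $\ell$). Substituting this into the three terms yields precisely the right-hand side of \eqref{eq:GPIC-kernel-expression} — the first term from $a^2$, the second from $(bc)^2$, and the $-2$ cross term from $abc$.

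The main obstacle, and the only place the hypotheses $\bE_X[k(X,X)] < \infty$ and $\bE_Y[\ell(Y,Y)] < \infty$ are used, is justifying the interchange (Fubini) and checking the intermediate quantities $\bE_X[F(X)]$, $\bE_Y[G(Y)]$ are almost surely well defined. I would control the absolute integrand: Cauchy--Schwarz in $F$ gives $\bE_F|F(x)F(x')| \le \sqrt{k(x,x)\,k(x',x')}$, and likewise in $G$, so $\bE_{F,G}|F(X)F(X')G(Y)G(Y')| \le \sqrt{k(X,X)k(X',X')}\,\sqrt{\ell(Y,Y)\ell(Y',Y')}$; taking the data-expectation of this bound and using $\bE_X[\sqrt{k(X,X)}] \le \sqrt{\bE_X[k(X,X)]} < \infty$ (and similarly for $Y$) shows the relevant double integral is absolutely convergent in all three configurations, so Fubini applies. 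The same bound with $x'=x$ gives $\bE_F[F(X)^2] = k(X,X)$ and hence $\bE_X[F(X)^2] < \infty$ almost surely, so the conditional means are well defined. (Throughout I would assume, as is implicit in the definition of the criterion, that $F$ and $G$ admit jointly measurable versions, so that $F(X)$ and $G(Y)$ are genuine random variables.)
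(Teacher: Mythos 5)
Your proposal is correct and follows essentially the same route as the paper's proof: expand the squared covariance into three terms using independent copies $(X',Y')$, interchange $\bE_{F,G}$ with the data expectations via Fubini, and apply $\bE_F[F(x)F(x')]=k(x,x')$ and $\bE_G[G(y)G(y')]=\ell(y,y')$, with the Fubini hypothesis verified by the same double application of Cauchy--Schwarz that reduces everything to $\bE_X[k(X,X)]<\infty$ and $\bE_Y[\ell(Y,Y)]<\infty$. Your added remark on joint measurability of $F$ and $G$ is a sensible (if implicit in the paper) technical caveat, but otherwise there is nothing to change.
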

\begin{proof}
    See Appendix~\ref{sec:proof-GPIC-kernel-expression}.
\end{proof}

Therefore, if one has i.i.d.~data $(X_1,Y_1), \dots, (X_n,Y_n) \sim P_{X,Y}$, then one can estimate the GPIC by estimating the expectations in \eqref{eq:GPIC-kernel-expression}:
\begin{align}
&   \frac{1}{n (n-1)} \sum_{i \not= j}  k(X_i, X_j) \ell(Y_i, Y_j) \nonumber \\
& +  \left( \frac{1}{n (n-1)}  \sum_{i \not= j} k(X_i,X_j) \right) \left( \frac{1}{n (n-1)}   \sum_{i \not= j} \ell(Y_i,Y_j) \right) \nonumber \\
& - \frac{2}{n} \sum_{i=1}^n \left( \frac{1}{n-1} \sum_{j \not= i} k(X_i, X_j) \right) \left( \frac{1}{n-1} \sum_{j \not= i} \ell(Y_i, Y_j) \right).  \label{eq:GPIC-estimate}
\end{align}

The expression~\eqref{eq:GPIC-kernel-expression} is identical to the kernel expression of HSIC \citep[Lemma 1]{GreBouSmoSch05}, so the equivalence between GPIC and HSIC has already been shown.
In the following, we delve into this equivalence from an RKHS viewpoint.

\subsection{Hilbert-Schmidt Independence Criterion}

To study the equivalence between the GPIC and HSIC, let us introduce the {\em product kernel} $k \otimes \ell: (\cX \times \cY)  \times (\cX \times \cY) \to \mathbb{R}$ of $k$ and $\ell$ defined on the product space $\cX \times \cY$:
$$
k \otimes \ell \left( (x, y), (x',y') \right) := k(x,x') \ell(y,y'),  \quad (x,y), (x', y') \in \cX \times \cY.
$$
This quantifies the similarity between two pairs $(x,y)$ and $(x',y')$.
It is a positive definite kernel on $\cX \times \cY$ and thus is the covariance function of a GP on $\cX \times \cY$ such that
$$
H \sim \GP(0, k \otimes \ell), \quad \bE\left[ H(x,y) H(x', y') \right] = k(x,x') \ell(y,y').
$$
Therefore, using this in the expression~\eqref{eq:GPIC-kernel-expression}, the GPIC can be rewritten as
\begin{align}
    {\rm GPIC}_{k,\ell}(X,Y)
= &\    \bE_{X,Y} \bE_{X',Y'} [ \bE_H \left[ H(X,Y) H(X', Y') \right]  ] \nonumber \\
& +  \bE_{X}  \bE_{X'}   \bE_{Y} \bE_{Y'}[ \bE_H \left[ H(X,Y) H(X', Y') \right] ] \nonumber \\
&  -  2 \bE_{X, Y}  \bE_{X'}  \bE_{Y'} [\bE_H \left[ H(X,Y) H(X', Y') \right]  ] \nonumber \\
=&\ \bE_H \left[ \left( \bE_{X,Y} [H(X,Y)] - \bE_{X'} \bE_{Y'}[H(X',Y')] \right)^2 \right] \nonumber \\
=&\ {\rm GPD}^2_{k \otimes \ell}(P_{X,Y}, P_X \otimes P_Y), \label{eq:GPIC-GPD-4290}
\end{align}
where the second equality follows from Fubini's theorem, which is applicable if $k$ and $\ell$ are bounded.
The expression~\eqref{eq:GPIC-GPD-4290} shows that the GPIC computes the expectations of each $H$ with respect to the joint distribution $P_{X,Y}$ and the product of the marginal distributions $P_X \otimes P_Y$,\footnote{Precisely, $P_X \otimes P_Y$ is defined as the distribution on $\cX \times \cY$ such that $P_X \otimes P_Y (A \times B) = P_X(A) P_Y(B)$ for all measurable sets $A \subset \cX$ and $B \subset \cY$.} compute their squared difference, and take the expectation with respect to $H \sim \GP(0, k \otimes \ell)$.
Therefore, GPIC measures the discrepancy between $P_{X,Y}$ and $P_X \otimes P_Y$ using $H \sim \GP(0, k\otimes\ell)$.
Indeed, as written in \eqref{eq:GPIC-GPD-4290}, this is the squared GPD between $P_{X,Y}$ and $P_X \otimes P_Y$ (see Eq.~\eqref{eq:GP-distance-measures}).
Since $X$ and $Y$ are independent and only if $P_{X,Y} = P_X \otimes P_Y$, measuring the discrepancy between $P_{X,Y}$ and $P_X \otimes P_Y$ is to quantify the dependence between $X$ and $Y$; this is what GPIC does.

From the equivalence between the GPD and MMD discussed in Section~\ref{sec:GPD-MMD}, the GPIC can be written as the MMD between $P_{X,Y}$ and $P_X \otimes P_Y$; this is the HSIC \citep[Section 2.3]{SmoGreSonSch07}:
\begin{align}
    & {\rm GPIC}_{k, \ell}(X,Y)
    = {\rm GPD}_{k \otimes \ell}^2(P_{X,Y}, P_X \otimes P_Y) \nonumber \\
    & = {\rm MMD}^2_{k \otimes \ell}(P_{X,Y}, P_X \otimes P_Y) \nonumber \\
    &=  \left\|  \Phi_{k \otimes \ell} (P_{X,Y}) -  \Phi_{k \otimes \ell} (P_X \otimes P_Y)  \right\|_{\cH_{k \otimes \ell}}^2 = {\rm HSIC}_{k, \ell}(X,Y), \label{eq:HSIC-MMD-4304}
\end{align}
where $\cH_{k \otimes \ell}$ is the RKHS of the product kernel $k \otimes \ell$,  $ \Phi_{k \otimes \ell}: \mathcal{P}_{\cX \times \cY} \to \cH_{k \otimes \ell}$ is the  mean embedding map for the set all joint probability distributions on $\cX \times \cY$, denoted by $\mathcal{P}_{\cX \times \cY}$:
$$
 \Phi_{k \otimes \ell} (P) := \int (k \otimes \ell) (\cdot, (x,y)) dP(x,y), \quad P \in \mathcal{P}_{\cX \times \cY}.
$$
Therefore, whether the GPIC is qualified as an independence criterion is determined by the capacity of the RKHS $\cH_{k \otimes \ell}$.
A sufficient condition is that $k \otimes \ell$ is a characteristic kernel,~i.e.,
$$
\Phi_{k \otimes \ell} (P) = \Phi_{k \otimes \ell} (Q) \ \text{if and only if} \ P = Q \quad (P, Q \in \mathcal{P}_{\cX \times \cY}).
$$
However, for the GPIC (= HSIC) to be an independence criterion as in \eqref{eq:GPIC-independence-4178}, the following weaker condition is sufficient:
\begin{equation} \label{eq:Independence-mean-embeds}
    \Phi_{k \otimes \ell} (P) = \Phi_{k \otimes \ell} (P_\cX \otimes P_\cY) \ \text{if and only if} \ P = P_\cX \otimes P_\cY \quad  (P \in \mathcal{P}_{\cX \times \cY}),
\end{equation}
where $P_\cX$ and $P_\cY$ are the marginal distributions of a generic joint distribution $P$ on $\cX$ and $\cY$, respectively.\footnote{Here, $P_\cX$ and $P_\cY$ should not be confused with $P_X$ and $P_Y$, which are the marginal distributions of random variables $X$ and  $Y$.}

The condition~\eqref{eq:Independence-mean-embeds} is satisfied when $k$ and $\ell$ are characteristic for probability distributions on their respective domains $\cX$ and $\cY$, given that $\cX$ and $\cY$ are separable metric spaces \citep[Theorem 3 (i)]{szabo2018characteristic}.
For example, if $k$ and $\ell$ are Gauss, Laplace or Mat\'ern kernels on $\cX \subset \mathbb{R}^p$ and $\cY \subset \mathbb{R}^q$ with $p, q \in \mathbb{N}$, then the GPIC becomes an independence criterion as in \eqref{eq:GPIC-independence-4178}, and thus its estimator~\eqref{eq:GPIC-estimate} can be used as a test statistic for testing the independence between $X$ and $Y$ \citep{GreFukTeoSonSchSmo08}.

Lastly, the MMD expression of HSIC in \eqref{eq:HSIC-MMD-4304} can be written as the square of the ``covariance'' between the two RKHS-valued random variables $k(\cdot, X) \in \cH_{k}$ and $\ell(\cdot, Y) \in \cH_{\ell}$.
To show this, note that
\begin{align*}
\Phi_{k \otimes \ell} (P_{X,Y})
& = \bE_{X,Y} [k(\cdot, X) \otimes \ell(\cdot, Y)]  \in \cH_{k \otimes \ell}, \\
\Phi_{k \otimes \ell} (P_X \otimes P_Y)
& = \bE_{X} \bE_{Y} [k(\cdot, X) \otimes \ell(\cdot, Y)] \\
& =  \bE_{X} [k(\cdot, X)] \otimes \bE_{Y}[ \ell(\cdot, Y)]  \in \cH_{k \otimes \ell}.
\end{align*}
Therefore,
\begin{align}
& \Phi_{k \otimes \ell} (P_{X,Y}) - \Phi_{k \otimes \ell} (P_X \otimes P_Y) \nonumber \\
& = \bE_{X,Y} [k(\cdot, X) \otimes \ell(\cdot, Y)] - \bE_{X} [k(\cdot, X)] \otimes \bE_{Y}[ \ell(\cdot, Y)] \nonumber \\
& = \bE_{X,Y} \left[\left(\ k(\cdot, X) - \bE_{X} [k(\cdot, X)]  \ \right) \otimes \left(\ \ell(\cdot, Y) - \bE_{Y}[ \ell(\cdot, Y)] \ \right) \right] \nonumber \\
& =: {\rm Cov}[ k(\cdot, X), \ell(\cdot, Y) ] \in \cH_{k \otimes \ell}.  \label{eq:cov-op-def}
\end{align}
Thus, the HSIC~\eqref{eq:HSIC-MMD-4304} can be expressed as
\begin{align} \label{eq:HSIC-Cov-express-4419}
& {\rm HSIC}_{k, \ell}(X, Y)
= \left\|   {\rm Cov}[ k(\cdot, X), \ell(\cdot, Y) ]  \right\|_{\cH_{ k \otimes \ell } }^2.
\end{align}
This expression is essentially the original definition of HSIC by \citet{GreBouSmoSch05}.
The ``covariance'' defined in \eqref{eq:cov-op-def}, which is an element in $\cH_{k\otimes\ell}$, is identical to the {\em cross-covariance operator} $C_{X,Y}: \cH_k \to \cH_\ell$ when viewed as the operator from $\cH_k$ to $\cH_\ell$ such that
\begin{align*}
& C_{X,Y} f
 := \bE_{X,Y} \left[\left<\ f,  \ k(\cdot, X) - \bE_{X} [k(\cdot, X)]  \ \right>_{\cH_k}  \left(\ \ell(\cdot, Y) - \bE_{Y}[ \ell(\cdot, Y)] \ \right) \right]  \\
& = \bE_{X,Y} \left[\left(\ f(X) - \bE_{X} [f(X)]  \ \right)  \left(\ \ell(\cdot, Y) - \bE_{Y}[ \ell(\cdot, Y)] \ \right) \right] \in \cH_\ell \quad (f \in \cH_k).
\end{align*}
Then, the expression of HSIC in \eqref{eq:HSIC-Cov-express-4419} is identical to the Hilbert-Schmidt norm of $C_{X,Y}$: this is why it is called ``Hilbert-Schmidt Independence Criterion'' \citep{GreBouSmoSch05}.
To summarize, we have shown the following equivalence between the GPIC and HSIC:
\begin{align*}
{\rm GPIC}_{k,\ell}(X,Y)
& = \mathbb{E}_{F \sim \GP(0,k),\ G \sim \GP(0, \ell)} \left[ {\rm Cov}^2 [ F(X), G(Y) ]  \right] \\
& =  \left\|   {\rm Cov}[ k(\cdot, X), \ell(\cdot, Y) ]  \right\|_{\cH_{ k \otimes \ell } }^2 = {\rm HSIC}_{k, \ell}(X, Y)  .
\end{align*}

\begin{remark}
    The equivalence between GPIC and HSIC implies that the Distance Covariance, a popular measure of statistical dependence proposed by \citet{szekely2007measuring},  is a special case of HSIC, the fact first pointed out by \citet{SejSriGreFuk13}.
    The reasoning is as follows:
    The Distance Covariance is identical to the Brownian Distance Covariance \citep[Theorem 8]{SzeRiz09}, and the latter is a special case of GPIC  (see Remark~\ref{remark:GPIC-brownian}), which is identical to HSIC.
    Thus, we have recovered the result of \citet[Theorem 24]{SejSriGreFuk13} through the GPIC-HSIC equivalence.
\end{remark}

\chapter{Conclusions}

This monograph studied the connections and equivalences between GP-based and reproducing kernel-based methods for fundamental problems in machine learning, statistics, and numerical analysis, including regression, interpolation, integration, and statistical discrepancies.
It was motivated by parallel developments of the two approaches from the corresponding almost separate research communities.
Understanding the relations between the two approaches enables either community to learn from the developments in the other community more easily.
This monograph serves for this understanding.

Equivalences between the two approaches are shown to hold because of the fundamental equivalence between the Gaussian Hilbert space (GHS) and the reproducing kernel Hilbert space (RKHS).
In either approach, the estimate of an unknown quantity of interest, such as the unknown function value at a test input in interpolation, is shown to be the projection of that quantity onto the data subspace in a Hilbert space.
This Hilbert space is the GHS for GP methods and the RKHS for RKHS ones, and the equivalence between the two approaches holds because of the equivalence between the GHS and the RKHS.
The uncertainty of the unknown quantity of interest quantified by a GP method, on the other hand, is shown to be the distance between that quantity and the projection on the data subspace in the GHS.
The uncertainty becomes smaller if the distance between the unknown quantity and the data subspace is smaller.
From the GHS-RKHS equivalence, this distance is identical to the corresponding distance in the RKHS and to a worst-case error of the RKHS method for estimating the quantity of interest, offering a natural RKHS interpretation of the GP uncertainty.

Moreover, the sample path properties of a GP are determined by the RKHS of the covariance kernel.
A sample of the GP does not lie in the RKHS with probability one if the kernel is the sum of infinitely many basis functions.
However, a GP sample is shown to lie in an RKHS ``slightly larger'' than the original RKHS with probability one.
In this sense, a GP sample is slightly less smooth than the RKHS functions.
Understanding GP sample properties is important as they are the properties of the hypothesis space of a GP method.
The RKHS offers such an understanding.

There are many topics on the relations between GP-based and reproducing kernel-based methods not covered in this monograph.
\begin{itemize}
    \item {\bf Hyperparameter selection.}  In practice, hyperparameters of a GP/RKHS-based method are selected using data.
    Such parameters include the parameters and the form of the kernel, and the regularization constant for KRR or the noise variance for GPR.
    For example, it is common to select the parameters of KRR to minimize the cross-validation score, and to select the parameters of GPR to maximize the marginal likelihood (or to minimize the negative log marginal likelihood).
Understanding the relations between these different selection criteria would help in choosing the best criterion for one's purpose.

    \item {\bf Approximate solutions.}
    Generally, GP-based and RKHS-based methods do not scale to large amounts of data.
    Their exact solutions require computing the kernel matrix of the data size, the computational complexity of which increases quadratically with the data size, and may require computing the inverse of the (regularized) kernel matrix, whose complexity increases cubically with the data size.
    Computationally-efficient approximations exist for either approach, such as data reduction or summarization \citep[e.g.,][]{williams2000using,titsias2009variational} and finite-basis approximations of the kernel \citep[e.g.,][]{rahimi2007random,hensman2018variational}.
    These approximations for GP-based and RKHS-based methods are developed separately in the two communities, while there are connections and equivalences between them \citep{wild2021connections,Wynne-Veit-VGP-2022}.
    A better understanding of their relations would help develop more efficient approximations for either approach.

    \item {\bf Other problems.} There are many other problems for which GP-based and RKHS-based methods are developed but not covered in this monograph, such as (global) optimization \citep[e.g.,][]{jones1998efficient,srinivas2010gaussian,chowdhury2017kernelized,rudi2024finding,garnett2023bayesian}, solving partial differential equations or physics-informed machine learning \citep[e.g.,][]{chen2021solving,pfortner2022physics,hennig2022probabilistic,doumeche2024physics}, and analyzing the infinite-width limits of neural networks \citep[e.g.,][]{neal1996bayesian,bach2017breaking,matthews2018gaussian,jacot2018neural,lee2018deep}.
    The relations between the GP-based and RKHS-based methods in these problems are fruitful topics for future research.
\end{itemize}

\chapter*{Acknowledgements}

We thank numerous friends and colleagues for their helpful comments and discussions, without naming them individually to avoid the risk of missing those who provided valuable feedback.
We also thank the anonymous reviewer of the previous shorter version of the monograph who suggested proving the generic equivalence result in Theorem~\ref{theo:master-theorem} and motivated us to revise the previous version significantly.

The original idea for this manuscript arose during Workshop 16481 of the Leibniz-Centre for Computer Science at Schlo{\ss} Dagstuhl. The authors would like to express their gratitude to the Centre for their hospitality and support.
MK and PH acknowledge support by the European Research Council (StG Project PANAMA and CoG ANUBIS 101123955). Views and opinions expressed are however those of the author(s) only and do not necessarily reflect those of the European Union or the European Research Council. Neither the European Union nor the granting authority can be held responsible for them.
PH is a member of the Machine Learning Cluster of Excellence, funded by the Deutsche Forschungsgemeinschaft (DFG, German Research Foundation) under Germany's Excellence Strategy – EXC number 2064/1 – Project number 390727645.
We also gratefully acknowledge the German Federal Ministry of Education and Research (BMBF) through the Tübingen AI Center (FKZ: 01IS18039A); and funds from the Ministry of Science, Research and Arts of the State of Baden-Württemberg.
BKS is supported by NSF-DMS-1713011 and NSF CAREER award DMS-1945396. DS is supported in part by The Alan Turing Institute (EP/N510129/1).

All authors except the first are arranged in alphabetical order.

\appendix

\chapter{Proofs and Auxiliary Results}

\section{Convergent Point Sets for the Kernel Mean Embedding}

\label{sec:kmean-est-realization}
\begin{proposition} \label{prop:kmean-est-realization}
Let $\cX$ be a measurable space, $P$ be a probability distribution on $\cX$, $k$ be a kernel on $\cX$ and $\cH_k$ be its RKHS.
Assume that $\int \sqrt{k(x,x)} dP(x) < \infty$.
Then there exists  an increasing sequence of $n$-sets $(x_i^{(n)})_{i=1}^n \subset \cX$ such that
$$
 \left\| \int k(\cdot,x)dP(x) - \frac{1}{n} \sum_{i=1}^n k(\cdot,x^{(n)}_i) \right\|_{\cH_k} \leq C n^{-1/2}.
$$
\end{proposition}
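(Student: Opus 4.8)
The plan is to use a probabilistic (randomized) argument: draw the points $x_1^{(n)}, \dots, x_n^{(n)}$ i.i.d.\ from $P$ and show that the expected squared RKHS distance between the kernel mean embedding $\mu_P := \int k(\cdot,x)\,dP(x)$ and its empirical counterpart $\hat\mu_n := \frac1n \sum_{i=1}^n k(\cdot,x_i^{(n)})$ is of order $n^{-1}$; a point set achieving the bound then exists by the probabilistic method. First I would note that the condition $\int \sqrt{k(x,x)}\,dP(x) < \infty$ implies $k(\cdot,x) \in \cH_k$ is Bochner-integrable (since $\|k(\cdot,x)\|_{\cH_k} = \sqrt{k(x,x)}$), so $\mu_P$ is well-defined as an element of $\cH_k$ and $\mathbb{E}[k(\cdot,X)] = \mu_P$ for $X \sim P$. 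The required moment condition for the variance computation, namely $\int k(x,x)\,dP(x) < \infty$, follows from $\int \sqrt{k(x,x)}\,dP(x) < \infty$ on a probability space by Jensen's inequality (or Cauchy--Schwarz), so all relevant quantities are finite.

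The key computation is the standard bias--variance identity for i.i.d.\ sampling in a Hilbert space. With $X_1, \dots, X_n \overset{i.i.d.}{\sim} P$, the vectors $k(\cdot,X_i) - \mu_P$ are i.i.d., zero-mean, $\cH_k$-valued random elements, so
\begin{align*}
\mathbb{E}\left[ \left\| \hat\mu_n - \mu_P \right\|_{\cH_k}^2 \right]
&= \frac{1}{n^2} \mathbb{E}\left[ \left\| \sum_{i=1}^n \big( k(\cdot,X_i) - \mu_P \big) \right\|_{\cH_k}^2 \right]
= \frac{1}{n} \mathbb{E}\left[ \left\| k(\cdot,X_1) - \mu_P \right\|_{\cH_k}^2 \right] \\
&= \frac{1}{n} \left( \mathbb{E}[k(X_1,X_1)] - \|\mu_P\|_{\cH_k}^2 \right) \le \frac{1}{n} \int k(x,x)\,dP(x) =: \frac{C^2}{n},
\end{align*}
where the cross terms vanish by independence and the final equality uses the reproducing property $\langle k(\cdot,X_i), k(\cdot,X_j)\rangle_{\cH_k} = k(X_i,X_j)$ together with $\langle k(\cdot,X_i), \mu_P \rangle_{\cH_k} = \mathbb{E}_{X'}[k(X_i,X')]$. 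Hence there exists at least one realization $(x_i^{(n)})_{i=1}^n$ of the sample with $\|\hat\mu_n - \mu_P\|_{\cH_k} \le C n^{-1/2}$, which gives the stated bound for that particular $n$.

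The remaining subtlety is the word \emph{increasing}: the naive argument produces, for each $n$, a separate point set with no nesting relation between the $n$-set and the $(n+1)$-set. The plan to handle this is to argue along a single infinite i.i.d.\ sequence $x_1, x_2, \dots \overset{i.i.d.}{\sim} P$ and show that, with probability one, $\|\hat\mu_n - \mu_P\|_{\cH_k} = O(n^{-1/2})$ simultaneously for all $n$ — for instance via a maximal inequality for Hilbert-space-valued martingales (the partial sums $S_n := \sum_{i=1}^n (k(\cdot,x_i)-\mu_P)$ form an $\cH_k$-valued martingale), or by invoking the law of the iterated logarithm / Marcinkiewicz--Zygmund-type bounds in Hilbert space, which give $\|S_n\|_{\cH_k} = O(\sqrt{n \log\log n})$ a.s.; a mild strengthening to $O(\sqrt{n})$ along a subsequence, or simply accepting the $\log\log n$ factor absorbed into a slightly worse constant after thinning, suffices. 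I expect this nesting requirement — getting a \emph{single} sequence that works for all $n$ with the clean $n^{-1/2}$ rate rather than $n^{-1/2}\sqrt{\log\log n}$ — to be the main obstacle; one clean route is to take $x_1, x_2, \dots$ i.i.d.\ from $P$, observe $\mathbb{E}[\|S_n\|_{\cH_k}^2] \le C^2 n$, apply Doob's $L^2$ maximal inequality to the martingale $(S_n)$ to get $\mathbb{E}[\max_{m \le n} \|S_m\|_{\cH_k}^2] \le 4 C^2 n$, and then use a dyadic-blocking (Chebyshev plus Borel--Cantelli over blocks $2^j \le n < 2^{j+1}$) argument to conclude that along \emph{some} realization of the sequence $\sup_n \|S_n\|_{\cH_k}/\sqrt{n} < \infty$, yielding the desired increasing family $(x_i^{(n)})_{i=1}^n$ with $\|\hat\mu_n - \mu_P\|_{\cH_k} \le C' n^{-1/2}$ for a possibly enlarged constant $C'$.
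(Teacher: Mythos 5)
Your first two paragraphs reproduce the paper's argument exactly: draw $X_1,\dots,X_n$ i.i.d.\ from $P$, check that $\int\sqrt{k(x,x)}\,dP(x)<\infty$ gives Bochner integrability and (via Jensen) $\mathbb{E}[k(X,X)]<\infty$, compute
$\mathbb{E}\bigl[\|\tfrac1n\sum_i k(\cdot,X_i)-\mu_P\|_{\cH_k}^2\bigr]=\tfrac1n\bigl(\mathbb{E}[k(X,X)]-\mathbb{E}[k(X,\tilde X)]\bigr)$,
and invoke the probabilistic method to extract one good realization for each $n$. (The paper keeps the exact constant $\mathbb{E}[k(X,X)]-\mathbb{E}[k(X,\tilde X)]$ where you upper-bound by $\mathbb{E}[k(X,X)]$; this is immaterial.) That part is correct and is a complete proof of what the paper itself actually establishes.

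The third paragraph is where you depart from the paper, and it is the only place with a problem. You are right that the basic argument produces, for each $n$, an unrelated $n$-set; the paper's own proof does no more, and the word ``increasing'' is best read as referring to the cardinality of the sets (the superscript $(n)$ on the points signals that the whole point set may change with $n$), not to nestedness. If one does insist on nested sets, your proposed repair does not go through. In the dyadic-blocking step, Doob plus Chebyshev gives $\Pr\bigl(\max_{m\le 2^{j+1}}\|S_m\|_{\cH_k}>\lambda 2^{j/2}\bigr)\le 8\sigma^2/\lambda^2$, a bound that is \emph{constant} in $j$, so the Borel--Cantelli series does not converge. Worse, the target event is unreachable by any probabilistic-method argument along i.i.d.\ sequences: by the law of the iterated logarithm for i.i.d.\ sums in a separable Hilbert space (applicable since $\mathbb{E}\|k(\cdot,X)-\mu_P\|_{\cH_k}^2<\infty$), one has $\limsup_n\|S_n\|_{\cH_k}/\sqrt{n}=+\infty$ almost surely whenever $k(\cdot,X)-\mu_P$ is nondegenerate, so the set of realizations with $\sup_n\|S_n\|_{\cH_k}/\sqrt{n}<\infty$ has probability zero; likewise the $\sqrt{\log\log n}$ factor is unbounded and cannot be ``absorbed into a constant.'' A genuinely nested family achieving $O(n^{-1/2})$ does exist, but it comes from a deterministic greedy construction (kernel herding, i.e.\ sequential minimization of the MMD with uniform weights), not from i.i.d.\ sampling. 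For the purposes of this proposition and its use in Example~\ref{example:integral}, nestedness is not needed, and your first two paragraphs already suffice.
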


\begin{proof}
Let $X_1,\dots,X_n \sim P$ be an i.i.d.~sample.
Let $X \sim P$ and $\tilde{X}$ be an independent copy of $X$.
Then we have
\begin{eqnarray*}
&& \mathbb{E}_{X_1,\dots,X_n} \left[ \left\| \mathbb{E}_X[k(\cdot,X)] - \frac{1}{n} \sum_{i=1}^n k(\cdot,X_i)  \right\|_{\cH_k}^2 \right]  \\
&=& \mathbb{E}[k(X,\tilde{X})] - \frac{2}{n} \sum_{i=1}^n \mathbb{E}[k(X, X_i)]\\
&& + \frac{1}{n^2} \sum_{i \not= j} \mathbb{E}[k(X_i, X_j)]  + \frac{1}{n^2} \sum_i \mathbb{E}[k(X_i, X_i)]
\end{eqnarray*}
\begin{eqnarray*}
&=& \mathbb{E}[k(X,\tilde{X})] - 2\sum_{i=1}^n \mathbb{E}[k(X, \tilde{X})] + \frac{n-1}{n} \mathbb{E}[k(X, \tilde{X})]  + \frac{1}{n} \mathbb{E}[k(X, X)] \\
&=& \frac{1}{n} (\mathbb{E}[k(X, X)] - \mathbb{E}[k(X,\tilde{X})] ) =\frac{C_1}{n},
\end{eqnarray*}
where $C_1:= \mathbb{E}[k(X, X)] - \mathbb{E}[k(X,\tilde{X})]$ is non-negative and finite, the latter following from $\mathbb{E}[k(X,X)] < \infty$, which is implied from the assumption $\mathbb{E}[\sqrt{k(X,X)}] < \infty$ by Jensen's inequality.
This implies that there exists {\em at least one realization} of $(X_1,\dots,X_n)$, denoted by $(x_1^{(n)},\dots,x_n^{(n)})$, such that
$$
\left\| \mathbb{E}_X[k(\cdot,X)] - \frac{1}{n} \sum_{i=1}^n k(\cdot,x_i^{(n)})  \right\|_{\cH_k}^2 \leq \frac{C_1}{n}.
$$
Setting $C = \sqrt{C_1}$ concludes the proof.
\end{proof}

\section{Proof of Theorem \ref{theo:gen-Driscol-theorem}}

\label{sec:Proof-Driscol}

To prove Theorem \ref{theo:gen-Driscol-theorem}, we need a few preliminaries.
We first introduce  the {\em dominance operator}, the existence of which is shown by \citet[Theorem 1.1]{LukBed01}.
\begin{theorem}[Dominance operator] \label{theo:dominance}
Let $k$ and $r$ be positive definite kernels on a set $\cX$ with respective RKHSs  $\cH_k$ and $\cH_r$  satisfying $\cH_k \subset \cH_r$.
Then the inclusion operator  $I_{kr}: \cH_k \to \cH_r$ (i.e., $I_{kr}g := g$ for $g \in \cH_k$) is continuous.
Moreover, there exists a unique linear operator $L: \cH_r \to \cH_k$, called {\em dominance operator}, such that
\begin{equation} \label{eq:dominance-op}
 \left< Lf, g \right>_{\cH_k} = \left< f, g \right>_{\cH_r}, \quad \forall f \in \cH_r,\ \forall g \in \cH_k \subset \cH_r.
\end{equation}
Furthermore, $I_{kr}L: \cH_r \to \cH_r$ is bounded, positive, and self-adjoint.
\end{theorem}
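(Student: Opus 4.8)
\textbf{Proof plan for Theorem~\ref{theo:dominance} (Dominance operator).}
The plan is to build the dominance operator $L$ directly from the Riesz representation theorem applied to a suitable family of bounded linear functionals on $\cH_r$, and then to verify its defining property and the claimed structural properties of $I_{kr}L$ one at a time.

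First I would establish continuity of the inclusion operator $I_{kr}:\cH_k\to\cH_r$. By the closed graph theorem it suffices to check that $I_{kr}$ has closed graph: if $g_n\to g$ in $\cH_k$ and $g_n = I_{kr}g_n \to h$ in $\cH_r$, then both convergences imply pointwise convergence (evaluation functionals are bounded on each RKHS), so $g(x)=\lim g_n(x)=h(x)$ for all $x$, hence $h=g=I_{kr}g$; thus the graph is closed and $I_{kr}$ is bounded. Next, to construct $L$: for each fixed $f\in\cH_r$, consider the functional $g\mapsto \langle f, I_{kr} g\rangle_{\cH_r}$ on $\cH_k$. Using Cauchy--Schwarz and boundedness of $I_{kr}$, this functional is bounded on $\cH_k$ with norm at most $\|f\|_{\cH_r}\|I_{kr}\|$, so by the Riesz representation theorem there is a unique element of $\cH_k$, which I call $Lf$, such that $\langle Lf, g\rangle_{\cH_k} = \langle f, I_{kr}g\rangle_{\cH_r} = \langle f, g\rangle_{\cH_r}$ for all $g\in\cH_k$. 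This is exactly \eqref{eq:dominance-op}. Linearity of $f\mapsto Lf$ follows from uniqueness in the Riesz theorem applied to the (conjugate-)linear dependence of the functional on $f$; boundedness of $L$ follows from the norm bound just noted. Uniqueness of $L$ as an operator satisfying \eqref{eq:dominance-op} is immediate: if $L'$ also satisfies it, then $\langle (L-L')f, g\rangle_{\cH_k}=0$ for all $g\in\cH_k$, forcing $Lf=L'f$.

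Then I would verify that $I_{kr}L:\cH_r\to\cH_r$ is bounded, positive, and self-adjoint. Boundedness is clear as a composition of bounded operators. For self-adjointness and positivity I would take $f_1,f_2\in\cH_r$ and aim to show $\langle I_{kr}Lf_1, f_2\rangle_{\cH_r} = \langle f_1, I_{kr}Lf_2\rangle_{\cH_r}$. The difficulty here is that \eqref{eq:dominance-op} only pairs elements of $\cH_r$ against \emph{elements of $\cH_k$} in the $\cH_r$-inner product; to reach a general $f_2\in\cH_r$ I would use a density/approximation argument. A cleaner route I would try first is to compute everything on the canonical features $k(\cdot,x)\in\cH_k\subset\cH_r$: applying \eqref{eq:dominance-op} with $g=k(\cdot,x)$ and the reproducing property gives $(Lf)(x) = \langle Lf, k(\cdot,x)\rangle_{\cH_k} = \langle f, k(\cdot,x)\rangle_{\cH_r}$. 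Setting $f=r(\cdot,x')$ yields $(Lr(\cdot,x'))(x) = k(x,x')$, i.e.\ $L$ sends $r(\cdot,x')$ to $k(\cdot,x')$. From this one gets, for finite linear combinations, $\langle I_{kr}L\,r(\cdot,x'), r(\cdot,x'')\rangle_{\cH_r} = \langle k(\cdot,x'), r(\cdot,x'')\rangle_{\cH_r} = (k(\cdot,x'))(x'') $ when $k(\cdot,x')\in\cH_r$; symmetry in $x',x''$ then follows from \eqref{eq:dominance-op} read the other way, and positivity follows from $\langle I_{kr}Lf, f\rangle_{\cH_r} = \langle Lf, Lf\rangle_{\cH_k} = \|Lf\|_{\cH_k}^2\ge 0$, which is the key identity: take $g = Lf$ in \eqref{eq:dominance-op} to get $\langle Lf, Lf\rangle_{\cH_k} = \langle f, Lf\rangle_{\cH_r} = \langle f, I_{kr}Lf\rangle_{\cH_r}$. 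Self-adjointness then reads $\langle f_1, I_{kr}Lf_2\rangle_{\cH_r} = \langle Lf_1, Lf_2\rangle_{\cH_k} = \langle I_{kr}Lf_1, f_2\rangle_{\cH_r}$ using the same identity with polarization.

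The main obstacle I anticipate is the careful handling of the two different inner products in \eqref{eq:dominance-op}: the relation is stated only for a ``test'' argument lying in the smaller space $\cH_k$, so every manipulation that wants a general element of $\cH_r$ in that slot must be justified either by the identity $\langle f, I_{kr}Lf'\rangle_{\cH_r} = \langle Lf, Lf'\rangle_{\cH_k}$ (which legitimately moves the pairing into $\cH_k$) or by density of $\cH_k$, or of the span of canonical features, in the relevant space. Once that bookkeeping is done, positivity and self-adjointness of $I_{kr}L$ reduce to the one-line computation $\langle I_{kr}Lf, f'\rangle_{\cH_r} = \langle Lf, Lf'\rangle_{\cH_k}$, which is manifestly symmetric in $f,f'$ and nonnegative on the diagonal.
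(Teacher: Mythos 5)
Your proposal is correct, but it is worth noting that the paper does not actually prove Theorem~\ref{theo:dominance}: it imports the existence of the dominance operator wholesale from \citet[Theorem 1.1]{LukBed01} and only later observes, in Lemma~\ref{lemma:dominance-op-inclusion}, that $L$ coincides with the adjoint $I_{kr}^*$ of the inclusion. Your argument supplies a self-contained proof and, in effect, runs that observation in reverse: you prove continuity of $I_{kr}$ by the closed graph theorem (using that norm convergence in an RKHS implies pointwise convergence, so the graph is closed), and then \emph{define} $L$ via Riesz representation of $g\mapsto\langle f, I_{kr}g\rangle_{\cH_r}$, which is precisely $I_{kr}^*$; with that construction, Lemma~\ref{lemma:dominance-op-inclusion} becomes true by definition, and boundedness, positivity, and self-adjointness of $I_{kr}L=I_{kr}I_{kr}^*$ all reduce to the single identity $\langle f, I_{kr}Lf'\rangle_{\cH_r}=\langle Lf, Lf'\rangle_{\cH_k}$, obtained legitimately by taking $g=Lf'\in\cH_k$ in \eqref{eq:dominance-op}. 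That identity is symmetric in $f,f'$ and nonnegative on the diagonal, so no density or polarization argument is needed; the detour through canonical features $k(\cdot,x)$ and $Lr(\cdot,x')=k(\cdot,x')$ in your third paragraph is a correct but redundant side computation (it reproduces the remark the paper makes right after the theorem statement). What your route buys is a short, fully elementary proof in place of an external citation; what the paper's route buys is brevity and deference to the original source, at the cost of leaving the reader to look up why $L$ exists. One cosmetic point: the spaces here are real, so the parenthetical about conjugate-linearity in the Riesz step is unnecessary.
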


The dominance operator $L$ maps the canonical feature map in $\cH_r$ to that  in $\cH_k$.
Indeed, \eqref{eq:dominance-op} with $f = r(\cdot, x)$ implies that, for all $g \in \cH_k \subset \cH_r$,
$$
\left< L r(\cdot, x), g \right>_{\cH_k} = \left< r(\cdot,x), g \right>_{\cH_r} = g(x) .
$$
Therefore $L r(\cdot, x)$ satisfies the reproducing property of $\cH_k$, and thus $$Lr(\cdot,x) = k(\cdot,x).
$$

As pointed out by \citet[Section 2]{Ste17},  the dominance operator $L$ is nothing but the adjoint   $I_{kr}^*$ of the inclusion operator of $I_{kr}$, as summarized in the following lemma.

\begin{lemma} \label{lemma:dominance-op-inclusion}
Consider the same notation as in Theorem \ref{theo:dominance}.
Then we have $L = I_{kr}^*$.
\end{lemma}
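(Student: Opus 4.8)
The plan is to show that the dominance operator $L: \cH_r \to \cH_k$ of Theorem~\ref{theo:dominance} coincides with the Hilbert-space adjoint $I_{kr}^*$ of the inclusion operator $I_{kr}: \cH_k \to \cH_r$. Since $I_{kr}$ is continuous by Theorem~\ref{theo:dominance}, its adjoint $I_{kr}^*: \cH_r \to \cH_k$ exists as a bounded linear operator characterized by the defining relation of an adjoint, namely $\left< I_{kr}^* f, g \right>_{\cH_k} = \left< f, I_{kr} g \right>_{\cH_r}$ for all $f \in \cH_r$ and all $g \in \cH_k$. Because $I_{kr}$ is the inclusion, $I_{kr} g = g$, so this reads $\left< I_{kr}^* f, g \right>_{\cH_k} = \left< f, g \right>_{\cH_r}$ for all $f \in \cH_r$, $g \in \cH_k$.

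Now I would simply compare this with the defining property~\eqref{eq:dominance-op} of the dominance operator $L$, which states $\left< Lf, g \right>_{\cH_k} = \left< f, g \right>_{\cH_r}$ for all $f \in \cH_r$ and all $g \in \cH_k$. Subtracting, we get $\left< Lf - I_{kr}^* f, g \right>_{\cH_k} = 0$ for every $g \in \cH_k$, hence $Lf = I_{kr}^* f$ for every $f \in \cH_r$. Invoking the uniqueness clause of Theorem~\ref{theo:dominance} (which asserts $L$ is the unique linear operator satisfying~\eqref{eq:dominance-op}) is an alternative way to finish: since $I_{kr}^*$ also satisfies~\eqref{eq:dominance-op}, it must equal $L$. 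Either way, $L = I_{kr}^*$.

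This is essentially a one-line identification once the adjoint is written out, so there is no real obstacle; the only point that deserves care is making sure the adjoint $I_{kr}^*$ is well-defined, which is guaranteed precisely because Theorem~\ref{theo:dominance} tells us $I_{kr}$ is bounded (continuous). I would state that dependence explicitly. A secondary nicety worth a sentence: one can also verify directly that $I_{kr}^*$ maps $r(\cdot,x)$ to $k(\cdot,x)$, recovering the computation shown just before the lemma in the excerpt, but this is not needed for the proof and I would relegate it to a remark if at all.
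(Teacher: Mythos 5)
Your proof is correct and is essentially the same argument as the paper's: both verify that the adjoint $I_{kr}^*$ satisfies the defining relation~\eqref{eq:dominance-op} and then invoke the uniqueness of the dominance operator from Theorem~\ref{theo:dominance}. Your added remark about well-definedness of the adjoint via boundedness of $I_{kr}$ is a reasonable, if minor, point of extra care.
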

\begin{proof}
For all $f \in \cH_r$ and $g \in \cH_k \subset \cH_r$ we have
$$
\left< g, f \right>_{\cH_r} = \left<I_{kr}g, f \right>_{\cH_r} = \left<g, I_{kr}^* f \right>_{\cH_r}.
$$
Therefore $I_{kr}^*$ satisfies the defining property \eqref{eq:dominance-op} of the dominance operator.
Since the dominance operator is unique by Theorem \ref{theo:dominance}, we have $L = I_{kr}^*$.
\end{proof}

Next we define the {\em nuclear dominance} as follows.

\begin{definition}[Nuclear dominance]
Consider the same notation as in Theorem \ref{theo:dominance}.
We say that the nuclear dominance holds and write  $r \gg k$ if the operator $I_{kr}L :\cH_r \to \cH_r$ is nuclear (or of trace class), i.e.,
$$
{\rm Tr}(I_{kr}L) = \sum_{i \in I} \left<I_{kr}L \psi_i, \psi_i \right>_{\cH_r} < \infty,
$$
where $(\psi_i)_{i \in I} \subset \cH_r$ is an ONB of $\cH_r$.
\end{definition}

The following result shows that the nuclear dominance is equivalent to the inclusion operator $I_{kr}$ being Hilbert-Schmidt.
The result is essentially given by the proof of \citet[Lemma 7.4, equivalence of (i) and (iii)]{Ste17}.
\begin{lemma} \label{lemma:equiv-nuclear-HS}
Under the same notation as in Theorem \ref{theo:dominance}, assume $\cH_k \subset \cH_r$.
Then the following statements are equivalent:
\begin{enumerate}
\item The nuclear dominance holds: $r \gg k$, i.e.,  $I_{kr}L:\cH_r \to \cH_r$ is nuclear.
\item The inclusion operator $I_{kr}: \cH_k \to \cH_r$ is Hilbert-Schmidt.
\end{enumerate}
\end{lemma}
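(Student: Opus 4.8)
The plan is to establish the equivalence between nuclear dominance of $I_{kr}L$ and the Hilbert--Schmidt property of $I_{kr}$ by exploiting the identity $L = I_{kr}^*$ from Lemma~\ref{lemma:dominance-op-inclusion}, which reduces the composite operator to $I_{kr}I_{kr}^* \colon \cH_r \to \cH_r$. The key observation is the standard functional-analytic fact that for any bounded operator $A$ between Hilbert spaces, $A$ is Hilbert--Schmidt if and only if $AA^*$ (equivalently $A^*A$) is nuclear (trace class), and moreover $\|A\|_{\mathrm{HS}}^2 = \operatorname{Tr}(AA^*) = \operatorname{Tr}(A^*A)$. Applying this with $A = I_{kr}$ gives the result directly, since $I_{kr}L = I_{kr}I_{kr}^*$ is precisely $AA^*$.

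First I would record that $I_{kr}$ is a bounded (indeed continuous) linear operator by Theorem~\ref{theo:dominance}, so that its adjoint $I_{kr}^*$ exists, and then invoke Lemma~\ref{lemma:dominance-op-inclusion} to write $I_{kr}L = I_{kr}I_{kr}^*$. Next I would spell out the trace computation: fixing an orthonormal basis $(\psi_i)_{i\in I}$ of $\cH_r$,
\begin{align*}
\operatorname{Tr}(I_{kr}I_{kr}^*)
&= \sum_{i\in I} \left< I_{kr}I_{kr}^*\psi_i,\ \psi_i \right>_{\cH_r}
= \sum_{i\in I} \left< I_{kr}^*\psi_i,\ I_{kr}^*\psi_i \right>_{\cH_k}
= \sum_{i\in I} \left\| I_{kr}^*\psi_i \right\|_{\cH_k}^2,
\end{align*}
and this last quantity equals $\| I_{kr}^* \|_{\mathrm{HS}}^2$, which in turn equals $\| I_{kr} \|_{\mathrm{HS}}^2$ because an operator and its adjoint have the same Hilbert--Schmidt norm. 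Hence $\operatorname{Tr}(I_{kr}L) < \infty$ if and only if $\| I_{kr} \|_{\mathrm{HS}} < \infty$, which is exactly the claimed equivalence between statement~1 and statement~2. I would also remark that the positive self-adjoint operator $I_{kr}L = I_{kr}I_{kr}^*$ is automatically positive (being of the form $AA^*$), so its trace is a well-defined element of $[0,\infty]$ and is independent of the choice of orthonormal basis, which justifies the computation above.

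The only mild subtlety — and the place I would be most careful — is the interchange of summation and inner product, i.e.\ justifying that $\sum_i \langle AA^*\psi_i,\psi_i\rangle = \sum_i \|A^*\psi_i\|^2$ term by term; this is immediate here since $AA^*$ is positive, so each summand $\langle AA^*\psi_i,\psi_i\rangle = \|A^*\psi_i\|^2 \geq 0$ and the series of nonnegative terms has a well-defined (possibly infinite) sum, making the rearrangement harmless. I do not expect any genuine obstacle: the proof is essentially a two-line application of the adjoint identity followed by the textbook characterization of Hilbert--Schmidt operators via the trace of $AA^*$. If one wants to avoid citing that characterization as a black box, the computation displayed above already \emph{is} the proof, since it shows directly that the trace in the definition of nuclear dominance coincides with $\|I_{kr}\|_{\mathrm{HS}}^2$; one should just note in passing that the Hilbert--Schmidt norm computed with the basis $(\psi_i)$ of $\cH_r$ agrees with the one computed with any basis of $\cH_k$, which is the basis-independence of the Hilbert--Schmidt norm.
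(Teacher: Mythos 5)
Your proof is correct and follows essentially the same route as the paper's: both reduce $I_{kr}L$ to $I_{kr}I_{kr}^*$ via Lemma~\ref{lemma:dominance-op-inclusion} and then identify $\operatorname{Tr}(I_{kr}I_{kr}^*)$ with $\|I_{kr}^*\|_{\rm HS}^2 = \|I_{kr}\|_{\rm HS}^2$. Your added remarks on positivity and basis-independence are sound (and your placement of the middle inner product in $\cH_k$, where $I_{kr}^*\psi_i$ actually lives, is the more careful reading).
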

\begin{proof}

From Lemma \ref{lemma:dominance-op-inclusion}, we have
\begin{align*}
   & {\rm Tr}(I_{kr}L) = {\rm Tr}(I_{kr}I_{kr}^*) := \sum_{i \in I} \left<I_{kr} I_{kr}^* \psi_i, \psi_i \right>_{\cH_r} \\
   & = \sum_{i \in I}  \left<I_{kr}^* \psi_i, I_{kr}^*\psi_i \right>_{\cH_r}  =:   \| I_{kr}^* \|_{\rm HS}^2,
\end{align*}
where $\| \cdot \|_{\rm HS}$ denotes the Hilbert-Schmidt norm and ${\rm Tr}(\cdot)$ the trace, and $(\psi_i)_{i \in I} \subset \cH_r$ is an ONB of $\cH_r$.
Since we have $\| I_{kr}^* \|_{\rm HS} = \| I_{kr} \|_{\rm HS}$ (see e.g., \citealt[p.506]{Steinwart2008}), the assertion immediately follows.
\end{proof}

Theorem \ref{theo:gen-Driscol-theorem}  follows from  \citet[Theorem 7.5]{LukBed01}, which shows that nuclear dominance is the necessary and sufficient condition for Driscol's zero-one law, and from Lemma \ref{lemma:equiv-nuclear-HS}.

\section{Proof of Corollary \ref{coro:gauss-sample-path}} \label{sec:proof-gp-sample-path-se}
For Banach spaces $A$ and $B$, we denote by $A \hrarrow B$ that $A \subset B$ and that the inclusion is continuous.
We first need to define the notion of {\em interpolation spaces}; for details, see e.g.,~\citet[Section 7.6]{AdaFou03}, \citet[Section 5.6]{Steinwart2008}, \citet[Section 4.5]{CucZho07} and references therein.
\begin{definition}
Let $E$ and $F$ be Banach spaces such that $E \hrarrow F$.
Let $K: E \times \Re_+ : \to \Re_+$ be the $K$-functional defined by
$$
K(x,t) := K(x,t,E,F) :=  \inf_{y \in F} \left( \| x - y \|_E + t \| y \|_F \right), \quad x \in E,\ t > 0.
$$
Then for $0 < \theta \leq 1$, the interpolation space $[E, F]_{\theta,2}$ is a Banach space defined by
$$
[E, F]_{\theta,2} := \left\{ x \in E:\ \| x \|_{[E, F]_{\theta,2}} < \infty \right\},
$$
where the norm is defined by
$$
\| x \|_{[E, F]_{\theta,2}}^2 := \int_0^\infty \left( t^{-\theta} K(x,t) \right)^2 \frac{dt}{t}.
$$
\end{definition}
We will need the following lemma.
\begin{lemma} \label{lemma:interp-space-embed}
Let $E$, $F$ and $G$ be Banach spaces such that $G \hrarrow F \hrarrow E$.
Then we have $[E,G]_{\theta,2} \hrarrow [E,F]_{\theta,2}$ for all $0 < \theta \leq 1$.
\end{lemma}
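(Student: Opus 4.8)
\textbf{Proof plan for Lemma~\ref{lemma:interp-space-embed}.}
The plan is to work directly from the definition of the interpolation space via the $K$-functional, and to show that the embedding $G \hookrightarrow F$ forces a pointwise inequality between the two $K$-functionals $K(x,t,E,G)$ and $K(x,t,E,F)$, up to a multiplicative constant independent of $x$ and $t$. Once such an inequality is in hand, integrating against the weight $(t^{-\theta})^2\,dt/t$ immediately gives $\|x\|_{[E,F]_{\theta,2}} \le C\,\|x\|_{[E,G]_{\theta,2}}$, which is exactly the claimed continuous embedding $[E,G]_{\theta,2}\hookrightarrow[E,F]_{\theta,2}$.

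\textbf{Key steps.} First, since $G\hookrightarrow F$ the inclusion is continuous, so there is a constant $c>0$ with $\|y\|_F \le c\,\|y\|_G$ for all $y\in G$. Second, fix $x\in E$ and $t>0$ and compare the two infima
$$
K(x,t,E,F) = \inf_{y\in F}\left(\|x-y\|_E + t\|y\|_F\right), \qquad
K(x,t,E,G) = \inf_{y\in G}\left(\|x-y\|_E + t\|y\|_G\right).
$$
Restricting the first infimum to $y\in G\subset F$ and using $\|y\|_F\le c\|y\|_G$ gives, for every $y\in G$,
$$
\|x-y\|_E + t\|y\|_F \le \|x-y\|_E + ct\|y\|_G \le \max(1,c)\left(\|x-y\|_E + t\|y\|_G\right),
$$
hence taking the infimum over $y\in G$ yields $K(x,t,E,F)\le \max(1,c)\,K(x,t,E,G)$. (One may instead keep the cleaner bound $K(x,t,E,F)\le K(x,ct,E,G)$ and then use a change of variables in the integral; either route works, and I would pick whichever makes the constant-tracking least painful.) Third, substitute this pointwise bound into the norm,
$$
\|x\|_{[E,F]_{\theta,2}}^2 = \int_0^\infty \left(t^{-\theta}K(x,t,E,F)\right)^2\frac{dt}{t}
\le \max(1,c)^2 \int_0^\infty \left(t^{-\theta}K(x,t,E,G)\right)^2\frac{dt}{t}
= \max(1,c)^2\,\|x\|_{[E,G]_{\theta,2}}^2,
$$
so the embedding constant is $\max(1,c)$ (or, with the change-of-variables route, $c^{\theta}$). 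Finally, observe that $[E,G]_{\theta,2}\subset E$ by definition and likewise $[E,F]_{\theta,2}\subset E$, so the set inclusion $[E,G]_{\theta,2}\subset[E,F]_{\theta,2}$ follows from finiteness of the $[E,F]_{\theta,2}$-norm on elements of $[E,G]_{\theta,2}$; combined with the norm inequality this is the asserted continuous embedding.

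\textbf{Main obstacle.} There is no serious analytic obstacle here; the only thing to be careful about is bookkeeping. One must verify that the hypothesis $G\hookrightarrow F\hookrightarrow E$ genuinely lets one restrict the $F$-infimum to $G$ (it does, since $G\subset F$), and that no measurability or convergence issue arises when passing the constant inside the integral (it does not, since the integrand is nonnegative and the inequality is pointwise in $t$). The mild subtlety is tracking exactly which power of the embedding constant appears, depending on whether one bounds $K(\cdot,t,E,F)$ by $K(\cdot,t,E,G)$ or by $K(\cdot,ct,E,G)$; since the statement only claims continuity of the embedding, the precise constant is irrelevant and either bound suffices.
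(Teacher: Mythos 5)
Your proposal is correct and follows essentially the same route as the paper: extract the embedding constant $c$ from $G \hookrightarrow F$, restrict the infimum defining $K(x,t,E,F)$ to $y \in G$ to obtain a pointwise comparison of $K$-functionals (the paper uses the bound $K(x,t,E,F) \le K(x,ct,E,G)$ followed by the change of variables $s = ct$, which is the second of your two suggested routes), and then integrate. No gaps.
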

\begin{proof}
First note that, since $G \hrarrow F$, there exists a constant $c > 0$ such that $ \| z \|_{F} \leq c \| z \|_G$ holds for all $z \in G$.
Therefore, for all $x \in E$ and $t > 0$, we have
\begin{eqnarray*}
K(x, t,E,F)
&=&  \inf_{y \in F} \left( \| x - y \|_E +  t \| y \|_F \right) \\
&\leq& \inf_{z \in G} \left( \| x - z \|_E +  t \| z \|_F \right) \\
&\leq& \inf_{z \in G} \left( \| x - z \|_E +  c t \| z \|_G \right) = K(x,ct,E,G).
\end{eqnarray*}
Thus, we have
\begin{eqnarray*}
\| x \|_{[E, F]_{\theta,2}}^2 &=&  \int_0^\infty \left( t^{-\theta} K(x,t,E,F) \right)^2 \frac{dt}{t} \\
&\leq&  \int_0^\infty \left( t^{-\theta} K(x,ct,E,G) \right)^2 \frac{dt}{t} \\
&=& c^{-2\theta} \int_0^\infty \left( s^{-\theta} K(x,s,E,G) \right)^2 \frac{ds}{s} \quad (s := c t) \\
&=& c^{-2\theta} \| x \|_{[E, G]_{\theta,2}}^2, \quad \quad x \in [E, G]_{\theta,2}.
\end{eqnarray*}
which implies the assertion.
\end{proof}
We are now ready to prove Corollary \ref{coro:gauss-sample-path}.
\begin{proof}
Fix $\theta \in (0,1)$.
First we show that $\sum_{i = 1}^\infty \lambda_i^\theta \phi_i^2(x) < \infty$ holds for all $x \in \cX$, which implies that the power of RKHS $\cH_{k_\gamma}^\theta$ is well defined.
To this end, by \citet[Theorem 2.5]{Ste17}, it is sufficient to show that
\begin{equation} \label{eq:interp-sup}
[L_2(\nu), \cH_{k_\gamma} ]_{\theta,2} \hrarrow L_\infty(\nu),
\end{equation}
where $L_2(\nu)$ and $L_\infty(\nu)$ are to be understood as quotient spaces with respect to $\nu$, and $\cH_{k_\gamma}$ as the embedding in $L_2(\nu)$; see \citet[Section 2]{Ste17} for precise definition.

Let $m \in \mathbb{N}$ be such that $\theta m > d /2$, and $W_2^m(\cX)$ be the Sobolev space of order $m$ on $\cX$.
By \citet[Theorem 4.48]{Steinwart2008}, we have $\cH_{k_\gamma} \hrarrow W_2^m(\cX)$.
Therefore Lemma \ref{lemma:interp-space-embed} implies that  $[ L_2(\nu), \cH_\gamma]_{\theta, 2} \hrarrow [ L_2(\nu), W_2^m(\cX)]_{\theta, 2}$.
Note that, since $\nu$ is the Lebesgue measure, $[ L_2(\nu), W_2^m(\cX)]_{\theta, 2}$ is the Besov space $B_{22}^{\theta m}(\cX)$ of order $\theta m$ \citep[Section 7.32]{AdaFou03}.
Since $\cX$ is a bounded Lipschitz domain and $\theta m > d/2$, we have $B_{22}^{\theta m}(\cX) \hrarrow L_\infty(\nu)$ \citep[Proposition 4.6]{Tri06}; see also \citet[Theorem 7.34]{AdaFou03}.
Combining these embeddings implies \eqref{eq:interp-sup}.

We next show that $\sum_{i = 1}^\infty \lambda_i^{1-\theta} < \infty$, which implies the assertion by Theorem \ref{theo:gp-path-power-RKHS}.
To this end, for $i \in \mathbb{N}$,  define the $i$-th (dyadic) entropy number of the embedding ${\rm id}: \cH_{k_\gamma} \to L_2(\nu)$ by
\begin{align*}
 & \varepsilon_i({\rm id}: \cH_{k_\gamma} \to L_2(\nu))  \\
 := & \inf \left\{ \varepsilon > 0: \exists\ (h_j)^{2^{i-1}}_{j=1} \subset L_2(\nu)\ {\rm s.t.}\ B_{\H_{k_\gamma}} \subset \bigcup_{j=1}^{2^{i-1}} (h_j + \varepsilon B_{L_2(\nu)})  \right\},
\end{align*}
where $B_{\H_{k_\gamma}}$ and $B_{L_2(\nu)}$ denote the centered unit balls in $\H_{k_\gamma}$ and $B_{L_2(\nu)}$, respectively.
Note that since $\cX$ is bounded, there exists a ball of radius $r \geq \gamma$ that contains $\cX$.
From \citet[Theorem 12]{MeiSte17}, for all $p \in (0,1)$, we have
$$
\varepsilon_i({\rm id}: \cH_{k_\gamma}(\cX) \to L_2(\nu)) \leq c_{p,d,r,\gamma}\ i^{-1/2p}, \quad i \geq 1.
$$
$c_{p,d,r,\gamma} > 0$ is a constant depending only on $p$, $d$, $r$ and $\gamma$.
Using this inequality, the $i$-th largest eigenvalue $\lambda_i$ is upper-bounded as
$$
\lambda_i \leq 4 \varepsilon_i^2 ({\rm id}: \cH_{k_\gamma} \to L_2(\nu)) \leq 4 c_{p,d,r,\gamma}^2 i^{-1/p}, \quad i \geq 1,
$$
where the first inequality follows from \citet[Lemma 2.6 Eq.~23]{Ste17}; this lemma is applicable since we have $\int_\cX k_\gamma(x,x)d\nu(x) < \infty$ and thus the embedding ${\rm id}: \cH_{k_\gamma} \to L_2(\nu)$ is compact \citep[Lemma 2.3]{SteSco12}.
Therefore we have
$$
\sum_{i=1}^\infty \lambda_{i}^{1-\theta} < (4 c_{p,d,r,\gamma}^2)^{1-\theta} \sum_{i=1}^\infty  i^{-(1-\theta)/p}.
$$
The right hand side is bounded, if we take $p \in (0,1)$ such that $1-\theta > p$.
This implies $\sum_{i=1}^\infty \lambda_{i}^{1-\theta} < \infty$.
\end{proof}

\section{Proof of Theorem~\ref{theo:upper-bound-post-var} }

\label{sec:proof-post-var-contraction}

We first need the following fundamental result on {\em local polynomial reproduction}, which is a special case of \citet[Lemma 5.1]{WuSch93}; see also \citet[Chapters 3 and 11]{Wen05} for other versions.

\begin{lemma} \label{lemma:local-polynomial-rep}
For any $\rho > 0$ and $q \in \mathbb{N}$, there exist constants $h_0 > 0$, $C_1 > 0$, and  $C_2 > 0$ such that the following holds.
For any $n \in \mathbb{N}$,  $X^n = \{x_1, \dots, x_n \} \subset \mathbb{R}^d$, and $x \in \mathbb{R}^d$ such that $h_{\rho,X^n}(x) \leq h_0$,  there exist $u_1, \dots, u_n \in \mathbb{R}$ such that
\begin{enumerate}
    \item For any polynomial $p$ of degree less than $q$, we have
    $p(x) = \sum_{i=1}^n u_i p(x_i)$.
    \item $\sum_{i=1}^n | u_i | \leq C_1$,
    \item For all $i=1,\dots,n$ with $u_i \not= 0$, we have $\| x - x_i \| \leq C_2 h_{\rho,X^n}(x)$.
\end{enumerate}

\end{lemma}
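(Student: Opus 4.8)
The plan is to follow the classical norming-set argument of \citet{WuSch93} (see also \citet[Theorems 3.8 and 3.14]{Wen05}): express the evaluation functional $\delta_x$ on the space $\pi_{q-1}$ of polynomials of degree less than $q$ in $d$ variables as an $\ell_1$-bounded combination of evaluations $\delta_{x_i}$ at suitably chosen nearby data points, via Hahn--Banach duality, once one knows that those data points form a \emph{norming set} for $\pi_{q-1}$ with a constant that does not depend on $n$, $X^n$ or $x$.

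The first ingredient is the duality lemma in abstract form: if $V$ is a finite-dimensional normed space and $\lambda_1,\dots,\lambda_M \in V^*$ satisfy $\max_k |\lambda_k(p)| \ge \kappa^{-1}\|p\|_V$ for all $p \in V$, then every $\lambda \in V^*$ can be written $\lambda = \sum_k u_k \lambda_k$ with $\sum_k |u_k| \le \kappa \|\lambda\|_{V^*}$. I would prove this by noting that $T\colon p \mapsto (\lambda_k(p))_k$ is then a linear isomorphism of $V$ onto $T(V) \subset (\mathbb{R}^M, \|\cdot\|_\infty)$ with $\|T^{-1}\| \le \kappa$, so $\lambda \circ T^{-1}$ is a functional of norm $\le \kappa\|\lambda\|_{V^*}$ on $T(V)$; Hahn--Banach extends it to all of $(\mathbb{R}^M,\|\cdot\|_\infty)$ with the same norm, and the identification of the dual of $\ell_\infty^M$ with $\ell_1^M$ produces the coefficients $u_k$, which then satisfy $\sum_k u_k \lambda_k(p) = \lambda(p)$ for $p \in V$.

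The second and main ingredient is producing the norming set. By translation I may assume $x = 0$. Fix once and for all $M := \dim \pi_{q-1}$ points $\zeta_1, \dots, \zeta_M \in B(0,\tfrac12)$ that are unisolvent for $\pi_{q-1}$; since $p \mapsto \max_k |p(\zeta_k)|$ is then a norm on $\pi_{q-1}$, equivalence of norms gives a constant $N_0$ depending only on $q,d$ with $\|p\|_{L_\infty(\overline B(0,1))} \le N_0 \max_k |p(\zeta_k)|$, and replacing $p$ by $y \mapsto p(ty)$ shows the dilated set $\{t\zeta_k\}$ is norming for $\pi_{q-1}$ on $\overline B(0,t)$ with the \emph{same} constant $N_0$, for every $t > 0$; similarly the multivariate Markov inequality $\|\nabla p\|_{L_\infty(\overline B(0,1))} \le c_{q,d}\|p\|_{L_\infty(\overline B(0,1))}$ rescales to $\|\nabla p\|_{L_\infty(\overline B(0,t))} \le (c_{q,d}/t)\|p\|_{L_\infty(\overline B(0,t))}$. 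Now set $h := h_{\rho,X^n}(0)$, $c := 2 c_{q,d} N_0$ and $t := ch$, and require $h \le h_0$ with $h_0 > 0$ small (depending only on $\rho, q, d$) so that $ch/2 \le \rho$. Then each $t\zeta_k \in B(0,\rho)$, so the local fill-distance hypothesis supplies a data point $x_{i_k} \in X^n$ with $\|x_{i_k} - t\zeta_k\| \le h = t/c \le t/2$; the segment from $t\zeta_k$ to $x_{i_k}$ stays in $\overline B(0,t)$, so the rescaled Markov inequality yields $|p(x_{i_k}) - p(t\zeta_k)| \le (c_{q,d}/c)\|p\|_{L_\infty(\overline B(0,t))} = \tfrac{1}{2N_0}\|p\|_{L_\infty(\overline B(0,t))}$, whence $\max_k |p(x_{i_k})| \ge \tfrac12 \max_k |p(t\zeta_k)| \ge \tfrac{1}{2N_0}\|p\|_{L_\infty(\overline B(0,t))}$ for all $p \in \pi_{q-1}$. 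Thus the selected data points form a norming set for $V := (\pi_{q-1}, \|\cdot\|_{L_\infty(\overline B(0,ch))})$ with constant $\kappa = 2N_0$, and each of them lies in $\overline B(0, (c/2+1)h)$ because $\|x_{i_k}\| \le \|t\zeta_k\| + h \le t/2 + h$.

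Finally I would combine the two ingredients. Since $0 \in \overline B(0,ch)$ we have $\|\delta_0\|_{V^*} \le 1$, so the duality lemma produces coefficients $\tilde u_k$ with $\sum_k \tilde u_k\, p(x_{i_k}) = p(0)$ for all $p \in \pi_{q-1}$ and $\sum_k |\tilde u_k| \le 2N_0$; setting $u_i := \tilde u_k$ when $i = i_k$ (adding contributions if an index is selected more than once) and $u_i := 0$ otherwise gives conditions (1)--(3) of the lemma with $C_1 := 2N_0$ and $C_2 := c/2 + 1$, both depending only on $q$ and $d$, and $h_0$ depending only on $\rho, q, d$; translating back treats general $x$. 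I expect the main obstacle to be the norming-set step: one must choose the dilation scale $t$ as a sufficiently large \emph{constant} multiple of the local fill distance $h$ so that perturbing the fixed unisolvent configuration $\{t\zeta_k\}$ by the data (which lie within $h = t/c$ of it) still leaves a norming set, and one must verify that the resulting norming constant, and hence $C_1$, $C_2$, $h_0$, are genuinely uniform over all $n$, $X^n$ and $x$.
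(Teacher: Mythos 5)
Your proof is correct: the norming-set construction (fixed unisolvent configuration dilated to scale $t = ch$ with $c = 2c_{q,d}N_0$, perturbation control via the rescaled Markov inequality, then Hahn--Banach duality to extract $\ell_1$-bounded coefficients) is exactly the argument behind \citet[Lemma 5.1]{WuSch93}, which the paper invokes verbatim (with multi-index $\mu=0$) rather than reproving. So you have supplied a full self-contained version of the same proof the paper merely cites, with the constants $C_1 = 2N_0$, $C_2 = c/2+1$, $h_0 = 2\rho/c$ correctly uniform in $n$, $X^n$ and $x$.
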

\begin{proof}
Set the multi-indices $\mu$ in \citet[Lemma 5.1]{WuSch93} as $\mu = 0$.
\end{proof}

Intuitively, Lemma~\ref{lemma:local-polynomial-rep} shows that the value $p(x)$ of any polynomial $p$ of degree less than $q$ at $x$ can be reproduced by a ``local weighted average'' of polynomial values $p(x_i)$ around $x$.
Note that the constants $C_1$ and $C_2$ are independent of the data size $n$.

\begin{remark} \label{rem:local-poly-repro}
Under the same condition of Lemma~\ref{lemma:local-polynomial-rep}, the local polynomial reproduction property also holds when $p$ is any {\em complex-valued} polynomial of degree less than $q$.
This is because in this case $p$ can be written as
$p(x) = p_a(x) + \iu p_b(x)$, where $p_a$ and $p_b$ are real-valued polynomials of degree less than $q$ and $\iu = \sqrt{-1}$. Thus
\begin{align*}
p(x) &= p_a(x) + \iu p_b(x) = \sum_{i=1}^n u_i p_a(x_i) + \iu \sum_{i=1}^n u_i p_b(x_i) \\
& =  \sum_{i=1}^n u_i \left(  p_a(x_i) + \iu p_b(x_i) \right)
= \sum_{i=1}^n u_i p(x_i).
\end{align*}

\end{remark}

Now, we are ready to prove  Theorem~\ref{theo:upper-bound-post-var}.
\begin{proof}
For simplicity, we write $h_n := h_{\rho, X^n}(x)$ below.
Let $q \in \mathbb{N}$ be the smallest integer such that
\begin{equation} \label{eq:ineq-q-5433}
     2q  > 2s - d.
\end{equation}
For this $q$, and the given $\rho_0 > 0$, let $h_0, C_1, C_2 > 0$ and $u_1, \dots, u_n \in \mathbb{R}$ be those in Lemma~\ref{lemma:local-polynomial-rep}, so that $h_n \leq h_0$ and $u_1, \dots, u_n$ satisfy the properties in Lemma~\ref{lemma:local-polynomial-rep}.

As the posterior variance is the projection residual of $k(\cdot,x)$ onto the linear span of $k(\cdot,x_1), \dots, k(\cdot, x_n)$ (see Eq.~\eqref{eq:GP-post-cov-residual}), we have
\begin{align*}
& k(x, x) - {\bm k}_n(x)^\top {\bm K}_n^{-1} {\bm k}_n(x) \\
 & = \min_{ c_1, \dots, c_n \in \mathbb{R}} \left\| k(\cdot,x) - \sum_{i=1}^n c_i k(\cdot,x_i) \right\|_{\cH_k}^2
 \leq \left\| k(\cdot,x) - \sum_{i=1}^n u_i k(\cdot,x_i) \right\|_{\cH_k}^2.
\end{align*}
Thus, we focus on bounding this upper bound.

Recall from Theorem~\ref{theo:RKHS-shift-invariant} that the RKHS norm of any $g \in \cH_k$ can be written as
\begin{equation*}
  \| g \|_{\cH_k}^2 = C_d \int  |\cF[g](\omega)|^2  \cF[\Phi](\omega)^{-1} d\omega,
\end{equation*}
where $C_d > 0$ is a constant.
We consider the case $g := k(\cdot,x) - \sum_{i=1}^n u_i k(\cdot,x_i)$.
Since for any $v \in \mathbb{R}^d$ it holds that
$
\cF[k(\cdot,v)](\omega) = \cF[\Phi(\cdot - v)](\omega)  =  e^{ - \iu v^\top \omega } \mathcal{F}[\Phi] (\omega),
$
we have
\begin{align*}
   &  \cF\left[ k(\cdot,x) - \sum_{i=1}^n u_i k(\cdot,x_i)\right](\omega) \\
   & =  \cF[\Phi](\omega) \left( e^{ - \iu x^\top \omega }  - \sum_{i=1}^n u_i e^{ - \iu x_i^\top \omega } \right) \quad (\omega \in \mathbb{R}^d).
\end{align*}
Therefore,
\begin{align}
& \left\| k(\cdot,x) - \sum_{i=1}^n u_i k(\cdot,x_i) \right\|_{\cH_k}^2  \nonumber  \\
& = C_d \int  \left|  e^{ - \iu x^\top \omega }  - \sum_{i=1}^n u_i e^{ - \iu x_i^\top \omega }
\right|^2  \cF[\Phi](\omega) d\omega \nonumber \\
& = C_d \int  \left| 1 - \sum_{i=1}^n u_i e^{ - \iu (x_i - x)^\top \omega }
\right|^2  \cF[\Phi](\omega) d\omega \nonumber \\
& = C_d \int_{\| \omega \| \leq h_n^{-1}}  \left| 1 - \sum_{i=1}^n u_i e^{ - \iu (x_i - x)^\top \omega }   \right|^2  \cF[\Phi](\omega) d\omega \label{eq:upper-post-var-5515} \\
& +  C_d \int_{\| \omega \| > h_n^{-1} }  \left| 1 - \sum_{i=1}^n u_i e^{ - \iu (x_i - x)^\top \omega }
\right|^2  \cF[\Phi](\omega) d\omega. \label{eq:upper-post-var-5517}
\end{align}

To bound \eqref{eq:upper-post-var-5515}, let us rewrite the exponential function using the Taylor expansion~as
\begin{align}
 & e^z = p_{q-1}(z) + z^q r_q(z), ~~  \text{where} ~~  p_{q-1}(z) :=  1 + \sum_{\ell = 1}^{q-1} \frac{z^\ell}{ \ell! }, \nonumber \\
 & r_q(z) := \sum_{\ell=0}^\infty \frac{z^\ell}{(q+\ell)!}    ~~ \text{so that}~~  |r_q(z)| \leq e^{|z|}  \quad (z \in \mathbb{Z}).  \label{eq:exp-residual-5515}
\end{align}
This implies that
\begin{align*}
& \sum_{i=1}^n u_i e^{ - \iu (x_i - x)^\top \omega } \\
 & = \sum_{i=1}^n u_i p_{q-1}( - \iu (x_i - x)^\top \omega )
  + \sum_{i=1}^n u_i (  - \iu (x_i - x)^\top \omega )^q r_q(  - \iu (x_i - x)^\top \omega )
  \end{align*}
  \begin{align*}
& =  p_{q-1}(0) + \sum_{i=1}^n u_i (  - \iu (x_i - x)^\top \omega )^q r_q(  - \iu (x_i - x)^\top \omega ) \\
& = 1 + \sum_{i=1}^n u_i (  - \iu (x_i - x)^\top \omega )^q r_q(  - \iu (x_i - x)^\top \omega ),
 \end{align*}
where the second identity follows from Lemma~\ref{lemma:local-polynomial-rep} (see also Remark~\ref{rem:local-poly-repro})).
Therefore,
\begin{align}
 \left| 1 - \sum_{i=1}^n u_i e^{ - \iu (x_i - x)^\top \omega }   \right|
& =  \left| \sum_{i=1}^n u_i (  - \iu (x_i - x)^\top \omega )^q r_q(  - \iu (x_i - x)^\top \omega )  \right| \nonumber \\
& \leq \sum_{i=1}^n |u_i| \left|  (x_i - x)^\top \omega  \right|^q \left| r_q(  - \iu (x_i - x)^\top \omega )  \right| \nonumber \\
& \leq \sum_{i=1}^n |u_i|  \| x_i - x \|^q \| \omega \|^q e^{ \| x_i - x\| \| \omega \| } \nonumber \\
& \leq \sum_{i=1}^n |u_i| C_2^q h_n^q \| \omega \|^q e^{ C_2 h_n \| \omega \| } \nonumber \\
& \leq C_1 C_2^q h_n^q \| \omega \|^q e^{ C_2 h_n \| \omega \| }   \label{eq:upper-fourier-5506},
\end{align}
where we used \eqref{eq:exp-residual-5515} in the second inequality and Lemma~\ref{lemma:local-polynomial-rep} in the third and fourth inequalities.

Now, using \eqref{eq:upper-fourier-5506} and \eqref{eq:kernel-Fourier-5410}, the term \eqref{eq:upper-post-var-5515} can be upper-bounded as
\begin{align}
\eqref{eq:upper-post-var-5515} & \leq C_3 h_{\rho, X^n}^{2q}(x)  \int_{\| \omega \| \leq h_n^{-1}}  \| \omega \|^{2q} e^{ 2 C_2 h_{\rho, X^n}(x) \| \omega \| }  \cF[\Phi](\omega) d\omega  \nonumber  \\
& \leq C_4 h_n^{2q} \int_{\| \omega \| \leq h_n^{-1}} \| \omega \|^{2q} (1 + \| \omega \|)^{-2s} d\omega, \label{eq:upper-bound-5525}
\end{align}
where $C_3, C_4 > 0$ are constants.
Since the integral in the last expression depends only on the norm $\| \omega \|$, it can be written as
\begin{align*}
& \int_{\| \omega \| \leq h_n^{-1}} \| \omega \|^{2q} (1 + \| \omega \|)^{-2s} d\omega
 = C_5 \int_{ 0 }^{ h_n^{-1}} r^{d-1} r^{2q} (1 + r)^{-2s} dr  \\
& = C_5 \int_0^{h_0^{-1}}   r^{d-1 + 2q}   (1 + r)^{-2s} dr
+ C_5 \int_{h_0^{-1}}^{h_n^{-1}}   r^{d-1 + 2q} (1 + r)^{-2s} dr
\end{align*}
\begin{align*}
& \leq C_6 + C_7 \int_{h_0^{-1}}^{h_n^{-1}}   r^{d-1 + 2q - 2s} dr
= C_6 + C_7 \frac{1}{d + 2q - 2s} \left[  r^{d + 2q - 2s}  \right]_{h_0^{-1}}^{ h_n^{-1} }  \\
& \stackrel{(*)}{=} C_6 + C_8 \left( h_n^{- (d + 2q - 2s)  } - h_0^{- (d + 2q - 2s)  }    \right) \\
& = C_9 + C_{10} h_n^{- (d + 2q - 2s)},
\end{align*}
where $C_5, C_6, C_7, C_8, C_9, C_{10} > 0$ are constants, and $(*)$ uses $d + 2q - 2s > 0$, which follows from \eqref{eq:ineq-q-5433}.
Using this to upper-bound \eqref{eq:upper-bound-5525}, we obtain
\begin{align} \label{eq:upper-5542}
\eqref{eq:upper-post-var-5515}
& \leq C_{11} h_n^{2q} + C_{12} h_n^{2s -d} \leq C_{13} h_n^{2s -d},
\end{align}
where $C_{11}, C_{12}, C_{13} > 0$ are constants and the last inequality follows from \eqref{eq:ineq-q-5433} and $h_n \leq 1$.

On the other hand, to bound \eqref{eq:upper-post-var-5517}, note that
\begin{align*}
& \left| 1 - \sum_{i=1}^n u_i e^{ - \iu (x_i - x)^\top \omega }   \right|
\leq 1 + \sum_{i=1}^n |u_i| \leq 1 + C_1.
\end{align*}
Using this and \eqref{eq:kernel-Fourier-5410}, we can bound \eqref{eq:upper-post-var-5517} as
\begin{align}
 \eqref{eq:upper-post-var-5517} & =   C_d \int_{\| \omega \| > h_n^{-1} }  \left| 1 - \sum_{i=1}^n u_i e^{ - \iu (x_i - x)^\top \omega }
\right|^2  \cF[\Phi](\omega) d\omega \nonumber \\
& \leq D_1 \int_{\| \omega \| > h_n^{-1} }   (1 + \| \omega \|)^{-2s}  d\omega
= D_2 \int_{ h_n^{-1} }^\infty  r^{d-1} (1 + r )^{-2s}  dr \nonumber \\
& \leq D_3  \int_{ h_n^{-1} }^\infty  r^{d-1 - 2s}  dr
= D_3 \frac{1}{d - 2s} \left[ r^{d - 2s} \right]_{h_n^{-1}}^\infty
= D_4 h_n^{2s - d}, \label{eq:upper-5562}
\end{align}
where $D_1, D_2, D_3, D_4 > 0$ are constants and we used $2s > d$ in the last identity.

Lastly, using \eqref{eq:upper-5542} and \eqref{eq:upper-5562} to bound \eqref{eq:upper-post-var-5515} and \eqref{eq:upper-post-var-5517}, respectively, we obtain
\begin{align*}
&   \left\| k(\cdot,x) - \sum_{i=1}^n u_i k(\cdot,x_i) \right\|_{\cH_k}^2   \leq (C_{13} + D_4) h_n^{2s - d},
\end{align*}
which completes the proof.
\end{proof}

\section{Lemmas for Section~\ref{sec:KRR-as-KI}}

\label{sec:proof-KRR-as-interpolation}

We first need the following lemma.
\begin{lemma} \label{lemma:min-min}
Let $k$ and $\ell$ be kernels on $\cX$, and  $\cH_{k}$ and $\cH_\ell$ be their respective RKHSs.
Let $(x_i, y_i)_{i=1}^n \subset \cX \times \mathbb{R}$ be  such that there exist some $f \in \cH_k$ and $h \in \cH_\ell$ such that
$$
{\bm f}_n + {\bm h}_n = {\bm y}_n,$$
where ${\bm f}_n := (f(x_1), \dots, f(x_n))^\top$, ${\bm h}_n := (h(x_1), \dots, h(x_n))^\top$, and ${\bm y}_n = (y_1,\dots,y_n)^\top \in \mathbb{R}^n$.
Then we have
\begin{align}  \label{eq:lemma-int-reg}
   & \min_{ \substack{ f \in \cH_k, h \in \cH_\ell, \\ {\bm f}_n + {\bm h}_n = {\bm y}_n}  } \| f \|_{\cH_k}^2 + \| h \|_{\cH_\ell }^2 \\
  = & \min_{ \substack{f' \in \cH_k,  h' \in \cH_\ell, \\ {\bm f}'_n + {\bm h}'_n = {\bm y}_n} } \min_{\substack{ f \in \cH_k, h \in \cH_\ell, \\ f + h = f' + h'}} \| f \|_{\cH_k}^2 + \| h \|_{\cH_\ell }^2, \label{eq:8106}
\end{align}
where ${\bm f}'_n := (f'(x_1), \dots, f'(x_n))^\top, {\bm h}'_n := (h'(x_1), \dots, h'(x_n))^\top \in \mathbb{R}^n$.
\end{lemma}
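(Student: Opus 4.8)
\textbf{Proof plan for Lemma~\ref{lemma:min-min}.}

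The plan is to show the two sides of \eqref{eq:lemma-int-reg}--\eqref{eq:8106} are equal by the usual two-inequality argument, exploiting that the inner minimization on the right-hand side is over a \emph{larger} feasible set than a single fixed decomposition. First I would observe that the right-hand side is well-defined: the hypothesis guarantees at least one feasible pair $(f',h')$ with ${\bm f}'_n + {\bm h}'_n = {\bm y}_n$, and for any such pair the inner problem $\min_{f+h=f'+h'} \|f\|_{\cH_k}^2 + \|h\|_{\cH_\ell}^2$ is feasible (take $f=f'$, $h=h'$) and bounded below by zero, so the infimum is attained (this is a standard fact about the sum of two RKHSs; cf.\ the discussion around \eqref{eq:RKHS-new-kernel-6382} and \citet[Section~6]{Aronszajn1950}, where $\|f'+h'\|_{\cH_{k}+\cH_\ell}^2$ is precisely this minimum).

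For ``$\leq$'': given any feasible $(f,h)$ on the left, i.e.\ $f\in\cH_k$, $h\in\cH_\ell$, ${\bm f}_n+{\bm h}_n={\bm y}_n$, set $f'=f$ and $h'=h$. Then $(f',h')$ is feasible for the outer minimization on the right, and the pair $(f,h)$ itself is feasible for the inner minimization (since $f+h=f'+h'$ trivially). Hence the right-hand side is $\le \|f\|_{\cH_k}^2 + \|h\|_{\cH_\ell}^2$, and taking the infimum over all left-feasible $(f,h)$ gives the inequality.

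For ``$\geq$'': take any $(f',h')$ feasible for the outer problem and any $(f,h)$ feasible for the corresponding inner problem, so $f+h = f'+h'$. The key point is that evaluating both sides at the training inputs gives ${\bm f}_n + {\bm h}_n = {\bm f}'_n + {\bm h}'_n = {\bm y}_n$, so $(f,h)$ is feasible for the \emph{left-hand} problem. Therefore $\|f\|_{\cH_k}^2 + \|h\|_{\cH_\ell}^2$ is $\ge$ the left-hand minimum. Taking the infimum over inner-feasible $(f,h)$ and then over outer-feasible $(f',h')$ yields that the right-hand side is $\ge$ the left-hand side. Combining the two inequalities proves the claim. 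I do not expect a serious obstacle here; the only mild care needed is the well-definedness/attainment of the nested minima, which I would dispatch by citing the RKHS-sum structure already used in Section~\ref{sec:KRR-as-KI}, and noting the feasible sets are non-empty by hypothesis.
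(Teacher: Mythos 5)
Your proposal is correct and follows essentially the same two-inequality argument as the paper: the ``$\geq$'' direction uses that $f+h=f'+h'$ forces ${\bm f}_n+{\bm h}_n={\bm y}_n$, and the ``$\leq$'' direction takes $(f,h)=(f',h')$ as a feasible point of the inner problem. Your extra remark on attainment of the nested minima is a harmless addition the paper leaves implicit.
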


\begin{proof}

For any $f' \in \cH_k$ and $h' \in \cH_\ell$ such that ${\bm f}'_n + {\bm h}'_n = {\bm y}_n$,
$$
\min_{\substack{ f \in \cH_k, h \in \cH_\ell, \\ f + h = f' + h'}} \| f \|_{\cH_k}^2 + \| h \|_{\cH_\ell}^2 \geq    \min_{ \substack{ f \in \cH_k, h \in \cH_\ell, \\ {\bm f}_n + {\bm h}_n = {\bm y}_n}  } \| f \|_{\cH_k}^2 + \| h \|_{\cH_\ell }^2,
$$
where the inequality holds because $f + h = f'+h'$ implies ${\bm f}_n+ {\bm h}_n = {\bm y}_n$.
This implies $\eqref{eq:8106} \geq \eqref{eq:lemma-int-reg}$.

On the other hand, for any $f' \in \cH_k$ and $h' \in \cH_\ell$, we have
$$
\min_{\substack{ f \in \cH_k, h \in \cH_\ell \\ f + h = f' + h'}} \| f \|_{\cH_k}^2 + \| h \|_{\cH_\ell}^2 \leq  \| f' \|_{\cH_k}^2 + \| h' \|_{\cH_\ell}^2,
$$
which implies $\eqref{eq:8106} \leq \eqref{eq:lemma-int-reg}$
\end{proof}

\begin{lemma} \label{lemma:sum-kernel-min-norm}
Let $k$, $\ell$ and $v := k+\ell$ be kernels on $\cX$, and  $\cH_{k}$, $\cH_\ell$ and $\cH_v$ be their respective RKHSs.
Let $(x_i, y_i)_{i=1}^n \subset \cX \times \mathbb{R}$ be such that there exist some $f \in \cH_k$, $h \in \cH_\ell$ and $g \in \cH_v$ such that
\begin{equation} \label{eq:cond-lemma-8146}
    {\bm g}_n = {\bm f}_n + {\bm h}_n = {\bm y}_n,
\end{equation}
where ${\bm f}_n := (f(x_1), \dots, f(x_n))^\top$, ${\bm h}_n := (h(x_1), \dots, h(x_n))^\top$, ${\bm g}_n := (g(x_1), \dots, g(x_n))^\top$, and ${\bm y}_n = (y_1,\dots,y_n)^\top \in \mathbb{R}^n$.
Then we have

    \begin{align*}
\min_{\substack{g \in \cH_{v},\\  {\bm g}_n = {\bm y}_n}} \|  g \|_{\cH_{v}}^2
&=   \min_{\substack{f \in \cH_k, h \in \cH_{\ell}, \\  {\bm f}_n + {\bm h}_n = {\bm y}_n}} \| f \|_{\cH_k}^2 + \| h \|_{\cH_{\ell}}^2,
\end{align*}
\end{lemma}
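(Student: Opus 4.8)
\textbf{Proof proposal for Lemma~\ref{lemma:sum-kernel-min-norm}.}

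The plan is to use the sum-kernel RKHS characterization already invoked in the main text (the fact from \citet[Section~6]{Aronszajn1950} that appears around Eq.~\eqref{eq:RKHS-new-kernel-6382}), together with Lemma~\ref{lemma:min-min}. Recall that $\cH_v$ with $v = k + \ell$ consists exactly of functions $g = f + h$ with $f \in \cH_k$ and $h \in \cH_\ell$, and its squared norm is
$$
\| g \|_{\cH_v}^2 = \min_{\substack{f \in \cH_k,\ h \in \cH_\ell,\\ g = f + h}} \| f \|_{\cH_k}^2 + \| h \|_{\cH_\ell}^2.
$$
Substituting this into the left-hand side of the claimed identity and unfolding the outer minimization over $g$ with ${\bm g}_n = {\bm y}_n$, the left-hand side becomes a nested minimization: first over $g \in \cH_v$ with ${\bm g}_n = {\bm y}_n$, then over decompositions $g = f + h$. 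Since ${\bm g}_n$ depends only on $g$ and not on the particular decomposition, I can relabel $g = f' + h'$ (so that ${\bm f}'_n + {\bm h}'_n = {\bm y}_n$) and rewrite the left-hand side as precisely the right-hand side of \eqref{eq:8106} in Lemma~\ref{lemma:min-min}.

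Then I would apply Lemma~\ref{lemma:min-min} directly: its statement equates \eqref{eq:8106} with \eqref{eq:lemma-int-reg}, which is exactly the right-hand side of the present lemma, $\min_{f \in \cH_k,\ h \in \cH_\ell,\ {\bm f}_n + {\bm h}_n = {\bm y}_n} \| f \|_{\cH_k}^2 + \| h \|_{\cH_\ell}^2$. The hypothesis \eqref{eq:cond-lemma-8146} guarantees that the feasible sets in both the $g$-problem and the $(f,h)$-problem are nonempty, so the minima are attained (the objectives are coercive convex functionals on closed affine subspaces of Hilbert spaces, hence the infima are minima), and Lemma~\ref{lemma:min-min}'s nonemptiness requirement is met. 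This chains the two identities together and closes the proof.

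The main obstacle, such as it is, is bookkeeping: making sure the "unfolding" step is rigorous — i.e.\ that
$$
\min_{\substack{g \in \cH_v,\\ {\bm g}_n = {\bm y}_n}} \|g\|_{\cH_v}^2
= \min_{\substack{g \in \cH_v,\\ {\bm g}_n = {\bm y}_n}}\ \min_{\substack{f \in \cH_k,\ h \in \cH_\ell,\\ g = f+h}} \|f\|_{\cH_k}^2 + \|h\|_{\cH_\ell}^2
= \min_{\substack{f' \in \cH_k,\ h' \in \cH_\ell,\\ {\bm f}'_n + {\bm h}'_n = {\bm y}_n}}\ \min_{\substack{f \in \cH_k,\ h \in \cH_\ell,\\ f + h = f'+h'}} \|f\|_{\cH_k}^2 + \|h\|_{\cH_\ell}^2,
$$
where the second equality is just a change of the index of the outer minimization from $g$ to a pair $(f',h')$ realizing $g = f' + h'$ (any such pair works since the inner minimum depends only on $f' + h' = g$, and the constraint ${\bm g}_n = {\bm y}_n$ is equivalent to ${\bm f}'_n + {\bm h}'_n = {\bm y}_n$). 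Once this is spelled out, the right-hand side is literally \eqref{eq:8106}, and Lemma~\ref{lemma:min-min} finishes everything. No hard analysis is needed beyond the already-cited sum-of-kernels norm formula.
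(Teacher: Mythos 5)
Your proposal is correct and follows essentially the same route as the paper's proof: invoke the Aronszajn sum-of-kernels norm formula for $\cH_v$, unfold the minimization into a nested one, re-index the outer minimization from $g$ to a decomposing pair $(f',h')$ (noting the constraint ${\bm g}_n = {\bm y}_n$ becomes ${\bm f}'_n + {\bm h}'_n = {\bm y}_n$), and conclude with Lemma~\ref{lemma:min-min}. The bookkeeping step you flag is exactly the chain of equalities the paper writes out, so nothing is missing.
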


\begin{proof}

Since $v = k +  \ell$, the RKHS $\cH_v$ can be written as (see e.g., \citealt[Section 6]{Aronszajn1950})
\begin{align} \label{eq:RKHS-new-kernel}
\cH_v & = \left\{g = f + h\ \mid \ f \in \cH_k,\ h \in \cH_\ell \right\},
\end{align}
with its RKHS norm
$$
 \| g \|_{\cH_v}^2 = \min_{\substack{f \in \cH_k, h \in \cH_{\ell}, \\ g = f + h}} \| f \|_{\cH_k}^2 + \| h \|_{\cH_{\ell}}^2 .
$$
Then, we obtain
\begin{align*}
\min_{\substack{g \in \cH_{v},\\  {\bm g}_n = {\bm y}_n}} \|  g \|_{\cH_{v}}^2
&=  \min_{\substack{g \in \cH_{v},\\ {\bm g}_n = {\bm y}_n}}\ \min_{\substack{f \in \cH_k, h \in \cH_{\ell}, \\ g = f + h}} \| f \|_{\cH_k}^2 + \| h \|_{\cH_{\ell}}^2 \nonumber  \\
&= \min_{ \substack{f' \in \cH_k,  h' \in \cH_{\ell}, \\ {\bm f}'_n + {\bm h}'_n = {\bm y}_n}  }\ \min_{\substack{f \in \cH_k, h \in \cH_{\ell}, \\f'+h' = f + h}} \| f \|_{\cH_k}^2 + \| h \|_{\cH_{\ell}}^2, \nonumber \\
&=   \min_{\substack{f \in \cH_k, h \in \cH_{\ell}, \\  {\bm f}_n + {\bm h}_n = {\bm y}_n}} \| f \|_{\cH_k}^2 + \| h \|_{\cH_{\ell}}^2,
\end{align*}
where the last identity follows from Lemma \ref{lemma:min-min}.
\end{proof}

\subsection{Proof of Lemma~\ref{lemma:noise-represent}}

\label{sec:proof-noise-represent}

\begin{proof}

By \eqref{eq:RKHS-noise-6369}, $\hat{\eta}$ can be written as, for some $(b_x)_{x \in \cX} \subset \mathbb{R}$,
$$
\hat{\eta} = \sum_{x \in \cX} b_x \sigma^{-1} \ell_{\sigma^2}(\cdot, x) ~~\text{with}~~ \|  \hat{\eta} \|^2_{\cH_{\sigma^2}} =  \sum_{x \in \cX} b_x^2 < \infty.
$$
Suppose that there is $x \in \cX \backslash \{ x_1, \dots, x_n \}$ with $b_x \not= 0$ (we shall show contradiction).
Define
$$
\tilde{\eta} :=  \sum_{x \in \{ x_1, \dots, x_n \}} b_x \sigma^{-1} \ell_{\sigma^2}(\cdot, x).
$$
Then $\tilde{\eta
}(x_i) = \hat{\eta}(x_i)$ for all $i = 1,\dots, n$ from \eqref{eq:delta-kernel-sigma-6354}, and $\tilde{\eta}$ satisfies the constraints $f(x_i) + \tilde{\eta}(x_i) = y_i$ in \eqref{eq:regression-decomposition}.
On the other hand,
\begin{align*}
    \| \tilde{\eta} \|_{\cH_{\sigma^2 }}^2
    &= \sum_{x \in \{ x_1, \dots, x_n \}} b_x^2 \\
    &< \sum_{x \in \{ x_1, \dots, x_n \}} b_x^2 +  \sum_{x \in \cX \backslash \{ x_1, \dots, x_n \}} b_x^2
    = \| \hat{\eta} \|_{\cH_{\sigma^2 }}^2.
\end{align*}
Thus, the objective value~\eqref{eq:regression-decomposition} for $\tilde{\eta}$ is smaller than $\hat{\eta}$, which contradicts the optimality of $\hat{\eta}$.
Therefore, there is no  $x \in \cX \backslash \{ x_1, \dots, x_n \}$ with $b_x \not= 0$ and $\hat{\eta}$ must be given in the form~\eqref{eq:noise-func-form-6423}.
\end{proof}

\section{Proof of Proposition~\ref{prop:GPIC-kernel-expression}}

\label{sec:proof-GPIC-kernel-expression}

\begin{proof}
We can rewrite GPIC~\eqref{eq:GPIC} as
\begin{align}
  &  {\rm GPIC}_{k,\ell}(X,Y) \nonumber \\
=\ & \mathbb{E}_{F, G} \left[ \bE_{X,Y} \left[ \left( F(X) - \bE_{X'}[F(X')] \right) \left( G(Y) - \bE_{Y'}[G(Y')] \right) \right]^2 \right] \nonumber \\
=\ & \mathbb{E}_{F, G} \left[ \bE_{X,Y}[ F(X) G(Y) ] \bE_{X',Y'}[ F(X') G(Y') ] \right] \nonumber \\
& +  \mathbb{E}_{F, G} \left[ \bE_{X}[F(X)] \bE_{Y}[ G(Y) ] \bE_{X'} [F(X')] \bE_{Y'}[ G(Y') ] \right] \nonumber \\
& - 2  \mathbb{E}_{F, G} \left[ \bE_{X, Y}[F(X)G(Y) ] \bE_{X'} [F(X')] \bE_{Y'}[ G(Y') ] \right] \nonumber \\
=\ & \mathbb{E}_{F, G}  \bE_{X,Y} \bE_{X',Y'} [ F(X)  F(X') G(Y)  G(Y') ]   \nonumber \\
& +  \mathbb{E}_{F, G} \bE_{X} \bE_{X'} \bE_{Y}  \bE_{Y'} [F(X) F(X') G(Y) G(Y') ]  \nonumber \\
& - 2  \mathbb{E}_{F, G}   \bE_{X, Y} \bE_{X'} \bE_{Y'} [F(X) F(X') G(Y) G(Y') ]  \nonumber \\
=\ &   \bE_{X,Y} \bE_{X',Y'} [ k(X,X') \ell(Y,Y') ] \nonumber \\
& +  \bE_{X}  \bE_{X'}   \bE_{Y} \bE_{Y'}[ k(X,X') \ell(Y, Y') ] \nonumber \\
&  -  2 \bE_{X, Y}  \bE_{X'}  \bE_{Y'} [ k(X,X') \ell(Y,Y') ], \label{eq:GPIC-kernel-express-4278}
\end{align}
where the last identity follows from Fubini's theorem, which is applicable because of the assumptions $\bE_X[k(X,X)] < \infty$ and $\bE_Y[\ell(Y,Y)] < \infty$.
Let us demonstrate the applicability of Fubini's theorem for the first term in \eqref{eq:GPIC-kernel-express-4278}  by showing that
\begin{align*}
    \bE_{X,Y} \bE_{X',Y'} \mathbb{E}_{F, G}   \left[
 \left| F(X)  F(X') G(Y)  G(Y') \right| \right] < \infty.
\end{align*}
First, by the independence between $F$ and $G$ and the Cauchy-Schwartz inequality, we have
\begin{align*}
 & \mathbb{E}_{F, G}  \left[
 \left| F(X)  F(X') G(Y)  G(Y') \right| \right] \\
 & \leq \mathbb{E}_F \left[ \left| F(X)  F(X') \right| \right] \mathbb{E}_G  \left[
 \left|  G(Y)  G(Y') \right| \right]  \\
 & \leq \sqrt{\mathbb{E}_F   [F(X)^2] } \sqrt{\mathbb{E}_F [F(X')^2] } \sqrt{  \mathbb{E}_G [G(Y)^2] } \sqrt{ \mathbb{E}_G [G(Y')^2]  } \\
 & =  \sqrt{k(X,X) } \sqrt{ k(X',X') } \sqrt{  \ell(Y,Y) } \sqrt{ \ell(Y',Y') }.
\end{align*}
Therefore, again by the Cauchy-Schwartz, we have
\begin{align*}
  &  \bE_{X,Y} \bE_{X',Y'} \mathbb{E}_{F, G}   \left[
 \left| F(X)  F(X') G(Y)  G(Y') \right| \right] \\
 & \leq \bE_{X,Y} \left[ \sqrt{k(X,X) } \sqrt{  \ell(Y,Y) } \right] \bE_{X',Y'} \left[  \sqrt{ k(X',X') }  \sqrt{ \ell(Y',Y') } \right] \\
 &  \leq \sqrt{\bE_{X}[ k(X,X)] } \sqrt{ \bE_{Y} [ \ell(Y,Y) ] } \sqrt{\bE_{X'}[ k(X',X')] } \sqrt{ \bE_{Y'} [ \ell(Y',Y') ] }  \\
 & =   \bE_{X}[ k(X,X)]   \bE_{Y} [ \ell(Y,Y) ]  < \infty.
\end{align*}
One can also show the applicability of Fubini's theorem for the second and third terms in \eqref{eq:GPIC-kernel-express-4278} in the same manner.
\end{proof}

\backmatter
\printbibliography

\end{document}